\renewcommand{\cite}{\citep}
\numberwithin{equation}{section}
\newcommand{\veps}{\varepsilon}
\DeclareMathOperator*{\E}{\mathbb{E}}
\newcommand{\tw}{\tilde{w}}
\newcommand{\tDelta}{\tilde{\Delta}}
\newcommand{\tdelta}{\tilde{\delta}}
\newcommand{\tPhi}{\tilde{\Phi}}
\newcommand{\tv}{\tilde{v}}
\newcommand{\tcP}{\tilde{\mathcal{P}}}
\newcommand{\set}[1]{\mathcal{#1}}
\DeclareMathOperator{\poly}{poly}
\DeclareMathOperator*{\Tr}{Tr}
\DeclareMathOperator*{\relu}{ReLU}
\DeclareMathOperator*{\abs}{abs}
\DeclareMathOperator*{\rank}{rank}
\DeclareMathOperator*{\argmax}{arg\,max}
\newcommand{\obj}{L}
\newcommand{\obje}{\hat{L}}
\newcommand{\objpop}{L}
\newcommand{\objinf}{L_{\infty}}
\newcommand{\cs}{{conditionally-symmetric}}
\newtheorem{theorem}{Theorem}[section]
\newtheorem{lemma}[theorem]{Lemma}
\newtheorem{claim}{Claim}[section]
\newtheorem{definition}{Definition}[section]
\newtheorem{proposition}[theorem]{Proposition}
\newtheorem{corollary}[theorem]{Corollary}
\newcommand{\Kappa}{\kappa_1}
\newcommand{\Kapppa}{\kappa_2}
\newcommand{\Kappppa}{\kappa_3}
\newcommand{\lambdaa}{\lambda_1}
\newcommand{\cz}{Q}
\newcommand{\cP}{\mathcal{P}}
\newcommand{\cH}{\mathcal{H}}
\newcommand{\ctP}{\tilde{\mathcal{P}}}
\newcommand{\ctH}{\tilde{\mathcal{H}}}
\def\shownotes{0}  \ifnum\shownotes=1
\newcommand{\authnote}[2]{{[#1: #2]}}
\newcommand{\authnote}[2]{}
\newcommand{\tnote}[1]{{\color{blue}\authnote{TM}{#1}}}
\begin{document}
\title{{Learning Over-Parametrized Two-Layer ReLU Neural Networks beyond NTK}}
\author{Yuanzhi Li\thanks{Carnegie Mellon University. yuanzhil@andrew.cmu.edu}\and Tengyu Ma \thanks{Stanford University. tengyuma@stanford.edu}\and Hongyang R. Zhang\thanks{University of Pennsylvania. hongyang90@gmail.com}}
\date{\today}
\maketitle

\begin{abstract}
	We consider the dynamic of gradient descent for learning a two-layer neural network. We assume the input $x\in\mathbb{R}^d$ is drawn from a Gaussian distribution and the label of $x$ satisfies  $f^{\star}(x) = a^{\top}|W^{\star}x|$, where $a\in\mathbb{R}^d$ is a nonnegative vector and $W^{\star} \in\mathbb{R}^{d\times d}$ is an orthonormal matrix. We show that an \emph{over-parametrized} two-layer neural network with ReLU activation, trained by gradient descent from \emph{random initialization}, can provably learn the ground truth network with population loss at most $o(1/d)$ in polynomial time with polynomial samples. On the other hand, we prove that any kernel method, including Neural Tangent Kernel, with a polynomial number of samples in $d$, has population loss at least $\Omega(1 / d)$.
\end{abstract}

\section{Introduction}

Gradient-based optimization methods are the method of choice for learning neural networks. However, it has been challenging to understand their working on non-convex functions. Prior works prove that stochastic gradient descent provably convergences to an approximate local optimum \cite{ge2015escaping, sun2015nonconvex,  lee2017first,kleinberg2018alternative}.
Remarkably, for many highly complex neural net models, gradient-based methods can also find high-quality solutions \cite{sun2019optimization} and interpretable features \cite{zeiler2014visualizing}.%

Recent studies  made the connection between training wide neural networks and Neural Tangent Kernels (NTK) \cite{jacot2018neural,arora2019finegrained,cao2019generalization,du2018gradient}.
The main idea is that training neural networks with gradient descent with a particular initialization is equivalent to using kernel methods.
However, the NTK approach has not yet provided a fully satisfactory theory for explaining the  success of neural networks. Empirically, there seems to be a non-negligible gap between the test performance of neural networks trained by SGD and that of the  NTK \cite{arora2019exact,li2019enhanced}. Recent works have suspected that the gap stems from that the NTK approach has difficulty dealing with non-trivial explicit regularizers or does not sufficiently leverage the implicit regularization of the algorithm \cite{wei2019regularization,chizat2018note,li2019towards,haochen2020shape}.



In this work, we provide a new convergence analysis of the gradient descent dynamic on an over-parametrized two-layer ReLU neural network.
We prove that for learning a certain two-layer target network with orthonormal ground truth weights, gradient descent is provably more accurate than any kernel method that uses polynomially large feature maps.

\subsection{Setup and Main Result}

We assume that the input $x\in\real^d$ is drawn from the Gaussian distribution $\cN(0, \id_{d\times d})$.
We focus on the realizable setting, i.e. the label of $x$ is generated according to a target network $f^\star$ with $d$ neurons.
We study a two-layer target neural network with absolute value activation:
\begin{align}
	f^{\star}(x) = \sum_{i=1}^d {a_i} \Abs{{w_i^{\star}}^{\top}x}, \label{eq_abs}
\end{align}
where $a_i$ is in $[\frac 1 {\kappa d}, \frac {\kappa} d]$ for an absolute constant $\kappa \geq 1$ and satisfies $\sum_{i \in [d]} a_i= 1$, and $\{w_i^{\star}\}_{i=1}^d$ forms an orthonormal basis.
Equation \eqref{eq_abs} can also be written as the sum of $2d$ neurons with ReLU activation:
\[ f^{\star}(x) = \sum_{i=1}^d {a_i}\left(\relu({w_i^{\star}}^{\top}x) + \relu(-{w_i^{\star}}^{\top}x)\right). \]%
Let $\set{Z} = \{(x_j, y_j) \}_{j = 1}^N$ be a training dataset of $N$ i.i.d. samples from the Gaussian distribution with identity covariance and $y_j = f^{\star}(x_j)$ for any $1\le j\le N$.

We learn the target network $f^{\star}$ using an over-parametrized two-layer ReLU  network with $m \ge 2d$ neurons $W = \{w_i\}_{i = 1}^m$, given by:
\begin{align}
	f_W(x) = \frac 1 m \sum_{i=1}^m \norm{w_i}\cdot \relu(w_i^{\top}x). \label{eq_labeling}
\end{align}
Note that we have re-parametrized the output layer with the norm of the corresponding neuron, so that we only have one set of parameters $W$.
This is without loss of generality for learning $f^{\star}$ because when $a_i \geq 0$, $a_i \cdot \relu(w_i^{\top}x)$ is equal to $\norm{w_i'} \cdot \relu({{w_i'}^{\top}x})$ where $w_i'= \sqrt{a_i / \norm{w_i}}\cdot w_i$. 
Given a training dataset $\set{Z} = \{(x_i, y_i)\}_{i = 1}^N$, we learn the target network by minimizing the following empirical loss: 
\[ \obje(W) = \frac{1}{N}\sum_{i=1}^N \left( f_W(x_i) - y_i \right)^2.\]
Let $\objpop(W)$ denote the population loss given by the expectation of $\hat{L}(W)$ over $\set{Z}$.

\bigskip
\noindent\textbf{Algorithm.} We focus on truncated gradient descent with random initialization.
Algorithm \ref{alg} describes the procedure.%
An interesting feature is that when a neuron becomes larger than a certain threshold, we no longer update the neuron.
This is a variant of gradient clipping often used in training recurrent neural networks (e.g. \citet{merity2017regularizing,gehring2017convolutional,peters2018deep}) --- here we drop the gradients of the large weights instead of re-scaling them.
The truncation allows us to upper bound the norm of every neuron. 
Our main result is to show that Algorithm \ref{alg} learns the target network accurately in polynomially many iterations.

\begin{algorithm}[t!]
\caption{Truncated gradient descent for two-layer neural nets} \label{alg}
\begin{algorithmic}
	\Input{A training dataset $\set{Z}$.}
	\Req{Network width $m$, learning rate $\eta$, truncation parameters $\lambda_0, \lambda_1$.}
	\Output{The final learned network $\hat{W} = \Set{w_i\in\real^d}_{i=1}^m$.}
	\State Initialization. Initialize $w^{(0)}_i \sim \cN\left(0, \frac 1 {d}\cdot\id_{d\times d}\right)$, for $1\le i\le m$.
	\State Stage 1. Let $\lambda = \lambda_0 = \Theta(\frac{1}{\poly(d)})$. For $t \le \Theta\left( \frac{d^2}{ \eta C(\kappa)\log d} \right)$, update every neuron as follows:
		\[w^{(t+1)}_i = w^{(t)}_i - \eta \cdot \indi{\| w_i^{(t)} \|_2^2 \leq \frac{1}{\lambda_0}} \cdot \nabla_{w_i} \obje(W), \text{ for every } 1 \le i \le m.\]
		\vspace{-0.1in}
	\State Stage 2. Let $\lambda = \lambda_1 = \Theta(\frac{1}{\poly_{\kappa}(d)})$. For $t \le \Theta\left( \frac{d^{1 + 10 \cz} }{\eta}\right)$, update every neuron as follows:
		\[w^{(t+1)}_i = w^{(t)}_i - \eta \cdot \indi{\| w_i^{(t)} \|_2^2 \leq \frac{1}{\lambda_1}} \cdot \nabla_{w_i } \obje(W), \text{ for every } 1 \le i \le m.\]
\end{algorithmic}
	\vspace{-0.05in}
\end{algorithm}

\begin{theorem}[Main result]\label{thm:main}
	Let $\set{Z}$ be a training dataset with $N =  \poly_{\kappa}(d)$ samples generated by the model described above.%
	\footnote{Let $\poly(d)$ denote a polynomial of $d$ and $\poly_\kappa(d)$ denote a polynomial whose degree may depend on $\kappa$.}
	Let $C(\kappa)$ be a sufficiently large constant that only depends on $\kappa$.
	Let $0 < \cz < 1/100$ be a sufficiently small absolute constant that does not depend on $\kappa$.
	Let $\lambda_0$ be a sufficiently small value on the order of $1/\poly(d)$ and $\lambda_1 \le \lambda_0 / O(\poly_{\kappa}(d))$ be a sufficiently small value on the order of $1/\poly_{\kappa}(d)$.
	For a learning rate $\eta < \min\left(\lambda_0^2, O(\frac{1}{\poly_{\kappa}(d)})\right)$, a network width $m \ge \Omega(\poly(d) / \poly(\lambda_1))$, and truncation parameters $\lambda_0, \lambda_1$,
	let $\hat{W}$ be the final network learned by Algorithm \ref{alg}.
	With probability $1 - \frac{1}{\poly(d)}$ over the choice of the random initialization, we have that the population loss of $\hat{W}$ satisfies
	$$\objpop(\hat{W}) \leq O({1} / {d^{1 + \cz}}).$$
\end{theorem}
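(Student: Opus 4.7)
The two stages of Algorithm~\ref{alg} correspond to two phases of the proof: a \emph{feature-learning} phase in which the neurons align with the ground-truth basis $\{\pm w_j^\star\}$, and a \emph{magnitude-fitting} phase in which the aggregated norms on each cluster are tuned to match the coefficients $a_j$, pushing the population loss below the $\Theta(1/d)$ barrier faced by any polynomial-feature kernel method.

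\emph{Stage 1 (alignment).} First I would decompose each neuron as $w_i = r_i \hat w_i$ and track the evolution of the inner products $\langle \hat w_i, w_j^\star\rangle$, together with the component of $\hat w_i$ in $\mathrm{span}(w_1^\star,\dots,w_d^\star)^\perp$. Since $x\sim \mathcal{N}(0,\id_{d\times d})$, the population gradient $\nabla_{w_i}\objpop(W)$ admits a Hermite expansion whose leading projection onto $w_j^\star$ is driven by the residual $f_W - f^\star$ restricted to that direction. The orthonormality of $\{w_j^\star\}$ decouples these per-direction dynamics to leading order, and the resulting angular dynamics mimic a tensor-power iteration: any initial imbalance among the coordinates $|\langle \hat w_i^{(0)}, w_j^\star\rangle|$ is amplified multiplicatively, while the orthogonal part and the runner-up coordinates contract. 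A counting argument over the $m$ random initial neurons then shows that with high probability each of the $2d$ signed directions $\pm w_j^\star$ attracts at least one neuron with $\langle \hat w_i, \pm w_j^\star\rangle \ge 1-o(1)$ by the end of Stage~1. The loose truncation $\|w_i\|^2 \le 1/\lambda_0$ only prevents runaway growth and does not interfere with the alignment.

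\emph{Stage 2 (beyond NTK).} Once alignment holds we may write $w_i \approx r_i \sigma_i w_{k(i)}^\star$ for cluster assignments $(k(i),\sigma_i)\in[d]\times\{\pm 1\}$, so that
\[
f_W(x)\approx \sum_{k\in[d],\,\sigma\in\{\pm 1\}} M_{k,\sigma}\,\relu\!\big(\sigma\,{w_k^\star}^{\!\top}\!x\big),\qquad M_{k,\sigma}=\frac{1}{m}\sum_{i\,:\,k(i)=k,\,\sigma_i=\sigma} r_i^2,
\]
and the problem collapses to adjusting the $2d$ aggregate magnitudes $M_{k,\sigma}$ so that $M_{k,+}+M_{k,-}\approx a_k$ on an effectively convex landscape, for which truncated gradient descent converges geometrically. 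The tighter truncation $\|w_i\|^2 \le 1/\lambda_1$ is the crucial ingredient: it keeps each $r_i$ small enough that the linearization error of the alignment approximation is $o(1/d)$, which is precisely what lets the population loss drop below the kernel barrier. A standard Rademacher-complexity bound at $N=\poly_\kappa(d)$ then transfers the empirical guarantee on $\obje(\hat W)$ to $\objpop(\hat W) \le O(1/d^{1+\cz})$.

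\emph{Main obstacle.} The hardest step is Stage~1: the angular ODE for $\hat w_i$ is coupled to all other neurons through the global residual $f_W - f^\star$, so the winner-take-all behavior must be established uniformly while that residual evolves. My plan is to (i)~isolate the dominant Hermite-order term in the population gradient and control contributions from higher-order terms, (ii)~track a potential function comparing the leading coordinate of $\hat w_i$ to its runner-up and show it grows throughout Stage~1, and (iii)~show that empirical-vs-population sample error and the truncation only introduce lower-order perturbations of this potential, via concentration on the Hermite-transformed feature map. The need to control both the leading and subleading terms throughout $\Theta(d^2/(\eta C(\kappa)\log d))$ Stage-1 steps is precisely what forces the scaling of $\lambda_0$, $\lambda_1$, and $\eta$ in the theorem.
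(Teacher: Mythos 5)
Your high-level narrative (feature learning, then magnitude fitting) captures the spirit of the two-stage algorithm, but there are several concrete mismatches and gaps that would cause the proposal to fail.

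The most direct error is the role of the truncation parameters. You write that the Stage-2 bound $\|w_i\|_2^2 \le 1/\lambda_1$ is the \emph{tighter} truncation and that it ``keeps each $r_i$ small enough.'' The theorem has $\lambda_1 \le \lambda_0 / O(\poly_\kappa(d))$, so $1/\lambda_1 \gg 1/\lambda_0$: Stage~2 \emph{relaxes} the truncation, which is precisely what allows the small minority of basis-like neurons to grow large enough to carry a full $\Theta(1/d)$ share of the target function despite constituting only a $d^{-\exp(\poly(\kappa))}$ fraction of the width. The tight Stage-1 truncation is there to control error propagation in the finite-width coupling (see the remark after Lemma~\ref{lem:final_222}), which is the opposite of the role you assign it.

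Second, you describe Stage~2 as ``adjusting the $2d$ aggregate magnitudes $M_{k,\sigma}$ on an effectively convex landscape, for which truncated gradient descent converges geometrically.'' The paper's Stage~2 analysis is substantially more delicate: the potential $\Phi^{(t)}$ in Lemma~\ref{lem:final_333} must simultaneously track the $0$th-order residual $\Delta_\pm$, the per-coordinate $2$nd-order residuals $\delta_\pm$, the good-neuron mass $\gamma_i$, and the bad-neuron mass $\beta_i$, and the contraction rate is only $\Phi^{(t)}(1 - \eta\Omega(\Phi^{(t)}))$, i.e.\ sublinear not geometric, which is exactly why one needs $T_4 = \Theta(d^{1+10Q}/\eta)$ steps to reach $O(d^{-(1+Q)})$ rather than $o(1/d)$ in $\tilde{O}(d/\eta)$ steps. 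You also need the bad-neuron mass to stay $1/\poly(d)$, which does not follow from a convexity argument.

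Third, you omit the conditional-symmetry property (Definition~\ref{def_cs_prop}, Claims~\ref{cl_ss}--\ref{claim:symmetry}), which is the central structural device: it kills the first-order Hermite term and reduces each $[\nabla_{2j,v}]_i$ to an odd polynomial in $v_i$, which is what decouples the per-coordinate dynamics in Stage~1.1 and makes the tensor-power comparison in Stage~1.2 tractable. Without it, the ``angular ODE'' you describe is coupled through cross-terms that your plan step (i) does not give a concrete mechanism for suppressing. Relatedly, you collapse Stage~1 into a single alignment phase, but the paper's Stage~1.1 (PCA-like fitting of the $0$th and $2$nd moments) must run first so that $\Delta^{(t)}$ and $\delta_\pm^{(t)}$ drop to $O(c_t/d^2)$; only then does the $4$th-order signal $B_{1,4} a_i \langle e_i,v\rangle\langle e_i,\bar v\rangle^2$ emerge from under the $0$th/$2$nd-order gradients (Proposition~\ref{lem:four_plus_interval}). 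Finally, replacing the infinite-width/finite-width coupling (Claim~\ref{claim:911}, Lemmas~\ref{lem:error_final_stage_1}--\ref{lem:error_final_stage_22}) with ``a standard Rademacher-complexity bound'' underestimates the problem: the sampling error enters the gradient at every step and can be amplified multiplicatively over $\Theta(d^2/\eta)$ iterations, so one must track $\xi_w^{(t)}$ through the dynamics, not just bound the generalization gap of the final iterate.
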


The intuition behind our main result is as follows.
We build on a connection between the popluation $L(W)$ and tensor decomposition for Gaussian inputs \cite{ge2017learning,ge2018learning}.
By expanding the population loss in the Hermite polynomial basis, the optimization problem becomes an infinite sum of tensor decompositions problems (cf. equation~\eqref{eq:Fjaifosajifa} in Section~\ref{sec_prelim})
To analyze the gradient descent dynamic on the infinite sum tensor decomposition objective, we first analyze the infinite-width case -- when $m$ goes to infinity.
We establish a conditional-symmetry condition on the population of neurons, which greatly simplifies the analysis.
This is established using the fact that our input distribution and labeling function (the absolute value activation) are both symmetric.
Our analysis uncovers a stage-wise convergence of the gradient descent dynamic as follows, which matches our observations in simulations. 
\begin{itemize}
	\item First, Algorithm \ref{alg} minimizes the 0th and 2nd order tensor decompositions. Informally, the distribution of neurons is fitting to the 0th moment and the 2nd moment of $\Set{w_i^{\star}}_{i=1}^d$.
	\item Second, Algorithm \ref{alg} minimizes the 4th and higher order tensor decompositions.
	Initially, there is a long plateau where the evolution is slow, but after a certain point gets faster.
	As a remark, this behavior has been observed for randomly initialized tensor power method \cite{anandkumar2017analyzing}.
	Because the solution to the 4th and higher order orthogonal tensor decomposition problems is unique, we can learn the ground truth weights $\Set{w_i^{\star}}_{i=1}^d$. 
\end{itemize}
Then we show that the sampling error between the infinite-width case and the finite-width case is small.
The finite-width case can be thought of as a finite sample of the infinite-width case.
As the network width increases, the sampling error reduces.
In Section \ref{sec_overview} and \ref{sec_couple}, we will first present a proof overview. The full proof is given in  Section \ref{app_inf} and \ref{app_finite}. 
%

\medskip
As a complement, we show that the generalization error bound of Theorem \ref{thm:main} cannot be achieved by kernel functions with polynomially large feature map.
Hence, by minimizing the higher order tensor decomposition terms, the learned neural network is provably more accurate than kernel functions that simply fit the lower order terms.
Our result is stated as follows.

\tnote{change globally feature mapping to feature map}

\begin{theorem}[Lower bound]\label{thm_lb}
  Under either of the following two situations,
  \begin{itemize}
  	\item[1.] We use a feature map  $\phi(x) : \mathbb{R}^d \to \mathbb{R}^N$ with $N=\poly(d)$
  	\item[2.] We use kernel method with any kernel $K: \mathbb{R}^N  \times \mathbb{R}^N \to \mathbb{R}$ with $N=\poly(d)$ samples.
  \end{itemize}
  There exists a set of orthonormal weights $\{w^*_i\}_{i \in [d]}$ and $ \{a_i\}_{i \in [d]}$ where $a_i \in [\frac{1}{2 d}, \frac{2}{d}]$ for all $1\le i\le d$ satisfying $\sum_{i \in [d]} a_i = 1$,
	such that the following holds:
	With probability at least $0.999$ over the training set $\set{Z}$,
	for any $w_R, w_K \in \mathbb{R}^N$ with $\set{R}(x) :=  w_R^{\top} \phi(x)$ and $\set{K}(x) :=  w_K^{\top} [K(x, x_i)]_{i = 1}^N$, the population loss of the feature map $\phi(x)$ and kernel $\set{K}(x)$, denoted by $L(\set{R}) = \E_{x \sim \cN(0, \id_{d \times d})} (f^{\star}(x) - \set{R}(x))^2 $ and  $L(\set{K}) = \E_{x \sim \cN(0, \id_{d\times d})} (f^{\star}(x) - \set{R}(x))^2$, satisfies 
	\begin{align}
    L(\set{R}) = \Omega\left(\frac 1 d\right) ~~\text{ and }~~ L(\set{K}) = \Omega\left( \frac{1}{d } \right). \label{eq_lb_result}
	\end{align}
\end{theorem}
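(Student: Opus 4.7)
}
The plan is to isolate a high-order Hermite component of $f^{\star}$ that has $L^{2}$-norm of order $1/\sqrt{d}$ but which, for a generic choice of $W^{\star}$, is essentially orthogonal to any $N$-dim feature space. In both cases the candidate predictors lie in an $N$-dim subspace $V\subseteq L^{2}(\cN(0,\id))$: in case~1, $V=\mathrm{span}\{\phi_{j}\}_{j=1}^{N}$ is deterministic; in case~2, $V=\mathrm{span}\{K(\cdot,x_{i})\}_{i=1}^{N}$ depends on the i.i.d.\ Gaussian inputs $\{x_{i}\}$ but \emph{not} on $W^{\star}$. Since the population loss of any predictor in $V$ is at least $\|f^{\star}-P_{V}f^{\star}\|^{2}_{L^{2}}$, it suffices to lower-bound this projection error by $\Omega(1/d)$.

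Decompose $f^{\star}=\sum_{k\ge 0}f^{\star}_{2k}$ into Hermite components, where $f^{\star}_{2k}(x)=\hat c_{2k}\sum_{i}a_{i}\,H_{2k}(w_{i}^{\star\top}x)$ and $\hat c_{2k}$ is the $2k$-th Hermite coefficient of $|z|$ for $z\sim\cN(0,1)$. Since $\langle H_{2k}(u^{\top}x),H_{2k}(v^{\top}x)\rangle=(2k)!(u^{\top}v)^{2k}$, orthonormality of $\{w_{i}^{\star}\}$ gives $\|f^{\star}_{2k}\|^{2}=\hat c_{2k}^{2}(2k)!\sum_{i}a_{i}^{2}=\Theta(1/d)$ for every constant $k\ge 1$ (using $\sum_{i}a_{i}^{2}=\Theta(1/d)$ from $a_{i}\in[1/(2d),2/d]$ and the fact that $\hat c_{2k}^{2}(2k)!$ is a positive constant at constant $k$). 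Let $\cH_{2k}$ denote the $2k$-th Hermite subspace of dimension $D_{2k}=\binom{d+2k-1}{2k}=\Theta(d^{2k})$ and $Q_{2k}$ the orthogonal projection onto it. Write $N\le d^{c}$ and fix the constant $K:=c+1$, so that $D_{2K}\gg dN$; let $V':=Q_{2K}V\subset\cH_{2K}$, which has dimension at most $N$. Pythagoras followed by minimizing over $g\in V$ gives
\begin{equation*}
\inf_{g\in V}\|f^{\star}-g\|^{2}\ \ge\ \inf_{g\in V}\|Q_{2K}(f^{\star}-g)\|^{2}\ =\ \inf_{v\in V'}\|f^{\star}_{2K}-v\|^{2}\ =\ \|f^{\star}_{2K}\|^{2}-\|P_{V'}f^{\star}_{2K}\|^{2}.
\end{equation*}

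To control $\|P_{V'}f^{\star}_{2K}\|^{2}$, draw $\{w_{i}^{\star}\}$ uniformly from orthonormal bases of $\real^{d}$, independently of $V$. Each $w_{i}^{\star}$ is marginally uniform on $S^{d-1}$, so Schur's lemma applied to the $O(d)$-representation $\cH_{2K}$ (whose decomposition into harmonic summands $\mathcal Y_{2K},\mathcal Y_{2K-2},\dots,\mathcal Y_0$ has dominant piece $\mathcal Y_{2K}$ of dimension $\Theta(D_{2K})$) yields, for any fixed $N$-dim subspace $U\subset\cH_{2K}$,
\begin{equation*}
\E_{w\sim\mathrm{Unif}(S^{d-1})}\bigl[\|P_{U}H_{2K}(w^{\top}x)\|^{2}\bigr]\ \le\ O(N/D_{2K})\cdot(2K)!.
\end{equation*}
Expanding $f^{\star}_{2K}$ as a $d$-term sum and applying Cauchy-Schwarz to the $d$ terms $\hat c_{2K}a_{i}P_{U}H_{2K}(w_{i}^{\star\top}x)$ gives
\begin{equation*}
\E_{W^{\star}}\bigl[\|P_{V'}f^{\star}_{2K}\|^{2}\bigr]\ \le\ d\cdot\hat c_{2K}^{2}\sum_{i}a_{i}^{2}\cdot O(N/D_{2K})\cdot(2K)!\ =\ O\!\left(\frac{dN}{D_{2K}}\right)\cdot\|f^{\star}_{2K}\|^{2}.
\end{equation*}
With $K=c+1$ the prefactor is at most $d^{-c-1}=o(1)$, so Markov gives $\|P_{V'}f^{\star}_{2K}\|^{2}\le\tfrac12\|f^{\star}_{2K}\|^{2}$ with probability $1-o(1)$ over $W^{\star}$; combined with the previous display the loss is $\ge\tfrac12\|f^{\star}_{2K}\|^{2}=\Omega(1/d)$. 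A two-step Markov/Fubini argument (in case~2, first condition on $\set{Z}$ and apply the above for random $W^{\star}$, then integrate $\set{Z}$ out and derandomize) finally produces a deterministic choice of $\{w_{i}^{\star}\}$ and $\{a_{i}\}$ (e.g.\ $a_{i}=1/d$) for which \eqref{eq_lb_result} holds with probability $\ge 0.999$ over $\set{Z}$.

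The main obstacle I anticipate is proving the $O(d)$-averaging inequality for $\E_{w}[\|P_{U}H_{2K}(w^{\top}x)\|^{2}]$ with clean constants, since $\cH_{2K}$ is reducible under $O(d)$ and one must verify that the scalars appearing on each harmonic summand (via a Funk--Hecke / Gegenbauer computation) are all of order $(2K)!/D_{2K}$. The factor of $d$ incurred by Cauchy-Schwarz above (which sidesteps the joint-moment analysis of two random \emph{orthogonal} unit vectors) is not tight, but is absorbed by choosing $K=c+1$ so that $D_{2K}\gg dN$; a tighter $N/D_{2K}$ bound would require handling the off-diagonal cross terms $\E[\langle P_{U}H_{2K}(w_{i}^{\star\top}x),H_{2K}(w_{j}^{\star\top}x)\rangle]$ for $i\ne j$ directly.
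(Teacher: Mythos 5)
Your high-level plan is a legitimate, genuinely different route from the paper's: you work directly in the Gaussian $L^2$ space and Hermite chaos, use a random orthonormal basis for $W^\star$, and invoke $O(d)$-representation theory, whereas the paper restricts to the Boolean cube via the random-sign substitution $x = \bar x \circ \tau$, uses an explicit Hadamard-block construction with disjoint supports, and derives the lower bound from a rank argument on a matrix of Fourier coefficients of $f^\star(\bar x\circ\tau)$. Your approach avoids the explicit construction and the hypercube detour, which is appealing, but it hits a concrete numerical gap in the Schur step.

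\paragraph{The gap.} The claimed averaging inequality
\begin{align*}
\E_{w\sim\mathrm{Unif}(S^{d-1})}\bigl[\|P_{U}H_{2K}(w^{\top}x)\|^{2}\bigr]\ \le\ O(N/D_{2K})\cdot(2K)!
\end{align*}
is false, precisely because of the reducibility you flag. Let $g_w := H_{2K}(w^\top x)$ and decompose $g_w = \sum_{j=0}^{K} g_w^{(2j)}$ with $g_w^{(2j)} \in \mathcal Y_{2j}$. Schur's lemma gives $\E_w\bigl[\|P_{U_{2j}} g_w^{(2j)}\|^2\bigr] = \frac{\dim U_{2j}}{\dim \mathcal Y_{2j}}\,\E_w\bigl[\|g_w^{(2j)}\|^2\bigr]$, and the ``scalar'' $\E_w\bigl[\|g_w^{(2j)}\|^2\bigr]/\dim\mathcal Y_{2j}$ is \emph{not} uniformly $O((2K)!/D_{2K})$ across summands. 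The radial piece $\mathcal Y_0$ has $\dim\mathcal Y_0 = 1$ but
\begin{align*}
\|g_w^{(0)}\|^2 = \bigl\|\E_{w'}[g_{w'}]\bigr\|^2 = (2K)!\,\E_{w',w''}\bigl[(w'^\top w'')^{2K}\bigr] = \Theta_K\!\left(\frac{(2K)!}{d^{K}}\right),
\end{align*}
so its Schur scalar is $\Theta_K((2K)!/d^K)$, a factor $d^K$ larger than the claimed $(2K)!/D_{2K} = \Theta_K((2K)!/d^{2K})$. (More generally, $\|g_w^{(2j)}\|^2 \approx (2K)!\,d^{-(K-j)}$ and $\dim\mathcal Y_{2j} \approx d^{2j}$, so the scalars scale like $(2K)!\,d^{-(K+j)}$ and are maximized at $j=0$.) Indeed, taking $U$ to be the span of the $w$-independent radial function $g_w^{(0)}$ already gives $\E_w[\|P_U g_w\|^2] = \Theta_K((2K)!/d^K) \gg (2K)!/D_{2K}$, violating the bound even at $N=1$. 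Plugging the correct scalar into your chain gives a prefactor $O(dN/d^K)$ instead of $O(dN/D_{2K})$; at $K = c+1$ this is $O(1)$, not $o(1)$, so the Markov step does not close. The argument looks salvageable by taking $K \ge c+2$ (or by excising the low-degree harmonic summands from the projection, since their total dimension is $\poly(d)$ and a learner could afford to include them as features), but as written the parameter choice and the key display are both off by a factor of $d^K$, and your stated expectation that ``the scalars appearing on each harmonic summand \dots are all of order $(2K)!/D_{2K}$'' is what would need to be corrected.
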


Comparing the above result with Theorem \ref{thm:main}, we conclude that provided with polynomially many samples, Algorithm \ref{alg} can recover the target two-layer neural network more accurately than the feature map and kernel method described above.
Section \ref{app_lb} shows how to prove Theorem \ref{thm_lb}.
%
\subsection{Related Work}

{\bf Neural tangent kernel (NTK).} A sequence of recent work shows that the learning process of gradient descent on over-parametrized neural networks, under certain initializations, reduces to the learning process of the associated neural tangent kernel. See \citet{jacot2018neural,arora2019finegrained,cao2019generalization,du2018gradient,arora2019exact,al19-rnngen,als18dnn,als18,li2018learning,zou2018stochastic,du2018gradient2,dfs16,ghorbani2019linearized,li2019towards,hanin2019finite,yang2019scaling} and the references therein.
For NTK based results, the learning process of gradient descent can be viewed as solving convex kernel regression.
Our work analyzes a non-convex objective that involves an infinite sum of tensor decomposition problems.
By analyzing the higher order tensor decompositions, we can achieve a smaller generalization error than kernel methods.
\tnote{this is still not right --- NTK also initialize the way. The diff is the parametrization I believe.}\tnote{i thnk the comparison with norm is not great because thee parameterization is different. If we really literally compare the norm without taking into account $1/m$ in the parameterization, NTK is similar to us.}

\citet{AL2019-resnet,al20} show that over-parametrized neural networks can learn certain concept class more efficient than any kernel method.
Their work assumes the target network satisfies a certain ``information gap'' assumption between the first and second layer, while our target network does not require such gaps. \tnote{it's good to extend this sentence above a bit}
\citet{all18,bai2019beyond} go beyond NTK by studying quadratic approximations of neural networks. 
Our work further analyzes higher-order tensor decompositions that are present in the Taylor expansion of the loss objective.


\smallskip
\noindent{\bf Two-layer neural networks given Gaussian inputs.} There is a large body of work on learning two-layer neural networks over the last few years, such as~\citet{kawaguchi2016deep,soudry2016no,xie2016diversity,soltanolkotabi2017theoretical,tian2017analytical,brutzkus2017globally,boob2017theoretical,vempala2018polynomial,oymak2019towards,bakshi2018learning,yehudai2019power,zhang2018learning,li2017provable,li2020can,allen2020feature}.
Our work is particularly related to those that learn a two-layer neural network given Gaussian inputs.
~\citet{li2017convergence,zhong2017recovery} consider learning two-layer networks with ReLU activations with a warm start tensor initialization, as opposed to from a random initialization.
\citet{du2017gradient} consider learning a target function consisting of a single ReLU activation.
~\citet{brutzkus2017globally,tian2017analytical} study the case where the weight vector for each neuron has disjoint support.
Apart from the gradient descent algorithm, the method of moments has also been shown to be an effective strategy with provable guarantees~(e.g. \cite{bakshi2018learning,ge2018learning}).

The closest work to ours is \citet{ge2017learning} that consider a similar concept class.
However, their work requires designing a complicated loss function, which is different from the mean squared loss.
The learner network also uses a low-degree activation function as opposed to the ReLU activation.
These are introduced to address the challenge of analyzing non-convex optimization for tensor decomposition with multiple components as variables, because prior works mostly focus on the non-convex formulation that optimizes over a single component (e.g., see~\cite{ge2017optimization}). \citet{ge2017learning} have stated the question of analyzing the gradient descent dynamic for minimizing the sum of second and fourth order tensor decompositions as a challenging open question. 
Our analysis not only applies to this setting, but also allows for more even order tensor decompositions.
Apart from ReLU activations, quadratic activations have been studied in \citet{li2017algorithmic,oymak2019towards,soltanolkotabi2017theoretical}. 

\smallskip
\noindent{\bf Infinite-width neural networks.}
Previous work such as \citet{mmn18,cb18} show that as the hidden layer width goes to infinity, gradient descent approaches the Wasserstein gradient flow.
\citet{mmn18} use tools from partial differential equations to prove the global convergence of the gradient descent.
Both of these results do not provide explicit convergence rates. 
\citet{wei2018margin} show that under a certain regularity assumption on the activation function, the Wasserstein gradient flow converges in polynomial iterations for infinite-width neural networks..

\medskip
\noindent\textbf{Organizations.}
The rest of the paper is organized as follows.
In Section \ref{sec_prelim}, we reduce our setting to learning a sum of tensor decomposition problems.
In Section \ref{sec_overview}, we describe an overview of the analysis for the infinite-width case.
In Section \ref{sec_couple}, we show how to connect the above case to the gradient descent dynamic on the empirical loss for polynomially-wide networks.
Finally we validate our theoretical insight on simulations in Section \ref{sec_simulate}.
In Section \ref{app_inf}, we provide the proof of the infinite-width case.
In Section \ref{app_finite}, we provide an error analysis of the infinite-width case and complete the proof of Theorem \ref{thm:main}.
In Section \ref{app_lb}, we present the proof of Theorem \ref{thm_lb}.

\section{Preliminaries}\label{sec_prelim}

Recall that the ground-truth weights $\Set{w_i^*}_{i=1}^d$ forms an orthonormal basis.
Since the input distribution $x \sim \cN(0, I_{d \times d})$ and the initialization $\Set{w_i}_{i=1}^m \sim \cN\left(0, \frac 1 {d}\cdot\id_{d\times d}\right)$ are both rotation invariant, without loss of generality we can assume that $w_i^* = e_i$, for all $1\le i\le d$.

We can average out the randomness in $x$ by applying Theorem 2.1 of \citet{ge2017learning} on the loss function $\objpop(W)$, by expanding the activations function in the Hermite basis \cite{o2014analysis}.
	{\begin{align} \label{eq:Fjaifosajifa}
    \objpop(W)=& c_0\bignormFro{\frac 1 m \sum_{i=1}^m \norm{w_i}^2 - \sum_{i=1}^d a_i}^2     +  c_1\bignormFro{\frac 1 m \sum_{i=1}^m \norm{w_i} w_i }^2   +  {c}_2 \bignormFro{\frac 1 m\sum_{i=1}^m w_i^{\otimes 2} - \sum_{i=1}^d a_i e_i e_i^{\top}}^2 \nonumber \\
        &+ \sum_{j\ge 2} {c}_{2j} \bignormFro{\frac 1 m\sum_{i=1}^m {w_i}^{\otimes 2} \otimes {{}\bar{w}_i}^{\otimes (2j-2)} - \sum_{i=1}^d a_i e_i^{\otimes 2j}}^2,
  \end{align}}%
where $c_k = \frac {2 [(k-3)!!]^2} {\pi \cdot k!}$ is the Hermite coefficients of the absolute value function for any $k \ge 0$.
We remark that the population loss is a infinite sum of orthogonal tensor decomposition problems!
For example, the $0$-th order tensor decomposition concerns the $l_2$-norm of the weights.
More generally, the $k$-order tensor decomposition concerns the $k$-th moment of the weights.

\medskip
\noindent{\bf The distribution of neuron weights. }
We begin by considering an infinite-width neural network and then extend the proof to finite-width neural networks.
Following \citet{wei2018margin}, %
an infinite-width neural network specifies a distribution of neuron weights. Let $\cP$ denote a distribution over $\mathbb{R}^d$.
A learner network (cf. equation \eqref{eq_labeling}) using $\cP$ as its neuron weights gives the output for an input $x\in\real^d$ from the Gaussian distribution:
\begin{align}
f_{\cP}(x) = \exarg{w\sim \cP}{\|w\|_2 \cdot \relu\left(w^\top x\right)}.
\end{align}
Correspondingly, the population loss of $f_{\cP}(x)$ is given as
\vspace{-0.0in}
{\begin{align}\label{eq:fajsoifsajif}
	\objinf(\cP)=& c_0\bignormFro{\E_{w \sim \cP} \norm{w}^2 - \sum_i^d a_i}^2 + c_1\bignormFro{\E_{w \sim \cP} w \|w\|_2}^2
	+ {c}_2 \bignormFro{ \E_{w \sim \cP } w^{\otimes 2} - \sum_{i=1}^d a_i e_i e_i^{\top}}^2 \nonumber\\
	&+ \sum_{j\ge 2} {c}_{2j} \bignormFro{ \E_{w \sim \cP }{w}^{\otimes 2} \otimes {{}\bar{w}}^{\otimes (2j-2)} - \sum_{i=1}^d a_i e_i^{\otimes 2j}}^2.
\end{align}}%

\medskip
\noindent{\bf Gradient descent update.} It has been shown in prior works that gradient descent in the (natural) parameter space corresponds to Wasserstein gradient descent in the distributional space.
However, we found that the Wasserstein gradient perspective is not particularly helpful for us to analyze our algorithms and therefore we work with the update in the parameter space.
The distribution $\cP$ can be viewed as a collection of infinitesimal neurons. %
The gradient of each neuron $v$ is given by computing the gradient of the objective $L(W)$ w.r.t a particle $v$ assuming the rest of the particles follow the distribution $\cP$.
Let $\nabla_v\objinf(\cP)$ denote the gradient of $v$. We have that
{\begin{align}
&	\nabla_v \objinf(\cP)  \define  b_0 \left( \E_{w\sim\cP} \| w\|_2^2 - 1  \right) v + b_1 \left( \E_{w\sim\cP} \| w\|_2 w  \| v\|_2 + \|w\|_2 \langle w, v \rangle \bar{v}  \right)
		\\&+ b_2 \left( \E_{w\sim\cP} \langle w, v \rangle w -  \sum_{i=1}^d a_i \langle e_i, v \rangle e_i\right) \nonumber
	 + \sum_{j \geq 2} b_{2j}  \left( \E_{w\sim\cP} \langle w, v \rangle \langle \bar{w} , \bar{v} \rangle^{2j - 2} w -   \sum_{i=1}^d a_i \langle e_i, v \rangle \langle e_i, \bar{v} \rangle^{2j - 2} e_i \right) \nonumber
	\\
	&+ \sum_{j \geq 2} b_{2j}'\ \Pi_{v^{\bot}}\left( \E_{w\sim\cP} \langle w, v \rangle \langle \bar{w} , \bar{v} \rangle^{2j - 2} w -   \sum_{i=1}^d a_i\langle e_i, v \rangle \langle e_i, \bar{v} \rangle^{2j - 2} e_i \right),\label{eqn:3}
\end{align}}%
where $b_0 = 4c_0, b_1 =  2c_1 $, and for any $j \ge 2$, $b_{2j } =  (4j) \times c_{2 j } = \Theta \left( \frac{1}{j^2} \right)$ and $b_{2j'} = (4j - 4)\times c_{2 j }$.
We use $\nabla_v \objinf$ and $\nabla_v$ as a shorthand for $\nabla_{v}L_{\infty}(\cP)$.
Based on equation \eqref{eqn:3}, we can further decompose $\nabla_v L_{\infty}(\cP)$ into the sum of $\nabla_{2j, v} L_\infty(\cP)$ for $j \ge 0$, where the $2j$-th gradient refers to the gradient of the $2j$-th tensor decomposition.
As a result, given a neural network with neuron distribution $\cP^{(t)}$, the neuron distribution after a truncated gradient descent step, denoted by  $\cP^{(t+1)}$, satisfies that
\begin{align}\label{eq:update_p}
	v^{(t + 1)} \sim \cP^{(t+1)}  \Leftrightarrow
	v^{(t+1)} := v^{(t)} - \eta \indi{\|v^{(t)}\|_2^2 \leq \frac{1}{2 \lambda} } \nabla_{v^{(t)}}\objinf(\cP^{(t)}), \text{ for } v^{(t)}\sim \cP^{(t)}.
\end{align}

\medskip
\noindent {\bf Finite-width case}.
We briefly describe the connection between the above infinite-width case and the finite-width case.
Intuitively, we can think of the finite-width case as sampling $m$ neurons randomly from the neuron population $\cP$ in the infinite-width case.
There are two sources of sampling error that arise from the above process: (i) the error of the gradients between the finite neuron distribution and the infinite neuron distribution;
(ii) the error between the empirical loss and the population loss.
Because of gradient truncation,  the norm of every neuron is bounded by $1/\lambda$.
Therefore, the sampling error reduces as $m$ and $N$ increases, as shown in the following claim.
\begin{claim}\label{claim:911}
For every $\lambda > 0$, for every distribution $\cP$ over $\mathbb{R}^d$ supported on the ball $\{ w \in \mathbb{R}^d \mid \| w\|_2^2 \leq \frac{1}{\lambda} \}$, let $W = \{w_i\}_{i = 1}^m$ be i.i.d. random samples from $\cP$.
For any sufficiently small $\delta > 0$, with probability at least $1 - \delta$ over the randomness of $W$, we have that:
\begin{align*}
	\left| \obj\left(W \right)- \objinf (\cP)\right| \leq  \frac{\poly\left(  \frac{1}{\lambda}\right) \log \frac{1}{\delta}}{\sqrt{m}}.
\end{align*}
With probability at least $1 - \delta$ over the randomness of $\{w_i\}_{i = 1}^m$ and the training dataset $\set{Z}$, for every $w \in W$, we have that:
	\begin{align*}
		\left\| \nabla_{w} \obje(W) -  \nabla_{w} L_{\infty}(\cP) \right\|_2 \leq \poly\left(\frac{1}{\lambda}\right) \log \frac{m}{\delta} \left( \frac{1}{\sqrt{m}} + \frac{1}{\sqrt{N}} \right).
	\end{align*}
\end{claim}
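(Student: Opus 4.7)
}

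The plan is to split each of the two bounds into the appropriate source of randomness and then apply standard vector-valued concentration, leveraging the fact that gradient truncation ensures $\|w\|_2 \le 1/\sqrt{\lambda}$ uniformly over $w \in W$ and over the support of $\cP$.

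\emph{Loss bound.} First, I would use the Hermite expansions \eqref{eq:Fjaifosajifa} and \eqref{eq:fajsoifsajif} to write both $L(W)$ and $L_\infty(\cP)$ as the same weighted sum, with coefficients $c_0, c_1, c_2, c_{2j}$, of squared Frobenius norms of moment tensors. The $2j$-th moment tensor takes the form $M_{2j}(\mu) = \E_{w \sim \mu}\bigl[w^{\otimes 2} \otimes \bar w^{\otimes (2j-2)}\bigr]$; for $W$ it is the empirical analogue $\widehat M_{2j}(W) = \tfrac{1}{m} \sum_{i=1}^m w_i^{\otimes 2} \otimes \bar w_i^{\otimes (2j-2)}$. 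Crucially, each summand in $\widehat M_{2j}(W)$ has Frobenius norm exactly $\|w_i\|_2^2 \le 1/\lambda$, uniformly in $j$, and identically for the lower-order terms ($\|w\|^2$, $\|w\|w$, $w^{\otimes 2}$). Applying a vector Bernstein / Hoeffding inequality in the appropriate tensor Hilbert space then yields $\|\widehat M_{2j}(W) - M_{2j}(\cP)\|_F \le O(\lambda^{-1} \log(1/\delta)/\sqrt{m})$ with probability at least $1 - \delta/2^{j+2}$, for every $j$. Using $|\|a\|^2 - \|b\|^2| \le \|a - b\|(\|a\| + \|b\|)$ and the uniform bound $\|M_{2j}(\cP)\|_F, \|\widehat M_{2j}(W)\|_F \le 1/\lambda$, each squared-Frobenius-norm term is controlled by $O(\lambda^{-2} \log(1/\delta)/\sqrt{m})$. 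Summing against the coefficients $c_{2j} = \Theta(1/j^2)$ (which are absolutely summable since they are the Hermite coefficients of the absolute value function), and taking a union bound over $j$, gives the first inequality.

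\emph{Gradient bound.} I would split the error as
\begin{equation*}
\nabla_w \obje(W) - \nabla_w L_\infty(\cP) = \bigl(\nabla_w \obje(W) - \nabla_w L(W)\bigr) + \bigl(\nabla_w L(W) - \nabla_w L_\infty(\cP)\bigr),
\end{equation*}
corresponding to the $1/\sqrt N$ and $1/\sqrt m$ terms respectively. For the second difference, the gradient formula \eqref{eqn:3} expresses $\nabla_v L_\infty(\cP)$ order-by-order as a function of the moment tensors $M_{2j}(\cP)$ contracted with the fixed vector $v$ (or $\bar v$); switching to the finite network replaces $M_{2j}(\cP)$ by $\widehat M_{2j}(W)$. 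Since $\|v\|_2, \|\bar v\|_2 \le 1/\sqrt{\lambda}, 1$, each contraction contributes a factor at most $\poly(1/\lambda)$, and the same moment-concentration estimates used above, combined with summability of the coefficients $b_{2j} = \Theta(1/j)$ after weighting by $\poly(1/\lambda)$, give the $\poly(1/\lambda) \log(1/\delta)/\sqrt m$ bound. For the first difference, the empirical loss gradient $\nabla_w \obje(W)$ is an average over the $N$ Gaussian samples $x_j$ of a function of $x_j$ that is polynomially bounded in $1/\lambda$ once one conditions on the truncation event $\|w\|_2 \le 1/\sqrt{\lambda}$; a standard sub-Gaussian / Hanson--Wright style concentration for polynomial-in-$x$ statistics then yields the $\poly(1/\lambda) \log(1/\delta)/\sqrt N$ estimate. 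Finally, a union bound over the $m$ neurons contributes the extra $\log m$ factor in the claim.

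\emph{Main obstacle.} The main technical obstacle is handling the infinite tail of the Hermite expansion uniformly in both concentration statements: one needs to verify that the tensors $w^{\otimes 2} \otimes \bar w^{\otimes (2j-2)}$ and their $\cP$-expectations are jointly bounded in Frobenius norm by $\|w\|_2^2 \le 1/\lambda$ (which is the key reason the expansion can be summed), that the Hermite coefficients $c_{2j}, b_{2j}$ decay fast enough to make the series finite, and that the division by $\|w\|_2$ in $\bar w$ does not blow up, which follows because the tensor always contains the prefactor $\|w\|_2^2$ that cancels the singular part. Once this uniform tensor bound is in place, both inequalities reduce to routine vector Bernstein applications plus a union bound over $j$ and over the $m$ neurons.
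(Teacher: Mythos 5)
The paper gives no actual proof of this claim; it is stated only with the remark that it ``can be proved by standard concentration inequalities such as the Chernoff bound.'' Your plan is therefore more detailed than anything in the paper, and it is essentially correct: the decisive observations are that each summand tensor in $\widehat M_{2j}(W)$ has Frobenius norm exactly $\|w_i\|_2^2 \le 1/\lambda$ uniformly in $j$ (so the tail of the Hermite expansion is controlled by the same a.s.\ bound at every order), that the coefficients $c_{2j}, b_{2j}$ are summable, and that the gradient error cleanly factors into a neuron-sampling part contributing $1/\sqrt m$ and an $x$-sampling part contributing $1/\sqrt N$, with a union bound over the $m$ neurons supplying the $\log(m/\delta)$.

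Two small things worth tightening. First, for the $1/\sqrt N$ term you invoke ``Hanson--Wright for polynomial-in-$x$ statistics,'' but $\nabla_w\obje$ involves $\relu(w^\top x)$ and $\indi{w^\top x\ge 0}$, which are not polynomials in $x$. The correct and simpler justification is that the per-sample contribution to the gradient is almost surely bounded in norm by $\poly(1/\lambda)\cdot\|x\|^2$ (since $\relu(w^\top x)\le\|w\|\,\|x\|$ and $\|w\|_2^2\le 1/\lambda$), hence is a sub-exponential random vector, and vector Bernstein for sub-exponential sums gives the stated $1/\sqrt N$ rate. Second, when you allot failure probability $\delta 2^{-j-2}$ to the $j$-th moment tensor, the resulting high-probability bound scales like $\sqrt{j+\log(1/\delta)}$ rather than $\sqrt{\log(1/\delta)}$ alone; this is harmless because $c_{2j}=\Theta(1/j^3)$ (from $b_{2j}=4j\,c_{2j}=\Theta(1/j^2)$), so $\sum_j c_{2j}\sqrt j<\infty$, but the step should be made explicit. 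Finally, for the first inequality there is an alternative that avoids the Hermite expansion entirely and is probably closer to what the authors meant by ``Chernoff bound'': write $L(W)-L_\infty(\cP)=\E_x\big[(f_W-f_\cP)(f_W+f_\cP-2f^\star)\big]$, apply Cauchy--Schwarz, and bound $\E_x(f_W(x)-f_\cP(x))^2$ by a direct variance computation over the $m$ i.i.d.\ neurons, which yields $O(\poly(1/\lambda)/m)$ in expectation and concentrates by a scalar Chernoff/McDiarmid argument.
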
%
Claim \ref{claim:911} can be proved by standard concentration inequalities such as the Chernoff bound.

\medskip
\noindent{\bf Notations.}
Let $a = b \pm c$ denote a number within $[b - |c|, b+ |c|]$.
Let $[d]$ denote the set including $1,2,\dots,d$.
Let $\id_{d\times d} \in\real^{d\times d}$ denote the identity matrix in dimension $d$.
For two matrices $A, B$ with the same dimensions, we use $\inner{A}{B} = \tr[A^{\top}B]$ to denote their inner product.
For a vector $w \in\real^d$, let $\| w\|_2 $ denote its $\ell_2$ norm and $\| w\|_{\infty}$ denote its $\ell_{\infty}$ norm.
For $i \in [d]$, let $w_i$ denote the $i$-th coordinate of $w$ and $w_{-i}$ denote the vector which zeroes out the $i$-th coordinate of $w$.
We define $\bar{w} = \frac{w}{\| w\|_2}$ to be the normalized vector, and $\Pi_{w^{\bot}} = (\id - \bar{w} \bar{w}^{\top})$ to be the projection onto the orthogonal complement of $w$.
For a matrix $M$, let $\| M \|_2$ denote the spectral norm of a matrix $M$.

\section{Overview of the Infinite-Width Case}
\label{sec_overview}

We begin by studying Algorithm \ref{alg} for minimizing the population loss using an infinite-width neural network.
The infinite-width case plays a central role in our analysis.
First, the infinite-width case allows us to simplify the gradient update rule through a conditional-symmetry condition that we describe below.
Second, the finite-width case can be reduced to the infinite-width case by bounding the sampling error of the two cases --- we describe the reduction in the next section.

A natural starting point for the infinite-width case is to simply set the network width $m$ to infinity in Theorem \ref{thm:main}.
However, this will include negligible outliers such as those with large norms in the Gaussian distribution.
Therefore, we focus on a truncated probability measure $\cP^{(0)}$ of $\cN(0, \id_{d\times d})$ by enforcing a certain bounded condition.
The precise definition of $\cP^{(0)}$ is presented in Definition \ref{defn:true} of Appendix \ref{app_inf}.
For the purpose of providing an overview of the analysis, it suffices to think of $\cP^{(0)}$ as a Gaussian-like distribution that satisfies the following property.

\begin{definition}[Conditional-symmetry]\label{def_cs_prop}
We call a distribution $\cP$ over $\mathbb{R}^d$ conditionally-symmetric if for every $i \in [d]$ and every $v \in \mathbb{R}^d$, the following is true.
\begin{align}
	\Pr_{w \sim \cP}[w_i = v_i \mid w_{-j} = v_{-j}] = \Pr_{w \sim \cP}[w_i = -v_i \mid w_{-j} = v_{-j}].
\end{align}
\end{definition}
Provided with $\cP^{(0)}$ as initialization, we are ready to state the main result of the infinite-width case as follows.
\begin{theorem}[Infinite-width case]\label{thm_inf}
	In the setting of Theorem \ref{thm:main},
	let the number of samples $N$ go to infinity.
	Starting from the initialization $W^{(0)}$ as the neuron distribution $\cP^{(0)}$, let $\hat{W}$ be the final output network by Algorithm \ref{alg}.
	The population loss of $\hat{W}$ satisfies $L(\hat{W}) \le O(1 / d^{1 + Q})$.
\end{theorem}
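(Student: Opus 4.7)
The plan is to exploit the conditional-symmetry structure of the initialization $\cP^{(0)}$ (Definition \ref{def_cs_prop}), show it is preserved along the gradient flow \eqref{eq:update_p}, and then conduct a two-stage analysis matching the two stages of Algorithm \ref{alg}. First I would verify that if $\cP^{(t)}$ is conditionally symmetric, then so is $\cP^{(t+1)}$: under conditional symmetry, the gradient $\nabla_v \objinf(\cP^{(t)})$ in \eqref{eqn:3} simplifies because $\E_{w\sim\cP^{(t)}} w_iw_j = 0$ for $i\neq j$ and the $b_1$ term vanishes, and it becomes a function that flips sign in coordinate $i$ whenever $v_i$ flips sign with other coordinates held fixed. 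Hence the truncated update preserves coordinate-wise sign symmetry. This invariant has two consequences I would immediately cash in: $\E_{w\sim\cP^{(t)}} w = 0$ and $\E_{w\sim\cP^{(t)}} ww^\top$ is diagonal, so the $0$-th and $2$nd order terms in \eqref{eq:fajsoifsajif} collapse to $d$ scalar equations in the diagonal entries $\sigma_i(t) := \E w_i^2$; and likewise the higher-order terms collapse to scalar equations in the even marginal moments $m_{i,2j}(t) := \E\|w\|_2^2 \bar w_i^{2j}$, reducing the infinite-dimensional dynamic to $d$ coupled populations of one-dimensional particles indexed by $i \in [d]$.

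Stage 1 analysis. With the $0$-th and $2$nd order terms now scalar in each coordinate, I would show that within $T_1 = \Theta(d^2/(\eta C(\kappa)\log d))$ steps the error $|\E\|w\|^2 - 1| + \max_i |\sigma_i(t) - a_i|$ drops to $o(1/d)$. Decomposing the gradient by Hermite order, the $b_0$ and $b_2$ pieces give a linear contractive drift in $(\sigma_i - a_i)$ with rate depending on $\kappa$; the higher-order $b_{2j}$ pieces are negligible perturbations because at initialization $\|w\|_\infty = O(\sqrt{\log d/d})$ (so $\bar w_i^{2j-2}$ is tiny) and because $b_{2j} = \Theta(1/j^2)$ makes the tail absolutely summable. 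This turns Stage 1 into an essentially linear ODE analysis in $d$ decoupled scalar variables, solved via standard contraction arguments together with the truncation bound $\lambda_0$.

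Stage 2 analysis. After Stage 1, the $c_0, c_1, c_2$ terms in \eqref{eq:fajsoifsajif} already contribute $o(1/d)$, and Stage 2 must minimize the residual $\sum_{j\ge 2} c_{2j}(\cdot)$. The dominant dynamic now is the $b_{2j}\langle\bar w, \bar v\rangle^{2j-2}$ amplification in \eqref{eqn:3}, which is a tensor-power-iteration-type mechanism: for each neuron $v$ whose largest coordinate at the end of Stage 1 is some $i(v)$, this term drives $\bar v \to \pm e_{i(v)}$, while the radial component is stabilized by the norm truncation $\lambda_1$. The plan is to (i) partition the population by dominant coordinate, (ii) show each sub-population aligns to $\pm e_i$ over Stage 2 via a coupled plateau-then-spike argument, and (iii) show that the aggregate mass and angular distribution in each direction $i$ end up matching $a_i$, so that $m_{i,2j}(T_1 + T_2) = a_i \pm o(1/d)$ for every relevant $j$. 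Plugging these bounds back into \eqref{eq:fajsoifsajif} and controlling the tail $\sum_{j \geq J} c_{2j}$ via the uniform norm bound $\|w\|_2 \leq 1/\sqrt{\lambda_1}$ yields $\objinf(\cP^{(T_1 + T_2)}) \leq O(1/d^{1+\cz})$.

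I expect the main obstacle to be the Stage 2 plateau-escape analysis carried out simultaneously for all $d$ directions: even in the one-component tensor power method the escape time from a nearly isotropic start is polynomial and sensitive to constants, and here $d$ sub-populations must each escape without disturbing the second-moment alignment established in Stage 1 or interfering through the shared target $\sum_i a_i e_i^{\otimes 2j}$. Conditional symmetry is the lever that makes this tractable, because it diagonalizes the dynamic into $d$ nearly-independent populations whose only coupling is through the $1/j^2$-decaying higher-order cross terms; these can be controlled by a potential function tracking $\max_i |m_{i,2j}(t) - a_i|$ together with the slack afforded by the Stage 2 horizon $T_2 = \Theta(d^{1+10\cz}/\eta)$, which is long enough to traverse the plateau and reach exponential alignment. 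The Stage 1 and Stage 2 estimates then combine via the Hermite expansion \eqref{eq:fajsoifsajif} to give the claimed bound.
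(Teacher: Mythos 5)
Your high-level skeleton --- preserve conditional symmetry, exploit it to simplify the gradient, run a first phase that learns the $0$th and $2$nd order tensors and a later phase driven by a tensor-power-type mechanism --- matches the paper's architecture (Claims \ref{cl_ss}--\ref{claim:symmetry}, Lemmas \ref{lem:zero_two3}, \ref{lem:stage_1_final}, \ref{lem:final_222}, \ref{lem:final_333}). But there are two gaps that, I think, would prevent the plan as stated from closing.

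First, the reduction is overstated. Conditional symmetry gives you that $\E w = 0$ and $\E ww^\top$ is diagonal and that all odd-degree terms in \eqref{eqn:3} vanish, but it does \emph{not} decouple the dynamic into $d$ independent one-dimensional populations. The simplified gradient in Claim \ref{claim:symmetry} still contains terms of the form $\E_w\bigl[\sum_{i_1,\dots,i_j}\prod_r(\bar w_{i_r}\bar v_{i_r})^2\,(w_{i_j})^2\,v_{i_j}\bigr]$, where the normalizations $\bar w,\bar v$ couple every coordinate of a given particle to every other. You cannot close the system in the marginal moments $\sigma_i(t)$, $m_{i,2j}(t)$ alone; the paper tracks each particle's full trajectory and controls the cross-coordinate coupling via the inductive hypotheses of Propositions \ref{def_H0} and \ref{def_H1} ($\|v^{(t)}\|_\infty^2,\|\bar v^{(t)}\|_\infty^2 \le c_t/d$, with $c_t$ growing slowly).

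Second, and more seriously, your Stage~2 plan --- ``partition the population by dominant coordinate and show each sub-population aligns to $\pm e_i$'' --- cannot be carried out. A typical neuron has $\|v^{(0)}\|_\infty^2 = \Theta(\log d / d)$, and the tensor-power escape time from such a start is exactly at the boundary of the available horizon $T_2 = \Theta(d^2/(\eta \log d))$; the paper shows (around $\Gamma_i, \rho$ in the setup for Lemma \ref{lem:stage_1_final}) that only neurons whose largest initial coordinate squared exceeds $\Gamma_i + \rho$ escape within $T_2'$, and that this is a measure-$d^{-\exp(\poly(\kappa))}$ event. Conversely, the set $\set{S}$ of neurons that \emph{never} escape has measure $1 - 1/\poly(d)$ (Definition \ref{defn:no_win}), and the inductive hypothesis $\cH_1$ certifies they stay small and dense throughout Stage~1. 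So the correct picture is not ``everyone aligns''; it is that an exponentially rare but polynomially non-negligible fraction wins the lottery and becomes basis-like, while the bulk stays dense, and Stage~2 then transfers second-moment mass from the dense bulk ($\beta_i^{(t)}\downarrow$) to the basis-like winners ($\gamma_i^{(t)}\uparrow a_i$). The potential function in Lemma \ref{lem:final_333} is built around this mass-transfer ($\Delta_\pm,\delta_\pm,\beta_+$), not around $\max_i|m_{i,2j}-a_i|$. Related to this: you attribute radial stabilization in Stage~2 to the $\lambda_1$ truncation, but Proposition \ref{def_H1} item~1 shows truncation never fires in Stage~2 --- it is the loss geometry, via the $0$th/$2$nd order contraction, that pins the norms. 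Without the lottery-ticket decomposition, the simultaneous-escape program you propose would have to escape plateaus for neurons whose escape times are super-polynomial, and the analysis would not close.
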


In the rest of this section, we present an overview of the proof of Theorem \ref{thm_inf} and provide pointers to the proof details to be found in Section \ref{app_inf}.
First, we provide a simplifying formula for the gradient of $L_{\infty}(\cP)$.
We describe an overview of the two stages of Algorithm \ref{alg} in Section \ref{sec_stage1} and \ref{sec_stage2}, respectively.

\medskip

First, we show how to simplify the gradient of $L_{\infty}(\cP)$ (cf. equation \eqref{eqn:3}).
Recall from Section \ref{sec_prelim} that we can view the weights $W^{(t)}$ in the $t$-th iteration as a distribution $\cP^{(t)}$ over $\real^d$.
Our main observation is that when $\cP^{(t)}$ is {conditionally-symmetric},  $\cP^{(t+1)}$ is also conditionally-symmetric.

\begin{claim}\label{cl_ss}
	Suppose the update rule of $\cP^{(t)}$ is given in equation~\eqref{eq:update_p}. If $\cP^{(t)}$ is {\cs}, then $\cP^{(t + 1)}$ is also {\cs}.
\end{claim}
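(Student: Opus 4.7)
The plan is to reformulate the conditional symmetry condition as an invariance under a discrete group action and then show that the one-step update map commutes with that action. For each $i \in [d]$, let $F_i : \real^d \to \real^d$ denote the reflection that negates the $i$-th coordinate and fixes the rest, i.e.\ $(F_i v)_j = (1 - 2\indi{j=i}) v_j$. The first step is to rephrase Definition~\ref{def_cs_prop} as the statement $(F_i)_\sharp \cP = \cP$ for every $i$: multiplying the conditional identity in the definition by the marginal law of $w_{-i}$ shows that conditional symmetry is equivalent to $\cP$ being invariant under the reflection $F_i$ at the level of the joint distribution. Given this reformulation, it suffices to show that if $(F_i)_\sharp \cP^{(t)} = \cP^{(t)}$ for a fixed but arbitrary $i$, then $(F_i)_\sharp \cP^{(t+1)} = \cP^{(t+1)}$.

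Let $T(v) = v - \eta \, \indi{\norm{v}_2^2 \le 1/(2\lambda)} \, \nabla_v \objinf(\cP^{(t)})$ denote the update map from~\eqref{eq:update_p}, so that $\cP^{(t+1)} = T_\sharp \cP^{(t)}$. The key step will be to prove the commutation identity $T \circ F_i = F_i \circ T$; once that is in hand, the desired invariance follows by a routine pushforward computation,
\[ (F_i)_\sharp \cP^{(t+1)} = (F_i)_\sharp T_\sharp \cP^{(t)} = T_\sharp (F_i)_\sharp \cP^{(t)} = T_\sharp \cP^{(t)} = \cP^{(t+1)}. \]
The truncation indicator is trivially $F_i$-invariant because $\norm{F_i v}_2 = \norm{v}_2$, so the substance of the argument reduces to verifying the equivariance of the gradient,
\[ \nabla_{F_i v} \objinf(\cP^{(t)}) = F_i\bigl( \nabla_v \objinf(\cP^{(t)}) \bigr). \]

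I would verify this equivariance by going block by block through the expression in~\eqref{eqn:3}, using the elementary identities $F_i^2 = \id$, $\langle F_i u, w\rangle = \langle u, F_i w\rangle$, $\overline{F_i v} = F_i \bar v$, and $\Pi_{(F_i v)^\perp} = F_i \, \Pi_{v^\perp} \, F_i$, combined with the change of variables $w \mapsto F_i w$ inside each expectation, which is legitimate precisely because $\cP^{(t)}$ is $F_i$-invariant. For the target contributions $\sum_k a_k \langle e_k, v\rangle \langle e_k, \bar v\rangle^{2j-2} e_k$, the $k \ne i$ summands are untouched by $F_i$, while in the $k = i$ summand the single sign flip on $v_i$ combines with the trailing $e_i$ to reproduce $F_i$ of the original, because the exponent $2j-2$ is even. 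The $b_1$ block is the subtlest point, since it contains a naked $\E_{w \sim \cP^{(t)}} \norm{w}_2 \, w$ multiplied by $\norm{v}_2$; its $i$-th coordinate vanishes by $F_i$-invariance of $\cP^{(t)}$, which is exactly what lets that block transform as $v \mapsto F_i v$. The main obstacle is no single conceptual step but rather the bookkeeping in the two blocks involving $\bar w$ and the projector $\Pi_{v^\perp}$, where every appearance of $v$ must be rewritten through the identities above before the $F_i$-invariance of $\cP^{(t)}$ can be invoked to clear the reflections from the integrand. Once equivariance is established on every block, the commutation $T \circ F_i = F_i \circ T$ follows and the pushforward computation closes the claim.
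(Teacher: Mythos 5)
Your proposal is correct and is, at its core, the same argument the paper sketches: the paper's observation that (after killing the first-order term) the $j$-th component of $\nabla_v L_\infty(\cP)$ is odd in $v_j$ and the others are even, together with the norm-based truncation, is precisely the statement that the one-step update commutes with each reflection $F_i$. Your pushforward/equivariance packaging and block-by-block verification is a cleaner, more systematic exposition of the same idea, not a genuinely different route.
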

To see that Claim \ref{cl_ss} is true, we first observe that the 1st order tensor decomposition is always zero when $\cP^{(t)}$ is conditionally symmetric.
For the even order tensor decompositions, we observe that for every neuron $v$ in $\cP^{(t)}$ and every $1\le j\le d$, subject to $v_{-j}$ being fixed, $\nabla_v L_{\infty}(\cP)$ is a polynomial of $v_j$ that only involves odd degree monomials.
Therefore, as long as $\cP^{(t)}$ is {\cs}, then $\cP^{(t + 1)}$ is still {\cs}.
Since $\cP^{(0)}$ is conditionally-symmetric by definition, we conclude that the neuron distribution is conditionally-symmetric throughout Algorithm \ref{alg}.
Based on this claim, we simplify equation \eqref{eqn:3} as follows.

\begin{claim} \label{claim:symmetry}
	Suppose that $\cP = \cP^{(t)}$ is \cs.
	For any $j \ge 0$, let $\nabla_{2j, v}$ be a shorthand for the gradient of the 2j-th tensor $\nabla_{2j, v} L_{\infty}(\cP)$.
	For any $1\le i\le d$, let $[\nabla_{2j , v}]_i$ be the $i$-th coordinate of $\nabla_{2j, v}$.
	We have that $[\nabla_{2j, v}]_i$ is equal to the following for each value of $j$:
	{\begin{align}
		[\nabla_{0, v}]_i &=  b_0 \left( \E_{w\sim \cP} \| w\|_2^2 - 1  \right) \inner{e_i}{v}, \quad [\nabla_{2, v}]_i =  b_2 \left( \E_{w\sim \cP} w_i^2  -  a_i \right)  \inner{e_i}{v},  \label{eq:def02} \\
		[\nabla_{2j , v}]_i &= \left( b_{2j} + b_{2j}'\right) \left( \exarg{w\sim \cP}{\sum_{i_1, \cdots, i_{j }} \bigbrace{\prod_{r \in [j - 1]} (\bar{w}_{i_r} \bar{v}_{i_r})^2} ({w}_{i_j})^2 \inner{e_{i_j}}{v} } -   a_i  \langle e_i, \bar{v} \rangle^{2j - 2} \langle e_i, v \rangle \right) \nonumber \\
& ~~~~ - b_{2j}'\left(  \exarg{w\sim \cP}{\|w\|_2^2 \sum_{i_1, \cdots, i_{j}} \prod_{r \in [j ]} (\bar{w}_{i_r} \bar{v}_{i_r})^2 }  -   \sum_{r=1}^d a_r \langle e_r, \bar{v} \rangle^{2j }  \right) v_i, \forall\, j \ge 2. \label{eq:def04}
	\end{align}}
\end{claim}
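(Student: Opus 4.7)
The plan is a direct calculation starting from equation~\eqref{eqn:3} that uses the conditional-symmetry hypothesis to eliminate many terms before regrouping what survives. The workhorse is a one-line parity lemma: if $\cP$ is conditionally-symmetric and $f(w)$ is a polynomial that is odd in some single coordinate $w_i$ (i.e.\ $f(\ldots,-w_i,\ldots)=-f(\ldots,w_i,\ldots)$), then $\E_{w\sim\cP}[f(w)]=0$. This follows immediately by conditioning on $w_{-i}$ and applying Definition~\ref{def_cs_prop}.

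I would apply this lemma term by term in~\eqref{eqn:3}. The $b_0$ contribution is already in the stated form since $b_0(\E\|w\|_2^2-1)v$ has $i$-th coordinate $b_0(\E\|w\|_2^2-1)\langle e_i,v\rangle$. The $b_1$ (first-order) contribution vanishes entirely: each coordinate of $\E_w\|w\|_2 w$ is the expectation of $\|w\|_2 w_i$, which is odd in $w_i$ because $\|w\|_2$ is even in every coordinate; the $\langle w,v\rangle\bar v$ piece dies for the same reason. For the $b_2$ term, expand $[\E_w\langle w,v\rangle w]_i=\sum_j v_j\,\E[w_j w_i]$; the parity lemma kills every $j\neq i$ cross term, leaving $v_i\,\E[w_i^2]$, which combined with $-a_i v_i$ gives exactly~\eqref{eq:def02}.

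The bulk of the work is the higher-order ($2j\ge 4$) case. I would rewrite $\langle \bar w,\bar v\rangle^{2j-2}=\langle w,v\rangle^{2j-2}/(\|w\|_2^{2j-2}\|v\|_2^{2j-2})$ and expand $\langle w,v\rangle^{2j-1}w_i$ via the multinomial theorem. Under the parity lemma, only tuples $(k_1,\ldots,k_{2j-1})$ whose multiplicities $n_k$ satisfy $n_i$ odd and $n_k$ even for $k\ne i$ survive in expectation, since each coordinate must appear with even total degree in $\prod_r w_{k_r}\cdot w_i$. Regrouping surviving tuples by pairing their even-multiplicity coordinates produces a product of the form $\prod_{r\in[j-1]}(\bar w_{i_r}\bar v_{i_r})^2$ together with one unpaired $w_{i_j}^2 v_{i_j}$ factor, matching the structure in~\eqref{eq:def04}. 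To fold in the projection term I would use $\Pi_{v^\perp}u=u-\langle u,\bar v\rangle\bar v$ so that $b_{2j}[u]_i+b_{2j}'[\Pi_{v^\perp}u]_i=(b_{2j}+b_{2j}')[u]_i-b_{2j}'\langle u,\bar v\rangle\bar v_i$, then evaluate $\langle u,v\rangle=\E\langle w,v\rangle^2\langle\bar w,\bar v\rangle^{2j-2}-\sum_r a_r\langle e_r,v\rangle^2\langle e_r,\bar v\rangle^{2j-2}$ by the same multinomial-plus-parity method; the extra $\langle w,v\rangle$ factor reintroduces a non-normalized squared-norm, producing the $\|w\|_2^2$ in~\eqref{eq:def04}.

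The main obstacle is the combinatorial bookkeeping in the regrouping step: the multinomial coefficients coming out of $\langle w,v\rangle^{2j-1}w_i$ are not uniform across index patterns (tuples with one index repeated $2j-1$ times behave differently from tuples with many indices repeated only in pairs), so one has to verify that after summing over all patterns the coefficients repackage cleanly into the symmetric sum $\sum_{i_1,\ldots,i_j}$ of the claim, with the asymmetry absorbed exactly by the $\Pi_{v^\perp}$ correction weighted by $b_{2j}'$. The analogous bookkeeping on the target-network side is routine, because $\sum_r a_r\langle e_r,v\rangle\langle e_r,\bar v\rangle^{2j-2}e_r$ reduces in the $i$-th coordinate to the single term $r=i$, producing the $a_i$ factors directly.
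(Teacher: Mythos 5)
Your strategy is the same one the paper uses: observe (Claim~\ref{cl_ss}) that conditional symmetry is preserved, apply the odd-in-one-coordinate parity consequence of Definition~\ref{def_cs_prop} term by term to equation~\eqref{eqn:3}, and regroup the surviving monomials; the paper's own proof is essentially a two-line remark that works out only the $b_2$ case and asserts ``similar arguments apply,'' so your write-up is a fleshed-out version of the same idea, and the $b_0$, $b_1$, $b_2$ discussion and the handling of $\Pi_{v^{\bot}}u = u - \langle u,\bar v\rangle\bar v$ are all correct.

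One caution about the step you flagged as the main obstacle. The multinomial coefficients do \emph{not} repackage cleanly into the symmetric sum $\sum_{i_1,\ldots,i_j}\prod_{r\in[j-1]}(\bar w_{i_r}\bar v_{i_r})^2 w_{i_j}^2\langle e_{i_j},v\rangle$, and the $\Pi_{v^{\bot}}$ correction cannot absorb the discrepancy because $\langle u,\bar v\rangle\bar v_i$ is a single global scalar multiplying $v_i$ and therefore cannot selectively re-weight diagonal versus cross terms. Concretely, for $j=2$ the parity-surviving part of $\E[\langle w,v\rangle\langle\bar w,\bar v\rangle^{2}w_i]$ collapses to
\begin{align*}
\E\Bigl[w_i^2\bar w_i^2\,v_i\bar v_i^2\Bigr]\;+\;3\,\E\Bigl[\textstyle\sum_{b\neq i}w_i^2\bar w_b^2\,v_i\bar v_b^2\Bigr],
\end{align*}
whereas the formula in~\eqref{eq:def04} gives coefficient $1$ on every $b$ (diagonal and off-diagonal alike); the same mismatch appears in the $\|w\|_2^2\langle\bar w,\bar v\rangle^{2j}$ piece, where the true multinomial weights $\binom{2j}{2n_1,\ldots}$ differ from the $\binom{j}{n_1,\ldots}$ implicit in $(\sum_k\bar w_k^2\bar v_k^2)^j$. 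This is not a defect in your plan so much as a feature of the paper: equation~\eqref{eq:def04} is only ever consumed as an \emph{upper bound} (e.g.\ in Proposition~\ref{lem:four_plus_upper}), where the $b_{2j},b_{2j}'=\Theta(1/j^2)$ decay makes the $O(1)$ per-$j$ coefficient drift harmless. So the right posture is not to chase exact cancellation but to note that after parity the surviving monomials all have the stated \emph{shape}, and absorb the combinatorial constants into the $B_{1,2j},B_{2,2j}$ factors (or into the implied $O(\cdot)$'s downstream), exactly as the paper silently does.
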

The proof of Claim \ref{claim:symmetry} is by applying Claim \ref{cl_ss} to equation \eqref{eqn:3}, which zeroes out the coordinates in $w$ that has an odd order before taking the expectation of $w$ in $\cP$.
For the 2nd order gradient $[\nabla_{2, v}]_i$, we have that
\[ [\nabla_{2, v}]_i = b_2 \bigbrace{\E_{w\sim \cP} \inner{w}{v} w_i - a_i \inner{e_i}{v} e_i} = b_2 \bigbrace{\E_{w\sim\cP} w_i^2 - a_i} v_i. \]
Similar arguments apply to the gradient of higher order tensor decompositions.
Claim \ref{cl_ss} and \ref{claim:symmetry} together implies that for the infinite-width case, the gradient descent update is given by equation \eqref{eq:def02} and \eqref{eq:def04}.

\subsection{Dynamic during Stage 1}
\label{sec_stage1}

\paragraph{Stage 1.1: learning 0th and 2nd order tensors.}
We show that Algorithm \ref{alg} minimizes the 0th and 2nd order tensor decompositions of the objective $\objinf$ to zero first.

First, we show that the gradient of the 4th and higher order tensor decompositions is dominated by $\nabla_{0, v}$ and $\nabla_{2, v}$.
We observe that for $v \sim \cP^{(0)}$, the $i$-th coordinate of $\nabla_{0, v}$ and $\nabla_{2, v}$ satisfies that
\begin{align}
	|[\nabla_{0, v}]_i| +  |[\nabla_{2, v}]_i| = \Theta\left( \frac{1}{d^{1.5}} \right). \label{eq_02_stage1}
\end{align}
This is because $\cP^{(0)}$ is a suitable truncation of $\cN(0, \id_{d\times d} / d)$.
We further have that
	\[ \|v \|_{\infty}^2, \|\bar{v} \|_{\infty}^2 = \tilde{\Theta}\left( \frac{1}{d} \right) \text{ and } \left| \exarg{w \sim \cP^{(0)}}{w_i^2} - a_i \right| \le \bigoarg{\kappa}{\frac{1}{d}}.\footnote{We use the notation $x \le O_{\kappa}(y)$ to denote that $x \le h(\kappa) \cdot y$ for a fixed value $h(\kappa)$ that only depends on $\kappa$.}  \]
Applying the above to equation \eqref{eq:def02}, we obtain equation \eqref{eq_02_stage1}.
For higher order tensors, in Proposition \ref{lem:four_plus_upper}, we show that
 for any $j \geq 2$, $|[\nabla_{2j, v}]_i| = \tilde{O}\left( {1}/{d^{2.5}} \right)$.
Therefore, the 0th and 2nd order gradients indeed dominate the higher order gradients and Algorithm \ref{alg} is simply minimizing the 0th and 2nd order tensor decompositions of $\objinf$.

Based on the above observation, we show that the 0th and 2nd order tensor decompositions converge to zero in Lemma \ref{lem:zero_two3}. The main intuition is as follows.
By equation~\eqref{eq:def02}, both the 0th and 2nd order gradient only depend on the $i$-th coordinate of neurons in $\cP$.
Hence, the update can be viewed as $d$ independent updates over the $d$ coordinates.
In Proposition~\ref{lem:zero_two}, we show that throughout Algorithm \ref{alg}, the 0th order tensor decomposition loss given by $|\E_{\cP} \| w\|_2^2 - 1|$ is smaller than the 2nd order tensor decomposition loss given by $\max_{i \in [d]}|  \E_{\cP} w_i^2  -  a_i|$.
Thus, it suffices to show that the 2nd order loss $\max_{i\in[d]} \Abs{\E_{\cP}w_i^2 - a_i}$ converges to zero.
This problem reduces to principal component analysis and in Proposition \ref{lem:zero_two2}, we show that the 2nd order loss indeed converges by a rate of $O_{\kappa}(1/d)$ using standard techniques.

As shown in Lemma~\ref{lem:zero_two3}, Stage 1.1 finishes within $\tilde{O}_{\kappa}\left( {d}/{\eta} \right)$ iterations, when eventually $|[\nabla_{0, v}]_i| +  |[\nabla_{2, v}]_i| $ becomes $\tilde{O}( {1}/{d^{2.5}} )$ for all $1\le i\le d$, which is the same order as $ |[\nabla_{2j, v}]_i|$ for $j \geq 2$.
Thus, Algorithm \ref{alg} enters the next substage where the gradient of the higher order tensor decompositions becomes effective.

\paragraph{Stage 1.2: learning higher order tensor decompositions.}
After the 0th and 2nd order tensor decompositions are minimized to a small enough value, the gradient of higher order tensor decompositions begins to dominate the update.
In Lemma~\ref{lem:stage_1_final}, we show that for a small fraction of neurons, their norms become much larger than an average neuron --- a phenomenon that we term as ``winning the lottery ticket''.
The main intuition is as follows.

In Proposition \ref{lem:four_plus_interval}, we show that the gradient of most neurons $v$ except a small fraction can be approximated by a signal term from the 4th order gradient plus an $O(1/d^2)$ error term:
\begin{align}
	\left| [\nabla_v]_i \right| = \left( b_{4} + b_{4}'\right)  a_i \langle e_i, v \rangle \langle e_i, \bar{v} \rangle^{2}  \pm \frac{C_t(\kappa) \log d}{d^2} |v_i|, \label{eq_grad_simplify}
\end{align}
where $C_t(\kappa)$ is a function that only depends on $\kappa$ but grows slowly with $t$.
To see that equation \eqref{eq_grad_simplify} is true, except a small set of neurons with probability mass at most $1/d^{\alpha}$ where $\alpha$ will be specified later, any other neuron $w$ satisfies $\| w\|_{\infty}^2 \leq  {\alpha \log d}/{d}$.
For the small set of neurons, since we stop updating a neuron when its norm grows larger than $1 / \lambda_0$, the norm of any of these neurons is less than ${1}/{\lambda_0}$.
Thus, provided with a sufficiently large $\alpha$, the contribution of these neurons to the gradient is negligible.
Combined together, we prove equation \eqref{eq_grad_simplify} in Proposition \ref{lem:four_plus_interval}.

Next, we reduce the dynamic to tensor power method.
Based on equation \eqref{eq_grad_simplify}, we observe that the update of $v_i$ is approximately
$v_i^{(t + 1)} \approx v_i^{(t)} + \eta \cdot a_i \cdot (v_i^{(t)})^3$,
which is analogous to performing power method over a fourth order tensor decomposition problem.
Hence, for larger initializations of $v_i$, $v_i$ also grows faster.
Based on the intuition, we introduce the set of ``basis-like'' neurons $\set{S}_{i, good}$ in the population $\cP$, which are defined more precisely in Lemma \ref{lem:stage_1_final}.
Intuitively, $\set{S}_{i, good}$ includes any neuron $v$ that satisfies $[v_i^{(0)}]^2 \geq {C^2 \log d}/{d}$, which has probability measure at least ${1}/{d^{C^2}}$ by standard anti-concentration inequalities.
Following equation \eqref{eq_grad_simplify}, we show that the neurons in $\set{S}_{i, good}$ keeps growing until they become roughly equal to $e_i / (\lambda_0 \poly(d))$.

As shown in Lemma \ref{lem:stage_1_final}, Algorithm \ref{alg} goes through a long plateau %
of $O_{\kappa}({d^2}/ ({\eta} \poly\log(d))$ iterations, until the neurons of $\set{S}_{i, good}$ are sufficiently large.
Intuitively, the scaling of $d^2$ in the number of iterations arises from the $1/d^2$ increment in equation \eqref{eq_grad_simplify}.
This concludes Stage 1. The update of these basis-like neurons will be the focus of Stage 2.

\subsection{Dynamic during Stage 2}\label{sec_stage2}
In the second stage, we reduce the gradient truncation parameter in Algorithm \ref{alg} from $\lambda_0 = \Theta(1/\poly(d))$ to a smaller value $\lambda_1 = \Theta(1 /\poly_{\kappa}(d))$.
This allows the neurons that are close to basis vectors to fit the target network more accurately.

\paragraph{Stage 2.1: obtaining a warm start initialization.}
In Lemma \ref{lem:final_222}, we show that after $\Theta(d\log d/\eta)$ iterations, the population loss reduces to less than $o(1/(d \log^{0.01} d))$.
The proof of Lemma \ref{lem:final_222} involves analyzing the 0th and 2nd order tensor decompositions, similar to Stage 1.1.

At the end of Stage 2.1, the weights of the learner neural network form a ``warm start'' initialization, meaning that its population loss is less than $o(1/d)$ \cite{li2017convergence,zhong2017recovery}.
The final substage will show that the population loss can be further reduced from $o(1/(d\log^{0.01} d))$ to $O(1/d^{1 + Q})$, where $Q$ is a fixed constant defined in Theorem \ref{thm:main}.

\paragraph{Stage 2.2: the final substage.}
In Lemma \ref{lem:final_333}, we show that the population loss further reduces to $O({1}/d^{1 + Q})$ after $\Theta(d^{1 + 10Q} / \eta)$ iterations.
We describe an informal argument by contrasting the gradient update of neurons in $\set{S}_{i,good}$ and the rest of the neurons for a particular coordinate $i\in[d]$.

For any neuron $v \in \set{S}_{i, good}$, in Claim \ref{claim:grad_21}, we show that the $i$-th coordinate of $v$ approximately follows the following update (cf. equation \eqref{eq:cor_upppp}):
\begin{align}
	[\nabla_{v}]_i  \approx b_0 \left( \E_{\cP} \| w\|_2^2 - 1  \right) v_i+  b_2 \left( \E_{\cP} w_i^2  -  a_i \right)  v_i  -  \eta \frac{c_t\cdot C(\kappa)}{d^2} v_i, \label{eq_grad_update_22}
\end{align}
where $c_t$ is a function that grows  with $t$ but bounded above by $O(d^{Q})$ and $C(\kappa)$ is a function that only depends on $\kappa$.
For any neuron $v \notin \set{S}_{i,good}$, in Claim~\ref{claim:grad_21}, we show that $v_i$ follows a similar update but its corresponding value of $c_t$ is much smaller than that of neurons in $\set{S}_{i, good}$.
Thus, basis-like neurons grow faster than the rest of neurons by an additive factor that scales with $c_t / d^2$.

Based on the intuition, we analyze the dynamic following equation \eqref{eq_grad_update_22} using standard techniques for analyzing the convergence of gradient descent.
In Lemma \ref{lem:final_333}, we show for after $O(d^{1 + Q} / \eta)$ iterations, the  0th order tensor decomposition loss given by $b_0 \left( \E_{\cP} \| w\|_2^2 - 1  \right)$ and the 2nd order tensor decomposition loss given by $b_2 \left( \E_{\cP} w_i^2  -  a_i \right)$ both become less than $O(d^{1 + Q})$. %

Once Lemma \ref{lem:final_333} is finished, Algorithm \ref{alg} has learned an accurate approximation of $f^{\star}(\cdot)$ and we can conclude the proof of Theorem \ref{thm_inf}.
We show that the population loss has also become less than $O(d^{1 + Q})$ (cf. equation \eqref{claim:addition_1}).
Thus, we have finished the analysis of Algorithm \ref{alg} for $L_{\infty}(\cP)$.
We provide the proof details of Theorem \ref{thm_inf} in Section \ref{app_inf}.

\section{Overview of the Finite-Width Case}\label{sec_couple}

Based on the analysis of the infinite-width case, we reduce the finite-width case to the infinite-width case.
By applying Claim \ref{claim:911} with $\cP = \cP^{(t)}$, when $\{w_i^{(t)}\}_{i=1}^m$ are i.i.d. samples from $\cP^{(t)}$, the empirical loss and its gradient are tightly concentrated around the population loss and its  gradient.
Furthermore, as we increase the number of neurons $m$ and the number of samples $N$, the sampling error reduces.
Therefore, the goal of our reduction is to show that the sampling error remains small throughout the iterations of Algorithm \ref{alg}.
We describe our reduction informally and leave the details to Section \ref{app_finite}.

The connection between the dynamic of the finite-width case and the infinite-width case is as follows.
For a neuron $w^{(t)}$ sampled from $\cP^{(t)}$, we have analyzed the dynamic of $w^{(t)}$ in the infinite-width case starting from $w^{(0)}$.
For the finite-width case, let $\tw^{(t)}$ denote the $t$-th iterate starting from the same initialization $w^{(0)}$ using Algorithm \ref{alg}.
Our goal is to show that $\xi_w^{(t)} \define \tw^{(t)} - w^{(t)}$ does not become exponentially large before Algorithm \ref{alg} finishes.

Based on the above connection, we show that the propagation of the error $\xi_w^{(t)}$ remains polynomially small throughout Stage 1 in Lemma \ref{lem:error_final_stage_1}.
Our analysis involves a bound on the average error of all neurons $\E_{w\sim W}[\| \xi_w^{(t + 1) } \|_2^2]$ and a bound on the individual error of every neuron $\max_{w\in W} \|\xi^{(t + 1)} \|_2^2$.
First, in Proposition~\ref{prop:err_higher}, we show that it suffices to consider the first order errors in $\xi_w^{(t+1)}$, i.e. those that involve at most one of $\xi_w^{(t)}$. %
Based on this result, in Proposition~\ref{prop:error_total_stage_1} and \ref{prop:err_ind_stage_1}, we show that the average error and the individual error satisfy that:
\begin{align*}
	\E_{w\sim W}[\| \xi_w^{(t + 1) } \|_2^2] &\leq (1 \pm o(1))\left( 1 + \frac{\eta}{\poly(d)} \right) \E[\| \xi_w^{(t ) } \|_2^2], \\
	\max_{w \in W}\|\xi_w^{(t + 1)} \|_2^2 &\leq \poly(d) \E_{w\sim W} \|\xi_w^{(t+1)} \|_2^2.
\end{align*}
Combined together, we show in Lemma \ref{lem:error_final_stage_1} that $\xi_w^{(t)}$ indeed remains polynomially small.
For Stage 2, we analyze the propagation of $\xi_w^{(t)}$ in Lemma \ref{lem:error_final_stage_2} and \ref{lem:error_final_stage_22} using similar arguments.

Combining the above three lemmas on error propagation and Theorem \ref{thm_inf},
we complete the proof of Theorem \ref{thm:main} in Section \ref{app_finite}.

\section{Simulations}\label{sec_simulate}

We provide simulations to complement our theoretical result.
We consider a setting where $w_i^{\star} = e_i$ and $a_i = 1/d$, for $1 \le i \le d$.
The input is drawn from the Gaussian distribution.
For the $i$th order tensor, we measure the corresponding tensor decomposition loss from the population loss $L(W)$.

\paragraph{Stage-wise convergence.}
We validate the insight of our analysis, which shows that the convergence of gradient descent has several stages.
We use the labeling function of equation \eqref{eq_abs} and a learner network with absolute value activation functions as in Section \ref{sec_overview} and Section \ref{app_inf}.
First, the 0th and 2nd order tensor decomposition losses converge to zero quickly.
Second, the 4th and higher order tensor decomposition losses converge to zero followed by a long plateau.
Figure \ref{fig_tensor} shows the result.
Here we use $d = 30$ and $m = 100 > 2d$. The number of samples is $10^4$.

We can see that initially, the 0th and 2nd order tensor decompositions have higher loss than the 4th and higher order tensor decompositions.
Then, both the 0th and the 2nd order losses decrease significantly from the initial value and converge to below $10^{-1}$ very quickly.
Moreover, after a quick warm up period, the 0th order loss always stays smaller than the 2nd order loss, as our theory predicts.
This is followed by a long plateau, which corresponds to Stage 1.2 of our analysis.
During this stage, the 4th and higher order losses dominate dynamic, where a small fraction of neurons converge to basis-like neurons.
Eventually, the learner neural network accumulates enough basis-like neurons from the 4th and higher tensors in the network.
The 4th and higher order losses become less than $10^{-2}$.
The 0th and 2nd order losses further reduce to closer to zero.
Our theory provides an in-depth explanation of these phenomena.

\begin{figure}[!t]
	\centering
	\begin{minipage}{.45\linewidth}
		\centering
		\includegraphics[width=0.95\textwidth]{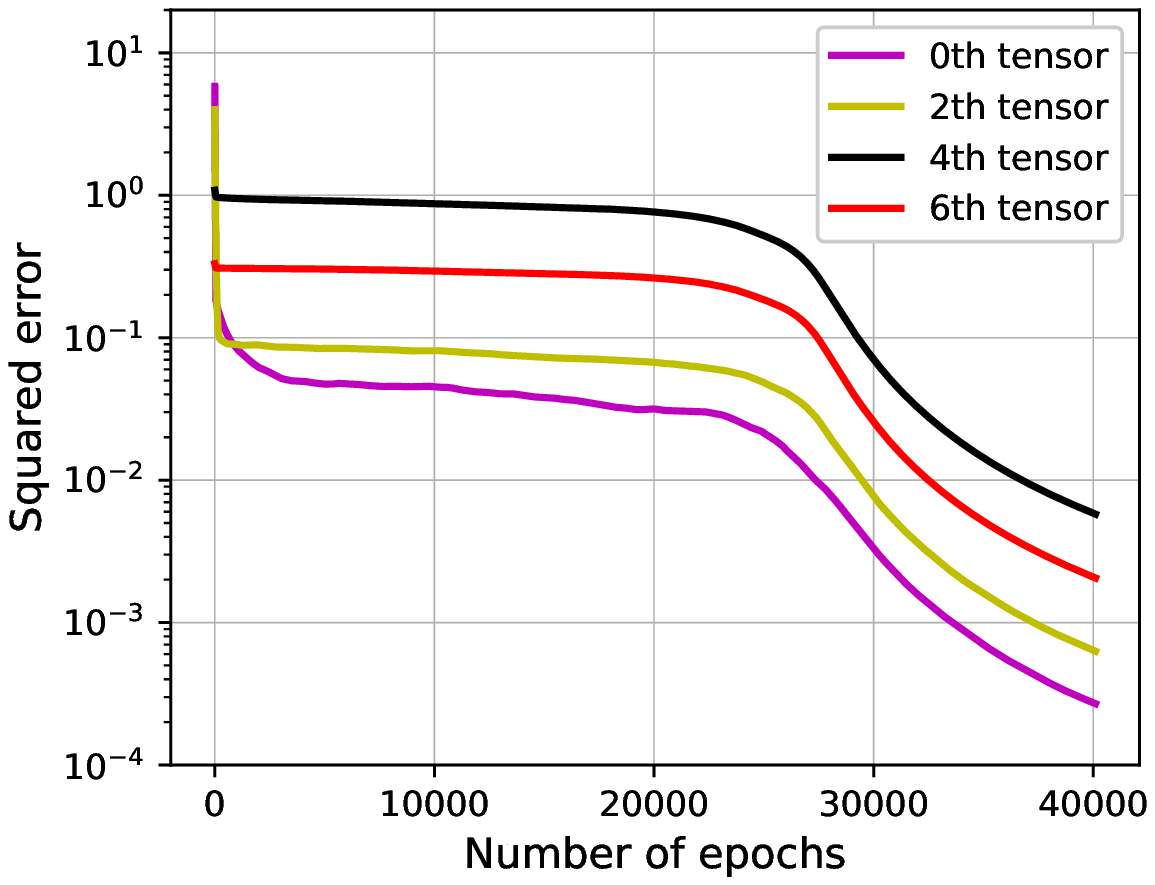}
		\vspace{-0.1in}
		\caption{Illustrating the convergence of each tensor during the gradient descent dynamic using absolute value activations.}
		\label{fig_tensor}
	\end{minipage}\hfill
	\begin{minipage}{.45\linewidth}
		\centering
		\includegraphics[width=0.95\textwidth]{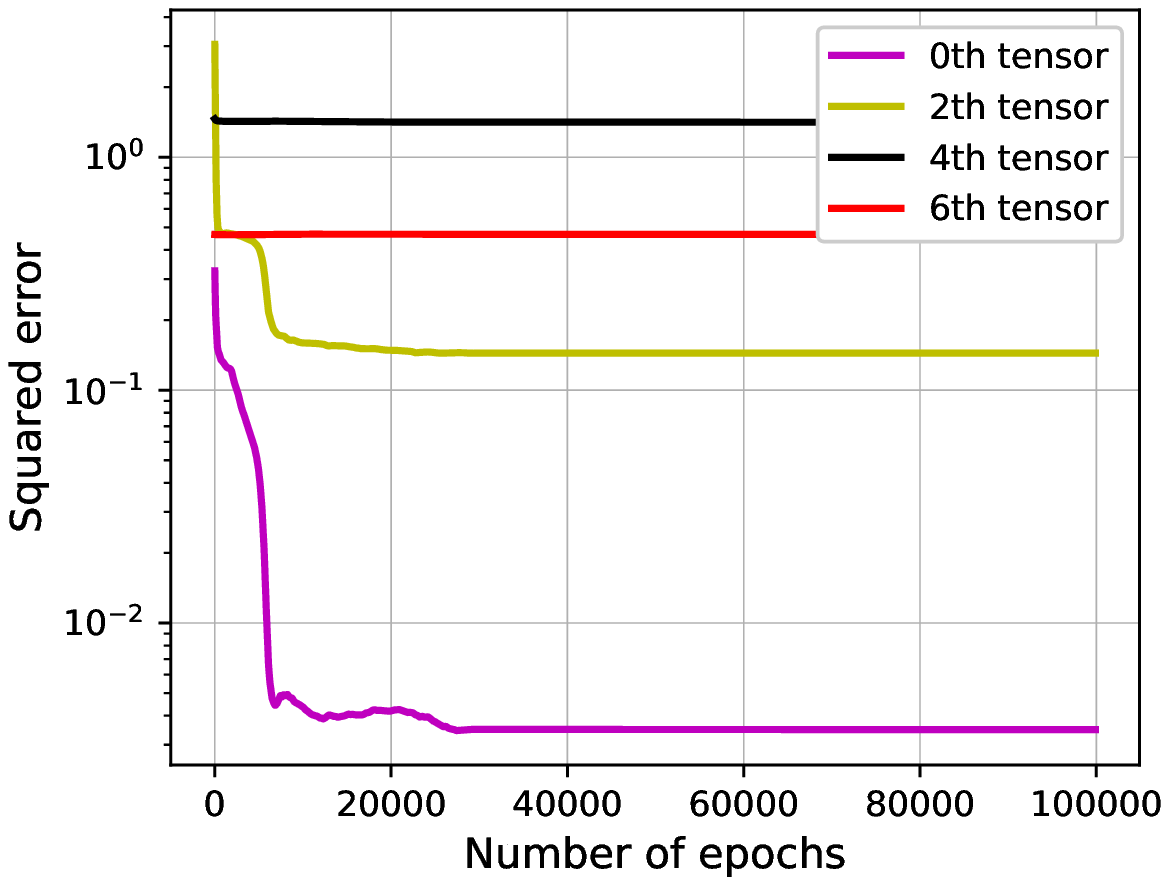}
		\vspace{-0.1in}
		\caption{For properly parametrized gradient descent, the 4th and 6th order tensors get stuck using absolute value activations.}
		\label{fig_stuck}
	\end{minipage}
	\vspace{-0.15in}
\end{figure}

\paragraph{Over-parametrization is necessary.}
It has been observed that for properly parametrized gradient descent, gradient descent can get stuck starting from a random initialization \cite{ge2017learning,dltps18}.
We show that this is because the higher order losses remain large even though the 0th order loss has become small.
We consider the same setting as the previous experiment but use \textcolor{black}{$m = 2d$}.
Figure \ref{fig_stuck} shows the result.
We can see that the 0th order loss still reduces to less than $10^{-2}$.
However, the 2nd, 4th and 6th order losses are still larger than $10^{-1}$ even after $10^5$ iterations.

\section{Conclusions and Discussions}

In this work, we have shown that for learning a certain target network with absolute value activation, a truncated gradient descent algorithm can provably converge in polynomially many iterations starting from a random initialization.
The learned network is more accurate compared to any kernel method that uses polynomially large feature mappings.

We describe several interesting questions for future work.
First, it would be interesting to extend our result to a setting where the target network uses ReLU activation, i.e. $f^{\star}(x) = a^{\top}\relu(Wx)$.
We note that there is a straightforward reduction from the above setting to our setting by simply solving a linear regression.
After applying the reduction, we could then apply our result.
The challenge of directly analyzing gradient descent for learning $f^{\star}(x) = a^{\top}\relu(Wx)$ is that the 1st order tensor decomposition in the Hermite expansion of $f^{\star}(x)$ breaks the conditionally-symmetric property.
Second, it would be interesting to extend our result to settings where $W^{\star}$ is not necessarily orthonormal.
The challenge is to analyze the gradient descent dynamic beyond orthogonal tensors.
We leave this question for future research.

\subsection*{Acknowledgment}

The work is in part supported by SDSI and SAIL. T. M is also supported in part by Lam Research and Google Faculty Award.

\bibliographystyle{plainnat}
\bibliography{main,reference}

\begin{thebibliography}{61}
\providecommand{\natexlab}[1]{#1}
\providecommand{\url}[1]{\texttt{#1}}
\expandafter\ifx\csname urlstyle\endcsname\relax
  \providecommand{\doi}[1]{doi: #1}\else
  \providecommand{\doi}{doi: \begingroup \urlstyle{rm}\Url}\fi

\bibitem[{Allen-Zhu} and Li(2019{\natexlab{a}})]{AL2019-resnet}
Zeyuan {Allen-Zhu} and Yuanzhi Li.
\newblock {What Can ResNet Learn Efficiently, Going Beyond Kernels?}
\newblock In \emph{NeurIPS}, 2019{\natexlab{a}}.

\bibitem[{Allen-Zhu} and Li(2019{\natexlab{b}})]{al19-rnngen}
Zeyuan {Allen-Zhu} and Yuanzhi Li.
\newblock {Can SGD Learn Recurrent Neural Networks with Provable
  Generalization?}
\newblock In \emph{NeurIPS}, 2019{\natexlab{b}}.
\newblock Full version available at \url{http://arxiv.org/abs/1902.01028}.

\bibitem[{Allen-Zhu} and Li(2020{\natexlab{a}})]{al20}
Zeyuan {Allen-Zhu} and Yuanzhi Li.
\newblock Backward feature correction: How deep learning performs deep
  learning.
\newblock \emph{arXiv preprint arXiv:2001.04413}, 2020{\natexlab{a}}.

\bibitem[{Allen-Zhu} and Li(2020{\natexlab{b}})]{allen2020feature}
Zeyuan {Allen-Zhu} and Yuanzhi Li.
\newblock Feature purification: How adversarial training performs robust deep
  learning.
\newblock \emph{arXiv preprint arXiv:2005.10190}, 2020{\natexlab{b}}.

\bibitem[{Allen-Zhu} et~al.(2019{\natexlab{a}}){Allen-Zhu}, Li, and
  Liang]{all18}
Zeyuan {Allen-Zhu}, Yuanzhi Li, and Yingyu Liang.
\newblock {Learning and Generalization in Overparameterized Neural Networks,
  Going Beyond Two Layers}.
\newblock In \emph{NeurIPS}, 2019{\natexlab{a}}.

\bibitem[{Allen-Zhu} et~al.(2019{\natexlab{b}}){Allen-Zhu}, Li, and
  Song]{als18}
Zeyuan {Allen-Zhu}, Yuanzhi Li, and Zhao Song.
\newblock On the convergence rate of training recurrent neural networks.
\newblock In \emph{NeurIPS}, 2019{\natexlab{b}}.

\bibitem[{Allen-Zhu} et~al.(2019{\natexlab{c}}){Allen-Zhu}, Li, and
  Song]{als18dnn}
Zeyuan {Allen-Zhu}, Yuanzhi Li, and Zhao Song.
\newblock A convergence theory for deep learning via over-parameterization.
\newblock In \emph{ICML}, 2019{\natexlab{c}}.

\bibitem[Anandkumar et~al.(2017)Anandkumar, Ge, and
  Janzamin]{anandkumar2017analyzing}
Animashree Anandkumar, Rong Ge, and Majid Janzamin.
\newblock Analyzing tensor power method dynamics in overcomplete regime.
\newblock \emph{The Journal of Machine Learning Research}, 18\penalty0
  (1):\penalty0 752--791, 2017.

\bibitem[Arora et~al.(2019{\natexlab{a}})Arora, Du, Hu, Li, Salakhutdinov, and
  Wang]{arora2019exact}
Sanjeev Arora, Simon~S Du, Wei Hu, Zhiyuan Li, Ruslan Salakhutdinov, and
  Ruosong Wang.
\newblock On exact computation with an infinitely wide neural net.
\newblock \emph{arXiv preprint arXiv:1904.11955}, 2019{\natexlab{a}}.

\bibitem[Arora et~al.(2019{\natexlab{b}})Arora, Du, Hu, Li, and
  Wang]{arora2019finegrained}
Sanjeev Arora, Simon~S. Du, Wei Hu, Zhiyuan Li, and Ruosong Wang.
\newblock Fine-grained analysis of optimization and generalization for
  overparameterized two-layer neural networks.
\newblock \emph{CoRR}, abs/1901.08584, 2019{\natexlab{b}}.
\newblock URL \url{http://arxiv.org/abs/1901.08584}.

\bibitem[Bai and Lee(2019)]{bai2019beyond}
Yu~Bai and Jason~D Lee.
\newblock Beyond linearization: On quadratic and higher-order approximation of
  wide neural networks.
\newblock \emph{arXiv preprint arXiv:1910.01619}, 2019.

\bibitem[Bakshi et~al.(2018)Bakshi, Jayaram, and Woodruff]{bakshi2018learning}
Ainesh Bakshi, Rajesh Jayaram, and David~P Woodruff.
\newblock Learning two layer rectified neural networks in polynomial time.
\newblock \emph{arXiv preprint arXiv:1811.01885}, 2018.

\bibitem[Boob and Lan(2017)]{boob2017theoretical}
Digvijay Boob and Guanghui Lan.
\newblock Theoretical properties of the global optimizer of two layer neural
  network.
\newblock \emph{arXiv preprint arXiv:1710.11241}, 2017.

\bibitem[Brutzkus and Globerson(2017)]{brutzkus2017globally}
Alon Brutzkus and Amir Globerson.
\newblock Globally optimal gradient descent for a convnet with gaussian inputs.
\newblock \emph{arXiv preprint arXiv:1702.07966}, 2017.

\bibitem[Cao and Gu(2019)]{cao2019generalization}
Yuan Cao and Quanquan Gu.
\newblock Generalization bounds of stochastic gradient descent for wide and
  deep neural networks.
\newblock In \emph{Advances in Neural Information Processing Systems}, pages
  10835--10845, 2019.

\bibitem[Chizat and Bach(2018{\natexlab{a}})]{cb18}
Lenaic Chizat and Francis Bach.
\newblock On the global convergence of gradient descent for over-parameterized
  models using optimal transport.
\newblock In \emph{Advances in Neural Information Processing Systems (NIPS)}.
  arXiv preprint arXiv:1805.09545, 2018{\natexlab{a}}.

\bibitem[Chizat and Bach(2018{\natexlab{b}})]{chizat2018note}
Lenaic Chizat and Francis Bach.
\newblock A note on lazy training in supervised differentiable programming.
\newblock \emph{arXiv preprint arXiv:1812.07956}, 8, 2018{\natexlab{b}}.

\bibitem[Daniely et~al.(2016)Daniely, Frostig, and Singer]{dfs16}
Amit Daniely, Roy Frostig, and Yoram Singer.
\newblock Toward deeper understanding of neural networks: The power of
  initialization and a dual view on expressivity.
\newblock In \emph{Advances in Neural Information Processing Systems (NIPS)},
  pages 2253--2261, 2016.

\bibitem[Du et~al.(2017)Du, Lee, Tian, Poczos, and Singh]{du2017gradient}
Simon~S Du, Jason~D Lee, Yuandong Tian, Barnabas Poczos, and Aarti Singh.
\newblock Gradient descent learns one-hidden-layer cnn: Don't be afraid of
  spurious local minima.
\newblock \emph{arXiv preprint arXiv:1712.00779}, 2017.

\bibitem[Du et~al.(2018{\natexlab{a}})Du, Lee, Li, Wang, and
  Zhai]{du2018gradient2}
Simon~S Du, Jason~D Lee, Haochuan Li, Liwei Wang, and Xiyu Zhai.
\newblock Gradient descent finds global minima of deep neural networks.
\newblock \emph{arXiv preprint arXiv:1811.03804}, November 2018{\natexlab{a}}.

\bibitem[Du et~al.(2018{\natexlab{b}})Du, Lee, Tian, P{\'{o}}czos, and
  Singh]{dltps18}
Simon~S. Du, Jason~D. Lee, Yuandong Tian, Barnab{\'{a}}s P{\'{o}}czos, and
  Aarti Singh.
\newblock Gradient descent learns one-hidden-layer {CNN:} don't be afraid of
  spurious local minima.
\newblock In \emph{International Conference on Machine Learning (ICML)}.
  http://arxiv.org/abs/1712.00779, 2018{\natexlab{b}}.

\bibitem[Du et~al.(2018{\natexlab{c}})Du, Zhai, Poczos, and
  Singh]{du2018gradient}
Simon~S Du, Xiyu Zhai, Barnabas Poczos, and Aarti Singh.
\newblock Gradient descent provably optimizes over-parameterized neural
  networks.
\newblock \emph{arXiv preprint arXiv:1810.02054}, 2018{\natexlab{c}}.

\bibitem[Ge and Ma(2017)]{ge2017optimization}
Rong Ge and Tengyu Ma.
\newblock On the optimization landscape of tensor decompositions.
\newblock In \emph{Advances in Neural Information Processing Systems}, pages
  3653--3663, 2017.

\bibitem[Ge et~al.(2015)Ge, Huang, Jin, and Yuan]{ge2015escaping}
Rong Ge, Furong Huang, Chi Jin, and Yang Yuan.
\newblock Escaping from saddle points: online stochastic gradient for tensor
  decomposition.
\newblock In \emph{Conference on Learning Theory}, pages 797--842, 2015.

\bibitem[Ge et~al.(2017)Ge, Lee, and Ma]{ge2017learning}
Rong Ge, Jason~D Lee, and Tengyu Ma.
\newblock Learning one-hidden-layer neural networks with landscape design.
\newblock \emph{arXiv preprint arXiv:1711.00501}, 2017.

\bibitem[Ge et~al.(2018)Ge, Kuditipudi, Li, and Wang]{ge2018learning}
Rong Ge, Rohith Kuditipudi, Zhize Li, and Xiang Wang.
\newblock Learning two-layer neural networks with symmetric inputs.
\newblock \emph{arXiv preprint arXiv:1810.06793}, 2018.

\bibitem[Gehring et~al.(2017)Gehring, Auli, Grangier, Yarats, and
  Dauphin]{gehring2017convolutional}
Jonas Gehring, Michael Auli, David Grangier, Denis Yarats, and Yann~N Dauphin.
\newblock Convolutional sequence to sequence learning.
\newblock In \emph{Proceedings of the 34th International Conference on Machine
  Learning-Volume 70}, pages 1243--1252. JMLR. org, 2017.

\bibitem[Ghorbani et~al.(2019)Ghorbani, Mei, Misiakiewicz, and
  Montanari]{ghorbani2019linearized}
Behrooz Ghorbani, Song Mei, Theodor Misiakiewicz, and Andrea Montanari.
\newblock Linearized two-layers neural networks in high dimension.
\newblock \emph{arXiv preprint arXiv:1904.12191}, 2019.

\bibitem[Hanin and Nica(2019)]{hanin2019finite}
Boris Hanin and Mihai Nica.
\newblock Finite depth and width corrections to the neural tangent kernel.
\newblock \emph{arXiv preprint arXiv:1909.05989}, 2019.

\bibitem[HaoChen et~al.(2020)HaoChen, Wei, Lee, and Ma]{haochen2020shape}
Jeff~Z. HaoChen, Colin Wei, Jason~D. Lee, and Tengyu Ma.
\newblock Shape matters: Understanding the implicit bias of the noise
  covariance, 2020.

\bibitem[Jacot et~al.(2018)Jacot, Gabriel, and Hongler]{jacot2018neural}
Arthur Jacot, Franck Gabriel, and Cl{\'e}ment Hongler.
\newblock Neural tangent kernel: Convergence and generalization in neural
  networks.
\newblock In \emph{Advances in neural information processing systems}, pages
  8571--8580, 2018.

\bibitem[Kawaguchi(2016)]{kawaguchi2016deep}
Kenji Kawaguchi.
\newblock Deep learning without poor local minima.
\newblock In \emph{Advances in Neural Information Processing Systems}, pages
  586--594, 2016.

\bibitem[Kleinberg et~al.(2018)Kleinberg, Li, and
  Yuan]{kleinberg2018alternative}
Robert Kleinberg, Yuanzhi Li, and Yang Yuan.
\newblock An alternative view: When does sgd escape local minima?
\newblock \emph{arXiv preprint arXiv:1802.06175}, 2018.

\bibitem[Lee et~al.(2017)Lee, Panageas, Piliouras, Simchowitz, Jordan, and
  Recht]{lee2017first}
Jason~D Lee, Ioannis Panageas, Georgios Piliouras, Max Simchowitz, Michael~I
  Jordan, and Benjamin Recht.
\newblock First-order methods almost always avoid saddle points.
\newblock \emph{arXiv preprint arXiv:1710.07406}, 2017.

\bibitem[Li and Dou(2020)]{li2020can}
Yuanzhi Li and Zehao Dou.
\newblock When can wasserstein gans minimize wasserstein distance?
\newblock \emph{arXiv preprint arXiv:2003.04033}, 2020.

\bibitem[Li and Liang(2017)]{li2017provable}
Yuanzhi Li and Yingyu Liang.
\newblock Provable alternating gradient descent for non-negative matrix
  factorization with strong correlations.
\newblock In \emph{Proceedings of the 34th International Conference on Machine
  Learning-Volume 70}, pages 2062--2070. JMLR. org, 2017.

\bibitem[Li and Liang(2018)]{li2018learning}
Yuanzhi Li and Yingyu Liang.
\newblock Learning overparameterized neural networks via stochastic gradient
  descent on structured data.
\newblock In \emph{Advances in Neural Information Processing Systems}, 2018.

\bibitem[Li and Yuan(2017)]{li2017convergence}
Yuanzhi Li and Yang Yuan.
\newblock Convergence analysis of two-layer neural networks with relu
  activation.
\newblock In \emph{Advances in Neural Information Processing Systems}, pages
  597--607. http://arxiv.org/abs/1705.09886, 2017.

\bibitem[Li et~al.(2018)Li, Ma, and Zhang]{li2017algorithmic}
Yuanzhi Li, Tengyu Ma, and Hongyang Zhang.
\newblock Algorithmic regularization in over-parameterized matrix sensing and
  neural networks with quadratic activations.
\newblock In \emph{COLT}, 2018.

\bibitem[Li et~al.(2019{\natexlab{a}})Li, Wei, and Ma]{li2019towards}
Yuanzhi Li, Colin Wei, and Tengyu Ma.
\newblock Towards explaining the regularization effect of initial large
  learning rate in training neural networks.
\newblock \emph{arXiv preprint arXiv:1907.04595}, 2019{\natexlab{a}}.

\bibitem[Li et~al.(2019{\natexlab{b}})Li, Wang, Yu, Du, Hu, Salakhutdinov, and
  Arora]{li2019enhanced}
Zhiyuan Li, Ruosong Wang, Dingli Yu, Simon~S Du, Wei Hu, Ruslan Salakhutdinov,
  and Sanjeev Arora.
\newblock Enhanced convolutional neural tangent kernels.
\newblock \emph{arXiv preprint arXiv:1911.00809}, 2019{\natexlab{b}}.

\bibitem[Mei et~al.(2018)Mei, Montanari, and Nguyen]{mmn18}
Song Mei, Andrea Montanari, and Phan-Minh Nguyen.
\newblock A mean field view of the landscape of two-layer neural networks.
\newblock \emph{Proceedings of the National Academy of Sciences}, 115\penalty0
  (33):\penalty0 E7665--E7671, 2018.

\bibitem[Merity et~al.(2017)Merity, Keskar, and Socher]{merity2017regularizing}
Stephen Merity, Nitish~Shirish Keskar, and Richard Socher.
\newblock Regularizing and optimizing lstm language models.
\newblock \emph{arXiv preprint arXiv:1708.02182}, 2017.

\bibitem[O'Donnell(2014)]{o2014analysis}
Ryan O'Donnell.
\newblock \emph{Analysis of boolean functions}.
\newblock Cambridge University Press, 2014.

\bibitem[Oymak and Soltanolkotabi(2019)]{oymak2019towards}
Samet Oymak and Mahdi Soltanolkotabi.
\newblock Towards moderate overparameterization: global convergence guarantees
  for training shallow neural networks.
\newblock \emph{arXiv preprint arXiv:1902.04674}, 2019.

\bibitem[Peters et~al.(2018)Peters, Neumann, Iyyer, Gardner, Clark, Lee, and
  Zettlemoyer]{peters2018deep}
Matthew~E Peters, Mark Neumann, Mohit Iyyer, Matt Gardner, Christopher Clark,
  Kenton Lee, and Luke Zettlemoyer.
\newblock Deep contextualized word representations.
\newblock \emph{arXiv preprint arXiv:1802.05365}, 2018.

\bibitem[Soltanolkotabi et~al.(2017)Soltanolkotabi, Javanmard, and
  Lee]{soltanolkotabi2017theoretical}
Mahdi Soltanolkotabi, Adel Javanmard, and Jason~D Lee.
\newblock Theoretical insights into the optimization landscape of
  over-parameterized shallow neural networks.
\newblock \emph{arXiv preprint arXiv:1707.04926}, 2017.

\bibitem[Soudry and Carmon(2016)]{soudry2016no}
Daniel Soudry and Yair Carmon.
\newblock No bad local minima: Data independent training error guarantees for
  multilayer neural networks.
\newblock \emph{arXiv preprint arXiv:1605.08361}, 2016.

\bibitem[Sun et~al.(2015)Sun, Qu, and Wright]{sun2015nonconvex}
Ju~Sun, Qing Qu, and John Wright.
\newblock When are nonconvex problems not scary?
\newblock \emph{arXiv preprint arXiv:1510.06096}, 2015.

\bibitem[Sun(2019)]{sun2019optimization}
Ruoyu Sun.
\newblock Optimization for deep learning: theory and algorithms.
\newblock \emph{arXiv preprint arXiv:1912.08957}, 2019.

\bibitem[Tian(2017)]{tian2017analytical}
Yuandong Tian.
\newblock An analytical formula of population gradient for two-layered relu
  network and its applications in convergence and critical point analysis.
\newblock \emph{arXiv preprint arXiv:1703.00560}, 2017.

\bibitem[Vempala and Wilmes(2018)]{vempala2018polynomial}
Santosh Vempala and John Wilmes.
\newblock Polynomial convergence of gradient descent for training
  one-hidden-layer neural networks.
\newblock \emph{arXiv preprint arXiv:1805.02677}, 2018.

\bibitem[Wei et~al.(2018)Wei, Lee, Liu, and Ma]{wei2018margin}
Colin Wei, Jason~D Lee, Qiang Liu, and Tengyu Ma.
\newblock On the margin theory of feedforward neural networks.
\newblock \emph{arXiv preprint arXiv:1810.05369}, 2018.

\bibitem[Wei et~al.(2019)Wei, Lee, Liu, and Ma]{wei2019regularization}
Colin Wei, Jason~D Lee, Qiang Liu, and Tengyu Ma.
\newblock Regularization matters: Generalization and optimization of neural
  nets vs their induced kernel.
\newblock In \emph{Advances in Neural Information Processing Systems}, pages
  9709--9721, 2019.

\bibitem[Xie et~al.(2016)Xie, Liang, and Song]{xie2016diversity}
Bo~Xie, Yingyu Liang, and Le~Song.
\newblock Diversity leads to generalization in neural networks.
\newblock \emph{arXiv preprint Arxiv:1611.03131}, 2016.

\bibitem[Yang(2019)]{yang2019scaling}
Greg Yang.
\newblock Scaling limits of wide neural networks with weight sharing: Gaussian
  process behavior, gradient independence, and neural tangent kernel
  derivation.
\newblock \emph{arXiv preprint arXiv:1902.04760}, 2019.

\bibitem[Yehudai and Shamir(2019)]{yehudai2019power}
Gilad Yehudai and Ohad Shamir.
\newblock On the power and limitations of random features for understanding
  neural networks.
\newblock \emph{arXiv preprint arXiv:1904.00687}, 2019.

\bibitem[Zeiler and Fergus(2014)]{zeiler2014visualizing}
Matthew~D Zeiler and Rob Fergus.
\newblock Visualizing and understanding convolutional networks.
\newblock In \emph{European conference on computer vision}, pages 818--833.
  Springer, 2014.

\bibitem[Zhang et~al.(2018)Zhang, Yu, Wang, and Gu]{zhang2018learning}
Xiao Zhang, Yaodong Yu, Lingxiao Wang, and Quanquan Gu.
\newblock Learning one-hidden-layer relu networks via gradient descent.
\newblock \emph{arXiv preprint arXiv:1806.07808}, 2018.

\bibitem[Zhong et~al.(2017)Zhong, Song, Jain, Bartlett, and
  Dhillon]{zhong2017recovery}
Kai Zhong, Zhao Song, Prateek Jain, Peter~L Bartlett, and Inderjit~S Dhillon.
\newblock Recovery guarantees for one-hidden-layer neural networks.
\newblock \emph{arXiv preprint arXiv:1706.03175}, 2017.

\bibitem[Zou et~al.(2018)Zou, Cao, Zhou, and Gu]{zou2018stochastic}
Difan Zou, Yuan Cao, Dongruo Zhou, and Quanquan Gu.
\newblock Stochastic gradient descent optimizes over-parameterized deep relu
  networks.
\newblock \emph{arXiv preprint arXiv:1811.08888}, 2018.

\end{thebibliography}

\appendix
\newpage
\paragraph{Organizations.}
The appendix provides complete proofs to Theorem \ref{thm:main} and \ref{thm_lb}.
\begin{itemize}
	\item In Section \ref{app_inf}, we describe the proof of Theorem \ref{thm_inf} for the infinite-width case. This section comprises the bulk of the appendix.
	\item In Section \ref{app_finite}, we describe the proof of Theorem \ref{thm:main} by reducing the finite-width case to the infinite-width case.
	\item In Section \ref{app_lb}, we prove Theorem \ref{thm_lb} using ideas from the work of \citet{AL2019-resnet}.
\end{itemize}

\section{Proof of the Infinite-Width Case}\label{app_inf}

We provide the proof of Theorem \ref{thm_inf}, which shows that running truncated gradient descent on an infinite-width network can recover the target network with population loss at most $O(d^{1 + Q})$, where $Q$ is a sufficiently small constant defined in Theorem \ref{thm_inf}.
Recall from Section \ref{sec_overview} that our analysis begins by setting up the random initialization and then proceeds in two stages.
We fill in the proof details left from Section \ref{sec_overview}.
The rest of this section is organized as follows.
\begin{itemize}
	\item {\bf Initialization:} We set up the random initialization used by Algorithm \ref{alg}.
	\item {\bf Stage 1:} We fill in the proof details of the dynamic during Stage 1, which subsumes Stage 1.1 and Stage 1.2 described in Section \ref{sec_stage1}.
	This stage runs for $\Theta(\frac{d^2}{\eta C(\kappa) \log d})$ iterations.
	\item {\bf Stage 2:} We fill in the proof details of the dynamic during Stage 2, which subsumes Stage 2.1 and Stage 2.2 described in Section \ref{sec_stage2}.
	This stage runs for $\Theta(\frac{d^{1 + 10Q}}{\eta})$ iterations.
\end{itemize}

\paragraph{Initialization.}
Recall that for the infinite-width case, our initialization of the neuron distribution is a probability measure truncated from a Gaussian distribution with identity covariance.
We formally define the truncation and the initialization, denoted by $\cP^{(0)}$, as follows.

\begin{definition}[Truncated neuron space]\label{defn:true}
Let $\set{S}_g\subseteq\real^d$ be the set of all $w\in\real^d$ that satisfies the following properties:
\begin{itemize}
\item The maximum entry of $w$ is bounded: $\| w \|_{\infty} \leq \frac{\poly\log(d)}{\sqrt{d}}$.
\item Both $\|w \|_2^2$ \text{ and } $\sum_{i=1}^d a_i  d \cdot w_i^2$ \text{ are in the range }
	\begin{align}\label{eq:nkvajbkfjaf}
		  \left[ 1 -\frac{\poly\log(d)}{\sqrt{d}}, 1 + \frac{\poly\log(d)}{\sqrt{d}} \right].
	\end{align}
\item There are at most $O(\log^{0.01} (d))$ coordinates $i\in[d]$ of $w$ such that $w_i^2 \geq \frac{\log d}{d}$.
\end{itemize}
We define $\cP^{(0)}$ as the probability measure of $\cN(0, \id_{d\times d} /d)$ conditional on the support set $\set{S}_g$.
\end{definition}

\noindent{\it Remark.}
For our purpose of proving the finite-width case later in Section \ref{app_finite}, it suffices to consider $\cP^{(0)}$ as the initialization as opposed to $\cN(0, \id_{d\times d} / d)$.
This is because when Algorithm \ref{alg} samples $m = \poly_{\kappa}(d)$ neurons from $\cN(0, \id_{d\times d} / d)$, with high probability all the $m$ samples are in the set $\set{S}_g$.
To see this, by standard concentration inequalities for the Gaussian distribution, we can show that the set $\set{S}_g$ has probability measure at least $\mu(\set{S}_g) \geq 1 - \frac{1}{d^{\Omega(1)}}$. %
Thus by union bound, with high probability all $m$ samples are in $\set{S}_g$.

As stated in Section \ref{sec_overview}, we are going to heavily use the  conditionally-symmetric property (cf. Definition \ref{def_cs_prop}).
We observe that the initialization $\cP^{(0)}$ is indeed conditionally-symmetric.
This is because $\cN\left(0, \id_{d\times d} / d\right)$ satisfies the conditionally-symmetric property and our truncation in Definition \ref{defn:true} only involves conditions on the square of the coordinates of $w$.
Hence the truncation of $\cN(0, \id_{d\times d} / d)$ to $\cS_g$ preserves the conditionally-symmetric condition.

\smallskip
\noindent{\it Notations for gradients.}
Before describing the analysis, we introduce several notations first.
Recall from Claim \ref{claim:symmetry} that the gradient of a neuron $v$ in the distribution $\cP$ can be simplified given the conditionally-symmetric property.
For each coordinate $1\le i\le d$, the gradient of neuron $v$ satisfies that
$[\nabla_{v}]_i = \sum_{j \geq 0} \left[ \nabla_{2j , v} \right]_i$,
where $\nabla_v = \nabla_v L_{\infty}(\cP)$, $\nabla_{2j, v} = \nabla_{2j, v} L_{\infty}(\cP)$ denotes the gradient of $v$ for the $2j$-th loss, and $[\nabla_v]_i$ denotes the $i$-th coordinate of $\nabla_v$.
Let $B_{1, 2j} =  b_{2j} + b_{2j}'$ and $B_{2, 2j} =  b_{2j}'$, where $b_{2j}$ and $b_{2j}'$ are the Hermite coefficients of the $2j$-th loss given in Section \ref{sec_prelim}.
For a vector $w \in \set{S}_g$, let $w^{(0)}$ denote a neuron with initialization $w$ in the initialization $\cP^{(0)}$.
Let $\cP^{(t)}$ denote the $t$-th iterate of $\cP^{(0)}$ following the update rule of equation \eqref{eq:update_p}.

\paragraph{Stage 1.}
Recall from Section \ref{sec_stage1} that the goal of Stage 1 is to show that a small fraction of neurons becomes basis-like, i.e. close to a basis $e_i$ times a scaling factor of $\poly(d)$ at the end of $\Theta_{\kappa}(d^2/\eta \log d)$ iterations for some $i\in[d]$.
To facilitate the analysis, we maintain an inductive hypothesis throughout Stage 1 that provides an upper bound on the norm of a typical neuron during the update.
We first introduce the set of neurons that will not become basis-like by the end of Stage 1.

\begin{definition}\label{defn:no_win}
	Let $C_0$ be a large enough constant.
	Let $c_0 = C_0 \log d$ and $\set{S}$ be the set of all vectors $w$ in $\set{S}_g$ such that
\begin{align*}
		\| w \|_{\infty}^2 \le \frac{c_0}{d} \text{ and } \| \bar{w} \|_{\infty}^2 \leq \frac{c_0}{d},
\end{align*}
where $\bar{w} = w / \norm{w}$ denotes $w$ being normalized to norm $1$.
\end{definition}

Based on the above definition, we introduce the following inductive hypothesis that shows the neurons in $\set{S}$ remain ``small and dense'' (i.e. not basis-like) throughout Stage 1.
This stage runs for $ \Theta(\frac{d^2}{\eta \log d})$ iterations.
We use $\Kappa$ to denote a value that is less than $O(\exp(\poly(\kappa)))$.

\begin{proposition}[Inductive hypothesis $\set{H}_1$ for Stage 1]\label{def_H0}
In the setting of Theorem \ref{thm_inf}, let  $T_2 = \Theta(\frac{d^2}{\eta c_0\exp(\poly(\kappa))})$.
There exists an increasing sequence $\Set{c_t}_{t=1}^{T_2}$ where $c_t \leq \exp(\poly(\kappa)) \log d$ such that for every  $w \in \set{S}$ and every $t \le T_2$, the $t$-th iterate of the neuron $w^{(t)}$ with initialization $w^{(0)} = w$ satisfies that
	\begin{align}
		\| w^{(t)} \|_{\infty}^2 \le \frac{c_t}d \text{ and } \| \bar{w}^{(t)} \|_{\infty}^2 \leq \frac{c_t}{d}. \label{eq:fajosifsajfasjif}
	\end{align}
Furthermore, for every coordinate $i \in [d]$, we have that in expectation,
	\begin{align}
		\E_{w^{(t)} \sim \cP^{(t)}}[{w_i^{(t)}}^2]\leq   \frac{2 \kappa}{d} \text{ and } \E_{w^{(t)}\sim\cP^{(t)}}[\bar{w^{(t)}}_i^2]  \leq \frac{4 \kappa^2}{d}. \label{eq_H0_2}
	\end{align}
\end{proposition}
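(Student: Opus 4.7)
The plan is to prove Proposition \ref{def_H0} by induction on $t$, defining the sequence $\{c_t\}$ recursively so that each step contributes at most a small multiplicative increase, and verifying that the induction can be sustained for $T_2 = \Theta(d^2/(\eta c_0 \exp(\poly(\kappa))))$ iterations before $c_t$ exceeds $\exp(\poly(\kappa))\log d$.

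\textbf{Base case.} At $t=0$, the bound $\|w^{(0)}\|_\infty^2 \le c_0/d$ is immediate from $w \in \set{S}$ (Definition~\ref{defn:no_win}). The bound $\|\bar w^{(0)}\|_\infty^2 \le c_0/d$ also follows from $w \in \set{S}$. The expectation bounds in~\eqref{eq_H0_2} hold because $\cP^{(0)}$ is a truncation of $\cN(0,\id_{d\times d}/d)$ on a set of measure $1-1/d^{\Omega(1)}$, so that $\E_{w\sim \cP^{(0)}}[w_i^2] = (1\pm o(1))/d$ and an analogous estimate holds for $\bar w_i^2$ using \eqref{eq:nkvajbkfjaf}.

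\textbf{Inductive step.} Assume $\set{H}_1$ at time $t$. Using Claim~\ref{claim:symmetry}, for each $w^{(t)}\in \set{S}$ with $\|w^{(t)}\|_2^2 \le 1/\lambda_0$ (so the truncation indicator is $1$), I would decompose the gradient coordinate-wise as $[\nabla_{w^{(t)}}]_i = [\nabla_{0,w^{(t)}}]_i + [\nabla_{2,w^{(t)}}]_i + \sum_{j\ge 2}[\nabla_{2j,w^{(t)}}]_i$ and bound each piece:
\begin{itemize}
\item For the $0$-th and $2$-nd order terms, \eqref{eq:def02} together with the inductive expectation bound $\E[w_i^2]\le 2\kappa/d$ and $a_i\le \kappa/d$ give $|[\nabla_{0,w}]_i|+|[\nabla_{2,w}]_i| = O(\kappa/d)\cdot |w_i^{(t)}|$.
\item For $j\ge 2$, the key observation is that $\sum_{i_1,\dots,i_j}\prod_r(\bar w_{i_r}\bar v_{i_r})^2 \le \|\bar v\|_\infty^{2(j-1)}\|\bar w\|_\infty^{2(j-1)}\cdot\|\bar w\|_2^2\|\bar v\|_2^2 \le (c_t/d)^{j-1}\cdot(c_t/d)^{j-1}$ under the inductive hypothesis. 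Plugging this into \eqref{eq:def04} and summing the geometric-like series via $b_{2j}=\Theta(1/j^2)$ shows that the total higher-order contribution is $O((c_t\log d)^2/d^2)\cdot |w_i^{(t)}|$, which is dominated by the $O(\kappa/d)\cdot |w_i^{(t)}|$ term from the $0$-th and $2$-nd order gradients.
\end{itemize}
Combining, the coordinate update satisfies $|w_i^{(t+1)}| \le (1+\eta\cdot \exp(\poly(\kappa))/d)\,|w_i^{(t)}|$, which I would translate into $\|w^{(t+1)}\|_\infty^2 \le (1+\eta\cdot\exp(\poly(\kappa))/d)^2\cdot c_t/d$. Defining $c_{t+1}$ accordingly and iterating $T_2$ times gives $c_{T_2}\le c_0\cdot\exp(2\eta T_2\exp(\poly(\kappa))/d) \le \exp(\poly(\kappa))\log d$ for the choice of $T_2$ in the statement.

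\textbf{Normalized and expectation bounds.} The bound on $\|\bar w^{(t+1)}\|_\infty^2$ requires showing that $\|w^{(t+1)}\|_2$ does not shrink substantially. I would estimate $\|w^{(t+1)}\|_2^2 = \|w^{(t)}\|_2^2 - 2\eta\langle w^{(t)},\nabla_{w^{(t)}}\rangle + \eta^2\|\nabla_{w^{(t)}}\|^2$, and use the per-coordinate gradient bound above plus $\eta \le \lambda_0^2$ to show $\|w^{(t+1)}\|_2^2 \ge (1-\eta\cdot\exp(\poly(\kappa))/d)\|w^{(t)}\|_2^2$, which combined with \eqref{eq:nkvajbkfjaf} keeps $\|w^{(t)}\|_2^2 = \Theta(1)$ throughout Stage~1. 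The expectation bounds in \eqref{eq_H0_2} I would derive by showing that $\E[w_i^{(t+1)\,2}] - \E[w_i^{(t)\,2}]$ is governed by a contraction-type inequality driven by the $2$-nd order gradient $b_2(\E[w_i^2]-a_i)w_i$, so $\E[w_i^2]$ cannot grow past $2\kappa/d$ once it is bounded by this value at $t=0$; the bound on $\E[\bar w_i^2]$ follows since $\|w\|_2^2 = \Theta(1)$ uniformly.

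\textbf{Main obstacle.} The delicate part is the uniform-in-$i$ and uniform-over-neurons control needed to feed both the per-neuron bound \eqref{eq:fajosifsajfasjif} and the population bound \eqref{eq_H0_2} back into the next induction step simultaneously: the higher-order gradient $[\nabla_{2j,v}]_i$ couples the neuron $v$ to the distribution $\cP^{(t)}$ through the expectations $\E_{w\sim\cP^{(t)}}[\cdot]$, so the two parts of the hypothesis must be advanced in lockstep. Avoiding exponential blow-up of the constants $c_t$ over $\Theta(d^2/\eta)$ steps requires showing that the net per-step growth factor is at most $1+O(\eta\cdot\poly\log d/d)$, for which the truncation to $\set{S}_g$ in the initialization and the conditional-symmetry cancellation of odd-order terms (Claim~\ref{cl_ss}) are both essential.
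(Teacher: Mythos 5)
Your inductive-step arithmetic does not close. You bound the per-coordinate gradient by $O(\kappa/d)\,|w_i^{(t)}|$ (from the 0th- and 2nd-order terms) and iterate $|w_i^{(t+1)}| \le (1 + \eta\exp(\poly(\kappa))/d)\,|w_i^{(t)}|$ for $T_2 = \Theta(d^2/(\eta\,c_0\exp(\poly(\kappa))))$ steps, claiming $c_{T_2} \le c_0\exp(2\eta T_2\exp(\poly(\kappa))/d) \le \exp(\poly(\kappa))\log d$. But substituting $T_2$ gives $2\eta T_2 \exp(\poly(\kappa))/d = \Theta(d/c_0) = \Theta(d/\log d)$, so your bound is $c_{T_2}\le c_0\exp(\Theta(d/\log d))$, which is astronomically larger than the target. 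The crude per-step factor $1 + O(\eta/d)$ simply cannot be sustained over $\Theta(d^2/\eta)$ iterations. You correctly identified "avoiding exponential blow-up" as the main obstacle, but the argument you sketch does not actually avoid it.

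The paper resolves this with a two-phase argument that you would need to reproduce. For $t \le T_1 = \tilde\Theta_\kappa(d/\eta)$, instead of bounding the gradient magnitude directly, it tracks the conserved ratio $(v_i^{(t)})^2 / \E_{\cP^{(t)}}[w_i^2]$, which (via equation \eqref{eq:Bnoaidhfaoiuhas} in the proof of Claim~\ref{lem:zero_two5}) stays within an additive $\poly\log(d)/d$ of its initial value; this gives $c_t = O(\Kappa\kappa c_0)$ for all $t \le T_1$ without ever accumulating a multiplicative $(1+\eta/d)$ per step. For $T_1 < t \le T_2$, it then invokes Lemma~\ref{lem:zero_two3}: after $T_1$ iterations the 0th- and 2nd-order losses have decayed so that $\Delta^{(t)},\delta_+^{(t)},\delta_-^{(t)} \le c_t\poly(\Kappa)/d^2$, and the 0th/2nd-order gradient is therefore $O(c_t\poly(\Kappa)/d^2)\,|v_i|$ rather than $O(1/d)\,|v_i|$. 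Combined with Proposition~\ref{lem:four_plus_upper} for the higher-order terms, the per-step growth is $(1 + \eta\,c_t\poly(\Kappa)/d^2)$, and this is what makes $T_2 = \Theta(d^2/(\eta c_0 \poly(\Kappa)))$ iterations sustainable (yielding $c_t = O(\poly(\Kappa)c_0)$). The key point you missed is that the $d^2$ scaling in $T_2$ is tied to the $1/d^2$ scaling of the post-$T_1$ gradient, not to a $1/d$ bound. Relatedly, your argument for the expectation bounds in \eqref{eq_H0_2} needs the coordinatewise lower bound $\E_{\cP^{(t)}}[w_i^2] \ge 1/(\Kappa d)$ (Claim~\ref{lem:zero_two4}), which requires controlling $\sum_t\Delta^{(t)}$ and is itself entangled with the two-phase structure; the "contraction-type inequality" you gesture at only gives the upper bound.
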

Equation \eqref{eq:fajosifsajfasjif} and \eqref{eq_H0_2}, which we also refer to as inductive hypothesis $\set{H}_1$, show that the norm of any neuron in $\set{S}$ will not grow beyond $O_{\kappa}(\log d / d)$.
Hence they will not become basis-like during Stage 1.

The set $\set{S}$ contains most neurons in $\cP^{(0)}$ because by standard anti-concentration inequalities, the measure of the set $\set{S}$ is at least $1 - d^{-O(C_0)}$.
Hence, $1 - \mu(\set{S})$ is at most $d^{-O(C_0)}$.
Based on this fact, we state a simple claim on the norm of neurons that are not in $\set{S}$ that will be used later:
\begin{align} \label{lem:norm_bound}
		\E_{w^{(t)} \sim \cP^{(t)}, w^{(0)} \notin \set{S}} \| w^{(t)} \|_2^2 \leq  \Lambda  :=  O\left(\frac{1}{\lambda_0} (1 - \mu(\set{S}))   \right) \le \frac{1}{\poly(d)}.
\end{align}
To see that equation \eqref{lem:norm_bound} is true, recall that the truncation of Algorithm \ref{alg} ensures that $\norm{w}^2 \le 1/ \lambda_0$.
Combined with the fact that $1 - \mu(\set{S}) \le d^{-O(C)}$ and $\lambda_0 = \Theta(1 / \poly(d))$, we have that equation \eqref{lem:norm_bound} holds for a sufficiently large constant $C_0$.
This finishes our introduction of the inductive hypothesis $\set{H}_1$.
The proof of Proposition \ref{def_H0} can be found in Section \ref{app_proof_H0}.

\bigskip
Given the inductive hypothesis $\set{H}_1$, we can state the formal result that corresponds to Stage 1.1 in Section \ref{sec_stage1}.
For a neuron distribution $\cP$, let us first introduce the following notations, which corresponds to the population loss of the 0th and 2nd order tensor decompositions.
\begin{align*}
  \Delta_+ &\define b_0 \sum_{i=1}^d [ \E_{w \sim \cP} w_i^2 - a_i]^+,
  \Delta_- \define b_0 \sum_{i=1}^d [  a_i - \E_{w \sim \cP} w_i^2]^+,\\
  \delta_+ &\define  b_2 \max_{i \in [d]} [ \E_{w \sim \cP} w_i^2 - a_i]^+,
    \delta_- \define b_2 \max_{i \in [d]} [ a_i - \E_{w \sim \cP} w_i^2 ]^+.
\end{align*}
Let $\Delta \define \Delta_+ - \Delta_-$ denote an upper bound on the 0th order loss.
Let $\delta_i \define  b_2(\E_{\cP} w_i^2 - a_i)$ for every $1\le i\le d$.
At the $t$-th iteration, we use $\delta_i^{(t)}$ to denote the value of $\delta_i$ given the neuron distribution $\cP^{(t)}$, as well as $\Delta^{(t)}$ for $\Delta$, $\delta_+^{(t)}$ for $\delta_+$, and $\delta_-^{(t)}$ for $\delta_-$.

Based on the above notations, we show the following convergence result at the end of Stage 1.1.
\begin{lemma}[Stage 1.1: learning 0th and 2nd order tensors]\label{lem:zero_two3}
  In the setting of Theorem \ref{thm_inf}, suppose that Proposition \ref{def_H0} holds.
	Let $T_1 = \Theta\left( \frac{\poly(\Kappa) d \log d}{\eta } \right)$.
	Then, for every $t \ge T_1$, we have that
		$\Delta^{(t)}, \delta_+^{(t)}, \delta_-^{(t)}$ are all less than $\frac{c_t \poly(\Kappa)}{d^2}$, where $c_t$ is given in Proposition \ref{def_H0}.
\end{lemma}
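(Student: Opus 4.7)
The plan is to follow the outline in Section \ref{sec_stage1}. By Claim \ref{claim:symmetry} combined with the conditional-symmetry property (Claim \ref{cl_ss}), the gradient of a neuron $v$ at coordinate $i$ splits cleanly as $[\nabla_v]_i = [\nabla_{0,v}]_i + [\nabla_{2,v}]_i + \sum_{j \ge 2} [\nabla_{2j,v}]_i$, where the dominant terms $[\nabla_{0,v}]_i = b_0(\E_{w \sim \cP}\|w\|_2^2 - 1)v_i$ and $[\nabla_{2,v}]_i = b_2(\E_{w \sim \cP} w_i^2 - a_i)v_i = \delta_i^{(t)} v_i$ both act diagonally on $v_i$. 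Writing $\alpha_i^{(t)} := \E_{w^{(t)} \sim \cP^{(t)}} (w_i^{(t)})^2$ and squaring the coordinate-wise update $v_i^{(t+1)} = v_i^{(t)} (1 - \eta \Delta^{(t)} - \eta \delta_i^{(t)}) - \eta \sum_{j \ge 2}[\nabla_{2j,v^{(t)}}]_i$, I would derive a scalar recursion
\begin{align*}
\alpha_i^{(t+1)} - a_i = (\alpha_i^{(t)} - a_i)(1 - 2 \eta \alpha_i^{(t)} b_2) - 2 \eta \alpha_i^{(t)} \Delta^{(t)} + \veps_i^{(t)},
\end{align*}
where $\veps_i^{(t)}$ collects the contributions of higher-order tensors and second-order-in-$\eta$ terms.

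Next I would control $\veps_i^{(t)}$ using the inductive hypothesis $\set{H}_1$ from Proposition \ref{def_H0}, which bounds $\|v^{(t)}\|_\infty^2 \le c_t/d$ for every $w^{(0)} \in \set{S}$, together with the negligible-mass bound \eqref{lem:norm_bound} for the neurons with $w^{(0)} \notin \set{S}$. Applying Proposition \ref{lem:four_plus_upper}, which bounds each $|[\nabla_{2j,v}]_i|$ by $\tilde O_\kappa(|v_i|/d^{2.5})$, and summing over $j \ge 2$, gives $|\veps_i^{(t)}| \le \eta \cdot \tilde O_\kappa(\alpha_i^{(t)}/d^{2})$, so the higher-order tensors act as a per-step perturbation of relative size $\tilde O_\kappa(1/d^2)$ compared to the dominant drift.

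Third, I would invoke Proposition \ref{lem:zero_two}, stated in the overview, to replace the inconvenient coupling term $\Delta^{(t)}$ by a quantity comparable to $\delta_\pm^{(t)}$ (i.e., $|\Delta^{(t)}| \le O(\delta_+^{(t)} + \delta_-^{(t)})$), effectively decoupling the $d$ coordinate recursions. Once this is in place, the problem reduces to a one-dimensional PCA-style contraction: the positive parts $[\alpha_i^{(t)} - a_i]^+$ shrink geometrically at rate $(1 - \Theta(\eta b_2 a_i)) = (1 - \Theta(\eta/(\kappa d)))$ up to the perturbation $\veps_i^{(t)}$, and analogously for the negative parts. Iterating for $T_1 = \Theta(\poly(\Kappa) d \log d/\eta)$ steps, which is the standard time to contract an $O_\kappa(1/d)$ initial gap at rate $1 - \Theta(1/(\kappa d))$ down to $\tilde O_\kappa(1/d^2)$, and invoking the quantitative per-step contraction established in Proposition \ref{lem:zero_two2}, yields $\delta_+^{(t)}, \delta_-^{(t)} \le c_t \poly(\Kappa)/d^2$. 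Feeding this back into Proposition \ref{lem:zero_two} gives the same bound on $\Delta^{(t)}$.

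The main obstacle is ensuring that the accumulated higher-order tensor error $\veps_i^{(t)}$ never destabilizes the contraction: the fixed point of the recursion is on the order of $\tilde O_\kappa(1/d^{2})$ provided $\veps_i^{(t)}/(\eta/d) \lesssim 1/d^2$, which is tight against the $\tilde O_\kappa(1/d^{2.5})$ bound of Proposition \ref{lem:four_plus_upper}. Maintaining this bound requires using the full inductive hypothesis $\set{H}_1$ (both the $\|w\|_\infty$ and $\|\bar w\|_\infty$ bounds and the second-moment bound \eqref{eq_H0_2}) and keeping $\eta < \lambda_0^2$ small enough that the second-order-in-$\eta$ terms from squaring the update are dominated. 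Once these error bookkeeping estimates are in place, combining Propositions \ref{def_H0}, \ref{lem:four_plus_upper}, \ref{lem:zero_two}, and \ref{lem:zero_two2} closes the argument for $t \ge T_1$.
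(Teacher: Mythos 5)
Your plan follows the paper's route: take the conditionally-symmetric simplification of the gradient from Claim \ref{claim:symmetry}, derive a scalar recursion for $\E_{\cP^{(t)}}[w_i^2]$, bound the fourth-and-higher contribution by Proposition \ref{lem:four_plus_upper} plus the negligible-mass bound \eqref{lem:norm_bound}, then invoke Propositions \ref{lem:zero_two} and \ref{lem:zero_two2} and iterate the $(1 - \Theta(\eta/(\poly(\Kappa) d)))$ contraction for $T_1 = \Theta(\poly(\Kappa) d \log d/\eta)$ steps. The paper's proof of Lemma~\ref{lem:zero_two3} is exactly this composition, so the structure and the accounting ($\eta$-squared terms, fixed point of order $c_t\poly(\Kappa)/d^2$, stability of the interval once entered) are right.

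There is one gap you do not address. Propositions \ref{lem:zero_two} and \ref{lem:zero_two2} are not unconditional: each is stated under the explicit assumption that $\E_{w^{(t)}\sim\cP^{(t)}}[w_i^2] \ge \frac{1}{\Kappa d}$ for every coordinate $i$. The inductive hypothesis $\set{H}_1$ in Proposition~\ref{def_H0} (specifically \eqref{eq_H0_2}, which you cite) gives only an \emph{upper} bound on $\E[w_i^2]$; it says nothing about a lower bound. Without the lower bound your claimed geometric contraction of $[\alpha_i^{(t)}-a_i]^-$ fails: when $\delta_i^{(t)} < 0$ the multiplier is $(1 - 2\eta\E_{\cP^{(t)}}[w_i^2])$, and if $\E[w_i^2]$ could drift near zero the contraction rate could become arbitrarily slow and the $\veps_i^{(t)}$ perturbation could dominate. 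The paper closes this loop separately in Claim~\ref{lem:zero_two4}, which proves $\E_{\cP^{(t)}}[w_i^2]\in[\frac{1}{\Kappa d},\frac{2\kappa}{d}]$ for all $t\le T_2'$ by an induction that accounts for the cumulative multiplicative decrement $\exp\{-\eta\sum_{s\in\set{T}}\Delta^{(s)}\}$ and uses \eqref{eq:fsafjaoifjasfoiajsoiaj} to show that this sum is $O(\kappa^4)$. You need to either import that claim or prove a comparable lower bound on $\E[w_i^2]$ to legitimately invoke the two propositions; as written the argument is circular (the propositions are used to control $\Delta^{(t)}$, but their hypothesis in turn depends on $\Delta^{(t)}$ not having pushed $\E[w_i^2]$ down).

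A small inaccuracy that does not affect the conclusion: Proposition \ref{lem:four_plus_upper} gives $|[\nabla_{\ge 4,v}]_i| \le O(c_t\kappa/d^2)\cdot|v_i|$, i.e.\ a $\tilde O_\kappa(|v_i|/d^2)$ bound; the scale $\tilde O_\kappa(1/d^{2.5})$ arises only after substituting $|v_i| = \tilde O(1/\sqrt d)$ — you wrote $\tilde O_\kappa(|v_i|/d^{2.5})$, which would over-shoot to $1/d^3$. Your subsequent line $|\veps_i^{(t)}|\le\eta\cdot\tilde O_\kappa(\alpha_i^{(t)}/d^2)$ is correct, so the slip does not propagate.
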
%
The above result implies that after $T_1$ iterations, the 0th and 2nd order losses  remain smaller than $c_t \poly(\Kappa) / d^2$.
The proof of Lemma \ref{lem:zero_two3} can be found in Section \ref{sec_proof_lemma_A2}.

\bigskip
Once Stage 1.1 is finished, recall from Section \ref{sec_overview} that the higher order gradients begin to dominate the dynamic.
Hence Algorithm \ref{alg} enters Stage 1.2.
We introduce the following notations in order to state the formal result.
Let $T_2' =  T_2 -  \frac{d^2}{\eta \poly\log(d)}$.
For every $1\le i\le d$, let $\Gamma_i = \frac{1}{2 B_{1, 4} (a_i^2 d) (\eta T_2')}$.
Let $\rho = \frac{ \poly(\Kappa) \cdot \log d}{d}$.
Here, by our assumption, we know that $a_i^2 = \Theta({1}/{d^2})$.
Since $T_2' = \Theta({d^2}/(\eta \log d))$, we can see that $\Gamma_i = \Theta({\log d}/{d})$.
Consider a coordinate $i \in [d]$.
We define the set of good neurons whose $i$-th coordinate is larger than $\Gamma_i + \rho$ as
\[ \set{S}_{i, good} \define \left\{ v \in \set{S}_g \mid [v^{(0)}]_i^2 \geq \Gamma_i  + \rho \text{ and for all other } j\neq i: [v^{(0)}]_j^2 < \Gamma_j - \rho  \right\}. \]
Then we define the set of bad neurons that have two large coordinates as
\[ \set{S}_{i, bad} = \left\{ v \in \set{S}_g \mid [v^{(0)}]_i^2 \geq \Gamma_i  - \rho \text{ and there exists } r \neq i: [v^{(0)}]_r^2 \ge \Gamma_r - \rho  \right\}. \]
The following lemma shows that, among other statements, the neurons in $\set{S}_{i, good}$ will win the lottery and become basis-like at the end of Stage 1.2 in the sense described below.
\begin{lemma}[Stage 1.2: learning higher order tensors] \label{lem:stage_1_final}%
	In the setting of Theorem \ref{thm_inf}, suppose that Proposition \ref{def_H0} holds.
	At iteration $T_2$ (recall that $T_2$ is defined in Proposition \ref{def_H0}), the following holds for $\set{S}_{i, good}$ and $\set{S}_{i, bad}$:
	\begin{itemize}
		\item For every $i\in[d]$ and every $v \in \set{S}_{i, good}$, we have that
			\begin{align*}
				|v_i^{(T_2)}|^2  \geq \frac{1}{\lambda_0 \poly(d)} \ge \poly(d),
				\text{ and for every $j \neq i$, } |v_j^{(T_2)}|  \leq \frac{2(\log d)^2}{\sqrt{d}}.
			\end{align*}
		\item For every $1\le i\le d$ and every $v \in \set{S}_g$, if there exists $j \neq i$ such that $|v_i^{(T_2)}|$ and $|v_j^{(T_2)}|$ are both greater than $\frac{2(\log d)^2}{\sqrt{d}}$, then the neuron $v$ is in the union of $ \set{S}_{i, bad} $ and $ \set{S}_{j, bad}$.
		\item For every $i \in [d]$, the probability measure of $\set{S}_{i, good}$ and $\set{S}_{i, bad}$ satisfies that
				\[ \mu(\set{S}_{i, good}) \ge d^{-\exp(\poly(\Kappa))} ~\text{ and }~ %
			\mu(\set{S}_{i, good}) \ge \mu(\set{S}_{i, bad}) \cdot d^{\exp(\poly(\Kappa))}.\]%
\end{itemize}
\end{lemma}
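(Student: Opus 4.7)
The plan is to combine three ingredients: the simplified gradient formula of Proposition~\ref{lem:four_plus_interval} (equation~\eqref{eq_grad_simplify}), which for every neuron $v^{(t)}$ whose trajectory stays in $\set{S}$ replaces $\nabla_{v^{(t)}} L_\infty(\cP^{(t)})$ by its dominant fourth-order signal $\pm(b_4+b_4')\,a_i\,\langle e_i, v^{(t)}\rangle\langle e_i,\bar v^{(t)}\rangle^2$ up to an additive error of order $\frac{C(\kappa)\log d}{d^2}|v^{(t)}_i|$; a per-coordinate tensor-power-method analysis that tracks $x_i^{(t)} \define (v_i^{(t)})^2$; and Gaussian anti-concentration for the measure bounds. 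Throughout, I would invoke the inductive hypothesis $\set{H}_1$ of Proposition~\ref{def_H0} to guarantee $\|v^{(t)}\|_2^2 = 1\pm o(1)$ and $\|\bar v^{(t)}\|_\infty^2 \le c_t/d$ for all $t \le T_2$, both legitimizing the simplified gradient bound and ensuring $\set{S}_{i,\mathrm{good}} \subseteq \set{S}$ once the constant $C_0$ in Definition~\ref{defn:no_win} is taken large enough.

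For the first bullet, substituting equation~\eqref{eq_grad_simplify} into the update and squaring yields the discrete recursion
\[ x_i^{(t+1)} = x_i^{(t)}\Bigl( 1 + 2\eta(b_4+b_4')\,a_i\,\frac{x_i^{(t)}}{\|v^{(t)}\|_2^2}\pm \eta\,\frac{C(\kappa)\log d}{d^2} \Bigr). \]
Since $a_i = \Theta(1/d)$ and $\|v^{(t)}\|_2^2 = 1\pm o(1)$, this is a forward Euler step for $\dot x = \alpha x^2$ with $\alpha = \Theta(\eta/d)$; the inverse $1/x_t$ decreases linearly, so $x_t$ blows up at $t^\star \asymp d/(\eta x_0)$. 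For $v \in \set{S}_{i,\mathrm{good}}$ with $x_i^{(0)} \ge \Gamma_i + \rho$, using that both $\Gamma_i$ and $\rho$ are $\Theta(\exp(\poly(\kappa))\log d/d)$ yields $t^\star \asymp T_2$; once the gradient truncation at norm $1/\lambda_0$ activates, $x_i^{(T_2)}$ is pinned at $\Omega(1/(\lambda_0\poly(d)))$. For each $j\ne i$ of the same good neuron, the condition $x_j^{(0)} < \Gamma_j - \rho$ sits a constant factor strictly below the critical value, which by the inverse-linear law lengthens the blow-up time past $T_2$ and keeps $x_j^{(t)} \le O(\log^4 d/d)$ throughout $t\le T_2$.

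For the second bullet I would argue by contraposition: the same recursion applied forward shows that whenever $x_j^{(0)} < \Gamma_j - \rho$, one has $x_j^{(t)} < (2\log^2 d/\sqrt d)^2$ for all $t\le T_2$. Hence if $|v_i^{(T_2)}|$ and $|v_j^{(T_2)}|$ both exceed $2\log^2 d/\sqrt d$ with $i\ne j$, then $x_i^{(0)}\ge \Gamma_i-\rho$ and $x_j^{(0)}\ge \Gamma_j-\rho$, placing $v^{(0)} \in \set{S}_{i,\mathrm{bad}} \cup \set{S}_{j,\mathrm{bad}}$. For the third bullet, $w\sim\cP^{(0)}$ is a truncation of $\cN(0,\id_{d\times d}/d)$ whose coordinates are jointly sub-Gaussian and essentially independent inside $\set{S}_g$. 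A one-dimensional Gaussian tail bound gives $\Pr[w_i^2\ge \Gamma_i+\rho] = d^{-\exp(\poly(\kappa))}$, and a union bound shows that the other coordinates sit below their thresholds with probability $1-o(1)$, proving $\mu(\set{S}_{i,\mathrm{good}})\ge d^{-\exp(\poly(\kappa))}$. Requiring a second coordinate to also exceed its threshold costs another factor of $d^{-\exp(\poly(\kappa))}$ times $d$ from the union bound over the choice of second index, yielding $\mu(\set{S}_{i,\mathrm{bad}}) \le \mu(\set{S}_{i,\mathrm{good}})\cdot d^{-\exp(\poly(\kappa))}$ and the claimed ratio.

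The hard part will be justifying the simplified gradient formula uniformly across $t\le T_2$: equation~\eqref{eq_grad_simplify} depends on $\E_{w\sim\cP^{(t)}}[\cdots]$ which evolves with the dynamic and absorbs contributions both from the growing winning neurons and from the rare neurons whose trajectories leave $\set{S}$. I would handle these by (i) arguing that the total mass of already-won neurons $\mu(\bigcup_i \set{S}_{i,\mathrm{good}})$ is at most $d\cdot d^{-\exp(\poly(\kappa))} = o(1)$, so even when their coordinates are $\Omega(1)$ their contribution to $\E_w\langle w,v\rangle\langle \bar w,\bar v\rangle^2 w$ is dominated by the signal plus error above; and (ii) absorbing the contribution of neurons outside $\set{S}$ into the $C(\kappa)\log d/d^2$ error term using equation~\eqref{lem:norm_bound} together with the gradient-truncation bound $\|w\|_2^2\le 1/\lambda_0$. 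A secondary subtlety is that once $x_i^{(t)}$ becomes $\Omega(1)$ the normalization $\bar v^{(t)}$ rotates sharply toward $e_i$ and back-reacts on the other coordinates; I would manage this by splitting Stage~1.2 into a slow sub-window where $\bar v$ barely moves and a short terminal window where gradient truncation freezes the winning coordinate before the other coordinates can destabilize.
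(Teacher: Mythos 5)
Your proposal follows the same overall strategy as the paper — reduce to the simplified fourth-order gradient of Proposition~\ref{lem:four_plus_interval}, treat the resulting per-coordinate recursion as a discretization of a blow-up ODE, and use Gaussian anti-concentration for the measure claims. Two points deserve attention.

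First, and most substantively, you omit the intermediate step that bridges the initialization to the start of Stage~1.2. Proposition~\ref{lem:four_plus_interval} (and therefore your recursion) is only valid on $t\in[T_1+1,T_2]$, i.e.\ after the $0$th/$2$nd order losses have been driven small. But the definition of $\set{S}_{i,\mathrm{good}}$ thresholds the \emph{initialization} $[v^{(0)}]_i^2$, not $[v^{(T_1)}]_i^2$, and during Stage~1.1 the $i$-th coordinate gets rescaled by approximately $\sqrt{a_i d}$ — this is precisely what Claim~\ref{lem:zero_two5} quantifies, and it is exactly why $\Gamma_i$ is defined with the extra factor of $a_i^2 d$ rather than the naive $\frac{1}{2 B_{1,4} a_i \eta T_2'}$ that your ``start the recursion at $t=0$ with $x_i^{(0)}$'' calculation would produce. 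Without this rescaling lemma, the blow-up time you compute is off by the $\kappa$-dependent factor $a_i d$, and the boundary between good and bad neurons — the whole point of $\Gamma_i\pm\rho$ — is put in the wrong place. You need a step like Claim~\ref{lem:zero_two5} explicitly.

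Second, you assert that ``$\set{S}_{i,\mathrm{good}}\subseteq\set{S}$ once $C_0$ is taken large enough,'' and that this legitimizes the simplified gradient bound. This is backwards. By Proposition~\ref{def_H0}, neurons in $\set{S}$ stay bounded by $c_t/d$ for all $t\le T_2$ — they are precisely the neurons that \emph{never} win the lottery — whereas $\set{S}_{i,\mathrm{good}}$ neurons blow up to $\poly(d)$. And indeed $\Gamma_i+\rho=\Theta(\exp(\poly(\kappa))c_0/d)>c_0/d$, so $\set{S}_{i,\mathrm{good}}\cap\set{S}=\emptyset$ regardless of $C_0$ (both $\Gamma_i$ and $c_0/d$ scale linearly in $C_0$, so increasing $C_0$ does not help). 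Fortunately Proposition~\ref{lem:four_plus_interval} is stated for all $v\in\set{S}_g$ with $\|v^{(s)}\|_\infty\le\poly(\log d)/\sqrt{d}$, which holds for good neurons until they cross the $\poly(\log d)/\sqrt{d}$ threshold, so the gradient approximation still applies to them — but you should rest the argument on that hypothesis directly, not on the false containment $\set{S}_{i,\mathrm{good}}\subseteq\set{S}$.
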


In the above result, the set $\set{S}_{i, good}$ contains neurons that become approximately a large scaling of the basis $e_i$ after $T_2$ iterations, a phenomenon that we term as winning the lottery ticket.
The norm of these neurons become much larger than those in $\set{S}$, whose norm is bounded by $O_{\kappa}(\log d / d)$.
The set $\set{S}_{i, bad}$ contains neurons whose coordinate $i$ might be large in the end, but not close to a basis.
The final statement in this lemma shows that the probability measure of bad neurons is small compared to good neurons.
Lemma \ref{lem:stage_1_final} is proved in Section \ref{app_stage_21}.
This concludes Stage 1.

\paragraph{Stage 2.}
The second stage begins by reducing the gradient truncation parameter from $\lambda_0 = \Theta(\frac{1} {\poly(d)})$ to $\lambda_1 = \Theta(\frac{1}{\poly_{\kappa}(d)})$.%
\footnote{As a remark, the rational for this technical twist is that the neurons do not grow too large Stage 1. This is useful for the error analysis later in the finite-width case.}
Recall from Section \ref{sec_stage2} that the goal of Stage 2 is to allow basis-like neurons to grow until they fit the target network with population loss at most $O(d^{1 + Q})$.
\begin{itemize}
	\item The first substage of the analysis shows that the population loss reduces below $O(\frac {1} {d \log^{0.01} d})$, after $T_3 = \Theta({d \log d}/{\eta})$ many iterations.
	\item The second substage of the analysis shows that the population loss further reduces below $O\left({1}/{d^{1 + \cz}} \right)$, after $T_4 = \Theta({d^{1 + 10 \cz}} / {\eta})$ many iterations.
\end{itemize}

To facilitate the analysis, we introduce an inductive hypothesis throughout Stage 2 that describes the behavior of the good and bad neurons.
Let us introduce several notations first.
Let the union of the bad neurons for all coordinates be given by
	\[ \set{S}_{bad} \define \{ v  \in \set{S}_g \mid \exists i\neq j \text{ such that } [v^{(0)}]_i^2 \geq \Gamma_i  - \rho \text{ and } [v^{(0)}]_j^2 \geq  \Gamma_j - \rho   \}. \]
The set of potential neurons for coordinate $i\in[d]$ is given by
	\[ \set{S}_{i, pot} = \left\{ v \in \set{S}_g \mid [v^{(0)}]_i^2 \geq \Gamma_i  - \rho \right\}. \]
We remark that these are the set of neurons whose coordinate $i$ can become larger than $O(\frac{\poly\log(d)}{\sqrt{d}})$ at the end of Stage 1 (cf. Section \ref{sec_proof_lemma_A2}).
The set of good neurons $\set{S}_{i, good}$ is a subset of $\set{S}_{i, pot}$.
Let the union of the potential neurons for all coordinates be given by
	\[ \set{S}_{pot} \define \{ v   \in \set{S}_g \mid \exists i \in [d] \text{ such that } [v^{(0)}]_i^2 \geq \Gamma_i  - \rho  \}. \]
We maintain the following running hypothesis that, among other things, specifies the behavior of the potential, good, and bad neurons in detail.
\begin{proposition}[Inductive hypothesis $\set{H}_2$ for Stage 2]\label{def_H1}
	In the setting of Theorem \ref{thm_inf}, there exists a monotonically increasing sequence $\set{c_t}_{t=T_2}^{T_4}$ such that $c_{T_2} = \poly(\log d) \le c_t \leq d^{O(Q)} \le d^{1/10}$ and for every $T_2 < t \le T_4$, the following list of properties holds for the neuron distribution $\cP^{(t)}$:
	\begin{enumerate}
		\item For every $v\in\set{S}_g$, we have that $\norm{v^{(t)}}_2^2 \le 1 / \lambda_1$. As a result, gradient truncation never happens during this stage.
		\item For every $v \notin \set{S}_{pot}$, we have that%
				\begin{align}\label{eq:fsajfoiofasjcjuiefh}
					\|\bar{v}^{(t)}\|_{\infty}^2 \le \frac{c_t}d \text{ and } \|{v^{(t)}}\|_{\infty}^2  \leq \frac{c_t}{d}.
				\end{align}
		For every $i \in [d]$, every $v \in \set{S}_{i, pot} \backslash   \set{S}_{bad}$, and $j \not = i$, we have that
			\begin{align}\label{eq:bnodsifahsoia}
				\| v_j^{(t)} \|_2^2 \leq \frac{c_t}{d}.
			\end{align}
		\item The probability mass of the set of bad neurons satisfies that
			\begin{align}\label{eq:vajoisajfoiasfjisajf}
				\E_{v^{(t)} \sim \cP^{(t)}, v \in \set{S}_{bad}} \| v^{(t)}\|_2^2  \leq \frac{1}{\poly(d)}.
			\end{align}
		\item For every $i \in [d]$ and every $v \in \set{S}_{i, good}$, we have that $\| v_i^{(t)} \|_2^2 \geq \frac{1}{\lambda_0 \poly(d)}$.
		\item For every $i\in [d]$, the following claims regarding the set of potential neurons and bad neurons hold:
			\begin{align}
				\gamma_i^{(t)}  &\define \E_{v^{(t)} \sim \cP^{(t)}, v \in  \set{S}_{i, pot} \backslash \set{S}_{bad}} {v_i^{(t)}}^2 \le \frac{\poly(\Kapppa)}{d}, \label{eq_gamma}\\
				\beta_i^{(t)}  &\define \E_{v^{(t)} \sim \cP^{(t)}, v \notin \set{S}_{pot}} {v_i^{(t)}}^2
				\leq \frac{\poly(\Kapppa)}{d}. \label{eq_beta}
			\end{align}
		where $\Kapppa$ denotes $\exp(\poly(\Kappa))$ and $\Kappa$ denotes $\exp(\poly(\kappa))$.
\end{enumerate}
\end{proposition}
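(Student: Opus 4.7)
The plan is to prove Proposition \ref{def_H1} by a joint induction on $t \in [T_2, T_4]$ that simultaneously maintains the five stated properties and the auxiliary bound $\Delta^{(t)}, \delta_+^{(t)}, \delta_-^{(t)} \le c_t \poly(\Kapppa)/d^2$ on the 0th- and 2nd-order losses. The base case $t = T_2$ follows by combining Proposition \ref{def_H0}, Lemma \ref{lem:zero_two3}, and Lemma \ref{lem:stage_1_final}: these supply the initial infinity-norm bounds (taking $c_{T_2} = \poly\log d$), the smallness of the 0th/2nd-order losses, and the good/bad neuron structure, including $|v_i^{(T_2)}|^2 \ge 1/(\lambda_0\poly(d))$ for $v\in\set{S}_{i,good}$. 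Property 1 is immediate at $t = T_2$ because $\|v^{(T_2)}\|_2^2 \le 1/\lambda_0 \le 1/\lambda_1$, so gradient truncation will be inactive in Stage 2.

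For the inductive step from $t$ to $t+1$, I first invoke Claim \ref{cl_ss} to preserve conditional symmetry and Claim \ref{claim:symmetry} to get the coordinate-wise decomposition \eqref{eq:def02}--\eqref{eq:def04}. The pivotal intermediate step is to re-establish the 0th/2nd-order loss bound $\Delta^{(t+1)}, \delta_\pm^{(t+1)} \le c_{t+1}\poly(\Kapppa)/d^2$ by mimicking the argument of Lemma \ref{lem:zero_two3}, now exploiting Property 5 of $\set{H}_2$ via the identity
\[
  \E_{\cP^{(t)}} w_i^2 = \mu(\set{S}_{i,pot}\setminus \set{S}_{bad})\,\gamma_i^{(t)} + (1-\mu(\set{S}_{pot}))\,\beta_i^{(t)} + \E_{v\in\set{S}_{bad}}[v_i^2],
\]
where the last term is negligible by Property 3. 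This gives quantitative control on the 2nd-order gradient in \eqref{eq:def02}.

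Armed with this control, I verify each property at $t+1$ by per-coordinate analysis of \eqref{eq:update_p}. Property 1 follows because each gradient coordinate is at most $\poly_\kappa(1)/d$ in magnitude (combining the intermediate loss bound with $\gamma_i,\beta_i\le \poly(\Kapppa)/d$), so $\|v\|_2^2$ grows by at most $\eta T_4 \poly_\kappa/d = d^{O(\cz)}\poly_\kappa \ll 1/\lambda_1$ for a sufficiently small choice of $\lambda_1$. Property 2 follows because conditional symmetry makes $[\nabla_v]_i$ odd in $v_i$, so $v_i$ evolves multiplicatively with factor $1\pm\eta\cdot O(\poly\log d/d^2)$; iterating $T_4$ times gives at most $\exp(d^{10\cz - 1}\poly\log d) = 1+o(1)$, fitting in the $c_t \le d^{O(\cz)}$ budget. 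Property 3 follows from $\mu(\set{S}_{bad})\le d^{-\exp(\poly(\Kappa))}$ (Lemma \ref{lem:stage_1_final}) together with the per-neuron norm bound $1/\lambda_1 = \poly_\kappa(d)$ from Property 1. Property 4 follows because the 4th-order gradient \eqref{eq:def04} contains a term proportional to $-a_i \bar v_i^2 v_i$ that gives positive drift on $v_i$ for good neurons (where $v_i$ is already large), while the 0th/2nd-order gradients are dominated by the intermediate bound; so $|v_i^{(t+1)}|^2\ge|v_i^{(t)}|^2\ge 1/(\lambda_0\poly(d))$. Property 5 reduces to showing that $\gamma_i$ and $\beta_i$ grow by at most a factor $1+\eta\cdot O(\poly\log d/d^2)$ per step, which over $T_4$ steps aggregates to a factor $d^{O(\cz)}$ and hence stays within the $\poly(\Kapppa)/d$ budget.

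The hard part will be closing the inductive loop: the intermediate control on $\Delta^{(t)}$ and $\delta_i^{(t)}$ depends on the expectation bounds $\gamma_i^{(t)}, \beta_i^{(t)}$ from $\set{H}_2$, while those expectations evolve under a gradient whose magnitude depends on $\Delta^{(t)}, \delta_i^{(t)}$; the growth rate of $c_t$ and the relative sizes of $\lambda_0, \lambda_1, \eta$, and $\cz$ must be calibrated so that the coupled induction closes. A secondary obstacle is the 4th-order cross term $\E_{\cP}\inner{w}{v}\inner{\bar w}{\bar v}^2 w$, which couples every coordinate through every neuron; to decouple coordinates and rule out cross-contamination between $i\neq j$, one must exploit the basis-aligned structure of good neurons (each concentrated on a single $e_i$) together with the negligible mass of $\set{S}_{bad}$ and the infinity-norm bound \eqref{eq:bnodsifahsoia} on off-coordinate entries of potential neurons.
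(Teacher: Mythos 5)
Your high-level plan---a joint induction over $t$ that maintains $\set{H}_2$ alongside auxiliary control on the 0th/2nd-order losses---matches the paper's mutual-induction structure, and the base case at $t=T_2$ and the $\lambda_1\ll\lambda_0$ calibration for Property~1 are handled essentially as in the paper. The gap is in the inductive step: you posit a per-iteration multiplicative factor of $1\pm\eta\cdot O(\poly\log d/d^2)$ for $v_i$ (Property~2) and for $\gamma_i,\beta_i$ (Property~5), and you propose to maintain $\Delta^{(t)},\delta_\pm^{(t)}\le c_t\poly(\Kapppa)/d^2$ throughout Stage~2. Neither is correct. During Stage~2.1 the basis-like mass $\gamma_i^{(t)}$ must climb from $1/\poly(d)$ to $\Theta(1/d)$, and while it does, the 2nd-order residual $a_i-\beta_i^{(t)}-\gamma_i^{(t)}$ and the 0th-order residual $\Delta^{(t)}$ can be as large as $\Theta(1/d)$ over $\Theta(d/\eta)$ iterations; they are not uniformly $O(c_t/d^2)$. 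Consequently the single-step factor in equation~\eqref{eq:fasjoasifjaosifj} has a $1-\eta\Delta^{(t)}+\eta C_1(a_i-\beta_i^{(t)}-\gamma_i^{(t)})$ part that is of order $1\pm\eta\Theta(1/d)$, not $1\pm\eta\poly\log d/d^2$; naively iterating that over $T_3=\Theta(d\log d/\eta)$ steps would blow up by $\exp(\Theta(\log d))=\poly(d)$, so your proposed per-step bound cannot be the basis for Property~2 or~5.

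What actually closes the loop in the paper is \emph{time-integrated}, not per-step, control: Claim~\ref{claim:delta_21} shows $\eta\sum_{t}|\Delta^{(t)}|\le(\log d)^{0.8}$, and Proposition~\ref{prop:lb_beta_gamma} shows $\eta\sum_{t}|a_i-\beta_i^{(t)}-\gamma_i^{(t)}|\le(\log d)^{0.9}$, both proved by a gradient-norm (potential-decrease) argument applied to the 0th- and 2nd-order dynamics. Plugging these integrated bounds into the multiplicative recursion bounds the accumulated growth by $\exp(O((\log d)^{0.9}))$, which is $d^{o(1)}$ (not $1+o(1)$ as you claim), but still within the $c_t\le d^{O(Q)}$ budget. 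The paper further tracks the \emph{ratios} $v_i^2/v_j^2$ (to neutralize the common $\Delta^{(t)}$ and $a_i-\beta_i-\gamma_i$ contributions across coordinates) and, for $t\in[T_3,T_4]$, replaces the integrated bounds by the decreasing potential $\Phi^{(t)}$ from Lemma~\ref{lem:final_333} together with the inequality $|a_i-\gamma_i^{(t)}|=O(\Phi^{(t)})$ of Eq.~\eqref{eq_err_gamma}. Without these two ingredients---the cumulative (rather than pointwise) control on the 0th/2nd-order residuals, and the ratio/$\Phi$ bookkeeping---the coupled induction you outline does not close.
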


We remark that in the above inductive hypothesis, equation \eqref{eq:fsajfoiofasjcjuiefh} and \eqref{eq:bnodsifahsoia} show similar conditions as equation \eqref{eq:fajosifsajfasjif}  provided in Proposition \ref{def_H0}.
For the rest of the section, we refer to the conclusion of Proposition \ref{def_H1} as inductive hypothesis $\set{H}_2$.
The proof of Proposition \ref{def_H1} can be found in Section \ref{app_proof_H1}.

\bigskip
Given the inductive hypothesis, we can state the formal result that corresponds to Stage 2.1 in Section \ref{sec_stage2}.
We introduce the notation	$\Delta^{(t)} = 2 b_0 \left( \sum_{i=1}^d ( \gamma_i^{(t)} + \beta_i^{(t)})  - \sum_{i=1}^d a_i \right)$ that measures the average error of the neurons across all coordinates at iteration $t$.
We show that by the end of $T_3 = T_2 + \Theta(d\log d /\eta)$ iterations, we have obtained a warm start neuron distribution for $\Delta^{(t)}$, $\Set{\beta_1^{(t)}, \dots, \beta_d^{(t)}}$, and $\Set{\gamma_1^{(t)}, \dots, \gamma_d^{(t)}}$.
We state the result below.
\begin{lemma}[Stage 2.1: Obtaining a warm start initialization]\label{lem:final_222}
	In the setting of Theorem \ref{thm_inf}, suppose Proposition \ref{def_H1} holds.
	There exists an iteration $T_3 = T_2 + \Theta(d\log d/\eta)$ such that at iteration $T_3$, the following holds:
	\begin{align*}
		\text{For any}~ i \in [d],~~ \beta_i^{(T_3)} \leq \frac{1}{d \log^{0.01} d}, \quad |a_i - \gamma_i^{(T_3)}| \leq  \frac{1}{d \log^{0.01} d};  ~\text{ Furthermore, } |\Delta^{(T_3)}| \leq  \frac{1}{d \log^{0.01} d}.
	\end{align*}
\end{lemma}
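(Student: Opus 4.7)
The plan is to establish, coordinate by coordinate, coupled recursions for $\gamma_i^{(t)}$, $\beta_i^{(t)}$, $\delta_i^{(t)}$, and $\Delta^{(t)}$, and show that $\beta_i^{(t)}$ and $|\gamma_i^{(t)} - a_i|$ contract at rate $\Omega(\eta/d)$, which over $\Theta(d \log d / \eta)$ iterations yields the target precision $1/(d \log^{0.01} d)$.

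First, I would invoke inductive hypothesis $\set{H}_2$ (Proposition~\ref{def_H1}) so that the partition of neurons into good, potential, bad, and non-potential stays frozen throughout Stage 2.1, gradient truncation never fires, and the bad neurons carry only $1/\poly(d)$ of the $\ell_2^2$-mass by equation~\eqref{eq:vajoisajfoiasfjisajf}. Claim~\ref{claim:symmetry} then decomposes $[\nabla_v]_i = \Delta^{(t)} v_i + \delta_i^{(t)} v_i + \sum_{j \geq 2}[\nabla_{2j, v}]_i$. For any non-potential neuron, the bound $\|v^{(t)}\|_\infty^2 \leq c_t/d$ with $c_t \leq d^{O(Q)}$ makes each $2j$-th order term ($j \geq 2$) an $O(c_t^{2j-1}/d^{2j-1})|v_i|$ correction, strictly subdominant to the 2nd order term. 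Thus a gradient step acts as a multiplicative rescaling $v_i^{(t+1)} = (1 - \eta(\Delta^{(t)} + \delta_i^{(t)})) v_i^{(t)} + o(\eta v_i^{(t)}/d)$, and squaring and integrating over $v \notin \set{S}_{pot}$ gives
\begin{align*}
  \beta_i^{(t+1)} = \beta_i^{(t)} \bigl(1 - 2\eta(\Delta^{(t)} + \delta_i^{(t)})\bigr)(1 + o(\eta)).
\end{align*}

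Next, for $v \in \set{S}_{i, good}$, Lemma~\ref{lem:stage_1_final} ensures $\bar{v}^{(t)} \approx e_i$ and $(v_i^{(t)})^2 \geq 1/(\lambda_0 \poly(d))$, so the higher-order tensor gradients are no longer negligible. Evaluating equation~\eqref{eq:def04} at $\bar v \approx e_i$ produces a positive self-amplification rate $\zeta_i = \Theta(a_i) = \Theta(1/d)$, yielding
\begin{align*}
  \gamma_i^{(t+1)} = \gamma_i^{(t)}\bigl(1 + 2\eta \zeta_i - 2\eta (\Delta^{(t)} + \delta_i^{(t)})\bigr)(1 + o(\eta)).
\end{align*}
Combined with the conservation identity $\gamma_i^{(t)} + \beta_i^{(t)} = a_i + \delta_i^{(t)}/b_2 + O(1/\poly(d))$ (the error coming from bad neurons and from potential neurons of other coordinates), these show that the unique stable equilibrium is $\beta_i = 0$, $\gamma_i = a_i$, $\delta_i = 0$, $\Delta = 0$, and that $\beta_i^{(t)}$ decays geometrically at rate $\Omega(\eta \zeta_i) = \Omega(\eta/d)$. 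Within $\Theta(d \log d / \eta)$ iterations, $\beta_i^{(t)}$ therefore drops from its $O(1/d)$ starting value down to $1/(d \log^{0.01} d)$, and the stated bounds on $|\gamma_i - a_i|$ and $|\Delta|$ follow via the same identity.

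The main obstacle is ruling out oscillations in the coupled system: $\delta_i^{(t)}$ could transiently flip sign if $\gamma_i$ overshoots, temporarily driving $\beta_i$ back up and spoiling a per-step contraction. To handle this I would construct a Lyapunov function such as $\Phi^{(t)} := \sum_i \bigl((\gamma_i^{(t)} - a_i)^2 + (\beta_i^{(t)})^2\bigr) + (\Delta^{(t)})^2$ and prove it decays by a factor $(1 - \Omega(\eta/d))$ per step, exploiting that cross-coordinate couplings enter only through the global scalars $\Delta^{(t)}$ and $\delta_i^{(t)}$, both of which are already $O(c_t \poly(\Kappa)/d^2)$ entering Stage 2 by Lemma~\ref{lem:zero_two3}. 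A final accounting must verify that the $o(\eta/d)$ errors from higher-order tensors, bad-neuron contributions, and discretization stay below the contraction gap throughout all $\Theta(d \log d / \eta)$ iterations of Stage 2.1.
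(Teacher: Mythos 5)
The key error is in your $\gamma_i$ update. You write $\gamma_i^{(t+1)} = \gamma_i^{(t)}\bigl(1 + 2\eta\zeta_i - 2\eta(\Delta^{(t)} + \delta_i^{(t)})\bigr)$ with a fixed self-amplification rate $\zeta_i = \Theta(a_i)$, obtained by ``evaluating equation~\eqref{eq:def04} at $\bar v \approx e_i$.'' But equation~\eqref{eq:def04} also contains the expectation $\E_{w\sim\cP}[\cdots]$, and as the potential mass $\gamma_i$ accumulates near $e_i$, that expectation cancels against the signal $a_i$: the surviving coefficient is $\Theta(a_i - \gamma_i)$, not $\Theta(a_i)$. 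This is exactly what Claim~\ref{claim:grad_21} records --- the $\gamma_i$ update carries an extra factor $\eta C_2 (a_i - \gamma_i^{(t)})$ over the $\beta_i$ update, not a constant $\eta C_2 a_i$. With your dynamics, $(\gamma_i,\beta_i,\Delta,\delta_i) = (a_i,0,0,0)$ is \emph{not} a fixed point: $\gamma_i$ would keep growing at rate $\zeta_i$ forever, which contradicts the equilibrium you then assert. So your ``unique stable equilibrium'' and geometric-decay claim $\beta_i^{(t)} = (1 - \Omega(\eta/d))^t \beta_i^{(0)}$ are not consequences of the recursions you wrote down.

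Because the good-neuron advantage is proportional to the \emph{remaining gap} $a_i - \gamma_i$ rather than constant, the contraction rate is not uniformly $\Omega(\eta/d)$, and your sum-Lyapunov $\sum_i \bigl((\gamma_i - a_i)^2 + \beta_i^2\bigr) + (\Delta)^2$ does not admit a clean geometric decay bound. The paper's actual argument is in two stages: first (via equations~\eqref{eq:fakvjbdabfaj} and~\eqref{eq:fjaijasofsajffjs} and the fact that $\hat\gamma_i^{(0)} \geq 1/\poly(d)$ from Lemma~\ref{lem:stage_1_final}) it shows each $\gamma_i$ multiplicatively climbs to $\Omega(1/d)$ in $O(d\log d/\eta)$ iterations; second, it tracks the \emph{max}-potential $\Phi^{(t)} = \max_i\{C_1(a_i-\beta_i-\gamma_i)^2 + C_2(a_i-\gamma_i)^2\}$ and shows the per-step decrement is $\eta\delta/(\poly(\Kappppa)d)$ when the $(C_1\mu + C_2\nu)$-component dominates, but only $\eta\Omega(\delta^2)$ when the $C_1(a_i-\beta_i-\gamma_i)$-component dominates --- a genuinely sub-geometric regime. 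The conclusion that $\Phi$ drops below $1/(d\log^{0.01}d)$ then comes from an integrated counting argument (at most $d\log^{0.5}d/\eta$ iterations can have $\Phi \geq \delta$), not from a per-step contraction factor. Using a max rather than a sum is also needed to certify the per-coordinate bounds the lemma actually asserts. To repair your proof you need to replace $\zeta_i$ by the gap-dependent coefficient from Claim~\ref{claim:grad_21}, and replace the geometric-decay step with either the paper's two-regime potential-decrement argument or an equivalent accounting of iterations above threshold.
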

The above result implies that the set of potential neurons has fit the $i$-th coordinate of the target network with error less than $o(1/d)$.
The 0th order loss has also been reduced below $o(1/d)$.
The proof of Lemma \ref{lem:final_222} can be found in Appendix \ref{app_stage_21}.

\bigskip
In the end, we describe the formal result that corresponds to Stage 2.2 in Section \ref{sec_stage2}.
We construct a potential function to show that $\beta_i^{(t)} + \gamma_i^{(t)}$ converges to $a_i$ when $t \ge T_3$.
After running for $T_4 = T_3 + \Theta(\frac{d^{1 + 10 \cz}}{\eta})$ many iterations, we show that a certain set of potential neurons has converged to $a_i$ with error at most $O(1/ d^{2 + Q})$, for every $1\le i\le d$.

The result is shown in Lemma \ref{lem:final_333} below.
We introduce the following notations for defining the potential function at iteration $t$:
\begin{align*}
	\delta_-^{(t)} &= \max\left\{\max_{i \in [d]}\left\{ C_1( a_i - \beta_i^{(t)}  - \gamma_i^{(t)} ) + \frac{C_2 \gamma_i^{(t)} }{\beta_i^{(t)}  + \gamma_i^{(t)} } \left( a_i - \gamma_i^{(t)}  \right) \right\}, 0 \right\}, \\
	\delta_+^{(t)} &= \max\left\{ \max_{i \in [d]}\left\{  C_1(\beta_i^{(t)}  + \gamma_i^{(t)}  - a_i)  + \frac{C_2 \gamma_i^{(t)} }{\beta_i^{(t)}  + \gamma_i^{(t)} } \left( \gamma_i^{(t)}   -  a_i \right) \right\}, 0\right\}.
\end{align*}
where $C_1, C_2$ denote two sufficiently large constants.
Consider the following functions (recall that $\Delta_+$ and $\Delta_-$ have been defined in Stage 1):
\begin{align*}
	\Phi_+^{(t)}  = \max \left\{ \delta_+^{(t)} , \left( 1 + \frac{1}{\poly(\kappa ) } \right)\Delta_-^{(t)}   \right\} \text{ and }
	\Phi_-^{(t)}  = \max \left\{ \delta_-^{(t)} ,  \left( 1 + \frac{1}{\poly(\kappa ) }  \right)\Delta_+^{(t)}  \right\}.
\end{align*}
Let $\beta_+^{(t)}  = \frac{1}{C} \max_{i \in [d]} \{ \beta_i^{(t)}  \}$.
Let $\Phi^{(t)}  = \max \{ \Phi_+^{(t)} , \Phi_-^{(t)} , \beta_+^{(t)} \}$ be our potential function.
Lemma \ref{lem:final_222} implies that by the end of $t = T_3$ iterations, we have that $\delta_-^{(t)} , \delta_+^{(t)} , \beta_+^{(t)} , \Delta_+^{(t)} , \Delta_-^{(t)}  $ are all less than $ O\left({1}/{d \log^{0.01} d} \right)$.
Hence $\Phi^{(T_3)} \le O(1 / (d \log^{0.01} d))$.
The result below shows that after iteration $T_3$, $\Phi^{(t)}$ further decreases whenever $\Phi^{(t)}$ is at least $O(\frac{\poly(\kappa_2)c_t)}{d^2})$.
\begin{lemma}[Stage 2.2: the final substage] \label{lem:final_333}
	In the setting of Theorem \ref{thm_inf}, suppose that Proposition \ref{def_H1} holds.
	Let $C_1$ be a fixed constant.
	For any $T_3 < t \le T_4$, as long as $\Phi^{(t )} \geq \frac{\poly(\Kapppa ) c_t}{d^2}$ (recalling that $c_t$ is defined in Proposition \ref{def_H1}) we have that
	\begin{align*}
		\Phi^{(t + 1)} \leq \Phi^{(t )} \left( 1 - \eta \frac{\min\{C_1, 1\}}{8} \Phi^{(t )} \right).
	\end{align*}
\end{lemma}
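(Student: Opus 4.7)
The plan is to derive one-step recursions for the quantities $\gamma_i^{(t)}$, $\beta_i^{(t)}$, and $\Delta^{(t)}$ that define $\Phi^{(t)}$, and then show by case analysis that each branch of the outer maximum in $\Phi^{(t)}$ contracts at rate $\Omega(\eta \Phi^{(t)})$. Using Claim~\ref{claim:symmetry}, I decompose the per-neuron gradient as $[\nabla_v]_i = (\Delta^{(t)} + \delta_i^{(t)}) v_i^{(t)} + [\nabla_{\ge 4, v}]_i$, where by Proposition~\ref{def_H1} the higher-order contribution is $O(c_t \poly(\Kapppa)/d^2)\cdot |v_i|$ on non-good neurons, while on neurons in $\set{S}_{i,good}$ it additionally contains a positive signal $\propto a_i \langle e_i, \bar v\rangle^{2j-2} v_i$ that pulls $v_i$ toward $e_i$. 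Squaring $v_i^{(t+1)} = v_i^{(t)} - \eta [\nabla_v]_i$ and taking expectations over the three groups $\set{S}_{i,good}$, $\set{S}_{i,pot}\setminus \set{S}_{bad}$, and $\set{S}_g\setminus\set{S}_{pot}$ yields schematic updates
\begin{align*}
\gamma_i^{(t+1)} &= \gamma_i^{(t)}\bigl(1 - 2\eta(\Delta^{(t)} + \delta_i^{(t)})\bigr) + \eta S_i^{(t)} \pm \eta\, O\bigl(c_t \poly(\Kapppa)/d^2\bigr)\gamma_i^{(t)} \pm O(\eta^2), \\
\beta_i^{(t+1)} &= \beta_i^{(t)}\bigl(1 - 2\eta(\Delta^{(t)} + \delta_i^{(t)})\bigr) \pm \eta\, O\bigl(c_t \poly(\Kapppa)/d^2\bigr)\beta_i^{(t)} \pm O(\eta^2),
\end{align*}
with $S_i^{(t)} \ge 0$ the signal from good neurons, and an analogous sum-update for $\Delta^{(t+1)}$; the bad-neuron contribution is absorbed into the error terms via equation~\eqref{eq:vajoisajfoiasfjisajf}, and $O(\eta^2)$ terms are negligible by the learning-rate choice.

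The assumption $\Phi^{(t)} \ge \poly(\Kapppa) c_t / d^2$ guarantees that the $O(c_t \poly(\Kapppa)/d^2)$ error term is strictly smaller than $\Phi^{(t)}$, so the update reduces to a clean descent driven by the $0$th and $2$nd order gradients plus the good-neuron signal. The case analysis proceeds as follows. If the max in $\Phi^{(t)}$ is attained by $\delta_+^{(t)}$ at coordinate $i$, then $\delta_i^{(t)} > 0$ is of order $\Phi^{(t)}$, so both $\gamma_i$ and $\beta_i$ shrink multiplicatively by $(1 - 2\eta \delta_i^{(t)})$; the choice of $C_1,C_2$ together with the $C_2 \gamma_i/(\beta_i+\gamma_i)$ correction ensure that this shrinkage dominates the upward push on $\gamma_i$ coming from $S_i^{(t)}$, giving $\delta_+^{(t+1)} \le \delta_+^{(t)}(1 - \Omega(\eta\Phi^{(t)}))$. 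The symmetric $\delta_-^{(t)}$ case is easier because $S_i^{(t)}$ aids recovery when $\gamma_i < a_i$. The $\beta_+^{(t)}$ branch reduces to one of the two previous cases by observing that a large $\beta_i$ forces either $\gamma_i + \beta_i - a_i = \Omega(\beta_+^{(t)})$ or $a_i - \gamma_i = \Omega(\beta_+^{(t)})$. Finally, the $(1+1/\poly(\kappa))\Delta_\mp^{(t)}$ slack in $\Phi_\pm^{(t)}$ is used to absorb cross-coordinate noise when summing the per-coordinate $\delta_i^{(t)}$ recursions into the global $\Delta^{(t)}$ recursion, since $\Delta^{(t)}$ pools contributions from coordinates on both sides of $a_i$.

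The main obstacle will be controlling the cross-interaction between the different branches of the max, so that whenever one branch decreases by $\Omega(\eta \Phi^{(t)} \cdot \Phi^{(t)})$, no competing branch can jump above the current $\Phi^{(t)}$. In particular, the signal term $S_i^{(t)}$ always drives $\gamma_i$ upward, even when $\gamma_i > a_i$; the weighted correction $C_2 \gamma_i/(\beta_i+\gamma_i)\cdot (a_i - \gamma_i)$ inside $\delta_\pm^{(t)}$ and the $(1+1/\poly(\kappa))$ slack between $\delta_\pm^{(t)}$ and $\Delta_\mp^{(t)}$ inside $\Phi_\pm^{(t)}$ are exactly designed to neutralize this effect. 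Making this quantitative reduces to a finite list of two-dimensional inequalities on $(\gamma_i, \beta_i, \Delta^{(t)}, S_i^{(t)})$ under every sign configuration of $\delta_i^{(t)}$ and $\Delta^{(t)}$; the constants $C_1$ and $C_2$ must be fixed large enough that these inequalities close with constant margin, yielding the claimed bound $\Phi^{(t+1)} \le \Phi^{(t)}(1 - \eta \min\{C_1,1\}/8 \cdot \Phi^{(t)})$.
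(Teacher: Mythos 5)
The overall strategy in your proposal matches the paper's: derive per-coordinate recursions for $\gamma_i$, $\beta_i$, $\Delta$ and case-analyze which branch of the max in $\Phi^{(t)}$ is attained, showing each contracts at rate $\Omega(\eta\Phi^{(t)})$. But there is a substantive mis-model of the higher-order signal that breaks the argument. You write that the 4th-and-higher order contribution on $\set{S}_{i,good}$ is an additive, always-nonnegative $\eta S_i^{(t)}$ which ``always drives $\gamma_i$ upward, even when $\gamma_i > a_i$,'' and that the $\frac{C_2\gamma_i}{\beta_i+\gamma_i}(a_i-\gamma_i)$ correction and the $(1+1/\poly(\kappa))$ slack in $\Phi_\pm$ are designed to \emph{neutralize} this upward push. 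That is not what the dynamics do. The paper's Claim~\ref{claim:grad_21} establishes that the good-neuron mass updates multiplicatively as
\begin{align*}
\gamma_i^{(t+1)} = \gamma_i^{(t)}\left(1 - \eta\Delta^{(t)} + \eta C_1(a_i - \beta_i^{(t)} - \gamma_i^{(t)}) + \eta C_2(a_i - \gamma_i^{(t)}) \pm \eta\frac{\poly(\Kapppa)c_t}{d^2}\right),
\end{align*}
i.e., the 4th-order signal seen by singleton neurons is the residual-proportional coefficient $C_2(a_i - \gamma_i^{(t)})$, which flips sign once $\gamma_i$ overshoots $a_i$. The reason is that the network's own 4th-order contribution from singletons aligned with $e_i$ is $\approx \gamma_i v_i$ and cancels against the target's $a_i v_i$, leaving $(a_i-\gamma_i)v_i$; you only kept the target piece. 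The $\tau_i$ and $\Phi$ potential are therefore not neutralizing an always-upward push --- they are tracking this residual.

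This matters mechanically because $C_2(a_i-\gamma_i)$ is the only term that separates the drift of $\gamma_i$ from that of $\beta_i$: both see $-\Delta^{(t)} + C_1(a_i-\beta_i-\gamma_i)$, so with a sign-indeterminate $S_i$ replaced by a nonnegative one there is no mechanism driving $\beta_i\to 0$ and $\gamma_i\to a_i$ \emph{separately} (only $\gamma_i+\beta_i\to a_i$). In the $\delta_+$, $\Delta_-$, and $\beta_+$ branches the contraction comes precisely from $C_2(a_i-\gamma_i)<0$; the paper packages this via the observation that $f(\gamma_i)=\gamma_i(a_i-\gamma_i)$ is strictly decreasing with slope $\Omega(\gamma_i)$ once $\gamma_i\geq a_i/2$ (cf.\ the derivation around \eqref{eq:fajoifajsofias}--\eqref{eq:bsofjaofiaj} and the $\beta_+$ case in \eqref{eq:fajosifjaoijsio}). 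With $S_i^{(t)}\ge 0$ baked in, your $\delta_+$ branch inequality does not close. The fix is to replace the additive $\eta S_i^{(t)}$ with the multiplicative $\eta C_2(a_i-\gamma_i^{(t)})\gamma_i^{(t)}$ from Claim~\ref{claim:grad_21}; after that substitution your case analysis essentially reproduces the paper's proof.
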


By combining the results of Stage 1 and Stage 2, we are ready to prove Theorem \ref{thm_inf}.
\begin{proof}[Proof of Theorem \ref{thm_inf}]
	When Proposition \ref{def_H0} and \ref{def_H1} hold,
	using the induction hypothesis in equation~\eqref{eq:vajoisajfoiasfjisajf}, we have that for the infinite-width case, the population loss $\objinf(\cP^{(t )}) $ satisfies:
	\begin{align}\label{claim:addition_1}
		\objinf(\cP^{(t)}) =  O\left( \left( \sum_{i \in [d]}  \left[\gamma_i^{(t)} + \beta_i^{(t)} \right] - \sum_{i \in [d]} a_i \right)^2\right) +  O\left( \sum_{i \in [d]} \left[ (a_i - \gamma_i^{(t)})^2  + \beta_i^2 \right]\right) + \frac{1}{\poly(d)},
	\end{align}
	where the first term comes from the 0th order loss and the second term comes from 2nd and higher order losses.
	This claim also implies that
	\begin{align}
		\objinf(\cP^{(t)})  = O\left( d  [\Phi^{(t)}]^2 \right) +  \frac{1}{\poly(d)}. \label{eq_phi}
	\end{align}

	At the beginning of Stage 2.2, by Lemma \ref{lem:final_222}, we know that $\Phi^{(T_3)} \le 1 / (d\log^{0.01} d)$.
	During Stage 2.2, by Lemma \ref{lem:final_333}, as long as $\Phi^{(t)} \ge O_{\kappa}(c_t /d^2)$, $\Phi^{t+1} \le \Phi^{(t)} \le \Phi^{(t)} (1 - O(\Phi^{(t)}))$.
	Hence, after at most $d^{1 + O(Q)} / \eta$ iterations (or $T_4-T_3$ more precisely), $\Phi^{(T_4)}$ reduces to below $O(d^{1 + Q})$.
	Applying this result to equation \eqref{eq_phi}, we conclude that $L_{\infty}(\cP^{(T_4)}) \le O(1/d^{1 + Q})$.

\end{proof}

\subsection{Stage 1.1: Proof of Convergence for 0th and 2nd Order Tensors}

This section provides the proof of Lemma \ref{lem:zero_two3} is organized as follows.
\begin{itemize}
	\item In Proposition \ref{lem:four_plus_upper}, we first show that the gradients from 4th and higher order tensor decompositions are small compared to that of the 0th and 2nd order tensor decompositions.
	\item The above shows that the dynamic is mainly dominated by the 0th and 2nd losses initially.
	In Proposition \ref{lem:zero_two} and Proposition \ref{lem:zero_two2}, we show the gradient update of the 0th and 2nd order.
	Based on these, we show the proof Lemma \ref{lem:zero_two3} at the end of this subsection.
\end{itemize}

\paragraph{Upper bound on the gradient of 4th and higher order losses.}
We first show that the 4th and higher order tensor gradients do not have much contribution to the gradient, for all the neurons in $\set{S}$.
We introduce the following notations for convenience.
For a neuron distribution $\cP$, let the following denote the gradient of $v$ involving only other neurons $w$.
\begin{align} \label{eq:2jvn}
	\nabla_{2j, v, n}  &\define \left( b_{2j} + b_{2j}'\right) \left( \E_{w \sim \cP} \langle w, v \rangle \langle \bar{w} , \bar{v} \rangle^{2j - 2} w \right)  - b_{2j}'\left( \E_{w \sim \cP} \langle w, v \rangle \langle \bar{w} , \bar{v} \rangle^{2j - 2} \langle w, \bar{v} \rangle \right)  \bar{v}.
\end{align}
Recall that $\nabla_{2j, v}$ is the gradient of $v$ for the 2j-th tensor (cf.  equation \eqref{eq:def04}).
Let
\begin{align*}
	\nabla_{\ge 4, v} = \sum_{j \ge 2} \nabla_{2j, v}, \text{ and }
	\nabla_{\ge 4, v, n} = \sum_{j \ge 2} \nabla_{2j, v, n}.
\end{align*}
The following result provides an upper bound on the higher order gradients.

\begin{proposition}[Upper bound for 4th or higher order gradients] \label{lem:four_plus_upper}
In the setting of Lemma \ref{lem:zero_two3},  suppose Proposition \ref{def_H0} holds.
Then there exists an absolute constant $C > 0$ such that for every $i \in [d]$ and $v\in\set{S}_g$, at the $t$-th iteration for $t \le T_2$, the neuron $v$ from distribution $\cP^{(t)}$ satisfies that
\begin{align*}
	\left| \left[ \nabla_{\geq 4 , v} \right]_i  \right| \leq  \frac{C}{4} \left( \frac{c_t \kappa }{d^2} + \frac{\kappa}{d}  \|\bar{v}^{(t)} \|_{\infty}^2  \right) |v_i^{(t)}|.
\end{align*}
Moreover, the gradient from the network satisfies
\begin{align*}
\left| \left[ \nabla_{\geq 4 , v, n} \right]_i  \right| \leq  \frac{C}{4} \cdot \frac{c_t \kappa }{d^2}   |v_i^{(t)}|.
\end{align*}
As a corollary, for every $v \in \set{S}\subseteq \set{S}_g$, we have that
\begin{align*}
\left| \left[ \nabla_{\geq 4 , v} \right]_i  \right| \leq \frac{C}{2} \cdot \frac{c_t \kappa }{d^2}  |v_i^{(t)}|.
\end{align*}
\end{proposition}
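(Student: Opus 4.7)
The approach is to apply the simplified gradient formula from Claim~\ref{claim:symmetry} at each tensor order $2j\ge 4$, separate the contribution of the target network (the $a_i$-terms) from that of the neuron distribution $\cP^{(t)}$ (the expectation over $w\sim\cP^{(t)}$), bound each separately using the inductive hypothesis $\set{H}_1$ (Proposition~\ref{def_H0}), and then sum over $j\ge 2$. The sum converges rapidly because the Hermite coefficient $b_{2j}+b_{2j}'$ decays like $1/j^2$ and each additional tensor order supplies an extra factor of at most $\max(\|\bar v\|_\infty^2,\,c_t/d)\ll 1$; the $j=2$ term dominates.

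For the target contribution at level $2j$, I use $a_i\le\kappa/d$ and $\sum_r a_r\,\bar v_r^2\le\kappa/d$ (since $a_r\le\kappa/d$ and $\|\bar v\|_2=1$) to control both the ``signal'' term and the $v_i$-correction term uniformly:
\[
\bigl|(b_{2j}+b_{2j}')\,a_i\,\bar v_i^{2j-2}\,v_i\bigr|\;,\;\Bigl|b_{2j}'\sum_r a_r\,\bar v_r^{2j}\,v_i\Bigr|\;\le\;O(1/j^2)\cdot\frac{\kappa}{d}\,\|\bar v^{(t)}\|_\infty^{2(j-1)}\,|v_i^{(t)}|.
\]
Summing over $j\ge 2$ gives a geometric series in $\|\bar v\|_\infty^2\le 1$, whose total is at most $O\bigl((\kappa/d)\|\bar v^{(t)}\|_\infty^2\,|v_i^{(t)}|\bigr)$, matching one of the two summands of the first display.

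For the network contribution, which is the entirety of $[\nabla_{\ge 4,v,n}]_i$, I invoke conditional symmetry of $\cP^{(t)}$ (Claim~\ref{cl_ss}) to expand the expectation as a sum over index pairings of the $2j$ copies of $w$ appearing in $\langle w,v\rangle\langle\bar w,\bar v\rangle^{2j-2}\,w_i$: conditional symmetry forces each coordinate of $w$ to appear with even multiplicity, reducing the expectation to a sum over $j$-fold pairings. The leading pairing matches $w_i$ with the $w_k v_k$-factor inside $\langle w,v\rangle$ (forcing $k=i$ and extracting a prefactor $v_i\,\E_{\cP^{(t)}}[w_i^2]\le v_i\cdot 2\kappa/d$ by equation~\eqref{eq_H0_2}) and pairs the remaining $2j-2$ interior $\bar w$-factors among themselves, each interior pair contributing at most $\sum_k \bar v_k^2\,\E_{\cP^{(t)}}[\bar w_k^2]\le c_t/d$ on the bulk set $\set{S}$ (using $\|\bar w\|_\infty^2\le c_t/d$ from Proposition~\ref{def_H0}). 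Non-leading pairings and the contribution from $w^{(0)}\notin\set{S}$ (whose total squared norm is at most $\Lambda=1/\poly(d)$ by equation~\eqref{lem:norm_bound}, combined with the truncation $\|w\|_2^2\le 1/\lambda_0$) are strictly smaller. This gives
\[
\bigl|[\nabla_{2j,v,n}]_i\bigr|\;\le\;O(1/j^2)\cdot\frac{\kappa}{d}\Bigl(\frac{c_t}{d}\Bigr)^{j-1}|v_i^{(t)}|,
\]
and summing over $j\ge 2$ yields the claimed $O(c_t\kappa/d^2)|v_i^{(t)}|$ bound.

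Combining the target and network bounds gives the first display of the proposition, while the second (network-only) display follows from the network bound alone; the corollary for $v\in\set{S}$ then drops out of the first display by substituting $\|\bar v^{(t)}\|_\infty^2\le c_t/d$. The main obstacle I anticipate is the combinatorial bookkeeping in the network term: one must verify that every non-leading index pairing is subdominant, and in particular that pairings in which $w_i$ absorbs into a $\bar w$-factor (producing a $\bar v_i$ rather than a $v_i$ prefactor) can still be re-expressed as $|v_i^{(t)}|$ times a small quantity. Here the truncation bound $\|v^{(t)}\|_2^2\le 1/\lambda_0$ and the relation $\bar v_i=v_i/\|v\|_2$ keep such pairings strictly subdominant to the leading one, so the uniform factor of $|v_i^{(t)}|$ survives across all pairings.
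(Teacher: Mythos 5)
Your proposal uses the same high-level decomposition as the paper's proof: split each $\nabla_{2j,v}$ into the target (signal) $a_i$-terms and the network-expectation terms $\nabla_{2j,v,n}$; on the bulk set $\set{S}$ use the pointwise bound $\|\bar w\|_\infty^2\le c_t/d$ from $\set{H}_1$ together with $\E_{\cP^{(t)}}[w_i^2]\le 2\kappa/d$; handle $w^{(0)}\notin\set{S}$ via the $\Lambda$-bound of equation~\eqref{lem:norm_bound} combined with the gradient truncation; bound the target terms using $a_r\le\kappa/d$ and $\sum_r\bar v_r^2=1$; then sum over $j\ge 2$ using $\sum_j b_{2j}=O(1)$ and the $j=2$ dominance. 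That is exactly the paper's argument, and the resulting bounds match.

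Two points in your write-up are loose, though neither breaks the proof.
\begin{itemize}
\item \emph{The ``Wick pairing'' framing.} The neuron distribution $\cP^{(t)}$ is not Gaussian, so the $2j$-fold expectation does not literally factor into a product of pair expectations; conditional symmetry only kills terms with odd multiplicities, it does not produce Isserlis-type factorization. Writing ``each interior pair contributing at most $\sum_k\bar v_k^2\,\E_{\cP^{(t)}}[\bar w_k^2]$'' suggests you are multiplying independent expectations, which would be unjustified if the interior factors were correlated. The argument that actually works (and which the paper uses, and which your parenthetical ``using $\|\bar w\|_\infty^2\le c_t/d$'' hints at) is the \emph{pointwise} bound $\sum_k(\bar w_k\bar v_k)^2\le\|\bar w\|_\infty^2\le c_t/d$ for every $w\in\set{S}$: this composes across the $j-1$ interior factors regardless of their dependence, after which one takes expectation of the remaining $w_i^2$ and uses~\eqref{eq_H0_2}.
\item \emph{The $\bar v_i$ vs.\ $v_i$ conversion.} You cite the truncation $\|v^{(t)}\|_2^2\le 1/\lambda_0$, but that upper bound goes the wrong direction: turning a $\bar v_i$-prefactor into a controlled multiple of $v_i$ requires a \emph{lower} bound on $\|v^{(t)}\|_2$. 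In the paper's proof the issue mostly disappears because the relevant normalization factors cancel algebraically --- e.g.\ in equation~\eqref{eq:vjaoisfjiai4} one has $\bigl(\sum_r v_r\bar v_r^3\bigr)|\bar v_i| = \frac{\|v\|_4^4}{\|v\|_2^4}|v_i|\le\|\bar v\|_\infty^2\,|v_i|$ without ever dividing by a small $\|v\|_2$ --- and where a genuine lower bound is needed it is supplied by Claim~\ref{lem:grow_4}, which gives $\|v^{(t)}\|_2^2=\Omega(1/\kappa)$ for all $v\in\set{S}_g$ during Stage~1.
\end{itemize}
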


\begin{proof}%
Let us focus on $\nabla_{4, v}$ first.
We now bound each term in $[\nabla_{4, v}]_i$ in equation~\eqref{eq:def04} separately.
\begin{align*}
	[\nabla_{4 , v}]_i &= \left( b_{4} + b_{4}'\right) \left( \exarg{w\sim \cP}{\sum_{i_1, i_{2}} \bigbrace{\prod_{r =1} (\bar{w}_{i_r} \bar{v}_{i_r})^2} ({w}_{i_j})^2 {v}_{i_j} } -   a_i \langle e_i, v \rangle \langle e_i, \bar{v} \rangle^{2}  \right)  \\
& ~~~~ - b_{4}'\left(  \exarg{w\sim \cP}{\|w\|_2^2 \sum_{i_1, i_2} \prod_{r = 1}^2 (\bar{w}_{i_r} \bar{v}_{i_r})^2 }  -   \sum_{r \in [d]} a_r \langle e_r, \bar{v} \rangle^{4}  \right) v_i.
\end{align*}
For the second two line of the above,
\begin{align}
 \left| \E_{\cP} \left[ w_i^2 \sum_{j = 1}^d  \left( \bar{w}_j  \bar{v}_j  \right)^{2} \right] v_i \right| & \leq \left| \E_{w \sim \cP, w^{(0)} \in \set{S}} \left[ w_i^2 \sum_{j = 1}^d  \left( \bar{w}_j  \bar{v}_j  \right)^{2} \right] v_i \right|  + \left| \E_{w \sim \cP, w^{(0)} \notin \set{S}} \left[ w_i^2 \sum_{j = 1}^d  \left( \bar{w}_j  \bar{v}_j  \right)^{2} \right] v_i \right| \nonumber
\\
& \leq |v_i|  \left| \E_{w \sim \cP, w^{(0)} \in \set{S}} \left[ w_i^2 \sum_{j = 1}^d  \left( \bar{w}_j  \bar{v}_j  \right)^{2} \right]  \right| +\Lambda |v_i| \nonumber
\\
& \leq |v_i| \frac{c_t}{d}  \left| \E_{w \sim \cP, w^{(0)} \in \set{S}} \left[ w_i^2 \sum_{j = 1}^d  \left( \bar{v}_j  \right)^{2} \right]  \right| +\Lambda |v_i| \nonumber
\\
& \leq |v_i|  \left( \frac{2 c_t \kappa}{d^2}  + \frac{1}{\poly(d)} \right) \label{eq:vjaoisfjiai1}
\end{align}
where the second inequality uses inequality~\eqref{lem:norm_bound} so $\E_{w \sim \cP^{(t)}, w^{(0)} \notin \set{S}} \| w \|_2^2 \leq  \Lambda $, and the second last inequality uses $\E_{\cP}[w_i^2] \leq \frac{2 \kappa}{d}$ as in Eq~\eqref{eq:fajosifsajfasjif}.

For the signal term in the gradient, $a_i\inner{e_i}{v}\inner{e_i}{\bar{v}}^2$, because $a_i \le \kappa /d $, we have
\begin{align}
\left|  a_i \langle e_i, v \rangle \langle e_i, \bar{v} \rangle^{ 2}  \right| \leq \frac{\kappa}{d} |v_i| |\bar{v}_i|^2 \label{eq:vjaoisfjiai2}
\end{align}
Another term  in the gradient  is (again, using the fact that for $w$ with $w^{(0)} \in \set{S}$, $\| w \|_{\infty}^2 \leq \frac{c_t}{d}$):
\begin{align} 
\left |   \sum_{r, r'} \E_{\cP} \frac{w_r^2 v_r^2 w_{r'}^2 v_{r'}^2}{\| w\|_2^2 \| v\|_2^4}  \right| |v_i|  &\leq \left |   \sum_{r, r'} \E_{ w \sim \cP, w^{(0)} \in \set{S}} \frac{w_r^2 v_r^2 w_{r'}^2 v_{r'}^2}{\| w\|_2^2 \| v\|_2^4}  \right| |v_i|   + \left |   \sum_{r, r'} \E_{ w \sim \cP, w^{(0)} \not \set{S}} \frac{w_r^2 v_r^2 w_{r'}^2 v_{r'}^2}{\| w\|_2^2 \| v\|_2^4}  \right| |v_i| \nonumber
\\
& \leq  \frac{c_t}{d } |v_i|   \sum_{r, r'}  \E_{\cP} \frac{w_{r}^2}{\| w\|_2^2}  \frac{v_r^2  v_{r'}^2}{ \| v\|_2^4}+\Lambda\poly(d) |v_i| \nonumber
\\
& \leq |v_i| \left(  \frac{2 \kappa c_t}{d^2}  + \frac{1}{\poly(d)} \right) \label{eq:vjaoisfjiai3}
\end{align}
The last term in the gradient  is given by:
\begin{align}
 \left|  \left( \sum_{r} a_r\langle e_r, v \rangle \langle e_r, \bar{v} \rangle^{3}  \right)  \bar{v}_i \right| &\leq \frac{\kappa}{d} \left( \sum_{r} v_r \bar{v}_r^3 \right) |\bar{v}_i| \nonumber
 \\
 & \leq \frac{\kappa}{d} \frac{\| v \|_4^4}{\| v\|_2^4} |v_i| \leq  \frac{\kappa}{d}\|\bar{v} \|_{\infty}^2| v_i | \label{eq:vjaoisfjiai4}
\end{align}
Combining Eq~\eqref{eq:vjaoisfjiai1}, Eq~\eqref{eq:vjaoisfjiai2}, Eq~\eqref{eq:vjaoisfjiai3} and Eq~\eqref{eq:vjaoisfjiai4}, we obtain that
\begin{align*}
\left| \left[ \nabla_{\geq 4 , v} \right]_i  \right| \leq  O(1) \times \left( \frac{c_t \kappa }{d^2} + \frac{\kappa}{d}  \|\bar{v} \|_{\infty}^2  \right) |v_i|.
\end{align*}
For  $\Delta_{2j , v}$, with $j \geq 3$, we can apply the same calculation as above, and show that 
\begin{align*}
\left| \left[ \nabla_{2j , v} \right]_i  \right| \leq  O(b_{2j} ) \times \left( \frac{c_t \kappa }{d^2} + \frac{\kappa}{d}  \|\bar{v} \|_{\infty}^2  \right) |v_i|.
\end{align*}
Since $\nabla_{\ge 4, v} = \sum_{j \ge 2} \nabla_{2j, v}$ and $\sum_j b_{2j } = O(1)$ we complete the proof.
\end{proof}

Based on the above result, we describe the dynamic of the 0th order tensor in the following proposition.
\begin{proposition}[Learning 0th order tensor] \label{lem:zero_two}
	In the setting of Lemma \ref{lem:zero_two3}, suppose Proposition \ref{def_H0} holds.
	Assume that for every $1\le i\le d$,
	$\E_{w^{(t)} \sim \cP^{(t)}}[{w^{(t)}}_i^2] \ge \frac{1}{\Kappa d}$.
	Then for every $t\le T_2$, at least \emph{one of} the following holds:
 \begin{enumerate}
 \item $|\Delta^{(t)}  | \leq \frac{c_t \poly(\Kappa)}{d^2}$.
 \item If $\Delta^{(t)} > 0$, then $\Delta^{(t)} \leq \delta_-^{(t)} \left(1 - \frac{1}{16 \kappa^2 d} \right)$. If $\Delta^{(t)} < 0$, then $| \Delta^{(t )} | \leq \left( 1 -   \frac{1}{5 \Kappa^5 d} \right) \delta_+^{(t )}$.
 \end{enumerate}
 Moreover, when $ \Delta^{(t)}  \geq \max \left\{8 \kappa^2  \delta_+^{(t)},  \frac{c_t \poly(\Kappa)}{d^2} \right\}$ , it holds that
\begin{align}\label{eq:fsafjaoifjasfoiajsoiaj}
\Delta^{(t + 1)} \leq  \Delta^{(t)} - \eta \frac{1}{4 \kappa d} \Delta^{(t)} | \set{S}^{(t)}_+|,
\end{align}
where $\set{S}^{(t)}_+$ is the set of all $i \in [d]$ with $\delta_i^{(t)} \geq 0$ and $\Abs{\set{S}^{(t)}_+}$ denotes its cardinality.
 \end{proposition}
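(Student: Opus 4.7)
My plan is to derive a per-coordinate master equation for $\E_{\cP^{(t)}}[v_i^2]$, translate it into joint updates for $\delta_i^{(t)}$ and $\Delta^{(t)}$, and verify the three claims case by case. By Claim~\ref{claim:symmetry} the 0th plus 2nd order gradient at coordinate $i$ of a neuron $v$ equals $(\Delta^{(t)} + \delta_i^{(t)}) v_i$ exactly, so the untruncated update reads $v_i^{(t+1)} = (1 - \eta(\Delta^{(t)} + \delta_i^{(t)})) v_i^{(t)} - \eta [\nabla_{\geq 4, v}]_i$ (evaluated at the $t$-th iterate). Squaring and taking expectation over $v \sim \cP^{(t)}$, bounding $|[\nabla_{\ge 4,v}]_i| = O(c_t \kappa /d^2)\,|v_i|$ for $v \in \set{S}$ via Proposition~\ref{lem:four_plus_upper}, and absorbing the contribution from neurons with $v^{(0)} \notin \set{S}$ (bounded by \eqref{lem:norm_bound}) together with the $1/\poly(d)$ mass of truncated neurons into a $1/\poly(d)$ error, I obtain
\begin{align*}
    \E_{\cP^{(t+1)}} v_i^2 \;=\; \bigl(1 - \eta(\Delta^{(t)} + \delta_i^{(t)})\bigr)^2 \E_{\cP^{(t)}} v_i^2 \;\pm\; O\!\bigl(\tfrac{\eta\, c_t \poly(\Kappa)}{d^3}\bigr).
\end{align*}
Using $b_2 \E v_i^2 = \delta_i + b_2 a_i$, discarding $O(\eta^2)$ terms (justified by $\eta \le \lambda_0^2$), and summing $(b_0/b_2)$ times these identities through $\Delta = (b_0/b_2)\sum_i \delta_i$ yields the master equation
\begin{align*}
    \Delta^{(t+1)} - \Delta^{(t)} \;=\; -2\eta b_0 \sum_{i=1}^d \bigl(\Delta^{(t)} + \delta_i^{(t)}\bigr)\, \E_{\cP^{(t)}} v_i^2 \;\pm\; O\!\bigl(\tfrac{\eta c_t \poly(\Kappa)}{d^2}\bigr). \tag{$\ast$}
\end{align*}

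To prove the quantitative decrease~\eqref{eq:fsafjaoifjasfoiajsoiaj} (part~3), I split the sum in $(\ast)$ by the sign of $\delta_i^{(t)}$. Coordinates $i \in \set{S}_+^{(t)}$ contribute nonnegatively, and by the standing hypothesis $\E v_i^2 \ge 1/(\Kappa d)$ each term is at least $\Delta^{(t)}/(\Kappa d)$, summing to $\Delta^{(t)} |\set{S}_+^{(t)}|/(\Kappa d)$. Possibly negative terms come from $i$ with $\delta_i^{(t)}<0$: since $\Delta^{(t)} \ge c_t \poly(\Kappa)/d^2$ rules out option~1, option~2 must be in force, giving $\delta_-^{(t)} \le \Delta^{(t)}(1 + O(1/(\kappa^2 d)))$. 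Hence $\Delta^{(t)} + \delta_i^{(t)} \ge -O(\Delta^{(t)}/(\kappa^2 d))$, which together with the upper bound $\E v_i^2 \le 2\kappa/d$ from \eqref{eq_H0_2} caps the negative contribution at $O(\Delta^{(t)}/(\kappa d))$ in magnitude. Meanwhile, the hypothesis $\Delta^{(t)} \ge 8\kappa^2 \delta_+^{(t)}$ forces $|\set{S}_+^{(t)}| \ge 8\kappa^2 b_2/b_0$, so the positive piece dominates by a safe factor of two; the residual $O(\eta c_t \poly(\Kappa)/d^2)$ error is absorbed by choosing $\poly(\Kappa)$ in the hypothesis large enough, and tidying constants delivers the claimed decrease $\eta |\set{S}_+^{(t)}|/(4\kappa d) \cdot \Delta^{(t)}$.

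For parts~1 and~2 I proceed by induction on $t$. At $t=0$ the truncated Gaussian $\cP^{(0)}$ satisfies $\E w_i^2 = 1/d \pm 1/\poly(d)$ (the excluded tail events have probability $1/\poly(d)$, each contributing at most $\poly\log(d)/d$ to $w_i^2$), so $|\Delta^{(0)}| \le 1/\poly(d)$, well within the option-1 threshold $c_0 \poly(\Kappa)/d^2$. For the inductive step, if option~1 held at $t$ then the drift in $(\ast)$ has magnitude $O(\eta c_t \poly(\Kappa)/d^2)$, and monotone growth of $c_t$ supplies $|\Delta^{(t+1)}| \le c_{t+1} \poly(\Kappa)/d^2$, maintaining option~1. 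If option~2 held at $t$ with $\Delta^{(t)} > 0$, I compare $(\ast)$ to the per-coordinate $\delta$-update at the index $i^\star$ achieving $\delta_-^{(t)}$: the driving factor $\Delta^{(t)} + \delta_{i^\star}^{(t)} = \Delta^{(t)} - \delta_-^{(t)}$ has absolute value at most $\delta_-^{(t)}/(16\kappa^2 d)$ by option~2, so $\delta_-^{(t+1)}$ changes by only a $1 - O(1/(\kappa^2 d))$ factor while $\Delta^{(t+1)}$ shrinks at the much faster rate given by part~3; this preserves the ratio $\Delta^{(t+1)} \le \delta_-^{(t+1)}(1 - 1/(16\kappa^2 d))$. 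The mirror argument with $\delta_+^{(t)}$ and the looser slack $1/(5\Kappa^5 d)$ handles $\Delta^{(t)} < 0$ (the wider slack absorbs the asymmetry between the lower bound $1/(\Kappa d)$ and the upper bound $2\kappa/d$ on $\E v_i^2$). The main obstacle I anticipate is the crossover where $\Delta^{(t)}$ first enters option~1 from option~2: there, I will verify that the one-step overshoot from the drift, which is at most $O(\eta c_t \poly(\Kappa)/d^2)$, is absorbed by the monotone gap $(c_{t+1}-c_t)\poly(\Kappa)/d^2$ that the construction of the sequence $\{c_t\}$ supplies.
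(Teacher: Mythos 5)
The outline of your approach — derive a per-coordinate update for $\delta_i^{(t)}$, sum to get an evolution equation for $\Delta^{(t)}$, and then split by the sign of $\delta_i^{(t)}$ — matches the paper's. However, your handling of the negative contribution in the proof of the quantitative decrease \eqref{eq:fsafjaoifjasfoiajsoiaj} contains a directional error that I do not think can be repaired within your plan as stated.

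You write that when $\Delta^{(t)} > 0$ and option 1 fails, option 2 must hold, ``giving $\delta_-^{(t)} \le \Delta^{(t)}(1 + O(1/(\kappa^2 d)))$,'' and you then use this to argue that $\Delta^{(t)} + \delta_i^{(t)} \ge -O(\Delta^{(t)}/(\kappa^2 d))$ for the coordinates with $\delta_i^{(t)} < 0$. But option 2 states $\Delta^{(t)} \le \delta_-^{(t)}(1 - \tfrac{1}{16\kappa^2 d})$, which gives a \emph{lower} bound $\delta_-^{(t)} \ge \Delta^{(t)}(1 + \Omega(1/(\kappa^2 d)))$, not the upper bound you invoke; in fact $\delta_-^{(t)}$ can be arbitrarily larger than $\Delta^{(t)}$ and still be consistent with option 2. (The same misreading occurs in your inductive step: you assert the driving factor $\Delta^{(t)} - \delta_-^{(t)}$ has absolute value at most $\delta_-^{(t)}/(16\kappa^2 d)$ ``by option 2,'' but option 2 only bounds it from \emph{above} by $-\delta_-^{(t)}/(16\kappa^2 d)$, giving no lower bound at all.) Consequently, your claim that the aggregate negative contribution is $O(\eta\Delta^{(t)}/(\kappa d))$ is unsupported. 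The paper gets around this not with a per-coordinate bound but with the aggregate observation that when $\Delta^{(t)}>0$ one has $\Delta_-^{(t)} \le \Delta_+^{(t)} \le (b_0/b_2)\,\delta_+^{(t)}\,|\set{S}_+^{(t)}|$, and then applies the hypothesis $\delta_+^{(t)} \le \Delta^{(t)}/(8\kappa^2)$. This is the structural fact you are missing: the negative piece is controlled through $\Delta_-$ compared to $\Delta_+$, not through any two-sided relation between $\Delta$ and the maximum deficit $\delta_-$.

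Two secondary issues. First, for coordinates in $\set{S}_+^{(t)}$, you lower-bound $\E v_i^2$ by the assumed $1/(\Kappa d)$, where $\Kappa = \exp(\poly(\kappa))$; but $\delta_i^{(t)} \ge 0$ already forces $\E_{\cP^{(t)}} w_i^2 \ge a_i \ge 1/(\kappa d)$, which is what is needed to land on the stated constant $\eta/(4\kappa d)$ rather than a weaker one. Second, your inductive maintenance of option 1 asserts the drift of $\Delta^{(t)}$ is $O(\eta c_t\poly(\Kappa)/d^2)$ when $|\Delta^{(t)}|$ is small, but the master equation's drift involves $\sum_i \delta_i^{(t)} \E v_i^2$, and the $\delta_i^{(t)}$ can individually be $\Theta(1/d)$ early on even when $\Delta^{(t)}$ (a signed sum) is tiny; the bound you state does not follow and is not how the paper argues.
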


\begin{proof}%
 Consider the iteration $t$, we have that for every $i$ and every $v \in \set{S}_g$, the update of $v_i^{(t)}$ is given as:
 \begin{align*}
 v_i^{(t + 1)} = v_i^{(t)} - \eta (\Delta^{(t)} + \delta_i^{(t)} ) v_i^{(t)} - \eta [\nabla_{\geq 4, v}^{(t)}]_i.
 \end{align*}
Hence, using Proposition~\ref{lem:four_plus_upper} and inequality~\eqref{lem:norm_bound}, it holds that
\begin{align} \label{eq:aofahfoiashoafhaso}
\E_{\cP^{(t + 1)}} [w_i^2] &= (1 -  2\eta (\Delta^{(t)} + \delta_i^{(t)} ) )\E_{\cP^{(t)}} [w_i^2] \pm \left( 2\eta C \frac{c_t \kappa^2}{d^3} +  \eta^2 \frac{1}{\lambda_0}  + \eta \Lambda \frac{1}{\lambda_0}\right).
\end{align}
This implies that for every sufficiently small $\eta \leq \lambda_0^2$ and $\Lambda \leq \lambda_0^2$, it holds:
 \begin{align}\label{eq:vnsklfafkajf}
\delta_i^{(t + 1)} &= \delta_i^{(t )} -2 \eta \left( \Delta^{(t)} + \delta_i^{(t)}  \right)  \E_{\cP^{(t)}} [w_i^2] \pm 3 \eta C \frac{c_t \kappa^2}{d^3}
\end{align}
Let us consider two cases when $\abs{\Delta^{(t)}} = \Omega(\kappa_1^8 c_t / d^2)$.
\paragraph{Case 1.} $ \Delta^{(t)} = \Omega\left(\frac{\Kappa^8 c_t}{d^2} \right)$, we have that  for every $i$ with $\delta_i^{(t)} \geq 0$, it holds that $\E_{\cP^{(t)}}[ w_i^2] \geq a_i \geq \frac{1}{\kappa d}$. Hence,
\begin{align*}
\delta_i^{(t + 1)} &\leq \delta_i^{(t)} \left( 1 - \eta \frac{1}{\kappa d}\right) +  3 \eta C \frac{c_t \kappa^2}{d^3} - \eta \Delta^{(t)} \frac{1}{\kappa d}
\\
& \leq  \delta_i^{(t)} \left( 1 - \eta \frac{1}{\kappa d}\right)  -\eta \frac{1}{2 \kappa d} \Delta^{(t)}.
\end{align*}
Summing up over all those $i$ gives us
\begin{align} \label{eq:Baojisfajasfsoifjas}
\Delta_+^{(t + 1)} \leq \Delta_+^{(t)}\left( 1 - \eta \frac{1}{\kappa d}\right)  - \eta \frac{1}{2 \kappa d} \Delta^{(t)} | \set{S}^{(t)}_+|.
\end{align}
On the other hand, for every $i$ with $\delta_i^{(t)} \leq 0$, it holds that $\E_{\cP^{(t)}}[ w_i^2] \leq a_i \leq \frac{\kappa}{ d}$. Hence,
\begin{align} \label{eq:vnskbdshjafjha}
|\delta_i^{(t + 1)} |  \geq  |\delta_i^{(t )} |  - \eta \left( |\delta_i^{(t )} |  -  \Delta^{(t)}  \right) \E_{\cP^{(t)}}[ w_i^2] -  3 \eta C \frac{c_t \kappa^2}{d^3}.
\end{align}
Now, consider a value $\rho =  \frac{1}{4 \kappa^2 d}$, when $\Delta^{(t)} \geq \delta_-^{(t)} (1 - \frac{\rho}{2})$, it holds that $(1 - \frac{\rho}{2}) \Delta^{(t)} \geq \delta_-^{(t)}(1 - \rho)$. Therefore, when $\Delta^{(t)} \geq \delta_-^{(t)} (1 - \frac{\rho}{2})$, Eq~\eqref{eq:vnskbdshjafjha} implies that
\begin{align} \label{eq:vsdhfkajhahjf}
|\delta_i^{(t + 1)} |  \geq  |\delta_i^{(t )} |  - \eta \rho \left( |\delta_i^{(t )} |    \right) \frac{\kappa}{d} -  3 \eta C \frac{c_t \kappa^2}{d^3} +\eta \frac{\rho}{2} \Delta^{(t)}  \E_{\cP^{(t)}}[ w_i^2].
\end{align}
Hence, using the assumption that $\E_{\cP^{(t)}}[ w_i^2] \geq \frac{1}{\Kappa d}$, it holds that
\begin{align}\label{eq:fajsiofasifjaji}
|\delta_i^{(t + 1)} |  \geq  |\delta_i^{(t )} |  - \eta \rho \left( |\delta_i^{(t )} |    \right) \frac{\kappa}{d}  =  |\delta_i^{(t )} | \left( 1 - \eta \rho \frac{\kappa}{d} \right).
\end{align}
Summing up all $\delta_i^{(t)}$ with $\delta_i^{(t)} \leq 0$, this implies that
\begin{align*}
\Delta_-^{(t  + 1)} &\geq \Delta_-^{(t)} \left( 1 -  \eta \frac{\rho \kappa}{d} \right).
\end{align*}
Combine the above inequality with inequality~\eqref{eq:Baojisfajasfsoifjas}, we have that (using $\Delta^{(t)} \geq 0$ so that $ \Delta_+^{(t )} \geq  \Delta_-^{(t )}$):
\begin{align*}
	\Delta^{(t + 1)} =  \Delta_+^{(t  + 1)}  - \Delta_-^{(t  + 1)} &\leq \Delta_+^{(t  )}  - \Delta_-^{(t  )} - \eta \frac{1}{\kappa d} \Delta_+^{(t )}  + \eta \frac{\rho \kappa}{d } \Delta_-^{(t )}   - \eta \frac{1}{2 \kappa d} \Delta^{(t)} | \set{S}^{(t)}_+| \\
		& \leq  \Delta^{(t)}- \eta \frac{1}{2 \kappa d}  \Delta_+^{(t )}   -  \eta\frac{1}{2 \kappa d} \Delta^{(t)} | \set{S}^{(t)}_+|.
\end{align*}
Therefore we conclude that when $ \Delta^{(t)} \geq \Omega\left( \frac{c_t \kappa^8}{d^2} \right)$ and  $\Delta^{(t)} \geq \delta_-^{(t)} (1 - \frac{\rho}{2})$, it must holds that
\begin{align*}
	\Delta^{(t + 1)}&\leq \Delta^{(t)} \left( 1 - \eta \frac{1}{2 \kappa d} \right) \\
  \delta_-^{(t + 1)} &\geq \delta_-^{(t + 1)}  \left( 1 - \eta \frac{1}{4 \kappa d} \right).
\end{align*}
Here the second inequality comes from Eq~\eqref{eq:fajsiofasifjaji}.
This implies that $\Delta^{(t + 1)} $ will decrease faster than $ \delta_-^{(t + 1)} $ at the next iteration. Hence,   when  $ \Delta^{(t)} \geq \Omega\left( \frac{c_t \kappa^8}{d^2} \right)$, then $\Delta^{(t)} \geq \delta_-^{(t)} (1 - \frac{\rho}{2})$ can never happen. Hence, by our choice of $\rho$, we conclude that  as long as  $ \Delta^{(t)} \geq \Omega\left( \frac{c_t \kappa^8}{d^2} \right)$, then
\begin{align*}
	\Delta^{(t)} \leq \delta_-^{(t)} \left(1 -\eta  \frac{1}{16 \kappa^2 d} \right).
\end{align*}
On the other hand, even when $\Delta^{(t)} \leq \delta_-^{(t)} (1 - \frac{\rho}{2})$ but $ \Delta^{(t)} = \Omega\left(\frac{\Kappa^8 c_t}{d^2} \right)$ , we still have that for every $i $ with $\delta_i^{(t)} \leq 0$, by Eq~\eqref{eq:vsdhfkajhahjf}:
\begin{align*}
	|\delta_i^{(t + 1)} |  &\geq  |\delta_i^{(t )} |  - \eta \left( |\delta_i^{(t )} |  -  \Delta^{(t)}  \right) \E_{\cP^{(t)}}[ w_i^2] -  3 \eta C \frac{c_t \kappa^2}{d^3}
	\\
	& \geq |\delta_i^{(t )} |  \left( 1 - \eta \frac{2 \kappa}{d } \right).
\end{align*}
Hence, as long as $ \Delta^{(t)} = \Omega\left(\frac{\Kappa^8 c_t}{d^2} \right)$, we will always have
\begin{align*}
	\Delta_-^{(t  + 1)} &\geq \Delta_-^{(t)} \left( 1 -  \eta \frac{2 \kappa}{d} \right).
\end{align*}
Combining the above with equation~\eqref{eq:Baojisfajasfsoifjas},  we have that
\begin{align*}
	\Delta^{(t + 1)} =  \Delta_+^{(t  + 1)}  - \Delta_-^{(t  + 1)} &\leq \Delta_+^{(t  )}  - \Delta_-^{(t  )} - \eta \frac{1}{\kappa d} \Delta_+^{(t )}  + \eta \frac{2 \kappa}{d } \Delta_-^{(t )}   -\eta \frac{1}{2 \kappa d} \Delta^{(t)} | \set{S}^{(t)}_+| \\
	& \leq  \Delta^{(t)} + \eta \frac{2 \kappa}{ d} \delta_+^{(t)}   | \set{S}^{(t)}_+| - \eta \frac{1}{2 \kappa d} \Delta^{(t)} | \set{S}^{(t)}_+|.
\end{align*}
Here, we are using the fact that $\Delta^{(t)}_- \leq \Delta^{(t)}_+ \leq  \delta_+^{(t)}   | \set{S}^{(t)}_+|$.
Now, this implies that when $ \Delta^{(t)}  \geq  \max \left\{ 8 \kappa^2  \delta_+^{(t)},  \Omega\left(\frac{\Kappa^8 c_t}{d^2} \right) \right\}$ , it also holds that
\begin{align*}
	\Delta^{(t + 1)} \leq  \Delta^{(t)} - \eta \frac{1}{4 \kappa d} \Delta^{(t)} | \set{S}^{(t)}_+|.
\end{align*}

\paragraph{Case 2.} In the second case $ \Delta^{(t)} \leq - \Omega \left(  \frac{c_t \Kappa^8}{d^2} \right)$, we shall use the similar proof, but with the assumption that $\E_{\cP^{(t)}}[w_i^2] \in \left[  \frac{1}{\Kappa d}, \frac{2 \kappa }{d} \right]$, it holds that when $|\Delta^{(t)} |\geq \delta_+^{(t)} (1 - \rho)$, for $\rho = \frac{1}{10 \Kappa^5 d} $. Therefore, we can also conclude:
	\begin{align*}
		- \Delta^{(t + 1)}  \leq  - \Delta^{(t )}  - \eta \frac{1}{5 \Kappa^5 d}  \Delta_-^{(t )}.
	\end{align*}
	The proof follows by a similar argument to Case 1.
\end{proof}

Based on the above result, next we describe the dynamic of the 2nd order tensor.
\begin{proposition}[Learning 2nd order tensor] \label{lem:zero_two2}
	In the setting of Lemma \ref{lem:zero_two3}, suppose Proposition \ref{def_H0} holds.
	Assume that for every $1\le i\le d$,
	$\E_{w^{(t)} \sim \cP^{(t)}}[{w^{(t)}}_i^2] \ge \frac{1}{\Kappa d}$.
 For every $t \le T_2$, we have that
 \begin{align*}
 \delta_+^{(t + 1)} &\leq \delta_+^{(t)} \left(1 -  \eta \frac{1}{ \poly(\Kappa) d} \right) +  \eta\frac{c_t \poly(\Kappa)}{d^3},
 \\
 \delta_-^{(t + 1)}  & \leq \delta_-^{(t)}  \left(1 - \eta \frac{1}{ \poly(\Kappa) d} \right) + \eta \frac{c_t \poly(\Kappa)}{d^3}.
 \end{align*}
 Moreover, when $\Delta^{(t)} > 0$, we have the following improved bound for $\delta_+^{(t+1)}$:
 \begin{align*}
 \delta_+^{(t + 1)} \leq \delta_+^{(t)} \left(1 - \eta \frac{1}{4 \kappa d} \right) +\eta  \frac{c_t \poly(\Kappa)}{d^3}.
 \end{align*}
\end{proposition}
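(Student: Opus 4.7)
The plan is to start from the per-coordinate update equation already derived inside the proof of Proposition~\ref{lem:zero_two} (equation~\eqref{eq:vnsklfafkajf}):
\begin{align*}
\delta_i^{(t+1)} = \delta_i^{(t)} - 2\eta\bigl(\Delta^{(t)} + \delta_i^{(t)}\bigr)\E_{\cP^{(t)}}[w_i^2] \pm 3\eta C \frac{c_t \kappa^2}{d^3},
\end{align*}
which is valid for every $i\in[d]$ under inductive hypothesis $\set{H}_1$ and for $\eta\le\lambda_0^2$. To bound $\delta_+^{(t+1)}$ I will pick the index $i^*$ at which the positive part on the right-hand side is attained (the bound is trivial if $\delta_i^{(t+1)}\le 0$ for all $i$) and control $\delta_{i^*}^{(t+1)}$; the argument for $\delta_-^{(t+1)}$ is the mirror image with an index $j^*$ maximising $-\delta_{j^*}^{(t+1)}$.

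\emph{Improved bound when $\Delta^{(t)}>0$.} Here $-2\eta\Delta^{(t)}\E[w_{i^*}^2]\le 0$, so the drift is purely helpful. If $\delta_{i^*}^{(t)}\ge 0$ then $\E_{\cP^{(t)}}[w_{i^*}^2]\ge a_{i^*}\ge 1/(\kappa d)$, giving a contraction factor of at least $1-2\eta/(\kappa d)$ on $\delta_{i^*}^{(t)}\le\delta_+^{(t)}$, which is stronger than the advertised $1-\eta/(4\kappa d)$. If $\delta_{i^*}^{(t)}<0$ then $\delta_{i^*}^{(t)}(1-2\eta\E[w_{i^*}^2])$ is already negative and $\delta_{i^*}^{(t+1)}$ is bounded by the additive error $O(\eta c_t \kappa^2/d^3)$.

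\emph{General bound.} The only non-trivial regime is $\Delta^{(t)}\le 0$. I will invoke Proposition~\ref{lem:zero_two} and split into its two cases. In the sub-case $|\Delta^{(t)}|\le c_t\poly(\Kappa)/d^2$, the term $2\eta\E[w_{i^*}^2]|\Delta^{(t)}|\le 4\eta\kappa c_t\poly(\Kappa)/d^3$ is absorbed into the additive error, and the self-contraction $\delta_{i^*}^{(t)}(1-2\eta\E[w_{i^*}^2])$ together with the assumed lower bound $\E[w_{i^*}^2]\ge 1/(\Kappa d)$ gives a factor of at most $1-\eta/(\poly(\Kappa)\,d)$. In the sub-case $|\Delta^{(t)}|\le (1-1/(5\Kappa^5 d))\delta_+^{(t)}$, I rewrite the update as a convex combination
\begin{align*}
\delta_{i^*}^{(t+1)} \le (1-2\eta\E[w_{i^*}^2])\,\delta_{i^*}^{(t)} + 2\eta\E[w_{i^*}^2]\,|\Delta^{(t)}| + O\bigl(\eta c_t\poly(\Kappa)/d^3\bigr),
\end{align*}
and use $\delta_{i^*}^{(t)},|\Delta^{(t)}|\le \delta_+^{(t)}$ together with the strict slack on $|\Delta^{(t)}|$ to extract a contraction factor of $1-2\eta\E[w_{i^*}^2]/(5\Kappa^5 d)$ on $\delta_+^{(t)}$, which combined with $\E[w_{i^*}^2]\ge 1/(\Kappa d)$ is of the claimed form $1-\eta/\poly(\Kappa,d)$.

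\emph{Bound on $\delta_-^{(t+1)}$.} The mirror argument at index $j^*$ uses Proposition~\ref{lem:zero_two}'s case for $\Delta^{(t)}>0$, namely $\Delta^{(t)}\le(1-1/(16\kappa^2 d))\delta_-^{(t)}$, in place of the previous bound; the contraction now comes from $-2\eta\Delta^{(t)}\E[w_{j^*}^2]$ being no larger than $(1-1/(16\kappa^2 d))\cdot 2\eta\E[w_{j^*}^2]\,\delta_-^{(t)}$. The main obstacle in either direction is the possibility that $\delta_{i^*}^{(t+1)}$ is attained at an index whose sign of $\delta$ has flipped between iterations $t$ and $t+1$; this flip can only be driven by the $\Delta^{(t)}$-term, so the bound $|\Delta^{(t)}|\le \delta_+^{(t)}$ (resp.\ $|\Delta^{(t)}|\le\delta_-^{(t)}$) from Proposition~\ref{lem:zero_two} guarantees that $\delta_{i^*}^{(t+1)}\le O(\eta\kappa/d)\cdot\delta_+^{(t)}$, which is much smaller than $\delta_+^{(t)}(1-\eta/\poly(\Kappa)\,d)$ for sufficiently small $\eta$, closing the induction.
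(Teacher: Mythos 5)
Your proposal follows essentially the same route as the paper: start from the per-coordinate update~\eqref{eq:vnsklfafkajf}, treat the improved $\Delta^{(t)}>0$ bound directly, and for the general bound invoke the two-way dichotomy from Proposition~\ref{lem:zero_two} (either $|\Delta^{(t)}|\lesssim c_t\poly(\Kappa)/d^2$, or $|\Delta^{(t)}|$ is separated from $\delta_{\pm}^{(t)}$ by a $1/\poly(\Kappa,d)$ slack) and combine the slack with $\E[w_i^2]\ge 1/(\Kappa d)$. The one genuine improvement you add over the paper's own write-up is the explicit treatment of the sign-flip case --- the possibility that the maximizing coordinate at step $t{+}1$ had the opposite sign of $\delta$ at step $t$ --- which the paper silently takes for granted; you correctly observe that such a crossing is driven only by the $\Delta^{(t)}$ term and is therefore $O(\eta\kappa/d)\cdot\delta_{\pm}^{(t)}$, well below the target bound for small $\eta$ (minor nit: Proposition~\ref{lem:zero_two} gives $|\Delta^{(t)}|\le\max\{\delta_{\pm}^{(t)}(1-\ldots),\,c_t\poly(\Kappa)/d^2\}$ rather than the cleaner $|\Delta^{(t)}|\le\delta_{\pm}^{(t)}$ you cite, but the argument is unaffected). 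You also (implicitly, by writing $\poly(\Kappa,d)$ rather than $\poly(\Kappa)\cdot d$) flag that the contraction rate obtained by literally multiplying the two factors $1/\poly(\Kappa)d$ and $\E[w_{i^*}^2]\ge 1/(\Kappa d)$ is $1/d^2$, not $1/d$; this tension is inherited verbatim from the paper's own calculation and is not a flaw introduced by your proposal.
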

\begin{proof}%
 By the update rule, we can obtain (in Eq~\eqref{eq:vnsklfafkajf}) that
 \begin{align*}
\delta_i^{(t + 1)} &= \delta_i^{(t )} - 2\eta \left( \Delta^{(t)} + \delta_i^{(t)}  \right)  \E_{\cP^{(t)}} [w_i^2] \pm 3 \eta C \frac{c_t \kappa^2}{d^3}.
\end{align*}
Let us first consider the case when $\Delta^{(t)} > 0$, then, for $\delta_i^{(t )} \geq 0$, it holds that $ \E_{\cP^{(t)}} [w_i^2] \geq a_i \geq \frac{1}{\kappa}$, thus,
 \begin{align*}
\delta_+^{(t + 1)} & \leq  \delta_+^{(t )} \left( 1 - \eta \frac{1}{\kappa d} \right) +  3 \eta C \frac{c_t \kappa^2}{d^3}.
\end{align*}
Now, for $\delta_-^{(t + 1)}$, consider two cases: $\delta_-^{(t )} \geq \frac{c_t \poly(\Kappa)}{d^2}$ or $\delta_-^{(t )} \leq \frac{c_t \poly(\Kappa)}{d^2}$.
In the first case, we have that when  $| \Delta^{(t )} | > \left( 1 -   \frac{1}{5 \Kappa^5 d} \right) \delta_-^{(t )}$, it must be that $| \Delta^{(t )} | \geq  \frac{c_t \poly(\Kappa)}{d^2}$. By Proposition~\ref{lem:zero_two}, this can not happen. Therefore, we must have  $| \Delta^{(t )} | \leq \left( 1 -   \frac{1}{5 \Kappa^5 d} \right) \delta_-^{(t )}$. Hence, using $ \E_{\cP}[w_i^2]  \geq \frac{1}{\Kappa d}$, we have that
\begin{align*}
 \delta_-^{(t + 1)} &   \leq  \delta_-^{(t )}  - \eta\left( \delta_-^{(t)}  - |\Delta^{(t)}  |\right) \frac{1}{\Kappa d} +   3 \eta C \frac{c_t \kappa^2}{d^3}
 \\
 & \leq   \delta_-^{(t )} - \eta   \delta_-^{(t )} \frac{1}{5 \Kappa^9 d} +   3 \eta C \frac{c_t \kappa^2}{d^3},
\end{align*}
which proves the condition.  On the other hand when $\delta_-^{(t )} \leq \frac{c_t \poly(\Kappa)}{d^2}$, we directly completes the proof by choosing a larger poly in $ \frac{c_t \poly(\Kappa)}{d^3}$.
We can apply the same argument for $\delta_+$, and the improved bound for the case when $\Delta^{(t)} > 0$.
\end{proof}

\subsubsection{Proof of the Main Lemma}

Now we are ready to show the final convergence lemma.
We first provide the following claim that shows on average, each coordinate of the neuron distribution lies in a bounded range.
This also proves the first equation of \eqref{eq_H0_2} in the inductive hypothesis $\cH_1$.
\begin{claim}\label{lem:zero_two4}
  In the setting of Lemma \ref{lem:zero_two3},
	for every $t \leq T_2' = \Theta(\frac{d^2}{\eta c_{T_2} \exp(poly(\kappa))})$  and every $i \in [d]$, we have that
	 $\E_{w^{(t)} \sim \cP^{(t)}}[{w^{(t)}}_i^2] \in \left[  \frac{1}{\Kappa d}, \frac{2 \kappa }{d} \right]$.
\end{claim}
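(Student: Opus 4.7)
The claim has two parts. The upper bound $\E_{\cP^{(t)}}[w_i^2] \le 2\kappa/d$ is literally the first inequality of \eqref{eq_H0_2} in the inductive hypothesis $\set{H}_1$ (Proposition~\ref{def_H0}) and needs no further work, so I focus on the lower bound $p_i^{(t)} := \E_{\cP^{(t)}}[w_i^2] \ge 1/(\Kappa d)$. I would prove this by induction on $t$, simultaneously invoking Propositions~\ref{lem:zero_two} and \ref{lem:zero_two2}; those propositions assume the same lower bound, but only at a \emph{single} time $s$ in order to control the update from $s$ to $s+1$, so induction on $t$ dissolves the apparent circularity. For the base case at $t = 0$, $\cP^{(0)}$ is $\cN(0,\id_{d\times d}/d)$ conditioned on $\set{S}_g$ with $\mu(\set{S}_g) \ge 1 - d^{-\Omega(1)}$, and a Cauchy--Schwarz tail bound yields $p_i^{(0)} = (1\pm o(1))/d$, well inside $[1/(\Kappa d), 2\kappa/d]$ for $\Kappa \ge 2$.

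For the inductive step, I would work with the one-step formula
\[
p_i^{(s+1)} \;=\; \bigl(1 - 2\eta(\Delta^{(s)} + \delta_i^{(s)})\bigr)\, p_i^{(s)} \;\pm\; O\!\Bigl(\tfrac{\eta\, c_s\, \kappa^2}{d^3}\Bigr)
\]
derived as equation \eqref{eq:aofahfoiashoafhaso} inside the proof of Proposition~\ref{lem:zero_two}, and split into two regimes around a threshold $T^{\star} := \poly(\Kappa) \cdot d$. In the \emph{early} regime $t+1 \le T^{\star}$, only the $\set{H}_1$ upper bound is needed: it gives the crude estimates $|\Delta^{(s)}| \le O(\kappa)$ (via $\sum_j p_j^{(s)} \le 2\kappa$) and $|\delta_i^{(s)}| \le O(\kappa/d)$, so each iteration shifts $p_i$ by at most $O(\eta \kappa^2/d)$. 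Under the assumption $\eta \le 1/\poly_\kappa(d)$ for a sufficiently large polynomial (permitted by Theorem~\ref{thm:main}), the cumulative drift over $T^{\star}$ iterations is $O(\eta\,T^{\star}\,\kappa^2/d) = o(1/d)$, hence $p_i^{(t+1)} \ge p_i^{(0)} - o(1/d) \ge 1/(2d) \ge 1/(\Kappa d)$.

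In the \emph{late} regime $t+1 > T^{\star}$, the induction hypothesis validates applying Proposition~\ref{lem:zero_two2} at every $s \le t$. Iterating the geometric contraction $\delta_-^{(s+1)} \le \delta_-^{(s)}\bigl(1 - \eta/(\poly(\Kappa)\, d)\bigr) + \eta\, c_s\, \poly(\Kappa)/d^3$ for at least $T^{\star}$ steps brings $\delta_-^{(t+1)}$ to its steady state $O(c_{t+1}\,\poly(\Kappa)/d^2) = o(1/d)$; since $\delta_i^{(t+1)} \ge -\delta_-^{(t+1)}$, this yields $p_i^{(t+1)} \ge a_i - \delta_-^{(t+1)}/b_2 \ge 1/(\kappa d) - o(1/d) \ge 1/(\Kappa d)$ once $\Kappa \ge 2\kappa$ and $d$ is large enough relative to $\poly(\Kappa,\kappa)$.

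The main difficulty is that neither regime covers the whole horizon on its own: early on, no contraction of $\delta_-$ has yet materialized, and late on, the drift of $p_i$ away from $p_i^{(0)}$ is no longer negligible. The crossover $T^{\star} = \poly(\Kappa)\cdot d$ is calibrated so that (i) the cumulative drift in Regime~1 stays $o(1/d)$ thanks to the smallness of $\eta \le 1/\poly_\kappa(d)$, and (ii) by time $T^{\star}$ the contraction rate $\eta/(\poly(\Kappa)\,d)$ has had enough iterations to land $\delta_-$ at its steady state. Matching the $\poly(\Kappa)$ factors that appear in both regimes is what ultimately forces the choice $\Kappa = \exp(\poly(\kappa))$ stated in the claim.
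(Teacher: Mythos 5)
There is a genuine gap in the lower-bound argument, and it lies exactly at the crossover you flag as ``the main difficulty.'' The two regimes you propose do not meet.

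In the \emph{late} regime you need enough iterations for the geometric contraction of $\delta_-^{(t)}$ to take hold. The contraction rate from Proposition~\ref{lem:zero_two2} is $\eta/(\poly(\Kappa)\,d)$ per step, so shrinking $\delta_-$ from its initial scale $\Theta(1/d)$ down to $o(1/d)$ requires $T^\star = \Omega(\poly(\Kappa)\,d\,\log d/\eta)$ iterations --- note the $1/\eta$ factor, which is missing from your threshold $T^\star = \poly(\Kappa)\cdot d$. Indeed with $T^\star = \poly(\Kappa)\,d$ one gets $(1-\eta/(\poly(\Kappa)d))^{T^\star}\approx e^{-O(\eta)}\approx 1$, i.e.\ essentially no contraction, since $\eta$ is tiny. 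In the \emph{early} regime your crude per-step drift bound $O(\eta\kappa^2/d)$, applied over $T^\star$ iterations, gives a cumulative drift of $O(\eta\,T^\star\,\kappa^2/d)$. For this to be $o(1/d)$ you need $T^\star = o\!\bigl(1/(\eta\kappa^2)\bigr)$, which is much smaller than the $\Omega(d/\eta)$ the late regime requires. The two requirements on $T^\star$ are incompatible once $d$ is large, so the decomposition collapses. (The issue persists even if you switch to the more forgiving multiplicative accounting $p_i^{(t)} \geq p_i^{(0)}\exp\{-\eta\sum_s [\Delta^{(s)}]^+\}$ and only ask for the exponent to be $O(\poly(\kappa))$: with $T^\star = \Omega(d/\eta)$ the crude bound $\Delta^{(s)} = O(\kappa)$ yields $\eta\sum\Delta^{(s)} = \Omega(\kappa d) = \omega(\poly(\kappa))$.)

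The paper's proof takes a fundamentally different route that never splits the horizon. It directly bounds the \emph{total integrated positive drift} $\eta\sum_{t\in\set{T}}\Delta^{(t)}$ (where $\set{T}$ is the set of iterations with $\Delta^{(t)}>0$) by a constant $O(\kappa^4)$, uniformly over the whole range $t\le T_2$. The mechanism is a potential argument: it introduces the auxiliary sequence $\gamma_t$ with $\gamma_0 = \delta_+^{(0)} \le 2\kappa/d$, which contracts at rate $\eta/(4\kappa d)$ during iterations in $\set{T}$, then shows via Proposition~\ref{lem:zero_two} (equation~\eqref{eq:fsafjaoifjasfoiajsoiaj}) and Proposition~\ref{lem:zero_two2} that whenever $\Delta^{(t)}$ is above the noise floor one has $\Delta^{(t)} \le 8\kappa^2\gamma_t$. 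Because $\gamma_t$ contracts geometrically, $\eta\sum_{t\in\set{T}}\gamma_t \le 4\kappa d\cdot\gamma_0 + (\text{small additive}) = O(\kappa^2)$, whence $\eta\sum_{t\in\set{T}}\Delta^{(t)} = O(\kappa^4)$ --- equation~\eqref{eq:fajisofjasfiasjasfo}. Plugged into $p_i^{(T_0+1)}\ge p_i^{(0)}\exp\{-\eta\sum_{s\in\set{T}}\Delta^{(s)} - \eta^2\poly(1/\lambda_0)T_2\}$, this gives the $1/(\Kappa d)$ lower bound with $\Kappa = \exp(\poly(\kappa))$. The key idea you are missing is precisely that $\Delta^{(t)}$ is \emph{self-damping} (large $\Delta$ forces fast decrease of the auxiliary $\gamma_t$ which upper-bounds it), so its time-integral is bounded independently of the horizon length.

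One smaller remark: your treatment of the upper bound is circular as stated. The first part of equation~\eqref{eq_H0_2} is what the paper proves \emph{in} Claim~\ref{lem:zero_two4} (and then cites in the proof of Proposition~\ref{def_H0}), so you cannot invoke it as given. The paper's actual argument is that $\delta_+^{(t)} \leq \delta_+^{(0)} \leq \kappa/d$ (via the contraction in Proposition~\ref{lem:zero_two2}), hence $\E_{\cP^{(t)}}[w_i^2] \le a_i + \delta_+^{(t)}/b_2 \le 2\kappa/d$.
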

\begin{proof}%
  The upper bound follows from $\delta_+^{(t)} \leq \delta_+^{(0)} \leq \frac{\kappa}{d}$, so $\E_{\cP^{(t)}}[w_i^2] \leq a_i +\delta_+^{(t)} \leq  \frac{2 \kappa}{d}$.
	For the lower bound,  we will prove it by induction. Let us assume that the lower bounds hold for $t \leq T_0$ for some $T_0 < T_2$.
  For $t = T_0 + 1$, denote $\set{T}$ the iterations $t \le T_0 + 1$ where $\Delta^{(t)} > 0$, and $\set{T}^c$ be the other iterations.
  To show the lower bound, for $i \in [d]$, we know that when $\E_{\cP^{(t)}}[w_i^2]  \leq \frac{1}{2 \kappa d}$, by the update rule in Eq~\eqref{eq:aofahfoiashoafhaso}, we can conclude that:
	\begin{align}
		t \in \set{T}^c &\implies \E_{\cP^{(t + 1)}}[w_i^2]  \geq \E_{\cP^{(t)}}[w_i^2], \\
		t \in \set{T} &\implies \E_{\cP^{(t + 1)}}[w_i^2]  \geq \E_{\cP^{(t)}}[w_i^2] \left( 1  - \eta \Delta^{(t)} \right). \label{eq_A8_1}
	\end{align}
  Hence, we only need to consider  $t \in \set{T}$, for these iterations, by Proposition~\ref{lem:zero_two2} we know that
   \begin{align*}
 \delta_+^{(t + 1)} \leq \delta_+^{(t)} \left(1 - \eta  \frac{1}{4 \kappa d} \right) + \eta \frac{c_t \poly(\Kappa)}{d^3}.
 \end{align*}
	On the other hand by Proposition~\ref{lem:zero_two}, we have that  when $ \Delta^{(t)}  \geq \max \left\{8 \kappa^2  \delta_+^{(t)},  \frac{c_t \poly(\Kappa)}{d^2} \right\}$ , it holds that:
\begin{align*}
\Delta^{(t + 1)} \leq  \Delta^{(t)} - \eta \frac{1}{4 \kappa d} \Delta^{(t)} | \set{S}^{(t)}_+|   \leq \Delta^{(t)} - \eta \frac{1 }{4 \kappa d} \Delta^{(t)}.
\end{align*}
  Now, let us define $\gamma_0 = \delta_+^{0}  \leq \frac{2 \kappa}{d}$, with $\gamma_{t + 1} = \gamma_t  \left(1 - \eta  \frac{1}{4 \kappa d} \right) + \eta \frac{c_t \poly(\Kappa)}{d^3} $ for every $t \in \set{T}$ and $\gamma_{t + 1} = \gamma_t  + \eta \frac{c_t \poly(\Kappa)}{d^3}$ otherwise.  We know that as long as  $ \Delta^{(t)}  \geq \frac{c_t \poly(\Kappa)}{d^2} $, we have:
 \begin{align*}
 \Delta^{(t)}  \leq 8 \kappa^2 \gamma_t.
 \end{align*}
 This implies that
 \begin{align}
		\eta  \sum_{t \in \set{T}} \Delta^{(t)} & \leq \eta \sum_{t \in \set{T}} \left( 8 \kappa^2 \gamma_t + \frac{c_t \poly(\kappa_1)}{d^2} \right) \leq  \eta \frac{c_{T_2} \poly(\Kappa)}{d^2} \times T_2 +   \eta 8 \kappa^2 \sum_{t \in \set{T}}  \gamma_t \nonumber \\
		& \leq 1 +    8 \kappa^2 \left( 4 \kappa d \right) \left( \gamma_0  +  \eta \frac{c_{T_2} \poly(\Kappa)}{d^3} \times T_2   \right) \leq 2 + 64 \kappa^4 \label{eq:fajisofjasfiasjasfo}
 \end{align}
 Using equation~\eqref{eq_A8_1},  we have that
 \begin{align*}
	  \E_{\cP^{(T_0 + 1 )}}[w_i^2]  \geq   \E_{\cP^{(0 )}}[w_i^2]  \exp \left\{ - \eta \sum_{s \in \set{T}}\Delta^{(s)} - \eta^2 \poly(1/\lambda_0) T_2 \right\}.
 \end{align*}
 Together with Eq~\eqref{eq:fajisofjasfiasjasfo} we conclude that $\E_{\cP^{(T_0 + 1)}}[w_i^2] \geq \frac{1}{\Kappa d}$. By induction we complete the proof.
\end{proof}

Based on Claim \ref{lem:zero_two4}, we prove Lemma \ref{lem:zero_two3}.
\begin{proof}[Proof of Lemma~\ref{lem:zero_two3}]
	Clearly, by Proposition~\ref{lem:zero_two2} once $ \delta_+^{(t)}$ or $ \delta_-^{(t)} \leq  \frac{c_t \poly(\Kappa)}{d^2} $, they will stay within the interval for the next iterations. By Proposition~\ref{lem:zero_two}, after both $ \delta_+^{(t)}$ and $ \delta_-^{(t)} \leq  \frac{c_t \poly(\Kappa)}{d^2} $, we know $\Delta^{(t)}$ will be within the interval as well.

	Hence, we just need to consider the first time that $ \delta_+^{(t)}$ and $ \delta_-^{(t)}$ goes outside the interval.
	Following Proposition~\ref{lem:zero_two2}, we know that when $ \delta_+^{(t)} \geq \frac{c_t \poly(\Kappa)}{d^2} $, it holds:
	\begin{align*}
    \delta_+^{(t + 1)} &\leq \delta_+^{(t)} \left(1 -\eta  \frac{1}{ \poly(\Kappa) d} \right),
  \end{align*}
  which gives the convergence error rate of $\delta_+$ after $T_1$ iterations. The same holds for $  \delta_-^{(t)}$.
  \end{proof}

Finally, we have an estimate of how big each coordinate is for the neurons at the end of Stage 1.1, which can be given by the output layer weights $\{a_i\}_{i=1}^d$.
We show the following claim, which will be used in the proof of Stage 2.
\begin{claim}[The end of Stage 1.1]\label{lem:zero_two5}
	In the setting of Lemma~\ref{lem:zero_two3}, at iteration $T_1$ (recalling that $T_1 = \Theta(\frac{\poly(\kappa_1)d\log d}{\eta})$), for every  $v \in \set{S}_g$ and every $i \in [d]$, we have that
 \begin{align*}
		v_i^{(T_1)} &= (a_id) v_i^{(0)}  \pm \frac{\poly(\log d)}{d^{3/2}}.
 \end{align*}
 \end{claim}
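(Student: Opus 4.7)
The strategy is to view Stage 1.1 as a one-dimensional multiplicative recursion on each coordinate $v_i$ of each neuron $v\in\set{S}_g$, with the multiplicative factor determined globally by the population statistics $\Delta^{(t)}, \delta_i^{(t)}$ and only a small additive noise from the higher-order tensor gradients. This lets me factor $v_i^{(T_1)}$ as a common scalar times $v_i^{(0)}$ plus a well-controlled error.

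First, I would unroll the gradient update. Conditional symmetry of $\cP^{(t)}$ holds throughout Stage 1.1 (Claim 3.1 plus the construction of $\cP^{(0)}$ in Definition A.1), so by Claim 3.2 the per-step dynamic becomes
\[
v_i^{(t+1)} \;=\; v_i^{(t)}\bigl(1 - \eta(\Delta^{(t)} + \delta_i^{(t)})\bigr) \;-\; \eta\,[\nabla_{\ge 4, v^{(t)}}]_i .
\]
Iterating from $t=0$ to $T_1 - 1$ gives the exact decomposition
\[
v_i^{(T_1)} \;=\; M_i\, v_i^{(0)} \;-\; \eta\sum_{t=0}^{T_1-1}[\nabla_{\ge 4, v^{(t)}}]_i\prod_{s=t+1}^{T_1-1}\bigl(1-\eta(\Delta^{(s)}+\delta_i^{(s)})\bigr),\quad M_i := \prod_{t=0}^{T_1-1}\bigl(1-\eta(\Delta^{(t)}+\delta_i^{(t)})\bigr).
\]
The key point is that $M_i$ is the same scalar for every neuron; only the noise term depends on the individual particle $v$.

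Second, I would identify $M_i$ as $a_i d$ by matching boundary conditions on the second moment. The same multiplicative recursion governs $\E_{\cP^{(t)}}[w_i^2]$ up to $O_\kappa(c_t/d^3)$ error coming from $\nabla_{\ge 4}$ and from the outlier mass in inequality (A.4). Definition A.1 gives $\E_{\cP^{(0)}}[w_i^2] = 1/d \pm O(d^{-3/2})$, and Lemma A.2 gives $\E_{\cP^{(T_1)}}[w_i^2] = a_i \pm O_\kappa(c_{T_1}/d^2)$. Solving the recursion between these two boundary values pins down $M_i$ to the scaling asserted in the statement, with a slack of $O_\kappa(\poly\log(d)/d^{3/2})$ when multiplied against $|v_i^{(0)}| \le \poly\log(d)/\sqrt{d}$.

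Third, I would bound the accumulated noise. Combining the definition of $\set{S}_g$ with $\cH_1$ (Proposition A.1), I can argue $\|v^{(t)}\|_\infty^2, \|\bar v^{(t)}\|_\infty^2 \le \poly\log(d)/d$ throughout Stage 1.1, so Proposition A.4 yields $|[\nabla_{\ge 4, v^{(t)}}]_i| \le O_\kappa(\poly\log(d)/d^2)\cdot |v_i^{(t)}|$. Each intermediate product $\prod_{s>t}(1-\eta(\Delta^{(s)}+\delta_i^{(s)}))$ is $O_\kappa(1)$ in absolute value since the integrated drift $\eta\sum_t|\Delta^{(t)}+\delta_i^{(t)}|$ is $O_\kappa(1)$ by Propositions A.5, A.6 and Lemma A.2. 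With $\eta T_1 = \poly(\kappa_1)d\log d$, the total accumulated noise is at most $O_\kappa(\poly\log(d)/d)\cdot|v_i^{(0)}|\le\poly\log(d)/d^{3/2}$.

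The main obstacle will be uniformity of these bounds over all $v\in\set{S}_g$, not merely over the ``typical'' set $\set{S}$. For neurons in $\set{S}_g\setminus\set{S}$, the pointwise $\|v^{(t)}\|_\infty^2\le c_t/d$ bound of $\cH_1$ does not apply, and one must fall back on the truncation $\|v^{(t)}\|_2^2\le 1/\lambda_0$ together with the small total measure $1-\mu(\set{S})\le d^{-\Omega(C_0)}$ (inequality A.4) to show their impact on $\Delta^{(t)},\delta_i^{(t)}$ is negligible, while the individual identity for such $v$ is absorbed into the $\poly\log(d)/d^{3/2}$ error by choosing $C_0$ sufficiently large. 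A careful joint bookkeeping of the individual recursion and the population drift is what makes the final error term match the stated tolerance.
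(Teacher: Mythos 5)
Your proposal is essentially the paper's own argument in different notation: the paper tracks the conserved ratio $(v_i^{(t)})^2/\E_{\cP^{(t)}}[w_i^2]$ (Eq.~\eqref{eq:Bnoaidhfaoiuhas}), which is exactly your statement that every coordinate is driven by the common multiplicative factor $M_i$ from the 0th/2nd-order gradients, identified by matching the population second moment between $1/d$ at initialization and $a_i$ at $T_1$ via Claim~\ref{lem:zero_two4} and Lemma~\ref{lem:zero_two3}, with the higher-order gradients controlled by Proposition~\ref{lem:four_plus_upper} as small additive noise. The only stylistic difference is that the paper obtains uniformity over all of $\set{S}_g$ directly from this ratio bound (which itself prevents $\|v^{(t)}\|_\infty$ from growing by more than an $O_\kappa(1)$ factor during Stage~1.1) rather than through a separate measure/truncation argument, but this does not change the substance of the proof.
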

\begin{proof}%
 Let us first show the upper bound. For every $v^{(0)} \in \set{S}_g$. By the update rule, we have that
  \begin{align*}
 v_i^{(t + 1)} = v_i^{(t)} - \eta (\Delta^{(t)} + \delta_i^{(t)} ) v_i^{(t)} - \eta [\nabla_{\geq 4, v}^{(t)}]_i.
 \end{align*}
Using Proposition~\ref{lem:four_plus_upper}, we have that:
\begin{align*}
 \left(v_i^{(t + 1)} \right)^2 & \leq  \left(v_i^{(t)} \right)^2 - 2 \eta (\Delta^{(t)} + \delta_i^{(t)} ) \left( v_i^{(t)} \right)^2 + \eta [\nabla_{\geq 4, v}^{(t)}]_i v_i^{(t)} + \eta^2 O\left( \frac{1}{\lambda_0^2} \right)
 \\
 & \leq \left(v_i^{(t)} \right)^2 \left( 1 - 2 \eta   (\Delta^{(t)} + \delta_i^{(t)} ) \right) + \eta  \frac{C}{4} \left( \frac{c_t \kappa }{d^2} + \frac{\kappa}{d} \| v^{(t)}\|_{\infty} \|\bar{v}^{(t)} \|_{\infty}  \right) |v_i^{(t)}|^2  + \eta^2 O\left( \frac{1}{\lambda_0^2} \right).
 \end{align*}
 On the other hand, we have that by Eq~\eqref{eq:aofahfoiashoafhaso}, it holds:
	\begin{align*}%
		\E_{\cP^{(t + 1)}} [w_i^2] \geq (1 -  2\eta (\Delta^{(t)} + \delta_i^{(t)} ) )\E_{\cP^{(t)}} [w_i^2] - \left( 3\eta C \frac{c_t \kappa^2}{d^3} \right)
	\end{align*}
Using Claim~\ref{lem:zero_two4}, we have that $ \E_{\cP^{(t)}}[w_i^2] \geq \frac{1}{\Kappa d}$, which implies
 \begin{align*}
		\frac{ (v^{(t + 1)}_i )^2 }{\E_{\cP^{(t + 1)}}[ w_i^2]} \leq \frac{ (v^{(t )}_i )^2 }{\E_{\cP^{(t)}}[ w_i^2]} + \eta \left( 10   C \frac{c_t \kappa^2 \Kappa }{d^2} +  C\left( \frac{c_t \kappa \Kappa }{d} + \kappa \Kappa \| v^{(t)}\|_{\infty} \|\bar{v}^{(t)} \|_{\infty}  \right) |v_i^{(t)}|^2    \right).
 \end{align*}
 Similarly, we also have that
\begin{align*}
	\frac{ (v^{(t + 1)}_i )^2 }{\E_{\cP^{(t + 1)}}[ w_i^2]} \geq \frac{ (v^{(t )}_i )^2 }{\E_{\cP^{(t)}}[ w_i^2]} - \eta \left( 10   C \frac{c_t \kappa^2 \Kappa }{d^2} +  C\left( \frac{c_t \kappa \Kappa }{d} + \kappa \Kappa \| v^{(t)}\|_{\infty} \|\bar{v}^{(t)} \|_{\infty}  \right) |v_i^{(t)}|^2    \right).
\end{align*}
Using these two inequalities, with the conclude $\E_{\cP^{(t)}}[w_i^2] \in \left[  \frac{1}{\Kappa d}, \frac{2 \kappa }{d} \right]$ in Claim~\ref{lem:zero_two4} and the assumption about initialization of $\set{S}_g$, which says that for every $v \in \set{S}_g$, $\| \bar{v}^{(0)} \|_{\infty}, \| v^{(0)} \|_{\infty} \leq \frac{\poly(\log d)}{d}$, we can conclude that for every $v^{(0)} \in \set{S}_g$, we have for every $t \in [T_1]$,
\begin{align}\label{eq:Bnoaidhfaoiuhas}
	\frac{ (v^{(t )}_i )^2 }{\E_{\cP^{(t)}}[ w_i^2]} = \frac{ (v^{(0)}_i )^2 }{\E_{\cP_{0 }}[ w_i^2]} \pm \frac{\poly(\log d)}{d}.
 \end{align}
Note that $v^{(t )}_i$ do not change sign during the gradient process (otherwise $v^{(t )}_i$ will be close to zero, violating the above inequality). Since $\E_{\cP_{0 }}[ w_i^2] = \frac{1}{d}$ and by Lemma~\ref{lem:zero_two3}, $\left| \E_{\cP^{(T_1)}}[ w_i^2] - a_i \right| \leq  \frac{c_t \poly(\Kappa)}{d^2}$.
This implies that
\begin{align*}
	v^{(T_1)}_i  = (a_id) v_i^{(0)} \pm \frac{\poly(\log d)}{d^{3/2}},
\end{align*}
which completes the proof.
\end{proof}

\subsection{Stage 1.2: Proof of Convergence for Higher Order Tensors}

In this section, we prove Lemma \ref{lem:stage_1_final}, which shows that by the end of Stage 1, a small fraction of neurons have won the lottery ticket by growing much larger than a typical neuron.
This stage runs for approximately $\Theta_{\kappa}(\frac{d^2}{\eta \log d})$ many iterations (or $T_2 - T_1$ more precisely).
The proof of Lemma \ref{lem:stage_1_final} is organized as follows.
\begin{itemize}
	\item First, in Proposition \ref{lem:four_plus_interval}, we show that the dynamic is mainly determined by the 4th order gradients by bounding the gradients contributed by the 0th and 2nd order losses so that, as described in Section \ref{sec_stage1}. Based on this result, we can relate the dynamic of this substage to tensor power method.
	\item Second, we provide a lower bound on the norm of every neuron in Claim \ref{lem:grow_4}. Based on this result, we prove Claim \ref{claim:grow_2} that shows the growth of good neurons. This leads to the proof of Lemma \ref{lem:stage_1_final} in Section \ref{sec_proof_lemma_A2}.
	\item Finally, we prove the inductive hypothesis $\cH_1$ in Section \ref{app_proof_H0}.
\end{itemize}

We describe the following proposition to bound the gradients of 4th or higher tensors.
\begin{proposition}[Gradient bound for Stage 1.2]\label{lem:four_plus_interval}
	In the setting of Lemma \ref{lem:stage_1_final}, suppose that Proposition \ref{def_H0} holds.
	Consider any iteration $t \in [T_1 + 1, T_2]$ and any neuron $v\in\set{S}_g$.
	Suppose that for every $s \le t$, $\| v^{(s)} \|_{\infty}\leq \frac{\poly(\log d)}{\sqrt{d}} $.
	Then for every $i \in [d]$, the gradient of $v$ at iteration $t$ satisfies
	\begin{align*}
		-[\nabla_{v}]_i &=  B_{1,4}  a_i \langle e_i, v^{(t)} \rangle \langle e_i, \bar{v}^{(t)} \rangle^{ 2}   \pm \frac{c_t \poly(\Kappa)}{d^2} |v_i^{(t)}|.
	\end{align*}
\end{proposition}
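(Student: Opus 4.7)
My plan is to decompose the gradient by tensor order, carry the fourth-order signal $B_{1,4} a_i v_i^{(t)} \bar v_i^{(t)\,2}$ explicitly, and absorb every other contribution into the error term $c_t \poly(\Kappa)/d^2 \cdot |v_i^{(t)}|$. Conceptually this is a refinement of Proposition~\ref{lem:four_plus_upper}: the same bulk-plus-outlier splitting of $\cP$ is used to bound the $\cP$-expectations, but the dominant $j=2$ signal is preserved rather than bounded.

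First I would apply Claim~\ref{claim:symmetry} and equation~\eqref{eq:def02} to write
\[
  -[\nabla_v]_i \;=\; -(\Delta^{(t)}+\delta_i^{(t)})\,v_i^{(t)} \;-\; \sum_{j\ge 2}[\nabla_{2j,v}]_i,
\]
using $\sum_i a_i = 1$ to fold $[\nabla_{0,v}]_i$ into $\Delta^{(t)} v_i^{(t)}$. Since $t > T_1$, Lemma~\ref{lem:zero_two3} bounds $|\Delta^{(t)}|$ and $|\delta_i^{(t)}|$ by $c_t \poly(\Kappa)/d^2$, which absorbs the 0th- and 2nd-order contribution into the error term.

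For the $j=2$ term, equation~\eqref{eq:def04} produces exactly the claimed signal $B_{1,4} a_i v_i^{(t)} \bar v_i^{(t)\,2}$ plus a $\cP$-averaged remainder. I would bound this remainder by reusing the template of Proposition~\ref{lem:four_plus_upper}: split $\cP = (\cP \mid w^{(0)} \in \set{S}) \cup (\cP \mid w^{(0)} \notin \set{S})$. On $\set{S}$, the inductive hypothesis $\set{H}_1$ supplies $\|\bar w^{(t)}\|_\infty^2 \le c_t/d$ and $\E w_i^{(t)\,2} \le 2\kappa/d$, which together force each remainder piece (of the forms $\E[w_i^2\langle \bar w^2,\bar v^2\rangle]\,|v_i|$, $\E[\|w\|_2^2\langle \bar w^2,\bar v^2\rangle^2]\,|v_i|$, and $\sum_r a_r v_r \bar v_r^3 \bar v_i$) to size $O(c_t\kappa/d^2)\cdot|v_i|$. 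On $\set{S}_g \setminus \set{S}$ the gradient truncation gives $\|w^{(t)}\|_2^2 \le 1/\lambda_0$, and combined with $\mu(\set{S}_g\setminus \set{S}) \le d^{-\Omega(C_0)}$ this contributes only $1/\poly(d)\cdot|v_i|$ via inequality~\eqref{lem:norm_bound}.

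For $j\ge 3$ I would bound the signal and the $\cP$-noise separately, exploiting the extra powers of $\bar v_i^2$. The pointwise hypothesis $\|v^{(s)}\|_\infty^2 \le \poly\log(d)/d$, together with a lower bound $\|v^{(t)}\|_2^2 \ge 1/\poly(\Kappa)$ carried as an auxiliary inductive statement (see below), yields $\bar v_i^{(t)\,2} \le \poly(\Kappa)\log(d)/d$. Hence for each $j\ge 3$,
\[
  a_i\,|v_i|\,\bar v_i^{2j-2} \;\le\; \frac{\kappa}{d}\,|v_i|\,\Bigl(\frac{\poly(\Kappa)\log d}{d}\Bigr)^{j-1},
\]
and weighting by $B_{1,2j} = \Theta(1/j^2)$ and summing gives $O(\poly(\Kappa)\log^2(d)/d^3)\cdot |v_i|$, safely inside the allowed error. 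The $\cP$-averaged noise for $j\ge 3$ is controlled by the same bulk/outlier split: each additional factor $\langle \bar w^2, \bar v^2\rangle$ contributes at most $c_t\poly(\Kappa)/d$ on $\set{S}$, so the geometric series in $j$ converges into the target bound. Collecting all orders yields $-[\nabla_v]_i = B_{1,4}\,a_i v_i \bar v_i^2 \pm c_t\poly(\Kappa)/d^2\cdot|v_i|$.

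The main obstacle will be securing the pointwise lower bound $\|v^{(t)}\|_2^2 \ge 1/\poly(\Kappa)$ throughout $[T_1{+}1,T_2]$, which is what makes the $j\ge 3$ signals manifestly negligible. I plan to establish this as a side induction: at $t=T_1$, Claim~\ref{lem:zero_two5} gives $v_i^{(T_1)} = (a_id)\,v_i^{(0)} \pm \poly\log(d)/d^{3/2}$, so $\|v^{(T_1)}\|_2^2 = \Theta(1/\kappa^2)$ to $\Theta(\kappa^2)$ for any $v\in\set{S}_g$; the per-step multiplicative change from the 0th and 2nd order pieces is $1\pm \eta c_t\poly(\Kappa)/d^2$, and the higher-order correction to $\|v\|_2^2$ per step is $O(\eta c_t/d^2)\cdot\|v\|_2^2$ by the bound just derived, so over $T_2 - T_1 = O(d^2/(\eta\log d))$ iterations the drift is at most a $\poly\log(d)$ multiplicative factor, preserving $\|v^{(t)}\|_2^2 \ge 1/\poly(\Kappa)$. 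The remaining technical work is simply to unwind the $\cP$-expectations in equation~\eqref{eq:def04} carefully so that each nominally cross-term vanishes under the conditionally-symmetric property and only the diagonal contributions (which have already been bounded above) remain.
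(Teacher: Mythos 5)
Your plan reproduces the correct overall structure (0th/2nd order absorbed via Lemma~\ref{lem:zero_two3}, $j=2$ carries the signal, $j\ge 3$ and the $\cP$-noise negligible), but it has a concrete gap in how it handles the one remainder term the paper singles out, namely the $j=2$ piece $b_4'\bigl(\sum_{r} a_r \langle e_r, v\rangle \langle e_r,\bar v\rangle^{3}\bigr)\bar v_i = b_4'\bigl(\sum_r a_r \bar v_r^4\bigr)v_i$. You list this among the ``remainder pieces'' that $\set{H}_1$ controls, but that is a confusion of roles: this term depends only on the query neuron $v$, not on any $w\sim\cP$, so the $\set{H}_1$ bound $\|\bar w\|_\infty^2\le c_t/d$ for $w\in\set{S}$ is irrelevant. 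For a general $v\in\set{S}_g$ satisfying only the hypothesis $\|v^{(s)}\|_\infty\le\poly\log(d)/\sqrt d$, the most you extract from that hypothesis plus your planned norm lower bound $\|v\|_2^2\ge 1/\poly(\Kappa)$ is $\|\bar v\|_4^4\le\|\bar v\|_\infty^2\le\poly(\Kappa)\,\poly\log(d)/d$. The power of $\log d$ hidden in that $\poly\log$ can be as large as $\log^{20} d$ (the proposition is invoked in Lemma~\ref{lem:stage_1_final} with thresholds up to $\log^{10}d/\sqrt d$), while $c_t\le \exp(\poly(\kappa))\log d$, so the resulting bound $\frac{\kappa}{d}\|\bar v\|_4^4\,|v_i|\lesssim\frac{\poly(\Kappa)\poly\log(d)}{d^2}|v_i|$ overshoots the claimed error $\frac{c_t\poly(\Kappa)}{d^2}|v_i|$ by a polylogarithmic factor. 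Later in the proposal you write $\bar v_i^2\le\poly(\Kappa)\log(d)/d$ with a single $\log$, which is not a consequence of the stated hypotheses; this understated exponent is where the gap hides.

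The missing ingredient, which the paper's proof makes explicit, is an additional \emph{sparsity} invariant maintained jointly with the norm lower bound: at most $O(\log d)$ coordinates of $v^{(t)}$ have $|v_i^{(t)}|^2\ge c_t\poly(\Kappa)/d$ (this is available at $T_1$ from the definition of $\set{S}_g$ together with Claim~\ref{lem:zero_two5}, and is propagated forward using the very gradient bound being proved, via the $\pm\eta\,c_t\poly(\Kappa)/d^2$ per-step drift on small coordinates). With that sparsity, one splits $\|v\|_4^4$ into the $O(\log d)$ large coordinates (each $\le\|v\|_\infty^4\le\poly\log^2(d)/d^2$, so their total is $\poly\log^3(d)/d^2\ll c_t/d$) and the remaining coordinates (each with $v_i^2\le c_t\poly(\Kappa)/d$, contributing at most $c_t\poly(\Kappa)\|v\|_2^2/d$), and then divides by $\|v\|_2^4\ge 1/\poly(\Kappa)$ to obtain $\|v\|_4^4/\|v\|_2^4\le c_t\poly(\Kappa)/d$. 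That is precisely the bound needed to close the $j=2$ remainder. Your auxiliary induction covers only the norm lower bound; you should add the sparsity statement and run the two-part induction exactly as the paper does. Once that is in place, your bounds for the $j\ge 3$ signal terms and for the $\cP$-noise (via the $\set{S}$ / $\set{S}_g\setminus\set{S}$ split and inequality~\eqref{lem:norm_bound}) are in the right spirit and match the paper's Proposition~\ref{lem:four_plus_upper}.
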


\begin{proof}%
The result mainly follows from combining  Proposition~\ref{lem:four_plus_upper} for the gradient coming from 4th and higher order losses with Proposition~\ref{lem:zero_two3} for the gradient of 0th and 2nd order losses.
The only remaining term is
\begin{align*}
 \left|  \left( \sum_{r} a_r\langle e_r, v^{(t)} \rangle \langle e_r, \bar{v}^{(t)} \rangle^{3}  \right)  \bar{v}_i^{(t)} \right| &\leq \frac{\kappa}{d} \left( \sum_{r} v_r^{(t)}( \bar{v}_r^{(t)} )^3 \right) |\bar{v}_i^{(t)}| \leq \frac{\kappa}{d} \frac{\| v^{(t)} \|_4^4}{\| v^{(t)}\|_2^4} |v_i^{(t)}|.
\end{align*}
Hence, we have
\begin{align}\label{eq:fjaoisfasjfia}
- [\nabla_{v^{(t)}}]_i &=  B_{1,4}  a_i \langle e_i, v^{(t)} \rangle \langle e_i, \bar{v}^{(t)} \rangle^{ 2}   \pm \frac{c_t \poly(\Kappa)}{d^2} |v_i^{(t)}| \pm \frac{\kappa}{d} \frac{\| v^{(t)} \|_4^4}{\| v^{(t)}\|_2^4} |v_i^{(t)}|.
\end{align}
By the definition of $\set{S}_g$, we know that at $T_1$ every $v \in \set{S}_g$ satisfies
\begin{align*}
 \| v^{(T_1)}\|_2^2 \geq \frac{1}{2 \kappa} \text{, and for at most $O(\log d)$ many  $i \in [d]$, } |v_i^{(T_1)} |^2 \geq \frac{\kappa \log d}{d}.
\end{align*}
We will maintain the following condition by induction.
\begin{align}\label{eq:Vahoifashfoiashf}
 \| v^{(t)}\|_2^2 \geq \frac{1}{4 \kappa} \text{, and for at most $O(\log d)$ many  $i \in [d]$, } |v_i^{(t)} |^2 \geq \frac{c_t \poly(\Kappa)}{d}.
\end{align}
Now suppose the following is true  at some iteration $t \geq T_1$, then we have that
\begin{align*}
 \frac{\| v^{(t)} \|_4^4}{\| v^{(t)}\|_2^4} \leq \frac{c_t \poly(\Kappa)}{d}.
\end{align*}
Thus,  for iteration $t + 1$, using Eq~\eqref{eq:fjaoisfasjfia} we know that
\begin{align*}
( v_{i}^{(t  + 1)})^2 = ( v_{i}^{(t  )})^2 + \eta B_{1,4}  a_i  |v_i^{(t)}|^2 \langle e_i, \bar{v}^{(t)} \rangle^{ 2}   \pm \frac{c_t \poly(\Kappa)}{d^2} |v_i^{(t)}|^2 \pm \eta^2 O\left( \frac{1}{\lambda_0^2} \right).
\end{align*}
Hence, we have that for every $i$ with $|v^{(t)}_i|^2 \leq \frac{\poly(\Kappa ) c_t}{d}$, it holds that
\begin{align}\label{eq:fajoiashfaifhas}
( v_{i}^{(t  + 1)})^2  =  ( v_{i}^{(t  )})^2 \left( 1 \pm \eta \frac{c_t \poly(\Kappa)}{d^2} \right).
\end{align}
Hence for every $t \leq T_2$,  as long as $|v^{(T_1)}_i|^2 \leq \frac{\poly(\Kappa ) c_t}{d}$, we have:
\begin{align*}
( v_{i}^{(t  + 1)})^2  \in  \left[ \frac{2}{3}( v_{i}^{(T_1 )})^2 ,  \frac{3}{2}( v_{i}^{(T_1 )})^2 \right].
\end{align*}
This proves inequality~\eqref{eq:Vahoifashfoiashf} for $t + 1$.
\end{proof}

Next, we use the following claim to maintain a lower bound on the norm of each neuron.
\begin{claim}[Norm lower bound for Stage 1.2] \label{lem:grow_4}
	In the setting of Lemma \ref{lem:stage_1_final}, suppose Proposition \ref{def_H0} holds.
	For every $v \in \set{S}_g$, the norm of $v$ at any iteration $t \in [T_1 + 1, T_2]$ satisfies $\| v^{(t)} \|_2^2 \ge \Omega \left( \frac{1}{\kappa} \right)$.
\end{claim}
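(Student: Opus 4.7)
The plan is to establish the lower bound $\|v^{(T_1)}\|_2^2 \geq \Omega(1/\kappa)$ at the end of Stage 1.1, and then propagate it through the $T_2 - T_1$ iterations of Stage 1.2 using the fact that the dominant ($4$th order) signal in the gradient is non-negative in the $\langle v, \nabla_v\rangle$ direction, while the remaining contributions are quantitatively small enough under the choice of $T_2$.

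For the initial bound at $t = T_1$, I would apply Claim \ref{lem:zero_two5}, which gives $v_i^{(T_1)} = (a_i d) v_i^{(0)} \pm \poly\log(d)/d^{3/2}$ for every $v \in \set{S}_g$. Squaring and summing, and using Cauchy--Schwarz to control the cross terms, one gets $\|v^{(T_1)}\|_2^2 = \sum_{i=1}^d (a_i d)^2 (v_i^{(0)})^2 \pm O(\poly\log(d)/d)$. Since $a_i d \geq 1/\kappa$, we have $(a_i d)^2 \geq (a_i d)/\kappa$, so
\[\sum_{i=1}^d (a_i d)^2 (v_i^{(0)})^2 \;\geq\; \frac{1}{\kappa} \sum_{i=1}^d (a_i d) (v_i^{(0)})^2 \;\geq\; \frac{1-o(1)}{\kappa},\]
where the last step uses the $\set{S}_g$ condition in Definition \ref{defn:true} that $\sum_i a_i d \cdot w_i^2 \in [1 - o(1), 1 + o(1)]$. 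This yields $\|v^{(T_1)}\|_2^2 \geq 1/(2\kappa)$.

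For the inductive step on $t \in [T_1, T_2)$, I would expand
\[\|v^{(t+1)}\|_2^2 \;=\; \|v^{(t)}\|_2^2 \;-\; 2\eta \langle v^{(t)}, \nabla_{v}\rangle \;+\; \eta^2 \|\nabla_{v}\|_2^2,\]
decomposing $\nabla_v = \nabla_{0,v} + \nabla_{2,v} + \nabla_{\geq 4, v}$. Using $[\nabla_{0,v}]_i + [\nabla_{2,v}]_i = (\Delta^{(t)} + \delta_i^{(t)}) v_i^{(t)}$ from Claim \ref{claim:symmetry}, and Lemma \ref{lem:zero_two3} giving $|\Delta^{(t)}|, |\delta_i^{(t)}| \leq c_t \poly(\Kappa)/d^2$ for $t \geq T_1$, the $0$th/$2$nd order contribution to $\langle v^{(t)}, \nabla_v\rangle$ is bounded in absolute value by $O(c_t \poly(\Kappa)/d^2) \|v^{(t)}\|_2^2$. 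For the $\geq 4$ part, the signal (via the formula in the proof of Proposition \ref{lem:four_plus_interval}) contributes $-\langle v^{(t)}, \nabla_{\geq 4, v}\rangle \supseteq B_{1,4} \sum_i a_i (v_i^{(t)})^4/\|v^{(t)}\|_2^2 \geq 0$, which \emph{increases} the norm, while the remaining error is bounded by Proposition \ref{lem:four_plus_upper} as $O((c_t \kappa/d^2 + (\kappa/d)\|\bar v^{(t)}\|_\infty^2) \|v^{(t)}\|_2^2)$. Combined with the gradient truncation $\|\nabla_v\|_2 \leq O(1/\lambda_0)$, this gives
\[\|v^{(t+1)}\|_2^2 \;\geq\; \|v^{(t)}\|_2^2 \Bigl(1 - \eta \cdot O\bigl(c_t \poly(\Kappa)/d^2\bigr)\Bigr) \;-\; O(\eta^2/\lambda_0^2).\]

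Iterating this bound over $T_2 - T_1 \leq T_2 = \Theta(d^2/(\eta c_{T_2} \exp(\poly(\kappa))))$ iterations (cf.\ Proposition \ref{def_H0}), the multiplicative factor accumulates to $\exp(-\sum_{t} \eta c_t \poly(\Kappa)/d^2) \geq \exp(-O(1))$, and the additive term, with $\eta \leq \lambda_0^2$, contributes at most $O(\eta T_2/\lambda_0^2) = o(1/\kappa)$. Thus $\|v^{(t)}\|_2^2 \geq e^{-O(1)} \cdot 1/(2\kappa) - o(1/\kappa) = \Omega(1/\kappa)$ throughout Stage 1.2. The main subtlety is lottery-winning neurons where $\|\bar v^{(t)}\|_\infty^2$ may be $\Omega(1)$ rather than $O(c_t/d)$; however for such neurons the $4$th order signal is strongly positive at the winning coordinate, so $\|v^{(t)}\|_2^2$ is actually monotonically growing (up to the tiny $\eta^2/\lambda_0^2$ additive slack), and the lower bound holds trivially.
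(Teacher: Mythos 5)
Your high-level strategy matches the paper's (initial lower bound at $T_1$ plus multiplicative stability through Stage 1.2), and your derivation of $\|v^{(T_1)}\|_2^2 \ge 1/(2\kappa)$ via Claim~\ref{lem:zero_two5} and the $\set{S}_g$ condition is a legitimate, fully spelled-out version of what the paper only gestures at ("by the definition of $\set{S}_g$..."). However, your handling of the $\ge 4$ order contribution has a genuine gap. You extract only the $B_{1,4}$ piece as the ``signal'' and then try to bound the remainder via Proposition~\ref{lem:four_plus_upper}. But that remainder contains the entire $-B_{2,2j}$ piece, which contracted with $v$ contributes roughly $-b_{2j}'\bigl(\sum_r a_r\langle e_r,\bar v\rangle^{2j}\bigr)\|v\|_2^2 = -\Theta\bigl(\tfrac{\kappa}{d}\|\bar v\|_\infty^2\bigr)\|v\|_2^2$ (cf.\ equation~\eqref{eq:vjaoisfjiai4}). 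This is not a $\tfrac{c_t\poly(\Kappa)}{d^2}$-order error: when $\|\bar v\|_\infty^2 = \Omega(1)$ the per-step multiplicative decay becomes $1-\Theta(\eta\kappa/d)$, which over $T_2 - T_1 = \Theta(d^2/(\eta\log d))$ iterations would annihilate the norm. You recognize this in your last sentence, but ``the 4th order signal is strongly positive'' is precisely the claim that needs proving, and nothing in your argument establishes it during the transition regime where $\|\bar v\|_\infty^2$ climbs from $O(c_t/d)$ to $\Omega(1)$.

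The paper closes this gap cleanly with the algebraic identity in equation~\eqref{eq:bhoaifhaosifhasif}: writing $-[\nabla_v]_p = v_p Q_p^{(t)} \pm \tfrac{c_t\poly(\Kappa)}{d^2}v_p$ and summing,
\begin{align*}
\sum_{p}\bigl(v_p^{(t)}\bigr)^2 Q_p^{(t)} \;=\; \sum_{j\geq 2}\frac{1}{\|v^{(t)}\|_2^{2j-2}}\bigl(B_{1,2j} - B_{2,2j}\bigr)\sum_{r} a_r \bigl(v_r^{(t)}\bigr)^{2j} \;\geq\; 0,
\end{align*}
which holds \emph{exactly} and for \emph{every} neuron because $B_{1,2j} = b_{2j} + b_{2j}' > b_{2j}' = B_{2,2j}$. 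That is, the $B_{1,2j}$ and $-B_{2,2j}$ pieces are not treated as signal versus error; their sum is shown to be non-negative after contracting with $v$, so the only genuine error is the $\pm\tfrac{c_t\poly(\Kappa)}{d^2}v_p$ remainder (which absorbs the $0$th/$2$nd order gradients and cross-neuron terms). This yields $\|v^{(t+1)}\|_2^2 \geq \|v^{(t)}\|_2^2\bigl(1 - \eta\tfrac{c_t\poly(\Kappa)}{d^2}\bigr)$ uniformly over all $v \in \set{S}_g$, with no case split on $\|\bar v\|_\infty$. You should replace your ``signal = $B_{1,4}$ term only, error = everything else'' split with this identity; your initial bound and the final iteration over $T_2$ steps then go through as you wrote them.
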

\begin{proof}%
By the update rule, using Proposition~\ref{lem:four_plus_upper} we know that for every $p \in [d]$:
\begin{align}
- [\nabla_{v^{(t)}}]_p  &= \sum_{j \geq 2} \left( B_{1,2j}  a_p \langle e_p, v^{(t)} \rangle \langle e_p, \bar{v}^{(t)} \rangle^{ 2j - 2} - B_{2, 2j} \left( \sum_{r} a_r\langle e_r, v^{(t)} \rangle \langle e_r, \bar{v}^{(t)} \rangle^{2 j - 1}  \right)  \bar{v}_p^{(t)}  \right)
\nonumber\\
& \pm  \frac{c_t \poly(\Kappa)}{d^2}v^{(t)}_p
 \nonumber\\
& =  \sum_{j \geq 2} \left( B_{1,2j}  a_p \frac{(v_p^{(t)})^{2j - 1}}{\| v^{(t)}\|_2^{2j - 2}} - B_{2, 2j} \frac{ \sum_{r} a_r (v_r^{(t)})^{2j} }{\| v^{(t)} \|_2^{2j}} v_p^{(t)} \right) \pm  \frac{c_t \poly(\Kappa)}{d^2} v^{(t)}_p
 \nonumber\\
& = v_p^{(t)} Q_p^{(t)} \pm  \frac{c_t \poly(\Kappa)}{d^2}v^{(t)}_p, \label{eq:nbosdhofaiaho}
\end{align}
where
\begin{align} \label{eq:bhosihfaoashfiah}
Q_p^{(t)} &:=  \sum_{j \geq 2} \left( B_{1,2j}  a_p \frac{(v_p^{(t)})^{2j - 2}}{\| v^{(t)}\|_2^{2j - 2}} - B_{2, 2j} \frac{ \sum_{r} a_r (v_r^{(t)})^{2j} }{\| v^{(t)} \|_2^{2j}}  \right)
 \nonumber\\
&= \sum_{j \geq 2} \frac{1}{\| v^{(t)}\|_2^{2j - 2}} \left( B_{1,2j}  a_p(v_p^{(t)})^{2j - 2} - B_{2, 2j} \frac{ \sum_{r} a_r (v_r^{(t)})^{2j} }{\| v^{(t)} \|_2^{2}}  \right).
\end{align}
We have that
\begin{align}
\sum_{p}  \left(v_p^{(t)} \right)^2 Q_p^{(t)} &= \sum_{p }\sum_{j \geq 2} \frac{1}{\| v^{(t)}\|_2^{2j - 2}} \left( B_{1,2j}  a_p(v_p^{(t)})^{2j - 2} - B_{2, 2j} \frac{ \sum_{r} a_r (v_r^{(t)})^{2j} }{\| v^{(t)} \|_2^{2}}  \right)   \left(v_p^{(t)} \right)^2
 \nonumber\\
&=  \sum_{p }\sum_{j \geq 2} \frac{1}{\| v^{(t)}\|_2^{2j - 2}} \left( B_{1,2j}  a_p(v_p^{(t)})^{2j } - B_{2, 2j} \left( \sum_{r} a_r (v_r^{(t)})^{2j} \right)   \right)
 \nonumber\\
&= \sum_{j \geq 2}\frac{1}{\| v^{(t)}\|_2^{2j - 2}}  (B_{1,2j} - B_{2, 2j}) \left( \sum_{r} a_r (v_r^{(t)})^{2j} \right)  \geq 0 \label{eq:bhoaifhaosifhasif}
\end{align}
This implies the following
\begin{align}\label{eq:vjapvdajfapsfjasp}
\|v^{(t + 1)} \|_2^2 \geq \|v^{(t)} \|_2^2 \left( 1 - \eta \frac{c_t \poly(\Kappa)}{d^2}  \right).
\end{align}
Combined with Proposition~\ref{lem:four_plus_interval} , we have that for every neuron $v$, $\| v^{(t)} \|_2^2 = \Omega\left( \frac{1}{\kappa} \right)$  for every $t \in [T_1, T_2]$.
\end{proof}

\subsubsection{Proof of the Main Lemma}\label{sec_proof_lemma_A2}

Provided with the gradient bound and norm lower bound, we are now ready to prove the main result of Stage 1.2.
Towards showing Lemma \ref{lem:stage_1_final}, we prove the following claim, which shows that if a neuron has grown beyond $\frac{\poly\log(d)} {d}$ at a certain iteration $T_2'$, then this neuron will become basis-like at iteartion $T_2$.

\begin{claim}[Growth of good neurons] \label{claim:grow_2}
	In the setting of Lemma \ref{lem:stage_1_final}, suppose that Proposition \ref{def_H0} holds.
	For every $v\in\set{S}_g$, suppose at iteration $T_2'$ (recalling that $T_2' = T_2 - \frac{d^2}{\eta\poly\log(d)}$), only one coordinate $i \in [d]$ satisfies $|v_i^{(T_2')}| \geq \frac{\log^{10} d}{\sqrt{d}}$ and all the other coordinates satisfies  $|v_j^{(T_2')}| \leq \frac{(\log d)^2}{\sqrt{d}}$, then at iteration $T_2$, we have that %
\begin{align*}
	|v_i^{(T_2)}|^2  &= \Omega \left( \frac{1}{\lambda_0 \poly(d)} \right) \geq \poly(d), \text{ and for any other } j \neq i,  |v_j^{(T_2)}|  \leq \frac{2(\log d)^2}{\sqrt{d}}.
\end{align*}
\end{claim}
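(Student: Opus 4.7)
\medskip
\noindent\textbf{Proof plan for Claim \ref{claim:grow_2}.}

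The plan is to treat the $T_2 - T_2' = d^2/(\eta \poly\log d)$ iterations as two phases and analyze them with the simplified gradient formula from Proposition \ref{lem:four_plus_interval}. Throughout, use Claim \ref{lem:grow_4} to maintain $\|v^{(t)}\|_2^2 = \Omega(1/\kappa)$, so that for each coordinate $p$ the update can be written (while $\|v^{(s)}\|_\infty\leq\poly\log(d)/\sqrt d$ for $s\leq t$) as
\begin{align*}
  v_p^{(t+1)} \;=\; v_p^{(t)}\Bigl(1 + \eta B_{1,4}\,a_p\,\frac{(v_p^{(t)})^2}{\|v^{(t)}\|_2^2} \;\pm\; \eta\,\tfrac{c_t\poly(\Kappa)}{d^2}\Bigr),
\end{align*}
so $v_p^{(t)}$ never changes sign and $x_p^{(t)} := (v_p^{(t)})^2$ satisfies the discrete ``cubic'' ODE $x_p^{(t+1)} \approx x_p^{(t)} + 2\eta B_{1,4}a_p\,(x_p^{(t)})^2/\|v^{(t)}\|_2^2$ up to a multiplicative $(1\pm o(1))$ error coming from the $c_t\poly(\Kappa)/d^2$ term, which is negligible as long as $x_p^{(t)} \gg c_t\poly(\Kappa)/d$.

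\smallskip
\noindent\emph{Phase A (tensor-power blow-up of $v_i$ while $|v_i|$ is still small).}
Set $x = (v_i^{(t)})^2$. With $a_i = \Theta(1/d)$ and $\|v^{(t)}\|_2^2 = \Theta(1)$, the dynamic approximates $\dot x = C x^2/d$, whose solution starting at $x_0 \geq \log^{20}d/d$ blows up in time $O(d/(\eta x_0)) = O(d^2/(\eta \log^{20}d))$. Since this is comfortably shorter than the allowed window $d^2/(\eta \poly\log d)$ (with the $\poly\log$ chosen, say, as $\log^{10} d$), the iteration $\tau$ where $|v_i^{(\tau)}|$ first exceeds $\poly\log(d)/\sqrt d$ occurs well before $T_2$. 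During this phase the error term stays dominated because $x \geq \log^{20}d/d \gg c_t\poly(\Kappa)/d^2 \cdot d$.

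\smallskip
\noindent\emph{Phase B (small coordinates stay small throughout Phase A).}
For $j\neq i$, the same cubic ODE with starting value $y_0 \leq \log^4 d/d$ has blow-up time $\Theta(d^2/(\eta\log^4 d))$, which is a factor $\log^{16}d$ longer than the duration of Phase A. Integrating the ODE explicitly gives $y(t^*) \leq y_0/(1 - C\log^{-16}d) \leq (1+o(1))\,y_0$, so $|v_j^{(t)}| \leq (1+o(1))\log^2 d/\sqrt d \leq 2\log^2 d/\sqrt d$ at the end of Phase A. The additive contribution from the $c_t\poly(\Kappa)/d^2$ error over $O(d^2/\eta)$ steps is similarly controlled after a union bound over $j$.

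\smallskip
\noindent\emph{Phase C (exponential growth of $v_i$ until truncation).}
Once $|v_i^{(\tau)}|\gtrsim \poly\log(d)/\sqrt d$, Proposition \ref{lem:four_plus_interval} no longer applies directly, so switch to the bound of Proposition \ref{lem:four_plus_upper} (which only requires $v\in\set S_g$). Here $\|v\|_2^2$ quickly becomes dominated by $v_i^2$, so $\bar v_i^2 \to 1$, and the $4$th-order signal for coordinate $i$ yields $v_i^{(t+1)} = v_i^{(t)}(1 + \Theta(\eta/d))$, i.e.\ exponential growth with rate $\Theta(\eta/d)$. Since truncation caps the norm at $1/\lambda_0$, within $O(d\log d/\eta)$ further iterations one reaches $(v_i^{(T_2)})^2 \geq \Omega(1/(\lambda_0\poly(d)))$. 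Meanwhile, for $j\neq i$ the signal term becomes $a_j (v_j^{(t)})^2\bar v_j^2 \lesssim v_j^4/(d\,v_i^2)$, which is tiny, so $|v_j|$ still obeys $|v_j^{(T_2)}|\leq 2\log^2 d/\sqrt d$.

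\smallskip
\noindent\emph{Main obstacle.} The delicate point is handling the transition at iteration $\tau$ and tracking the accumulated $c_t\poly(\Kappa)/d^2$ error times $\Theta(d^2/\eta)$ iterations without blowing up $v_j$. The key observation that makes this work is that the error is \emph{multiplicative} in $|v_p^{(t)}|$, so it competes with the signal on equal footing, and the separation $x_0 \geq \log^{20}d/d \gg (\log d)^4/d \geq y_0$ between initial values of $v_i^2$ and $v_j^2$ (amplified by the cubic nonlinearity) ensures that $v_i$ wins the race long before $v_j$ has time to grow appreciably.
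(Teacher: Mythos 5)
Your proposal is in the same spirit as the paper (a tensor--power--method ODE for $v_i^2$ together with an induction showing the other coordinates stay small), but the three-phase decomposition introduces a gap that the paper's single-pass argument deliberately avoids.

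The central issue is the transition you flag as the ``main obstacle''. In Phase~A you invoke Proposition~\ref{lem:four_plus_interval}, whose hypothesis is that $\|v^{(s)}\|_\infty\le\poly\log(d)/\sqrt d$ for \emph{all} $s\le t$. The cubic dynamic pushes $v_i$ past this threshold while $v_i^2$ is still $\polylog(d)/d\ll 1$, at which point Phase~A's tool is gone but Phase~C's premise does not yet hold: you write ``$\|v\|_2^2$ quickly becomes dominated by $v_i^2$, so $\bar v_i^2\to 1$,'' yet at the handoff $v_i^2=\poly\log(d)^2/d$ while $\|v\|_2^2=\Theta_\kappa(1)$, so $\bar v_i^2$ is still $o(1)$. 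The regime $\poly\log(d)/d\lesssim v_i^2\lesssim 1$, which is precisely where most of the cubic blow-up happens, falls between your phases and is not analyzed. Furthermore, in Phase~C you cite Proposition~\ref{lem:four_plus_upper} for the growth rate, but that proposition is an \emph{upper} bound on $|[\nabla_{\ge 4,v}]_i|$; it cannot supply the lower bound $v_i^{(t+1)}\ge v_i^{(t)}(1+\Theta(\eta/d))$ you need. To actually extract the signal term you would have to write out the gradient explicitly, which is exactly the $Q_p$ decomposition in equations~\eqref{eq:nbosdhofaiaho}--\eqref{eq:bhosihfaoashfiah} from the proof of Claim~\ref{lem:grow_4}.

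The paper sidesteps all of this by working directly with that $Q_p$ decomposition, which is valid for every $v\in\set{S}_g$ with no smallness assumption on $\|v\|_\infty$. It then runs a single induction over $t\in[T_2',T_2]$ maintaining $|v_j^{(t)}|\le 2(\log d)^2/\sqrt d$ for $j\ne i$: the winner coordinate $i$ is the argmax of $a_p(v_p^{(t)})^{2j-2}$, so $Q_i^{(t)}=\Omega\bigl(a_i(v_i^{(t)})^2/\|v^{(t)}\|_2^2\bigr)\ge 0$ gives a one-sided lower bound on $v_i$'s growth in \emph{every} regime, while $Q_p^{(t)}\le O(\kappa^2\log^8 d/d^2)$ for $p\ne i$ keeps the other coordinates nearly frozen over the $d^2/(\eta\log^9 d)$ window. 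This avoids any phase transition and any appeal to $\bar v_i^2\approx 1$. If you rewrite your Phase~A and Phase~C to run off the $Q_p$ formula rather than Propositions~\ref{lem:four_plus_interval} and~\ref{lem:four_plus_upper}, the phases collapse into a single argument and the gap closes.
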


In other words, the claim says that for neuron $v$, its $i$-th coordinate at iteration $T_2$, denoted by $|v_i^{(T_2)}|$, will be as large as $\poly(d)$, which implies that this neuron has won the lottery.
We describe the proof of Claim \ref{claim:grow_2}.
\begin{proof}%
We shall prove the claim by doing an induction.
Consider the condition $|v_i^{(t)}| \geq \frac{\poly(\log d)}{\sqrt{d}}$ and all the other coordinates satisfies  $|v_j^{(t)}| \leq \frac{2 (\log d)^2}{\sqrt{d}}$ for  $t  \in [T_2',  T]$.
Suppose it is true up to iteration $t$, consider iteration $t+1$.
When $p = i$, we have that   $a_i(v_i^{(t)})^{2j - 2}  \geq   a_r(v_r^{(t)})^{2j - 2} $ for every $r \not= i$, hence  this implies that (using the fact that $B_{1, 2j} > B_{2, 2j}$ and $B_{1, 4}$ is greater than $B_{2, 4}$ plus a fixed constant):
\begin{align*}
Q_i^{(t)} = \Omega \left( \frac{a_i(v_i^{(t)})^2}{\| v^{(t)}\|_2^{ 2}} \right) &= \Omega \left( \frac{\poly(\log d)}{d^2}  +  \frac{a_i(v_i^{(t)})^2}{\| v^{(t)}\|_2^{ 2}}  \right),
\end{align*}
where $Q_i^{(t)}$ is defined in the proof of Claim~\ref{lem:grow_4}.
With equation~\eqref{eq:bhoaifhaosifhasif}, this implies that
\begin{align} %
(v^{(t + 1)}_i )^2  \geq (v^{(t )}_i )^2 \left( 1  + \eta \Omega \left( \frac{\poly(\log d)}{d^2}  +   \frac{a_i(v_i^{(t)})^2}{\| v^{(t)}\|_2^{ 2}} \right) \right),
\end{align}
which provides a direct the lower bound on $(v^{(t + 1)}_i )^2 $. Now, to show the upper bound of the other coordinates, recall that we have shown $\| v^{(t)} \|_2^2 = \Omega\left( \frac{1}{\kappa} \right)$ for every $t \in [T_1, T_2]$,
\begin{align*}
Q_p^{(t)} &\leq \sum_{j \geq 2} \frac{1}{\| v^{(t)}\|_2^{2j - 2}} \left( B_{1,2j}  a_p(v_p^{(t)})^{2j - 2} \right) = O \left(  \kappa^2 \frac{\log^8 d}{d^2}\right).
\end{align*}
This implies that
\begin{align*}
(v^{(t + 1)}_p )^2 & \leq (v^{(t )}_p )^2 \left( 1  + \eta C^2 O \left( \frac{\log^8 d}{d^2} \right) \right).
\end{align*}
Hence we prove all the other $p \not= i$ satisfies $|v_p^{(t)}| \leq \frac{2 (\log d)^2}{\sqrt{d}}$ as long as $T_2 - T_2' \leq \frac{d^2}{ \eta  \log^9 (d)}$, which complete the induction.
In the end, since $|v_i^{(t)}| \geq \frac{\poly(\log d)}{\sqrt{d}}$ and all the other coordinates satisfies  $|v_j^{(t)}| \leq \frac{2 (\log d)^2}{\sqrt{d}}$ for  every $t  \in [T_2',  T]$, we can further simplify Eq~\eqref{eq:upsahaxjhfiauh} as:
\begin{align} \label{eq:upsahaxjhfiauh}
(v^{(t + 1)}_i )^2  \geq (v^{(t )}_i )^2 \left( 1 +   \frac{\kappa (v_i^{(t)})^2}{d \log^5 d}  \right),
\end{align}
which directly gives us the bound $|v_i^{(T_2)}|^2  = \Omega \left( \frac{1}{\lambda_0 \poly(d)} \right) $ at iteration $T_2$.
\end{proof}

Now we are ready to prove Lemma \ref{lem:stage_1_final}.
We define the union of good neurons as
\begin{align*}
	\set{S}_{good} &= \left\{ v \in \set{S}_g \mid \exists i \in [d], [v^{(0)}]_i^2 \geq \Gamma_i  + \rho \text{ and all other } j: [v^{(0)}]_j^2 < \Gamma_j - \rho  \right\},
\end{align*}
where we recall that $\Gamma_i$ and $\rho$ have been defined before the statement of Lemma \ref{lem:stage_1_final}.
In the proof, we focus on the dynamic of a neuron $v$ until the point that $\| v \|_{\infty} \geq \frac{\poly(\log d)}{\sqrt{d}}$.
The key step is to track the dynamic via a tensor gradient update.

\begin{proof}[Proof of Lemma~\ref{lem:stage_1_final}]
	We focus on proving the following three statements.
	\begin{enumerate}
		\item For every $v \notin \set{S}_{pot} $, $\| v^{(t)} \|_{\infty} \geq \frac{\poly(\log d)}{\sqrt{d}}$ \textcolor{black}{never} happen for any $t \leq T_2$.
		\item  For every $v \in \set{S}_{good} $, $\| v^{(t)} \|_{\infty} \geq \frac{\poly(\log d)}{\sqrt{d}}$  \textcolor{black}{must} happen for some $t \leq T_2'$ and when it happens, the condition in Claim~\ref{claim:grow_2} meets for $i = \argmax_{j \in [d]} \{ v^{(0)}_j \}$.
		\item  For every $v \in  \set{S}_{pot} \backslash \set{S}_{bad} $,  $\| v^{(t)} \|_{\infty} \geq \frac{\poly(\log d)}{\sqrt{d}}$  \textcolor{black}{might} happen for some $t \leq T_2'$. If $\| v^{(t)} \|_{\infty} \geq \frac{\poly(\log d)}{\sqrt{d}}$ happens for some $t \leq T_2$, then the condition in Claim~\ref{claim:grow_2} meets for $i = \argmax_{j \in [d]} \{ v^{(0)}_j \}$.
	\end{enumerate}
	The first and second statement of Lemma \ref{lem:stage_1_final} follow by combining the above three statements and Claim \ref{claim:grow_2}.
	The third statement can be proved by standard anti-concentration inequalities for the Gaussian distribution.
	For the rest of the proof, we focus on proving the above three statements.
	We know by Proposition~\ref{lem:four_plus_interval} that when $\| v^{(t)} \|_{\infty} \leq \frac{\poly(\log d)}{\sqrt{d}}$ the update of $v^{(t)}$ at every iteration $t \in [T_1, T_2]$ is given by
	\begin{align*}
		- [\nabla_{v^{(t)}}]_i &=  B_{1,4}  a_i \langle e_i, v^{(t)} \rangle \langle e_i, \bar{v}^{(t)} \rangle^{ 2}   \pm \frac{c_t \poly(\Kappa)}{d^2} |v_i^{(t)}| \\
		&= B_{1,4} a_i \frac{(v^{(t)}_i)^3 }{ \| v^{(t)} \|_2^2}  \pm \frac{c_t \poly(\Kappa)}{d^2} |v_i^{(t)}|.
	\end{align*}
	For every $i$, consider a process where $p^{(T_1)}, q^{(T_1)} = v_i^{(T_1)}$, with
	\begin{align*}
		p^{(t + 1)} &= p^{(t)} +  \eta p^{(t)} \left(  B_{1,4}  a_i (p^{(t)})^2 +   \frac{c_t \poly(\Kappa)}{d^2}  \right), \\
		q^{(t + 1)} &= q^{(t)} +  \eta q^{(t)} \left(  B_{1,4}  a_i (q^{(t)})^2 -   \frac{c_t \poly(\Kappa)}{d^2}  \right).
\end{align*}
	Along with Eq~\eqref{eq:fjaoifajfaiasja}, we can see that for every $t$ where $\| v^{(t)} \|_{\infty} \leq \frac{\poly(\log d)}{\sqrt{d}}$,
	\begin{align*}
		|p^{(t )}|  &\geq   |v^{(t  )}_i| \times \max \left\{ 1, \frac{1}{\|  v^{(t  )} \|_2} \right\}, \\
		|q^{(t )}|  & \leq   |v^{(t  )}_i| \times \min \left\{ 1, \frac{1}{\|  v^{(t  )} \|_2} \right\}.
	\end{align*}
	To analyze this process, we introduce the following differential equation
	\begin{align*}
		\frac{dx(t)}{dt} = \tau_1 x^3, \quad x(0)^2 = \tau_2.
	\end{align*}
	The solution is given as $x^2(t) = \frac{1}{\frac{1}{\tau_2} - 2 \tau_1 t }$. Therefore, we can easily obtain that as long as $\rho = \Omega\left( \frac{c_t \poly(\Kappa)}{d^2} \right) $, when $\tau_1 = B_{1,4}  a_i$, $\eta 2 \tau_1 T_2' = \frac{1}{\tau_2}$ which implies that $\tau_2 = \frac{1}{\eta 2 \tau_1 T_2'} = \frac{1}{\eta 2(b_4 + b_4') a_i T_2'} $, we have that
	\begin{align*}
		|  v^{(T_1)}_i |^2 \geq \tau_2 + \rho  \implies |q^{(T_2')}| = + \infty.
	\end{align*}
	On the other hand,
	\begin{align*}
		| v^{(T_1)}_i |^2 \leq \tau_2 - \rho  \implies |p^{(T_2')}|^2  = O\left( \frac{\tau_2^2}{\rho} \right) = O\left( \frac{\log^{3} d}{d} \right).
	\end{align*}
	In the end, by Proposition~\ref{lem:zero_two4} and the definition of $\set{S}_g$ (Eq~\eqref{eq:nkvajbkfjaf}), we know that for every $v \in \set{S}_g$ and every $i \in [d]$, we have that
 \begin{align*}
		v_i^{(T_1)} &= (a_id) v_i^{(0)}  \pm \frac{\poly(\log d)}{d^{3/2}}.
 \end{align*}
	Putting into the definition of $\tau_2$ we complete the proof.
\end{proof}

In addition, we state the following claim that will be used in Appendix \ref{app_finite} for the error analysis.
\begin{claim}[Upper bound on gradient norm at the end of Stage 1] \label{lem:grow_5}
	In the setting of Theorem \ref{thm_inf}, at the first iteration $t$ where $\| v^{(t)} \|_2^2  > \frac{1}{\lambda_0}$, i.e. the threshold where gradients are truncated, we have that
	\begin{align*}
		\sum_{s=1}^{t-1} \| \bar{v}^{(s)} \|_{\infty}^2 \leq O \left( \frac{\poly(\Kappa) d \log \frac{1}{\lambda_0}}{\eta} \right).
	\end{align*}
\end{claim}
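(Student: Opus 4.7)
If $\|v^{(t)}\|_2^2\le 1/\lambda_0$ for every $t$ the claim is vacuous, so we may assume a first truncation time $t^*\le T_2$ exists for $v$. Proposition~\ref{def_H0} keeps $\|v^{(t)}\|_2^2=O_\kappa(1)$ for every $v\in\set{S}$, so the truncation can only be triggered by a neuron $v\in\set{S}_g\setminus\set{S}$; and the tensor-power-method argument underlying Lemma~\ref{lem:stage_1_final} and Claim~\ref{claim:grow_2} shows that any such $v$ has a single coordinate $i\in[d]$ that outgrows all the others. Concretely, there is an intermediate iteration $T^{\dagger}\le t^*$ after which the dominance condition $a_i(v_i^{(s)})^{2j-2}\ge a_r(v_r^{(s)})^{2j-2}$ holds for every $r\ne i$ and $j\ge 2$, and moreover $(v_i^{(T^{\dagger})})^2\ge \Omega(\log^{10}d/d)$. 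We split the target sum $\sum_{s=1}^{t^*-1}\|\bar v^{(s)}\|_\infty^2$ at $T^{\dagger}$ and control the early and late pieces separately.

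\textbf{Late phase via the dominant coordinate.} Write $m_s:=\|\bar v^{(s)}\|_\infty^2$. For $s\in[T^{\dagger},t^*-1]$, the dominance condition combined with $a_i\in[1/(\kappa d),\kappa/d]$ forces $(v_i^{(s)})^2\ge\|v^{(s)}\|_\infty^2/\kappa^2$ and $\sum_r a_r (v_r^{(s)})^{2j}\le a_i(v_i^{(s)})^{2j-2}\|v^{(s)}\|_2^2$ for every $j\ge 2$. Substituting this into the expression for $Q_i^{(s)}$ in~\eqref{eq:bhosihfaoashfiah} and keeping only the $j=2$ contribution,
\begin{align*}
Q_i^{(s)} \;\ge\; b_4\, a_i\, (v_i^{(s)})^2/\|v^{(s)}\|_2^2 \;\ge\; \frac{b_4}{\kappa^2}\, a_i\, m_s .
\end{align*}
Combining with Proposition~\ref{lem:four_plus_interval} and Taylor-expanding the squared update,
\begin{align*}
\log(v_i^{(s+1)})^2 - \log(v_i^{(s)})^2 \;\ge\; \frac{2\eta\, b_4}{\kappa^2}\, a_i\, m_s \;-\; O\!\bigl(\eta\, c_s\poly(\Kappa)/d^2\bigr) \;-\; O(\eta^2).
\end{align*}
Telescoping across $s\in[T^{\dagger},t^*-1]$, the left-hand side is bounded by $\log\bigl(\|v^{(t^*-1)}\|_2^2/(v_i^{(T^{\dagger})})^2\bigr)\le \log(1/\lambda_0)+O(\log d)$; the cumulative noise is at most $O(\eta T_2\poly(\Kappa)\log d/d^2)=O(\poly(\Kappa)\log d)$ using $c_s\le \exp(\poly(\kappa))\log d$ from $\cH_1$ together with $T_2\le d^2/\eta$, and is therefore absorbed into the principal term. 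Using $a_i\ge 1/(\kappa d)$ and rearranging gives $\sum_{s=T^{\dagger}}^{t^*-1}m_s = O(\poly(\Kappa)\, d\log(1/\lambda_0)/\eta)$.

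\textbf{Early phase and main obstacle.} For $s\le T^{\dagger}\le T_2$, the inductive hypothesis $\cH_1$ in Proposition~\ref{def_H0} yields $m_s\le c_s/d\le \exp(\poly(\kappa))\log d/d$; since $T_2=\Theta(d^2/(\eta\, c_0\exp(\poly(\kappa))))$, the early-phase contribution is at most $T_2\cdot \exp(\poly(\kappa))\log d/d = O(\poly(\Kappa) d/\eta)$, comfortably within the target. Summing the two phases completes the proof. The main subtlety is that the natural ``variance'' identity~\eqref{eq:bhoaifhaosifhasif} for $\|v^{(s)}\|_2^2$ only delivers a growth rate proportional to $m_s^2$ (the tight generic inequality $\|v\|_4^4\ge \|v\|_\infty^4$ has exactly this scaling), and a bound on $\sum m_s^2$ does \emph{not} imply a bound on $\sum m_s$ in the direction needed because $m_s\le 1$; the linear-in-$m_s$ rate required here is recoverable only by tracking the \emph{single coordinate} $(v_i^{(s)})^2$, whose update is genuinely $\Theta(m_s)$ rather than $\Theta(m_s^2)$ once the dominance condition kicks in.
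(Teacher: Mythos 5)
There is a genuine gap. Your late-phase argument is predicated on identifying a single coordinate $i$ for which the dominance condition $a_i(v_i^{(s)})^{2j-2}\ge a_r(v_r^{(s)})^{2j-2}$ holds for all $r\ne i$, all $j\ge 2$, and all $s\in[T^{\dagger},t^*-1]$, and you justify this by appealing to ``the tensor-power-method argument underlying Lemma~\ref{lem:stage_1_final} and Claim~\ref{claim:grow_2}.'' But Claim~\ref{claim:grow_2} only establishes this picture for neurons satisfying its hypothesis (one coordinate $\ge\log^{10}d/\sqrt d$, all others $\le(\log d)^2/\sqrt d$ at $T_2'$), i.e.\ essentially for $\set{S}_{i,good}$ or $\set{S}_{pot}\setminus\set{S}_{bad}$. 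A neuron in $\set{S}_{bad}$ can have two comparably large coordinates $i,j$ whose growth rates remain comparable all the way up to the truncation threshold, and for such a neuron the dominance condition is simply false for some $s$. Even when $a_i v_i^2>a_j v_j^2$, if $|v_i|<|v_j|$ the condition fails for large $j$; and without the dominance, the inequality $\sum_r a_r(v_r^{(s)})^{2j}\le a_i(v_i^{(s)})^{2j-2}\|v^{(s)}\|_2^2$ you use to lower-bound $Q_i^{(s)}$ does not hold, so $Q_i^{(s)}$ can even be negative and your telescoping sum can have negative increments, wrecking the bound.

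The paper's proof sidesteps this entirely by telescoping $\log\max_{r}\{a_r(v_r^{(t)})^2\}$ \emph{with a time-varying argmax} $p$. The quantity $\max_r\{a_r(v_r^{(t)})^2\}$ is well-defined at every step and, by the estimate around Eq.~\eqref{eq:bnasfajfoasjoas}, grows by a factor $(1+\eta\Omega(m_t/(\poly(\Kappa)d)))$ whenever $m_t\le 1/\poly(\Kappa)$ — this only uses that $p$ maximizes $a_r v_r^2$ (the $j=2$ dominance, which holds by definition of $p$), while the $j\ge 3$ terms are controlled in \emph{absolute value} by $O(\kappa^2\|\bar v\|_\infty^{2j-4})$ times the $j=2$ signal, which is small when $m_t\le 1/\poly(\Kappa)$. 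Because the max cannot shrink when the argmax changes, the telescoping survives regardless of coordinate-switching, and the $(*)$-iterations with $m_t>1/\poly(\Kappa)$ are counted separately via norm growth. Your coordinate-fixed version is correct in spirit for good neurons, but you need either to adopt the paper's moving-max potential or to separately argue that bad neurons never reach the truncation threshold (which the paper does not assume).

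A secondary issue: in the early phase you invoke $\cH_1$ (Proposition~\ref{def_H0}) to bound $m_s\le c_s/d$, but $\cH_1$ applies to neurons in $\set{S}$, and you have already argued that the neuron in question lies in $\set{S}_g\setminus\set{S}$. The bound you want at that point should instead come from the hypothesis of Proposition~\ref{lem:four_plus_interval} ($\|v^{(s)}\|_\infty\le\poly\log(d)/\sqrt d$ before the blow-up) combined with Claim~\ref{lem:grow_4} ($\|v^{(s)}\|_2^2\ge\Omega(1/\kappa)$); but note the resulting per-step bound has extra $\poly\log d$ factors that are not obviously absorbed into $\log(1/\lambda_0)$. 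The unified moving-max argument avoids this looseness because the early-phase log range is genuinely $O(\log\log d)$, not $O(\log(1/\lambda_0))$.
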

\begin{proof}%
When $ \| \bar{v}^{(t)} \|_{\infty}^2 \leq \frac{1}{\poly(\Kappa)}$, we have that for $p = \argmax_{r \in [d]} \{  a_r( v_r^{(t)})^2 \}$, the following holds
\begin{align*}
j = 2: \ &\frac{1}{\| v^{(t)}\|_2^{2j - 2}} \left( B_{1,2j}  a_p(v_p^{(t)})^{2j - 2} - B_{2, 2j} \frac{ \sum_{r} a_r (v_r^{(t)})^{2j} }{\| v^{(t)} \|_2^{2}}  \right)
 \\
& = \Omega \left( \frac{a_p(v_p^{(t)})^2}{\| v^{(t)}\|_2^{ 2}} \right) = \Omega \left(\frac{1}{\poly(\Kappa) d} \| \bar{v}^{(t)} \|_{\infty}^2  \right).
 \end{align*}
 \begin{align*}
j \geq 2: \ &\frac{1}{\| v^{(t)}\|_2^{2j - 2}} \left( B_{1,2j}  a_p(v_p^{(t)})^{2j - 2} - B_{2, 2j} \frac{ \sum_{r} a_r (v_r^{(t)})^{2j} }{\| v^{(t)} \|_2^{2}}  \right)
 \\
& = O \left( \frac{a_p(v_p^{(t)})^2}{\| v^{(t)}\|_2^{ 2}} \right) \times \left( \kappa^2 \| \bar{v}^{(t)} \|_{\infty}^{2j - 4} \right).
 \end{align*}
The above implies that as long as $ \| \bar{v}^{(t)} \|_{\infty}^2 \leq \frac{1}{\poly(\Kappa)}$, we have:
\begin{align} \label{eq:bnasfajfoasjoas}
\max\{ a_r( v_r^{(t + 1)})^2 \} \geq \max\{ a_r( v_r^{(t )})^2 \}  \left( 1 +  \eta \Omega \left(  \frac{1}{\poly(\Kappa)d}\| \bar{v}^{(t)} \|_{\infty}^2  \right) \right).
\end{align}
After that, when $ \| \bar{v}^{(t)} \|_{\infty}^2 \geq \frac{1}{\poly(\Kappa)}$, we have that $ \| \bar{v}^{(t)} \|_{\infty}^4 \geq  \frac{1}{\poly(\Kappa)}$ as well, which implies
\begin{align*}
\frac{\sum_r a_r (v^{(t)}_r)^{4}}{\| v^{(t)}\|_2^{2}} \geq \frac{1}{\kappa d} \| \bar{v}^{(t)} \|_{\infty}^4 \geq \frac{1}{\poly(\Kappa) d}.
\end{align*}
Hence, as long as $ \| \bar{v}^{(t)} \|_{\infty}^2 \geq \frac{1}{\poly(\Kappa)}$, Eq~\eqref{eq:bhoaifhaosifhasif} implies that
\begin{align*}
(*): \|v^{(t + 1)} \|_2^2 \geq \|v^{(t)} \|_2^2 \left(  1 +  \eta\frac{1}{\poly(\Kappa) d} \right).
\end{align*}
On the other hand, we also have for every iteration, by Eq~\eqref{eq:vjapvdajfapsfjasp}:
\begin{align} \label{eq:fjaoifajfaiasja}
\|v^{(t + 1)} \|_2^2 \geq \|v^{(t)} \|_2^2 \left( 1 - \eta \frac{c_t \poly(\Kappa)}{d^2}  \right).
\end{align}
The above implies that $(*)$ can only happen for $\frac{\poly(\Kappa) d}{\eta} \log \frac{1}{\lambda_0}$ iterations until the norm of $v$ is too large and gradient clipping happens. For these iterations when  $ \| \bar{v}^{(t)} \|_{\infty}^2 \geq \frac{1}{\poly(\Kappa)}$, we can also easily see that
\begin{align*}
\max\{ a_r( v_r^{(t + 1)})^2 \} \geq \max\{ a_r( v_r^{(t )})^2 \}  \left( 1 - \eta O\left( \frac{ \kappa}{d} \right) \right).
\end{align*}
For all the other iterations when  $ \| \bar{v}^{(t)} \|_{\infty}^2 \leq \frac{1}{\poly(\Kappa)}$, we have Eq~\eqref{eq:bnasfajfoasjoas} holds, which implies that as long as $\| v^{(t - 1)} \|_2^2 \leq \frac{1}{2\lambda_0}$:
\begin{align*}
\eta \sum_{s =1}^{t  - 1} \| \bar{v}^{(s)} \|_{\infty}^2 &\leq \poly(\Kappa) d \log \frac{1}{\lambda_0} + \eta \frac{ \kappa}{d} \times \frac{\poly(\Kappa) d}{\eta} \log \frac{1}{\lambda_0}
\\
& \leq  \poly(\Kappa) d \log \frac{1}{\lambda_0}.
\end{align*}
\end{proof}

\subsubsection{Proof of the Inductive Hypothesis}\label{app_proof_H0}

\paragraph{Verifying the inductive hypothesis $\cH_0$ during Stage 1.}

\begin{proof}[Proof of Proposition~\ref{def_H0}]
	Note that the first part of equation \eqref{eq_H0_2} has been shown in Claim \ref{lem:zero_two4} ---
	the second part can be shown via a similar proof of Claim \ref{lem:zero_two4}.
	For the rest of the proof, we focus on proving equation \eqref{eq:fajosifsajfasjif}.
	The construction of the sequence $\Set{c_t}_{t=1}^{T_2}$ will be shown below.

By inequality~\eqref{eq:Bnoaidhfaoiuhas} in the proof of Claim~\ref{lem:zero_two4}, we know that for every $v^{(0)} \in \set{S}_g$ and $t \leq T_1$, it holds that
\begin{align} \label{eq:fanovidaofaisfh}
	\frac{ (v^{(t )}_i )^2 }{\E_{\cP^{(t)}}[ w_i^2]} = \frac{ (v^{(0)}_i )^2 }{\E_{\cP_{0 }}[ w_i^2]} \pm \frac{\poly(\log d)}{d},
\end{align}
which implies that for every $t \in [T_1]$, $c_t \leq  2\Kappa \kappa c_0$. Now, we focus on $t \in [T_1, T_2]$.
By Lemma~\ref{lem:zero_two3} ,we know that for every  $t \geq T_1$ , we have that
 \begin{align*}
 \Delta^{(t)}, \delta_+^{(t)}, \delta_-^{(t)} \leq  \frac{c_t \poly(\Kappa)}{d^2}
 \end{align*}
By Proposition~\ref{lem:four_plus_upper}, we have that  for every $v^{(0)} \in \set{S}$, $\left| \left[ \nabla_{\geq 4 , v^{(t)}} \right]_i  \right| \leq \frac{C}{2}  \frac{c_t \kappa }{d^2}  |v_i^{(t)}|   $.
Hence,
\begin{align*}
	[ v^{(t + 1)}_i]^2 = [ v^{(t)}_i]^2 \pm  \eta \frac{c_t \poly(\Kappa)}{d^2}  [ v^{(t)}_i]^2.
\end{align*}
This also implies that
\begin{align} \label{eq:fajoifasfjaif}
	\| v^{(t + 1)} \|^2_2 = \| v^{(t)} \|^2_2 \left( 1  \pm  \eta \frac{c_t \poly(\Kappa)}{d^2}  \right).
\end{align}
Hence the above implies that
\begin{align*}
	c_{t + 1} \leq \left(1 +  \eta \frac{c_t \poly(\Kappa)}{d^2}  \right) c_t.
\end{align*}
Iterating the above equation over $t$ gives us the sequence $\Set{c_t}_{t=1}^{T_2}$.
By maintaining that for every $v \in \set{S}$, the norm of $v$ at iteration $t$ satisfies $c_{t} \leq \poly(\Kappa) c_0$ and the fact that $T_2 \leq \frac{d^2}{\eta c_0 \poly(\Kappa)}$,
we have verified the running hypothesis $\set{H}_1$.
\end{proof}

\subsection{Stage 2.1: Obtaining a Warm Start Initialization}\label{app_stage_21}

At the beginning of Stage 2, we reduce the gradient truncation parameter.
This allows the basis-like neurons to continue to grow and we can obtain a warm start initialization at the end of Stage 2.1 in the sense described in Lemma \ref{lem:final_222}.
The proof of Lemma \ref{lem:final_222} consists of the following steps.
\begin{itemize}
	\item First, we analyze the 0th order loss in Claim \ref{claim:zero_order_update_new} and \ref{claim:zero_order_update_new_2}.
	\item Second, We analyze the 2nd order loss in Proposition \ref{prop:lb_beta_gamma}.
	Combined together, we prove Lemma \ref{lem:final_222} in Section \ref{sec_proof_21}.
\end{itemize}

\noindent\textit{Notations for gradients.}
To facilitate the analysis, we introduce several notations on the gradients of a neuron $v$.
We separate the gradient of $v$ into several components at the $t$-th iteration as
$\nabla_{v, 2j} = \nabla_{v, 2j, sig} +  \nabla_{v, 2j , \neg pot} + \nabla_{v, 2j, bad} + \nabla_{v, 2j , pot \backslash bad}$, where each term is given by
{\small\begin{align*}
	\nabla_{2j , v, sig} =& - B_{1, 2j} \left(   \sum_{i} a_i \langle e_i, v \rangle \langle e_i, \bar{v} \rangle^{2j - 2} e_i \right) + b_{2j}' \left(   \sum_{i} a_i\langle e_i, v \rangle \langle e_i, \bar{v} \rangle^{2j - 1}  \right)  \bar{v},
	\\
  \nabla_{v, 2j , \neg pot} =&  B_{1,2j} \left( \E_{w^{(t)} \sim \cP^{(t)}, w \notin \set{S}_{pot}} \langle w^{(t)}, v \rangle \langle \bar{w}^{(t)} , \bar{v} \rangle^{2j - 2} w^{(t)} \right) \\
		&- b_{2j}'\left( \E_{w^{(t)} \sim \cP^{(t)}, w \notin \set{S}_{pot}} \langle w^{(t)}, v \rangle \langle \bar{w}^{(t)} , \bar{v} \rangle^{2j - 2} \langle w^{(t)}, \bar{v} \rangle \right)\bar{v},
 \\
  \nabla_{v, 2j , bad} =&  B_{1,2j} \left( \E_{w^{(t)} \sim \cP^{(t)}, w \in \set{S}_{bad}} \langle w^{(t)}, v \rangle \langle \bar{w}^{(t)} , \bar{v} \rangle^{2j - 2} w^{(t)} \right) \\
   &- b_{2j}'\left( \E_{w^{(t)} \sim \cP^{(t)}, w \in \set{S}_{bad}} \langle w^{(t)}, v \rangle \langle \bar{w}^{(t)} , \bar{v} \rangle^{2j - 2} \langle w^{(t)}, \bar{v} \rangle \right)\bar{v},
  \\
    \nabla_{v, 2j , pot \backslash bad} =&  B_{1,2j} \left( \E_{w^{(t)} \sim \cP^{(t)}, w \in \set{S}_{pot} \backslash \set{S}_{bad}} \langle w^{(t)}, v \rangle \langle \bar{w}^{(t)} , \bar{v} \rangle^{2j - 2} w^{(t)} \right)
    \\
    &- b_{2j}'\left( \E_{w^{(t)} \sim \cP, w \in \set{S}_{pot} \backslash \set{S}_{bad}} \langle w^{(t)}, v \rangle \langle \bar{w}^{(t)} , \bar{v} \rangle^{2j - 2} \langle w^{(t)}, \bar{v} \rangle \right)\bar{v}.
\end{align*}}

\paragraph{Dynamic of 0th order tensor.}
Recall that this substage runs for $T_3 \leq \frac{d \log^{1.01} d}{\eta }$ iterations.
We first focus on the update of the 0th order term $\Delta^{(t)}$.
Let $\Kapppa$ denote $e^{\poly(\Kappa)}$.
We show the following claim.
\begin{claim}[Dynamic of 0th order tensor I] \label{claim:zero_order_update_new}
	In the setting of Lemma \ref{lem:final_222}, suppose that Proposition \ref{def_H1} holds.
	Let $\delta$ be any value in the range $[\frac{\poly(\Kapppa) c_t}{d^2}, \frac{1}{\kappa d}]$.
	When $\Delta^{(t)} \geq  \delta$, for any iteration $t \in [T_2 + 1, T_3]$, we have that
	\begin{align*}
		\E_{w \sim \cP^{(t)}}\|\nabla_{w}L_\infty(\cP^{(t)}) \|_2^2 \geq \Omega \left( {d^2}\delta^4 / {\kappa}  \right).
	\end{align*}
\end{claim}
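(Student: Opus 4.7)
My plan is to give a coordinate-wise lower bound on $[\nabla_v L_\infty(\cP^{(t)})]_i$ by showing that the 0th and 2nd order gradient pieces, which together act as a diagonal scaling on $v$, dominate the higher-order pieces throughout the range of $\delta$ under consideration, and then aggregate via Cauchy--Schwarz. By Claim~\ref{claim:symmetry}, the 0th and 2nd order components simplify to $[\nabla_{0,v}]_i = \Delta' v_i$ and $[\nabla_{2,v}]_i = \delta_i^{(t)} v_i$, where $\Delta' := b_0(\E_{w\sim\cP^{(t)}}\|w\|_2^2 - 1) = \Delta^{(t)}/2$ up to the negligible mass of $\cS_{bad}$ by \eqref{eq:vajoisajfoiasfjisajf}. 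The higher-order contribution is controlled by an extension of Proposition~\ref{lem:four_plus_upper}: for a typical neuron $v \notin \cS_{pot}$, the induction~\eqref{eq:fsajfoiofasjcjuiefh} gives $\|\bar v^{(t)}\|_\infty^2 \le c_t/d$, and for $v \in \cS_{i,pot}\setminus\cS_{bad}$ the off-basis coordinates still satisfy~\eqref{eq:bnodsifahsoia}, so the same calculation yields $|[\nabla_{\ge 4,v}]_i| \le O(\kappa c_t/d^2)\,|v_i|$; the potentially large basis-like mass must also be shown to contribute only $1/\poly(d)$ via the truncation $\|w\|^2 \le 1/\lambda_1$ and the tiny probability mass of $\cS_{bad}$.

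Given these estimates and the hypothesis $\delta \ge \poly(\Kapppa)c_t/d^2$, the higher-order term is a lower-order correction to $(\Delta' + \delta_i^{(t)})v_i$, so
\begin{align*}
|[\nabla_v L_\infty(\cP^{(t)})]_i| \;\ge\; \tfrac{1}{2}\,|\Delta' + \delta_i^{(t)}|\,|v_i|.
\end{align*}
I would then take the expectation and use $\E_{v \sim \cP^{(t)}} v_i^2 \ge a_i/2 \ge 1/(2\kappa d)$ (valid because Lemma~\ref{lem:zero_two3} carried through Proposition~\ref{def_H1} shows $|\delta_i^{(t)}| \ll a_i$), which gives
\begin{align*}
\E_{v \sim \cP^{(t)}} \|\nabla_v L_\infty\|_2^2 \;\ge\; \frac{1}{8\kappa d}\sum_{i=1}^d (\Delta' + \delta_i^{(t)})^2.
\end{align*}
A single application of Cauchy--Schwarz then yields $\sum_i (\Delta'+\delta_i^{(t)})^2 \ge \frac{1}{d}\bigl(d\Delta' + \sum_i \delta_i^{(t)}\bigr)^2$, and since $\sum_i \delta_i^{(t)} = \Theta(\Delta^{(t)}) = O(\Delta')$ while $d\Delta'$ grows with $d$, this is at least $d(\Delta^{(t)})^2/16$ for $d$ large. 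Combining the two gives $\E\|\nabla_v L_\infty\|_2^2 \ge (\Delta^{(t)})^2/(128\kappa) \ge \delta^2/(128\kappa)$. Finally, the hypothesis $\delta \le 1/(\kappa d)$ translates this to $\Omega(d^2\delta^4/\kappa)$ via $\delta^2 \ge \kappa^2 d^2 \delta^4$.

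\textbf{Main obstacle.} The crux is verifying that the non-cancellation argument survives in the presence of the basis-like neurons that have appeared during Stage~1.2. Although these neurons form only a $d^{-\exp(\poly(\Kappa))}$ fraction of $\cP^{(t)}$ by Lemma~\ref{lem:stage_1_final}, they can carry norm up to $1/\lambda_1 = \poly_\kappa(d)$, so bounding their contribution to $\nabla_{\ge 4,v}$ for a typical small-and-dense $v$ (a term involving $\E_{w}\langle w,v\rangle \langle \bar w,\bar v\rangle^{2j-2} w$ that is not a priori small) requires a careful use of the measure bound together with the gradient truncation; I expect this is where most of the care is needed. Once this is in hand, the rest of the argument is the elementary Cauchy--Schwarz computation above, and one reads off the stated $\Omega(d^2\delta^4/\kappa)$ bound from the $1/(\kappa d)$ upper limit on $\delta$.
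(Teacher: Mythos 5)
The Cauchy--Schwarz aggregation is appealing, but it rests on a pointwise gradient formula that is incorrect precisely for the neurons that dominate $\E_{v\sim\cP^{(t)}}[v_i^2]$, so the argument has a genuine gap.

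Concretely, you assert that for $v\in\set{S}_{i,pot}\setminus\set{S}_{bad}$ "the same calculation yields $|[\nabla_{\ge 4,v}]_i|\le O(\kappa c_t/d^2)|v_i|$." This fails at coordinate $i$: such a neuron has $\bar v_i\approx 1$, so the signal term $a_i\langle e_i,v\rangle\langle e_i,\bar v\rangle^2\approx a_i v_i=\Theta(|v_i|/(\kappa d))$, which is a factor $\Theta(d/c_t)$ larger than the claimed error bound. The correct per-neuron update for these neurons is exactly what Claim~\ref{claim:grad_21} records: $[\nabla_v]_i \approx \bigl(\tfrac{1}{2}\Delta^{(t)} - \tfrac{C_1}{2}(a_i-\beta_i^{(t)}-\gamma_i^{(t)}) - C_2(a_i-\gamma_i^{(t)})\bigr)v_i$, which differs from your $(\Delta'+\delta_i^{(t)})v_i$ by the extra $C_2(a_i-\gamma_i^{(t)})$ term. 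Since by Stage~2.1 the set $\set{S}_{i,pot}$ carries the bulk of the second moment (i.e.\ $\gamma_i^{(t)}=\Theta(a_i)$), you cannot write $\E_v[\nabla_v]_i^2\ge \tfrac14(\Delta'+\delta_i^{(t)})^2\,\E_v v_i^2$: the $C_2(a_i-\gamma_i^{(t)})$ term can partially cancel $\Delta'+\delta_i^{(t)}$, making the gradient at $\set{S}_{i,pot}$ much smaller than your formula predicts, and your stated bound $\Omega(\delta^2/\kappa)$ is in fact far stronger than what the paper establishes — that should itself raise a flag. Relatedly, your "main obstacle" paragraph identifies a different, and in fact non-issue: the contribution of the heavy basis-like $w$'s to $\nabla_{\ge4,v}$ of a small-dense $v$ is controlled by $\langle\bar w,\bar v\rangle^{2j-2}\le (c_t/d)^{j-1}$ and the moment bound $\gamma_i^{(t)}\le O(\kappa/d)$, and genuinely is of order $O(\kappa c_t/d^2)|v_i|$. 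The real obstacle is the gradient \emph{at} the potential neurons, not their influence on others.

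The paper's proof works around exactly this cancellation: instead of a single pointwise identity, it splits each coordinate's contribution by neuron type — $\E_{w\notin\set{S}_{pot}}[\nabla_w]_i^2 = \Omega(\beta_i^{(t)}\delta^2)$ when $\beta_i^{(t)}+\gamma_i^{(t)}\ge a_i - \Theta(\delta')$, and $\E_{w\in\set{S}_{i,pot}}[\nabla_w]_i^2=\Omega(\gamma_i^{(t)}\delta^2)$ when additionally $\gamma_i^{(t)}\ge a_i-\Theta(\delta')$ — observing that one of the two non-cancellation conditions must hold, and then uses $\Delta^{(t)}\ge 0$ to count, via the normalization $\sum_i a_i=1$, that coordinates with $\beta_i^{(t)}+\gamma_i^{(t)}\ge a_i-\Theta(\delta')$ carry at least $d\delta'/\Theta(1)$ of the total $\ell_2$ mass. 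This mass-counting over coordinates replaces your Cauchy--Schwarz step and is what the $(\delta')^2$ factor in the stated $\Omega(d^2\delta^4/\kappa)$ actually reflects. To repair your proposal you would need to split the expectation over $v$ into $\set{S}_{i,pot}$ and its complement, keep the $C_2(a_i-\gamma_i^{(t)})$ term, and then argue non-cancellation separately in each piece, which essentially forces you back onto the paper's route.
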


\begin{proof}
Let us denote $\delta' = \min\{ \delta, \frac{\max\{C_1, C_2\}}{ 10 \kappa d } \}$.
We shall see that when $\Delta^{(t)} \geq \delta$, then for every $i$ with $\beta_i^{(t)} + \gamma_i^{(t)} \geq a_i - \frac{\delta'}{4 \max\{C_1, C_2\}}$, we have that
\begin{align*}
  -\Delta^{(t)} + C_1 (a_i - \beta_i^{(t)} - \gamma_i^{(t)}  ) \leq - \frac{\delta}{2}
\end{align*}
Therefore, using equation~\ref{eq_gradient_approx}, we have that
\begin{align*}
	\E_{w \sim \cP^{(t)}, w \notin \set{S}_{pot}} [\nabla_{w}]_i^2 = \Omega( \beta_i^{(t)} \delta^2).
\end{align*}
On the other hand, when $\gamma_i^{(t)} \geq a_i -  \frac{\delta'}{3 \max\{C_1, C_2\}}$, we have that
\begin{align*}
&  -\Delta^{(t)} + C_1 (a_i - \beta_i^{(t)} - \gamma_i^{(t)}  ) + C_2(a_i - \gamma_i^{(t)}) \leq - \frac{\delta}{6}.
\end{align*}
This implies that
\begin{align*}
	\E_{w \sim \cP^{(t)}, w \in \set{S}_{i, pot}} [\nabla_{w}]_i^2 = \Omega\left(  \gamma_i^{(t)}  \delta^2 \right).
\end{align*}
In either case, we have that as long as  $\beta_i^{(t)} + \gamma_i^{(t)} \geq a_i - \frac{\delta'}{4 \max\{C_1, C_2\}}$, it holds that
\begin{align*}
	\E_{w \sim \cP^{(t)}, w} [\nabla_{w}]_i^2 \geq \Omega\left( \frac{d}{\kappa}  ( \gamma_i^{(t)}  + \beta_i^{(t)})\delta^2 \delta' \right).
\end{align*}
Notice that
\begin{align*}
	\sum_{i \in [d]} [ \beta_i^{(t)} + \gamma_i^{(t)} ] 1_{\beta_i^{(t)} + \gamma_i^{(t)} \leq a_i - \frac{\delta'}{4 \max\{C_1, C_2\}}} \leq 1 - d  \frac{\delta'}{4 \max\{C_1, C_2\}}.
\end{align*}
Using $\Delta^{(t)} \geq 0$, we obtain that
\begin{align*}
	\sum_{i \in [d]} [ \beta_i^{(t)} + \gamma_i^{(t)} ] 1_{\beta_i^{(t)} + \gamma_i^{(t)} \geq a_i - \frac{\delta'}{4 \max\{C_1, C_2\}}} \geq d  \frac{\delta'}{4 \max\{C_1, C_2\}},
\end{align*}
which implies that
\begin{align*}
	\sum_{i \in [d]}\E_{w \sim \cP^{(t)}, w} [\nabla_{w}]_i^2 \geq \Omega\left( \frac{d}{\kappa} (\delta')^2 \delta^2 \right).
\end{align*}
\end{proof}

Next, we focus on the other side when $\Delta^{(t)}$ is negative.
We first show the first lower bound on the neuron mass.
\begin{claim}[Lower bound]~\label{claim:lb_23}
	In the setting of Lemma \ref{lem:final_222}, suppose that Proposition \ref{def_H1} holds.
	Then we have that for any $t \in [T_2+1, T_3]$, the following holds:
	\begin{align*}
		\beta_i^{(t)} + \gamma_i^{(t)} \geq \frac{1}{\poly(\Kappa) d}.
	\end{align*}
\end{claim}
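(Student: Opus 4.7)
The plan is to extend the inductive argument of Claim~\ref{lem:zero_two4} to Stage~2.1, tracking $\beta_i^{(t)} + \gamma_i^{(t)}$ in place of $\E_{\cP^{(t)}}[w_i^2]$. For the base case, at $t = T_2$ Claim~\ref{lem:zero_two4} gives $\E_{\cP^{(T_2)}}[w_i^2] \geq 1/(\Kappa d)$, and by hypothesis~\eqref{eq:vajoisajfoiasfjisajf} the contribution from $\set{S}_{bad}$ to $\E_{\cP^{(T_2)}}[w_i^2]$ is at most $1/\poly(d)$; since the $\set{S}_{i,good}$ contribution is nonnegative, we obtain $\beta_i^{(T_2)} + \gamma_i^{(T_2)} \geq 1/(2\Kappa d)$.

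For the inductive step, I would fix a neuron $v$ with $v^{(0)} \notin \set{S}_{pot}$ or $v^{(0)} \in \set{S}_{i,pot} \setminus \set{S}_{bad}$. On both of these populations, the inductive hypothesis $\cH_2$ (equations~\eqref{eq:fsajfoiofasjcjuiefh} and~\eqref{eq:bnodsifahsoia}) controls $v_j^{(t)}$ for $j \neq i$ and the normalized entries $\|\bar v^{(t)}\|_{\infty}^2$ by $c_t/d$. Following exactly the calculation of Proposition~\ref{lem:four_plus_upper}, plus a Stage~2.1 analog that controls the new contribution from the large neurons in $\set{S}_{good}$ (whose impact on $[\nabla_{\geq 4, v}]_i$ is governed by $\langle e_j, \bar v\rangle^{2j-2}$ with $j \neq i$, which is small by the same hypothesis), one gets $|[\nabla_{\geq 4, v}]_i| \leq \frac{\poly(\Kapppa) c_t}{d^2}|v_i^{(t)}|$. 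Squaring and taking expectations over the relevant sub-populations (and absorbing $\eta^2$ noise into the truncation parameter $\lambda_1$), I obtain the recursion
\begin{align*}
  \beta_i^{(t+1)} + \gamma_i^{(t+1)} \geq \left(\beta_i^{(t)} + \gamma_i^{(t)}\right)\left(1 - 2\eta(\Delta^{(t)} + \delta_i^{(t)}) - \eta\,\frac{\poly(\Kapppa) c_t}{d^2}\right) - \frac{\eta^2}{\poly(\lambda_1)}.
\end{align*}

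The remaining task is to bound $\eta \sum_{t=T_2+1}^{T_3} \max\{\Delta^{(t)} + \delta_i^{(t)}, 0\}$ by $O(\poly(\Kapppa))$, independent of $d$, so that iterating the recursion only costs a $\poly(\Kapppa)$ multiplicative factor. This is done in the same spirit as equation~\eqref{eq:fajisofjasfiasjasfo} in the proof of Claim~\ref{lem:zero_two4}: applying (the Stage~2.1 analogs of) Propositions~\ref{lem:zero_two} and~\ref{lem:zero_two2}, whenever $\Delta^{(t)}$ or $\delta_i^{(t)}$ exceeds the noise floor $\poly(\Kapppa) c_t/d^2$ it decays geometrically at rate $1 - \eta/(\poly(\Kapppa) d)$, so its total contribution to the cumulative sum is $O(\poly(\Kapppa))$; the residual contribution from iterations where the quantity sits at the noise floor is $O(\poly(\Kapppa) \log d / d) = o(1)$ over the $T_3 - T_2 = O(d\log d / \eta)$ iterations. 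Combining, $\beta_i^{(T_3)} + \gamma_i^{(T_3)} \geq (\beta_i^{(T_2)} + \gamma_i^{(T_2)}) \cdot \exp(-O(\poly(\Kapppa))) \geq 1/(\poly(\Kappa) d)$.

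The main obstacle is the inductive-in-time coupling. Propositions~\ref{lem:zero_two} and~\ref{lem:zero_two2} both require as hypothesis that $\E_{\cP^{(t)}}[w_i^2] \geq 1/(\Kappa d)$, which is essentially the very statement we are trying to prove. The resolution, exactly as in Claim~\ref{lem:zero_two4}, is to run the argument by induction on $t$: the lower bound on $\beta_i^{(t)} + \gamma_i^{(t)}$ at step $t$, together with~\eqref{eq:vajoisajfoiasfjisajf}, supplies the hypothesis $\E_{\cP^{(t)}}[w_i^2] \geq 1/(\Kappa d)$ needed to control $\Delta^{(t)}, \delta_i^{(t)}$, which in turn feeds into the one-step recursion to yield the lower bound at step $t+1$. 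A secondary subtlety, which is where $\Kappa$ must be upgraded to $\Kapppa$, is that the Stage~1 higher-order gradient bound of Proposition~\ref{lem:four_plus_upper} must be re-derived to account for the now-large good neurons; this follows from the same Hermite calculation as Proposition~\ref{lem:four_plus_interval} and the norm truncation by $1/\lambda_1$.
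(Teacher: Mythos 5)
Your overall strategy — track $\beta_i^{(t)}+\gamma_i^{(t)}$, note it is essentially non-decreasing when it sits below $a_i$ and the $0$th-order error is small, and bound the cumulative shrinkage — matches the paper's in spirit, but the specific route you take has two genuine gaps. First, you do not need a Stage~2 version of Proposition~\ref{lem:four_plus_upper}: Claim~\ref{claim:grad_21} already supplies the one-step update for $\beta_i^{(t)}$ and $\gamma_i^{(t)}$ and its proof uses only the inductive hypothesis $\cH_2$, not this claim. From that update rule, the relevant multiplicative factor is $1-\eta\Delta^{(t)}+\eta C_1(a_i-\beta_i^{(t)}-\gamma_i^{(t)}) \pm\eta\cdot\poly(\Kapppa)c_t/d^2$, and the observation the paper isolates — and which your recursion with $\Delta^{(t)}+\delta_i^{(t)}$ leaves implicit — is that whenever $\beta_i^{(t)}+\gamma_i^{(t)} \leq a_i - 2\delta/C_1$, the positive correction $\eta C_1(a_i-\beta_i-\gamma_i)\geq 2\eta\delta$ dominates any $\eta\Delta^{(t)}\leq\eta\delta$, so $\beta_i+\gamma_i$ does not decrease at all. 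Hence it can only decrease in iterations with $\Delta^{(t)} > \delta = \Theta(1/(\kappa d))$, and the total (multiplicative) shrinkage across all iterations is $\exp\bigl\{-\eta\sum_t|\Delta^{(t)}|\indi{\Delta^{(t)}\geq\delta}\bigr\}$; you only have to control $\Delta^{(t)}$, not $\Delta^{(t)}+\delta_i^{(t)}$ (and $\delta_i^{(t)}$ is Stage~1 notation that does not quite map onto the Stage~2 update structure anyway).

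Second, and more substantively, your plan to control the cumulative sum via ``Stage~2.1 analogs of Propositions~\ref{lem:zero_two} and~\ref{lem:zero_two2}'' (geometric decay resolved by joint induction) is not what the paper does, and it is unclear it goes through. The paper instead uses equation~\eqref{eq:c11} (from Claim~\ref{claim:delta_21}, which follows from the gradient-norm lower bound in Claim~\ref{claim:zero_order_update_new}), giving $\eta\sum|\Delta^{(t)}|\indi{\Delta^{(t)}\geq\delta}=O(\kappa^3/(d\delta)^3)=O(\poly(\kappa))$ for $\delta=\Theta(1/(\kappa d))$. Crucially, Claim~\ref{claim:zero_order_update_new} does \emph{not} presuppose a lower bound on $\beta_i^{(t)}+\gamma_i^{(t)}$, so the circularity you are worried about simply disappears — no joint induction is needed. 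Your route faces two obstacles the paper's avoids: (i) the Stage~1 decay propositions are proven for the Stage~1 update structure, but the Stage~2 update in Claim~\ref{claim:grad_21} has the asymmetric extra term $\eta C_2(a_i-\gamma_i^{(t)})\gamma_i^{(t)}$ and a good/bad/potential split, so the ``analogs'' you invoke do not exist and would have to be proven from scratch; and (ii) those decay propositions assume $\E[w_i^2]\geq 1/(\Kappa d)$, which is exactly the statement in question, so your joint induction must simultaneously maintain several inductive hypotheses whose mutual consistency is not obvious. In short, the proposal identifies the right high-level picture but substitutes a harder and incompletely specified machinery (re-proved gradient bounds and decay propositions) where the paper gets by with Claim~\ref{claim:grad_21} and eq~\eqref{eq:c11}, which are already available and break the circularity cleanly.
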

\begin{proof}
	Initially at $t = 0$, we have that $\objinf(\cP^{(0)}) = O\left( \frac{1}{d} \right)$).
	Now, for every $\delta \leq \frac{\min\{ C_1, 1 \}}{100 \kappa d} $, when $ \Delta^{(t)} \leq \delta$, we know that as long as $\beta_i^{(t)} + \gamma_i^{(t)} \leq a_i - \frac{2 \delta}{C_1}$ and $\beta_i^{(t)} + \gamma_i^{(t)} \geq \frac{1}{\poly(d)}$, we also have that
	\begin{align*}
		\beta_i^{(t + 1)} + \gamma_i^{(t + 1)} \geq \beta_i^{(t)} + \gamma_i^{(t)}
	\end{align*}
	Thus, when $\beta_i^{(t )} + \gamma_i^{(t )} \leq \frac{a_i}{2}$, it can  decrease at next iteration $t + 1$ only when $\delta = \Omega \left( \frac{1}{\kappa d} \right)$, in which case, the total decrement is bounded by $\exp \{ - \eta \sum_{t \leq T} |\Delta^{(t)}| 1_{ \Delta^{(t)} \geq \delta} \} $.
	Therefore, taking $\delta =\Theta\left( \frac{1}{\kappa d} \right)$, with the fact that $\beta_i^{(0)} \geq \frac{1}{\kappa d}$, we obtain the result by combining equation~\ref{eq:c11}.
\end{proof}

Based on the above claim, we move on to the case when $\Delta^{(t)}$ is negative.
We show the following proposition.
\begin{claim}[Dynamic of the 0th order update II]\label{claim:zero_order_update_new_2}
	In the setting of Lemma \ref{lem:final_222}, suppose that Proposition \ref{def_H1} holds.
	Let $\delta$ be any value in the range $[\frac{\poly(\Kapppa ) c_t}{d^2}, \frac{1}{\kappa d}]$.
	When $\Delta^{(t)} \leq - \delta$, we have
	\begin{align*}
		\E_{w \sim \cP^{(t)}}\|\nabla_{w}L_{\infty}(\cP^{(t)})\|_2^2 \geq \Omega\left( \frac{\delta^3  d}{\poly(\Kappa) } \right).
	\end{align*}
\end{claim}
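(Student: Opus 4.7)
The plan is to mirror the proof of Claim~\ref{claim:zero_order_update_new} with the sign of $\Delta^{(t)}$ flipped, replacing the ``total-mass-at-least-one'' observation used there with the per-coordinate lower bound from Claim~\ref{claim:lb_23}. I would set $\delta' = \min\{\delta, \max\{C_1, C_2\}/(10 \kappa d)\}$. When $\Delta^{(t)} \le -\delta$, we have $\sum_i (\beta_i^{(t)} + \gamma_i^{(t)}) \le 1 - \delta/(2 b_0)$, so the total deficit $\sum_i (a_i - \beta_i^{(t)} - \gamma_i^{(t)})_+$ is at least $\delta'/b_0$.

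First I would isolate the set of coordinates on which the zeroth and second order signals reinforce each other in the correct direction. Define $G = \{i : \beta_i^{(t)} + \gamma_i^{(t)} \le a_i + \delta'/(4\max\{C_1,C_2\})\}$. For each $i \in G$ and every neuron $w \notin \set{S}_{pot}$, the approximate gradient formula used in the proof of Claim~\ref{claim:zero_order_update_new} has a coefficient on $w_i$ whose magnitude is at least $\delta/2$, giving $\E_{w \notin \set{S}_{pot}} [\nabla_w]_i^2 = \Omega(\beta_i^{(t)} \delta^2)$. An analogous estimate for $w \in \set{S}_{i,pot}$ yields $\E_{w \in \set{S}_{i,pot}} [\nabla_w]_i^2 = \Omega(\gamma_i^{(t)} \delta^2)$; crucially, the extra term $C_2(\gamma_i^{(t)}/(\beta_i^{(t)} + \gamma_i^{(t)}))(a_i - \gamma_i^{(t)})$ in the signal is non-negative on $G$ (since $\gamma_i^{(t)} \le a_i$ there) and hence only helps. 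Inflating by the same $d\delta'/\kappa$ factor as in the positive case gives $\E_{w \sim \cP^{(t)}} [\nabla_w]_i^2 = \Omega((d/\kappa)(\beta_i^{(t)} + \gamma_i^{(t)}) \delta^2 \delta')$ for every $i \in G$.

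Next I would lower bound the total mass in $G$. Since we cannot invoke $\sum_i (\beta_i + \gamma_i) \ge 1$ here, we proceed by bounding the complement: every $i \in G^c$ satisfies $\beta_i^{(t)} + \gamma_i^{(t)} - a_i > \delta'/(4\max\{C_1,C_2\})$, and $\sum_{i \in G^c}(\beta_i^{(t)} + \gamma_i^{(t)}) \le 1$, so $|G^c| = O(1/\delta')$. By Claim~\ref{claim:lb_23}, we conclude $\sum_{i \in G}(\beta_i^{(t)} + \gamma_i^{(t)}) \ge (d - O(1/\delta'))/(\poly(\Kappa) d)$, which is $\Omega(1/\poly(\Kappa))$ whenever $\delta' = \Omega(1/d)$. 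For smaller $\delta'$ I would decompose $G$ dyadically by the size of the deficit $a_i - (\beta_i^{(t)} + \gamma_i^{(t)})$ and use the total-deficit constraint $\sum_i (a_i - (\beta_i^{(t)} + \gamma_i^{(t)}))_+ \ge \delta'/b_0$ to recover the bound.

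Summing the per-coordinate estimate over $i \in G$ then yields $\E_{w \sim \cP^{(t)}} \|\nabla_w \objinf(\cP^{(t)})\|_2^2 \ge \Omega(\delta^3 d /\poly(\Kappa))$, matching the target. I expect the main obstacle to be this final mass-counting step: translating $|G^c| = O(1/\delta')$ into a useful lower bound on $\sum_{i \in G}(\beta_i^{(t)} + \gamma_i^{(t)})$ uniformly over the whole range $\delta \in [\poly(\Kapppa) c_t/d^2, 1/(\kappa d)]$. For $\delta$ near the lower end $\poly(\Kapppa)c_t/d^2$ the naive bound degenerates, since in principle the bad set $G^c$ could absorb most of the mass; the dyadic deficit argument must be executed carefully to recover the advertised exponent $\delta^3$ in the conclusion, rather than the $\delta^4$ that a direct sign-flipped mirror of Claim~\ref{claim:zero_order_update_new} would give.
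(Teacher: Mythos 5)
Your proposal has the right skeleton—define the set $G$ of coordinates whose mass is at most slightly above $a_i$, lower-bound the per-coordinate gradient on $G$, and then count the mass in $G$—but the crucial step is wrong, and you have correctly sensed that it is wrong without locating the fix.

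First, a minor issue: in the negative case the paper obtains the per-coordinate bound $\E_w[\nabla_w]_i^2 = \Omega\left((\beta_i^{(t)}+\gamma_i^{(t)})\delta^2\right)$ directly, with no extra $(d/\kappa)\delta'$ inflation. That inflation factor is $\leq O(1/\kappa^2)$, so carrying it over from the positive case is a strict weakening, and it is precisely the factor that makes your final arithmetic produce $\delta^4 d^2$ rather than $\delta^3 d$ (as you yourself anticipate at the end). Drop it.

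The real gap is in the mass-counting. Your bound $|G^c| = O(1/\delta')$ is obtained from $\sum_{i\in G^c}(\beta_i^{(t)}+\gamma_i^{(t)}) \leq 1$, and this is vacuous once $\delta' \lesssim 1/d$—which covers most of the allowed range $\delta \in [\poly(\Kapppa)c_t/d^2, 1/(\kappa d)]$. The dyadic decomposition by deficit size does not repair this: the per-coordinate gradient bound is the same $\Omega((\beta_i^{(t)}+\gamma_i^{(t)})\delta^2)$ at every dyadic level and the positive-part constraint $\sum_i(a_i - \beta_i^{(t)} - \gamma_i^{(t)})_+ \geq \delta/(2b_0)$ only forces $|G| \gtrsim \delta d / \kappa$ (each positive term is at most $\kappa/d$), which after multiplying by $1/(\poly(\Kappa)d)$ from Claim~\ref{claim:lb_23} gives $\sum_{i\in G}(\beta_i^{(t)}+\gamma_i^{(t)}) \gtrsim \delta/\poly(\Kappa)$—one factor of $d$ short. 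The missing idea is that $\Delta^{(t)} \leq -\delta$ is not just a constraint on the positive part of the deficit; the surplus coordinates $i\in G^c$ each push the total deficit down by at least $\delta'/(12\max\{C_1,C_2\})$, so the positives in $G$ must overcome that surplus as well:
\begin{align*}
	|G|\cdot \frac{\kappa}{d} \; \geq \; \sum_{i\in G}(a_i - \beta_i^{(t)} - \gamma_i^{(t)}) \; \geq \; \frac{\delta}{2b_0} + |G^c| \cdot \frac{\delta'}{12\max\{C_1,C_2\}}.
\end{align*}
Since $|G^c| = d - |G|$, solving this gives $|G| \gtrsim \delta' d^2/\kappa$ (using $\delta' d/\kappa \lesssim 1$), an extra factor of $d$ over the naive count. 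This is exactly the claim the paper makes (``$\sum_i 1_{\beta_i^{(t)}+\gamma_i^{(t)} \geq a_i + \cdots} = d - \Omega(\min\{1/(\kappa d),\delta\} d^2/\kappa)$''). With it, Claim~\ref{claim:lb_23} gives $\sum_{i\in G}(\beta_i^{(t)}+\gamma_i^{(t)}) \gtrsim \delta d / \poly(\Kappa)$, and the uninflated per-coordinate bound then yields $\E_w\|\nabla_w L_\infty\|_2^2 = \Omega(\delta^3 d/\poly(\Kappa))$. In short: the ``$\delta^3$ vs.\ $\delta^4$'' tension you diagnose is real, but it is not resolved by a dyadic decomposition of $G$—it is resolved by charging the surplus of $G^c$ against the positives in $G$, and by not inflating the per-coordinate bound in the first place.
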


\begin{proof}
We shall see that when $\Delta^{(t)} \leq - \delta$, then for every $i$ with $\beta_i^{(t)} + \gamma_i^{(t)} \leq  a_i +  \frac{\delta}{12 \max\{C_1, C_2\}}$, we have that
\begin{align*}
  -\Delta^{(t)} + C_1 (a_i - \beta_i^{(t)} - \gamma_i^{(t)}  ) \geq  \frac{2}{3}  \delta.
\end{align*}
Therefore, we have that
\begin{align*}
	\E_{w \sim \cP^{(t)}, w \notin \set{S}_{pot}} [\nabla_{w}]_i^2 = \Omega( \beta_i^{(t)} \delta^2).
\end{align*}
On the other hand, $ \gamma_i^{(t)} \leq  a_i + \frac{\delta'}{12 \max\{C_1, C_2\}}$ as well, this implies that
\begin{align*}
	- \Delta^{(t)} + C_1 (a_i - \beta_i^{(t)} - \gamma_i^{(t)}  ) + C_2(a_i - \gamma_i^{(t)}) \geq \frac{\delta}{3}.
\end{align*}
This implies that
\begin{align*}
	\E_{w \sim \cP^{(t)}, w \in \set{S}_{i, pot}} [\nabla_{w}]_i^2 = \Omega\left(  \gamma_i^{(t)}  \delta^2 \right).
\end{align*}
Combining both cases, we have that
\begin{align*}
	\E_{w \sim \cP^{(t)}, w} [\nabla_{w}]_i^2 \geq \Omega\left(  ( \gamma_i^{(t)}  + \beta_i^{(t)})\delta^2 \right).
\end{align*}
Notice that $\Delta^{(t)} \leq 0$. This implies that
\begin{align*}
	\sum_{i \in [d]}1_{\beta_i^{(t)} + \gamma_i^{(t)} \geq a_i + \frac{\delta'}{12 \max\{C_1, C_2\}}} = d - \Omega \left( \frac{\min \left\{ \frac{1}{\kappa d} , \delta \right\} d^2}{\kappa } \right).
\end{align*}
Using the Claim~\ref{claim:lb_23}, we have that
\begin{align*}
	\sum_{i \in [d]}1_{\beta_i^{(t)} + \gamma_i^{(t)} \leq a_i + \frac{\delta'}{12 \max\{C_1, C_2\}}} [\beta_i^{(t)} + \gamma_i^{(t)}] \geq \frac{\delta d}{\poly(\Kappa)},
\end{align*}
which implies that
\begin{align*}
	\E_{w \sim \cP^{(t)}, w} [\nabla_{w}]_i^2 \geq \Omega\left( \frac{\delta^2 \min \left\{ \frac{1}{\kappa d} , \delta \right\} d}{\poly(\Kappa) } \right).
\end{align*}
\end{proof}

\begin{claim}\label{claim:delta_21}
	In the setting of Claim \ref{claim:zero_order_update_new} and \ref{claim:zero_order_update_new_2}, for every $T \in [T_2 + 1, T_3]$, the following holds:
	\begin{align*}%
		\eta \sum_{t = T_2 + 1}^{T} |\Delta^{(t)} | = O\left( \left( \frac{\eta T \poly(\Kappa)}{d} \right)^{3/4}  + \left( \frac{\eta T \poly(\Kappa)}{d} \right)^{1/2} \right)
	\end{align*}
	Furthermore, we have that:
	\begin{align}\label{eq:fakvjbdabfaj}
		\eta \sum_{t = T_2 + 1}^{T_3} |\Delta^{(t)} | \leq  (\log d)^{0.8}.
	\end{align}
\end{claim}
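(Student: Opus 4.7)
}
The plan is to combine the per-iteration gradient lower bounds of Claims~\ref{claim:zero_order_update_new} and~\ref{claim:zero_order_update_new_2} with a standard descent-lemma budget on the cumulative squared gradient $\sum_t \E\|\nabla_v L_\infty\|_2^2$, then convert this budget into a bound on $\sum_t|\Delta^{(t)}|$ via H\"older's inequality. The potential function controlling the budget is $L_\infty(\cP^{(t)})$ itself, for which the initial value at $t=T_2+1$ can be bounded using the inductive hypothesis.

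First I would show that $L_\infty(\cP^{(T_2+1)}) = O(\poly(\Kapppa)/d)$ by substituting into identity~\eqref{claim:addition_1} the Stage-1 bounds $|\Delta^{(T_2)}|,\delta_\pm^{(T_2)} \le c_{T_2}\poly(\Kappa)/d^2$ from Lemma~\ref{lem:zero_two3} together with $\gamma_i^{(T_2)},\beta_i^{(T_2)}\le \poly(\Kapppa)/d$ and $\E_{v\in\set{S}_{bad}}\|v\|_2^2 \le 1/\poly(d)$ from Proposition~\ref{def_H1}. Since Stage 2 preserves $\|v^{(t)}\|_2^2\le 1/\lambda_1=\poly_\kappa(d)$, the functional $L_\infty$ has smoothness $M=\poly_\kappa(d)$ on the active support (by differentiating the moments in~\eqref{eq:fajsoifsajif} under the uniform norm bound). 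Writing $G_t := \E_{v\sim\cP^{(t)}}\|\nabla_v L_\infty(\cP^{(t)})\|_2^2$, the usual smoothness descent argument together with the hypothesis $\eta = O(1/M)$ yields
\[
L_\infty(\cP^{(t+1)}) \le L_\infty(\cP^{(t)}) - \tfrac{\eta}{2}\,G_t,
\]
so that telescoping from $T_2+1$ to $T$ gives $\eta\sum_{t=T_2+1}^{T} G_t \le 2\,L_\infty(\cP^{(T_2+1)}) \le O(\poly(\Kapppa)/d)$.

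Next I would split the iterations $t\in[T_2+1,T]$ by the sign of $\Delta^{(t)}$ and a cutoff $\delta^{\star}:=\poly(\Kapppa)c_t/d^2$. For iterations with $\Delta^{(t)}\ge\delta^{\star}$, choosing $\delta=\Delta^{(t)}$ in Claim~\ref{claim:zero_order_update_new} gives $|\Delta^{(t)}|^4 \le O(G_t\kappa/d^2)$, so H\"older with exponents $(4,4/3)$ produces
\[
\eta\sum_{t}|\Delta^{(t)}| \le O\bigl((\kappa/d^2)^{1/4}\bigr)\,(\eta T)^{3/4}\bigl(\eta\textstyle\sum_t G_t\bigr)^{1/4} \le O\!\left((\eta T\poly(\Kappa)/d)^{3/4}\right).
\]
For iterations with $\Delta^{(t)}\le-\delta^{\star}$, Claim~\ref{claim:zero_order_update_new_2} gives $|\Delta^{(t)}|^3 \le O(G_t\poly(\Kappa)/d)$, and H\"older with $(3,3/2)$ produces $O((\eta T\poly(\Kappa)/d)^{2/3})$, which is dominated by the stated looser exponent $O((\eta T\poly(\Kappa)/d)^{1/2})$ whenever the argument is at most $1$. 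Iterations with $|\Delta^{(t)}|<\delta^{\star}$ contribute at most $\eta T\delta^{\star}$, which is absorbed into the $3/4$ term. Combining the three regimes proves the first inequality. For the furthermore claim~\eqref{eq:fakvjbdabfaj}, plugging $T=T_3$, using $\eta(T_3-T_2)=\Theta(d\log d)$ and $\poly(\Kappa)=O(1)$ (because $\kappa$ is an absolute constant), both summands are at most $O((\log d)^{3/4}) \le (\log d)^{0.8}$ for $d$ large enough.

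The main obstacle is verifying that the smoothness $M$ stays polynomial in $d$ despite the large norm threshold $1/\lambda_1 = \poly_\kappa(d)$: differentiating the $2j$-th moment terms of~\eqref{eq:fajsoifsajif} twice produces factors scaling like $\|v\|^{2j-1}$, and one must argue that the geometric decay $b_{2j}=\Theta(1/j^2)$ uniformly dominates this growth so that the second-order error in the descent inequality is truly absorbed under the assumption $\eta \le 1/(2M)$ in Theorem~\ref{thm:main}. Once that smoothness bound is in hand, the remaining steps (the H\"older conversions and the absorption of the $|\Delta^{(t)}|<\delta^{\star}$ iterations into the dominant term) are routine.
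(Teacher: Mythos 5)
Your proposal is correct and follows essentially the same route as the paper: both rely on the per-iteration gradient lower bounds of Claims~\ref{claim:zero_order_update_new} and~\ref{claim:zero_order_update_new_2} together with the (implicit) descent budget on $\eta\sum_t\E_v\|\nabla_v L_\infty(\cP^{(t)})\|_2^2$, with the paper stating the resulting level-set bounds \eqref{eq:c11} and \eqref{eq_claim_0th_II_cor1} and optimizing over $\delta$, while you make that budget explicit and replace the level-set optimization by a H\"older conversion that produces the identical exponents. The smoothness obstacle you flag for the descent step is genuine but is equally unaddressed in the paper's own (very terse) proof of this claim.
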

\begin{proof}
	To prove the above equation, we consider two scenarios.
	Using Claim \ref{claim:zero_order_update_new}, for every $\delta \in \left[\frac{1}{d^{1.5}}, \frac{1}{\kappa d}\right]$, we have:
	\begin{align}
		\eta \sum_{t = T_2 + 1}^{T} |\Delta^{(t)}| 1_{ \Delta^{(t)} \geq \delta}  = O\left( \frac{\kappa^3}{d^3 \delta^3 } \right). \label{eq:c11}
	\end{align}
	Using Claim \ref{claim:zero_order_update_new_2}, we have:
	\begin{align}
		&\eta \sum_{t = T_2 + 1}^{T} |\Delta^{(t)}| 1_{ \Delta^{(t)} < -\delta}  = O\left( \frac{ \poly(\Kappa)}{d^3 \delta^3 } \right). \label{eq_claim_0th_II_cor1}
	\end{align}
	Combined together, using the fact that $T_3 \leq \frac{d \log^{1.01} d}{\eta }$, we obtain equation \eqref{eq:fakvjbdabfaj}.
\end{proof}

\paragraph{The masses of potential neurons and bad neurons.}
We now focus on the update of $\gamma_i^{(t)}$ and $\beta_i^{(t)}$.
We first prove the following claim on the dynamic of the potential, good, and bad neurons.
Based on the claim, we can obtain the update of $\gamma_i^{(t)}$ and $\beta_i^{(t)}$.
Let $\Kapppa$ denote $e^{\poly(\Kappa)}$.
For any $i \in [d]$, let $\hat{\gamma}_i^{(t)} = \E_{w^{(t)} \sim \cP^{(t)}, w \in  \set{S}_{i, good} } {w_i^{(t)}}^2$.

\begin{claim}\label{claim:grad_21}
	In the setting of Lemma \ref{lem:final_222}, suppose Proposition \ref{def_H1} holds.
  There exists fixed constants $C_{1}, C_2 > 0$ such that for any $t \in [T_2 +1, T_3]$ and any $i\in [d]$, the update of $\hat{\gamma_i}^{(t)}, \beta_i^{(t)}$ satisfies that
	\begin{align*}
			\hat{\gamma}_i^{(t + 1)} &= \left( 1 - \eta \Delta^{(t)} + \eta C_1 (a_i - \beta_i^{(t)} - \gamma_i^{(t)}  ) + \eta C_2 (a_i - \gamma_i^{(t)})    \pm \eta \frac{\poly(\Kapppa ) c_t}{d^2}\right) \hat{\gamma}_i^{(t)}, \\
			\beta_i^{(t + 1)} &= \left( 1 - \eta \Delta^{(t)} + \eta C_1 (a_i - \beta_i^{(t)} - \gamma_i^{(t)}  )   \pm \eta \frac{\poly(\Kapppa ) c_t}{d^2} \right) \beta_i^{(t)}.
	\end{align*}
		Moreover, when $\gamma_i^{(t)} \geq \frac{1}{\poly(d)}$, we have that
	\begin{align*}
			{\gamma}_i^{(t + 1)} &= \left( 1 - \eta \Delta^{(t)} + \eta C_1 (a_i - \beta_i^{(t)} - \gamma_i^{(t)}  ) + \eta C_2 (a_i - \gamma_i^{(t)})    \pm \eta \frac{\poly(\Kapppa ) c_t}{d^2}\right) {\gamma}_i^{(t)}.
	\end{align*}
\end{claim}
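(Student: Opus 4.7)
The plan is to track the squared $i$-th coordinate of an individual neuron under the update rule $v^{(t+1)}=v^{(t)}-\eta\nabla_v L_\infty(\cP^{(t)})$ and show that, after expanding ${v_i^{(t+1)}}^2 = {v_i^{(t)}}^2 - 2\eta\, v_i^{(t)}[\nabla_v]_i + \eta^2[\nabla_v]_i^2$, the ratio ${v_i^{(t+1)}}^2/{v_i^{(t)}}^2$ is the same function of the population statistics for every neuron in a given averaging set (up to the claimed $\poly(\Kapppa)c_t/d^2$ error). Once this per-neuron multiplier is pinned down, multiplicative averaging over $\set{S}_{i,\mathrm{good}}$, over $\set{S}_g\setminus\set{S}_{\mathrm{pot}}$, and over $\set{S}_{i,\mathrm{pot}}\setminus\set{S}_{\mathrm{bad}}$ yields the three lines of the claim. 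The quadratic $\eta^2[\nabla_v]_i^2$ remainder is bounded by $\eta^2/\lambda_1^2 \le \eta\cdot\lambda_1^2/d^2$ using the choice $\eta\le\lambda_1^2$ and the norm bound from part 1 of $\set{H}_2$, and is therefore absorbed into the error budget.

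The per-neuron multiplier decomposes along the Hermite order using Claim \ref{claim:symmetry}. The $0$th-order piece produces $b_0(\E_{\cP^{(t)}}\|w\|_2^2-1)$, which by equations \eqref{eq_gamma}, \eqref{eq_beta}, and the bad-neuron mass bound \eqref{eq:vajoisajfoiasfjisajf} equals $b_0(\sum_i(\gamma_i^{(t)}+\beta_i^{(t)})-\sum_i a_i)\pm 1/\poly(d)$, giving after the factor of $2$ from squaring the $-\eta\Delta^{(t)}$ contribution. The $2$nd-order piece produces $b_2(\E_{\cP^{(t)}}w_i^2-a_i)$, and again using the same inductive hypothesis $\E_{\cP^{(t)}}w_i^2=\gamma_i^{(t)}+\beta_i^{(t)}\pm\poly(\Kapppa)c_t/d^2$, this yields the $+\eta C_1(a_i-\gamma_i^{(t)}-\beta_i^{(t)})$ term with $C_1=2b_2$; crucially this contribution is independent of the identity of $v$ (it depends only on $v_i$ as a multiplicative factor), which is precisely why the same $C_1$ appears in all three update formulas. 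The $4$th- and higher-order pieces are where the three updates differ: for $v\in\set{S}_{i,\mathrm{good}}$, part 4 of $\set{H}_2$ gives $v_i^2\ge 1/(\lambda_0\poly(d))$ whereas $v_j^2\le c_t/d$ for $j\neq i$ by \eqref{eq:bnodsifahsoia}, so $\bar v_i^2=1\pm c_t/d\cdot \poly(d)\lambda_0$ is close to $1$, making the signal contribution $B_{1,4}\,a_i\,\bar v_i^{2}\approx B_{1,4}a_i$ and the network contribution from $\set{S}_{i,\mathrm{good}}$ itself $\approx B_{1,4}\E_{w\in\set{S}_{i,\mathrm{good}}}w_i^2=B_{1,4}\gamma_i^{(t)}$ after using the measure dominance of $\set{S}_{i,\mathrm{good}}$ over $\set{S}_{\mathrm{bad}}$ from Lemma \ref{lem:stage_1_final} to replace $\hat\gamma_i^{(t)}$ by $\gamma_i^{(t)}$; this produces the $+\eta C_2(a_i-\gamma_i^{(t)})$ term with $C_2=2B_{1,4}$. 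For $v\notin\set{S}_{\mathrm{pot}}$, equation \eqref{eq:fsajfoiofasjcjuiefh} gives $\bar v_i^{2j-2}=O((c_t/d)^{j-1})$, so the entire higher-order contribution is $O(c_t/d^2)$ and is absorbed into the error, which explains why the $\beta_i$ update has no $C_2$ term.

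The main obstacles will be (i) the $\set{S}_{\mathrm{bad}}$ mass, where I must pair the smallness $\E_{w\in\set{S}_{\mathrm{bad}}}\|w\|_2^2\le 1/\poly(d)$ from \eqref{eq:vajoisajfoiasfjisajf} with the gradient-clipping threshold $1/\lambda_1$ (as in inequality~\eqref{lem:norm_bound} of Stage 1) to show that bad neurons contribute at most $\poly(\Kapppa)c_t/d^2$ to every expectation appearing in $[\nabla_v]_i$; (ii) the $2j\ge 6$ tail in both the signal and network terms, which is controlled by the Hermite decay $b_{2j}=\Theta(1/j^2)$ together with $\|\bar v\|_\infty\le 1$ and, for good neurons, $\|\bar v-e_i\|^2\ll 1$; and (iii) reconciling $\hat\gamma_i^{(t)}$ with $\gamma_i^{(t)}$ in the higher-order signal for the $\hat\gamma_i$-update itself, which uses the fact that the borderline set $\set{S}_{i,\mathrm{pot}}\setminus\set{S}_{i,\mathrm{good}}\setminus\set{S}_{\mathrm{bad}}$ has probability measure negligible relative to $\set{S}_{i,\mathrm{good}}$ by the third bullet of Lemma \ref{lem:stage_1_final}. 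Once these three sources of slack are shown to fit inside $\poly(\Kapppa)c_t/d^2$, the three multiplicative updates follow by averaging ${v_i^{(t+1)}}^2=(1-\eta\Delta^{(t)}+\eta C_1(\cdot)+\eta C_2(\cdot)\pm\eta\poly(\Kapppa)c_t/d^2){v_i^{(t)}}^2$ over the respective sets; the last assertion on $\gamma_i^{(t+1)}$ under the lower bound $\gamma_i^{(t)}\ge 1/\poly(d)$ uses this lower bound to ensure that the additive error in $\E_{\set{S}_{i,\mathrm{pot}}\setminus\set{S}_{\mathrm{bad}}}w_i^2$ remains a $1\pm o(1)$ multiplicative factor, matching the stated form.
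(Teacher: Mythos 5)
Your high-level plan matches the paper's: write equation \eqref{eq_gradient_approx}-type approximations for $[\nabla_v]_i$ for each class of neuron, turn these into multiplicative per-neuron updates, then average over $\set{S}_{i,good}$, $\set{S}_g\setminus\set{S}_{pot}$, and $\set{S}_{i,pot}\setminus\set{S}_{bad}$. Your treatment of the $0$th- and $2$nd-order pieces is fine, as is the identification of why the $\beta_i$ update has no $C_2$ term (for $v\notin\set{S}_{pot}$ every $\bar v_i^2\le c_t/d$, so the whole $2j\ge 4$ tail falls inside the error budget by Proposition~\ref{lem:four_plus_upper}).

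The gap is in how you extract the $C_2(a_i-\gamma_i^{(t)})$ term. The ``network'' contribution to the $4$th-and-higher gradient at a point $v$ is $\E_{w\sim\cP^{(t)}}\langle w,v\rangle\langle\bar w,\bar v\rangle^{2j-2}w$, an expectation over the \emph{entire} distribution $\cP^{(t)}$, not just over $\set{S}_{i,good}$. To show that this expectation is $\approx\gamma_i^{(t)}\cdot v_i\,e_i$ one must handle three pieces: (a) $w\notin\set{S}_{pot}$ and $w\in\set{S}_{bad}$ contribute negligibly --- you have this via \eqref{eq:fsajfoiofasjcjuiefh} and \eqref{eq:vajoisajfoiasfjisajf}; (b) $w\in\set{S}_{pot}$ with $\|w\|_2\le d^6$ has negligible total second-moment mass, $\E_{w\in\set{S}_{pot},\|w\|_2\le d^6}\|w\|_2^2\le 1/\poly(d)$; (c) $w\in\set{S}_{j,pot}\setminus\set{S}_{bad}$ with $\|w\|_2\ge d^6$ satisfies $\|\bar w-e_j\|_2\le 1/d^4$ by \eqref{eq:bnodsifahsoia}, so its contribution is $\approx w_j^2\langle e_j,\bar v\rangle^{2j-2}v_j e_j$. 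Together (b) and (c) are what force the expectation to collapse to $\gamma_i^{(t)}$ (the second moment over the full set $\set{S}_{i,pot}\setminus\set{S}_{bad}$, small-norm members included). Your proposal never states (b) or (c). Instead you frame ``obstacle (iii)'' as a need to reconcile $\hat\gamma_i^{(t)}$ with $\gamma_i^{(t)}$ via measure dominance of $\set{S}_{i,good}$ over $\set{S}_{bad}$: this is both a red herring and a miscitation. No reconciliation between $\hat\gamma_i$ and $\gamma_i$ is required --- the per-neuron multiplier is the same deterministic function of the population statistics for every $v\in\set{S}_{i,good}$, and that multiplier naturally contains $\gamma_i^{(t)}$ (not $\hat\gamma_i^{(t)}$) precisely because the network gradient sees all of $\cP^{(t)}$. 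Moreover, the third bullet of Lemma~\ref{lem:stage_1_final} compares $\mu(\set{S}_{i,good})$ only with $\mu(\set{S}_{i,bad})$ and says nothing about the borderline set $\set{S}_{i,pot}\setminus\set{S}_{i,good}\setminus\set{S}_{bad}$, so it cannot supply the bound you are invoking. To close the argument you need (b) and (c) above, exactly as in the paper's proof of this claim and its construction of $\nabla_{2j,v,sd}$ with the $(a_i-\gamma_i)$ coefficients.

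A small secondary point: the arithmetic $\eta^2/\lambda_1^2\le\eta\lambda_1^2/d^2$ under $\eta\le\lambda_1^2$ does not follow (you only get $\eta^2/\lambda_1^2\le\eta$). The $\eta^2$ remainder is indeed negligible here, but the correct justification is $\eta\le\lambda_0^2$ together with $\lambda_1\le\lambda_0/\poly_\kappa(d)$, as in the paper's treatment of the analogous term in Eq~\eqref{eq:aofahfoiashoafhaso}.
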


The above claim implies that the update between the potential neurons and those not in the potential set differs by a multiplicative factor of $a_i - \gamma_i^{(t)}$.
Intuitively, this gap allows us to show that the mass of potential neurons will converge and reduce the value of $a_i - \gamma_i^{(t)}$.
On the other hand, the mass of bad neurons $\beta_i^{(t+1)}$ will remain polynomially small throughout the update, since its increment only scales with $\poly(\Kapppa)c_t / d^2$ every iteration.
We now describe the proof of the above proposition, which is based on a simple claim that bounds the gradient from irrelevant neurons in equation \eqref{eq_gradient_approx}.
\begin{proof}
	We first show the following claim.
	For every $v \in \set{S}_g$, every $i \in [d]$:
	\begin{align} \label{eq_gradient_approx}
		\left| \left[  \nabla_{v, 2j, sig} +  \nabla_{v, 2j , \neg pot} + \nabla_{v, 2j, bad} + \nabla_{v, 2j , pot \backslash bad} - \nabla_{2j , v, sd} \right]_i \right| \leq \left( \frac{c_t \poly(\Kapppa) }{d^2}  \right) |v_i|
	\end{align}

	To see that the above claim is true, for $v \notin \set{S}_{pot}$, we can bound $ \nabla_{v, 2j , \neg pot} $ as in Lemma~\ref{lem:four_plus_upper}.
	For $v \in \set{S}_{bad}$, we can bound $\nabla_{v, 2j , bad}$ directly using equation~\eqref{eq:vajoisajfoiasfjisajf}.
	For $v \in \set{S}_{i, pot}$, we notice
	\begin{align*}
		\E_{w \sim \cP, w \in \set{S}_{pot}, \| w\|_2 \leq d^6} \| w\|_2^2 \leq \frac{1}{\poly(d)}
	\end{align*}
	On the other hand, when $v \in \set{S}_{i, pot}$ and $\| v\|_2 \geq d^6$, we have that $\|\bar{v} - e_i\|_2 \leq \frac{1}{d^4}$ by Eq~\eqref{eq:bnodsifahsoia}.
	This implies that for
	\begin{align*}
		\nabla_{2j , v, sd} :=  - \left( b_{2j} + b_{2j}'\right) \left(   \sum_{i} (a_i - \gamma_i) \langle e_i, v \rangle \langle e_i, \bar{v} \rangle^{2j - 2} e_i \right) + b_{2j}' \left(   \sum_{i} (a_i - \gamma_i)\langle e_i, v \rangle \langle e_i, \bar{v} \rangle^{2j - 1}  \right)  \bar{v}.
	\end{align*}

	By plugging in the claim in the beginning of the proof into the gradient update rule, we can prove the update rules for each set of neurons.
		For every $v \notin \set{S}_{pot}$ and every $i \in [d]$, we have that
		\begin{align*}
			v_i^{(t + 1)} = \left( 1 - \frac{\eta}{2} \Delta^{(t)} + \eta \frac{C_1}{2}(a_i - \beta_i^{(t)} - \gamma_i^{(t)}) \pm \eta \frac{\poly(\Kapppa ) c_t}{d^2}   \right) v_i^{(t)}.
		\end{align*}
	For every $v \in \set{S}_{i, pot}$ with $| v_i | \geq d^6$, we have that
		\begin{align}
			v_i^{(t + 1)} = \left( 1 - \frac{\eta}{2} \Delta^{(t)} + \eta \frac{C_1}{2}(a_i - \beta_i^{(t)} - \gamma_i^{(t)})   + \eta C_2 (a_i - \gamma_i^{(t)}) \pm \eta \frac{\poly(\Kapppa ) c_t}{d^2}   \right) v_i^{(t)}.
			\label{eq:cor_upppp}
		\end{align}
		For all other coordinates $j \not= i$,
		\begin{align*}
			v_j^{(t + 1)} = \left( 1 - \frac{\eta}{2} \Delta^{(t)} + \eta \frac{C_1}{2}(a_j - \beta_j^{(t)} - \gamma_j^{(t)}) \pm \eta \frac{\poly(\Kapppa ) c_t}{d^2}   \right) v_j^{(t)}.
		\end{align*}
	By applying the above results on each set of neurons, we obtain the result of this claim.
\end{proof}

\subsubsection{Proof of the Main Lemma}\label{sec_proof_21}

We are now ready to prove Lemma \ref{lem:final_222}.
Based on the dynamic of 0th order tensor and the update of the 2nd order losses shown above, we prove the following proposition that shows $\beta_i^{(t)} + \gamma_i^{(t)}$ cannot be too far away from $a_i$ for too many iterations.
\begin{proposition}[Bounds on $\beta_i^{(t)} + \gamma_i^{(t)}$ during Stage 2.1] \label{prop:lb_beta_gamma}
	In the setting of Lemma \ref{lem:final_222}, suppose that Proposition \ref{def_H1} holds, then for every $t \in [T_2+1, T_3]$, we have that
	\begin{align}\label{eq:fjaijasofsajffjs}
		\eta \sum_{s = T_2+1}^t |a_i - \beta_i^{(s)} - \gamma_i^{(s)} | \leq  (\log d)^{0.9}.
	\end{align}
	Moreover, for every $\delta \leq \frac{1}{100 \kappa d}$, we have:
	\begin{align} \label{eq:fajfoiasjfaiofjasj}
		\eta \sum_{s = T_2+1}^{t} \indi{\beta_i^{(s)} + \gamma_i^{(s)}  \geq a_i + \delta} |a_i - \beta_i^{(s)} - \gamma_i^{(s)} |  = O\left( \frac{ \poly(\Kapppa)}{d^3 \delta^3 } \right).
	\end{align}
\end{proposition}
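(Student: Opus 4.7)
The plan is to control $\eta \sum_{s=T_2+1}^{t} |a_i - s_i^{(s)}|$, where $s_i^{(t)} := \beta_i^{(t)} + \gamma_i^{(t)}$, by combining the one-step update rule from Claim \ref{claim:grad_21} with the already-proved bound $\eta \sum_t |\Delta^{(t)}| \le (\log d)^{0.8}$ (Eq \eqref{eq:fakvjbdabfaj} in Claim \ref{claim:delta_21}). The first step is to derive a clean update for $s_i^{(t)}$ by adding the updates for $\beta_i$ and $\gamma_i$: I obtain a multiplicative form in which the effective restoring force is $\eta C_1(a_i - s_i^{(t)})$, with an additional drift $\eta C_2(a_i - \gamma_i^{(t)}) \gamma_i^{(t)}$ and a $\poly(\Kapppa) c_t/d^2$-error.

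For Eq \eqref{eq:fjaijasofsajffjs} I would use the Lyapunov functions $u_\pm^{(t)} := (\pm \log(s_i^{(t)}/a_i))^+$. Taking the logarithm of the update and using the lower bound $s_i^{(t)} \ge 1/(\poly(\Kappa) d)$ from Claim \ref{claim:lb_23}, the elementary one-sided inequality
\begin{align*}
u_+^{(t+1)} \le u_+^{(t)} - \eta C_1 (s_i^{(t)} - a_i)^+ + \eta |\Delta^{(t)}| + \eta \cdot \textup{err}^{(t)}
\end{align*}
telescopes over $t \in [T_2+1, T]$ to give $\eta C_1 \sum (s_i^{(t)} - a_i)^+ \le u_+^{(T_2)} + \eta \sum |\Delta^{(t)}| + O(\eta T_3 \cdot \poly(\Kapppa) c_t / d^2)$. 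Since $s_i^{(T_2)} = a_i \pm o(1/d)$ by Lemma \ref{lem:zero_two3} and Claim \ref{lem:zero_two5}, we have $u_+^{(T_2)} = O(1)$; combined with $\eta T_3 \le d \log^{1.01} d$ and the $(\log d)^{0.8}$ bound on $\eta \sum|\Delta^{(t)}|$, the right-hand side is $O((\log d)^{0.85}) \le (\log d)^{0.9}$. The symmetric argument for $u_-$ completes Eq \eqref{eq:fjaijasofsajffjs}.

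For Eq \eqref{eq:fajfoiasjfaiofjasj} I would mimic the template of Eqs \eqref{eq:c11}--\eqref{eq_claim_0th_II_cor1}. First prove a gradient lower bound $\E_{w \sim \cP^{(t)}} \|\nabla_w L_\infty(\cP^{(t)})\|_2^2 \ge \Omega(\delta^3 d/\poly(\Kapppa))$ whenever $s_i^{(t)} - a_i \ge \delta$, by noting that for neurons in either $\set{S}_{pot}^c$ or $\set{S}_{i,pot}$ the $i$-th coordinate of $\nabla_w$ carries a signal of size $|C_1(s_i^{(t)} - a_i) w_i|$, while the mass $\beta_i^{(t)} + \gamma_i^{(t)} \ge 1/(\poly(\Kappa) d)$ ensures the requisite $d^{-1}$ scaling of $\E_w w_i^2$. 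Then the standard descent inequality $\eta \sum_t \E_{w \sim \cP^{(t)}} \|\nabla_w L_\infty\|_2^2 \le O(L_\infty(\cP^{(T_2)}))$ bounds $\#\{t : s_i^{(t)} - a_i \ge \lambda\}$ at each dyadic scale $\lambda \in [2^k \delta, 2^{k+1}\delta)$, and a dyadic sum over $k \ge 0$ yields the claimed $O(\poly(\Kapppa)/(d^3 \delta^3))$ bound.

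The main obstacle is the gradient lower bound in the last step. Because only one coordinate $i$ is off by $\delta$, the factor of $d$ cannot come from summing many bad coordinates as in Claim \ref{claim:zero_order_update_new_2}; it must instead be extracted from $s_i^{(t)} \ge 1/(\poly(\Kappa) d)$ together with a careful argument that the $-\Delta^{(t)}$ contribution cannot perfectly cancel the $C_1(s_i^{(t)} - a_i)$ signal for too many iterations (since by Claim \ref{claim:delta_21} the $\Delta^{(t)}$ budget is already $o(1)$). If the direct gradient-norm argument falls short of the full $\delta^{-3} d^{-3}$ scaling, a fallback is to run the Lyapunov argument of the first part restricted to the event $\{s_i^{(t)} - a_i \ge \delta\}$ and combine it with a dyadic count, which recovers the statement up to polynomial factors in $\Kapppa$ that are already absorbed by $\poly(\Kapppa)$.
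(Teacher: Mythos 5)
The paper's proof uses the auxiliary quadratic function $\Phi^{(t)} = C_1(a_i - \beta_i^{(t)} - \gamma_i^{(t)})^2 + C_2(a_i - \gamma_i^{(t)})^2$, whose one-step decrement (via Claim~\ref{claim:grad_21}) is exactly
\begin{align*}
-2\eta\left(C_1^2(a_i - s_i^{(t)})^2\beta_i^{(t)} + \left(C_1(a_i - s_i^{(t)}) + C_2(a_i - \gamma_i^{(t)})\right)^2\gamma_i^{(t)}\right) + O(\eta|\Delta^{(t)}|/d^2) + O(\eta c_t/d^4),
\end{align*}
where $s_i = \beta_i + \gamma_i$. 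The crucial point is that the quadratic Lyapunov tracks \emph{both} $a_i - s_i$ and $a_i - \gamma_i$, so that the $C_2(a_i - \gamma_i)$ drift in the $\gamma_i$-update is absorbed into the unambiguously nonnegative decrement term $(C_1 A + C_2 B)^2\gamma_i$; it never acts as a source term. Your log-Lyapunov $u_\pm^{(t)} = (\pm\log(s_i^{(t)}/a_i))^+$ tracks $s_i$ alone, and your claimed one-sided inequality $u_+^{(t+1)} \le u_+^{(t)} - \eta C_1(s_i^{(t)} - a_i)^+ + \eta|\Delta^{(t)}| + \eta\cdot\textup{err}^{(t)}$ silently drops the drift $\eta C_2(a_i - \gamma_i^{(t)})\gamma_i^{(t)}/s_i^{(t)}$. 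In the regime where $s_i^{(t)} > a_i$ but $\gamma_i^{(t)} < a_i$ (which occurs throughout the early part of Stage 2.1, when $\gamma_i$ is still climbing from $1/\poly(d)$ toward $a_i$ and $\beta_i$ has not yet fully decayed), this drift is \emph{positive} and of order $\eta/d$, and it persists over $\Theta(d\log d/\eta)$ iterations. It cannot be bundled into $\textup{err}^{(t)}$: summed it contributes $\Theta(\log d)$, on the wrong side of your target $(\log d)^{0.9}$. So the telescoping step as written is incorrect, and you would need either to track $\gamma_i$ jointly (which is precisely what the paper's quadratic Lyapunov does) or to prove a separate bound on $\eta\sum(a_i-\gamma_i^{(t)})^+\gamma_i^{(t)}/s_i^{(t)}$ — which is essentially the same difficulty in disguise.

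For the second statement your own diagnosis is correct and the gap is real. A direct gradient-norm descent argument along the lines of Claims~\ref{claim:zero_order_update_new}--\ref{claim:zero_order_update_new_2} gives $\E_w\|\nabla_w L_\infty\|_2^2 \ge \Omega(\delta^2/d)$ from the single off coordinate $i$ (the extra factor of $d$ coming from $\E_w w_i^2 = s_i^{(t)} \ge 1/(\poly(\Kappa)d)$, not from a sum over coordinates), hence at most $O(\delta^{-2})$ bad iterations, hence $\eta\sum\mathbb{1}_{s_i^{(t)} - a_i\ge\delta}|a_i - s_i^{(t)}| = O(\kappa/(d\delta^2))$. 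At $\delta = \Theta(1/d)$ this is $\Theta(d)$, whereas the target $O(\poly(\Kapppa)/(d^3\delta^3))$ is $O(\poly(\Kapppa))$, so you lose a factor of $d$. The paper gets the stronger rate by starting the same Lyapunov $\Phi$ from $\Phi^{(T_2)} = O(1/d^2)$ (rather than $L_\infty^{(T_2)}=O(1/d)$) and showing $\Phi^{(t+1)}\le\Phi^{(t)} - \eta\Omega(\delta^3/\poly(\Kapppa))$ on the event $\{s_i^{(t)}-a_i\ge\delta,\ |\Delta^{(t)}|\le d^2\delta^3/\poly(\Kapppa)\}$ — the $\delta^3$ decrement comes from the case split ``$\beta_i\ge\delta/2$ or $\gamma_i\ge a_i-\delta/2$'' applied to the quadratic form. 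Your proposed fallback (restrict the log-Lyapunov to $\{s_i^{(t)}-a_i\ge\delta\}$ and sum dyadically) inherits the same deficit: without a $\delta^3$-rate Lyapunov decrement paired with the $O(1/d^2)$-initial-value scaling you will not recover the $d^{-3}\delta^{-3}$ dependence.
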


\begin{proof}%
Let us construct an auxiliary function
\begin{align*}
	\Phi^{(t)} &= C_1 \left( a_i - \beta_i^{(t)} - \gamma_i^{(t)} \right)^2 + C_2 \left( a_i - \gamma_i^{(t)} \right)^2.
\end{align*}
We consider an update step, then it holds that as long as $\beta_i^{(t)}+ \gamma_i^{(t)} \leq \frac{\poly(\Kapppa)}{d}$, using Claim~\ref{claim:grad_21}, the update of $\Phi^{(t)}$ is given as: %
\begin{align*}
	\Phi^{(t + 1)} =& \Phi^{(t)} - 2 \eta \left(  C_1^2\left( a_i - \beta_i^{(t)} - \gamma_i^{(t)} \right)^2 \beta^{(t)} + \left( C_1(a_i - \beta_i^{(t)} - \gamma_i^{(t)}) + C_2(a_i - \gamma_i^{(t)} ) \right)^2 \gamma^{(t)}\right) \\
	& + 2 \eta \Delta^{(t)} \left( C_1 \left( a_i - \beta_i^{(t)} - \gamma_i^{(t)} \right) (\beta_i^{(t)} + \gamma_i^{(t)}) +  C_2  \left( a_i  - \gamma_i^{(t)} \right) ( \gamma_i^{(t)})\right) \pm \eta \frac{c_t \poly(\Kapppa)}{d^4}.
\end{align*}
This also implies that
\begin{align*}
	\Phi^{(t + 1)} \leq&  \Phi^{(t)}- 2 \eta \left(  C_1^2\left( a_i - \beta_i^{(t)} - \gamma_i^{(t)} \right)^2 \beta^{(t)} + \left( C_1(a_i - \beta_i^{(t)} - \gamma_i^{(t)}) + C_2(a_i - \gamma_i^{(t)} ) \right)^2 \gamma^{(t)}\right) \\
	&+ \eta | \Delta^{(t)}| \frac{\poly(\Kapppa)}{d^2} + \eta \frac{c_t \poly(\Kapppa)}{d^4},
\end{align*}
which implies that for every $t \geq 0$:
\begin{align} \label{eq:asflajfoasijfios}
	\Phi^{(t + 1)} &\leq  \Phi^{(t)} + \eta | \Delta^{(t)}| \frac{\poly(\Kapppa)}{d^2} +\eta \frac{c_t \poly(\Kapppa)}{d^4}.
\end{align}
Hence, consider the case that $\beta_i^{(t)} + \gamma_i^{(t)} = a_i - \rho^{(t)} $ for $\rho^{(t)} \geq 0$, we have that $\gamma_i^{(t)} \leq a_i - \rho^{(t)}$. Hence in addition to Eq~\eqref{eq:asflajfoasijfios}, we also have (using Claim~\ref{claim:lb_23}):
\begin{align*}
	\Phi^{(t + 1)} &\leq  \Phi^{(t)} - \eta \Omega \left( \frac{1}{\poly(\Kappa) d}(\rho^{(t)})^2 \right)+ \eta | \Delta^{(t)}| \frac{\poly(\Kapppa)}{d^2} +\eta \frac{c_t \poly(\Kapppa)}{d^4}.
\end{align*}
Note that originally $ \Phi^{(0)} = O\left( \frac{1}{d^2} \right)$ using the fact that $\beta^{(0)}_i \leq 2 a_i$ and $\gamma_i^{(0)} \leq \frac{1}{\poly(d)}$, with Claim~\ref{claim:delta_21}, we have that for $T \leq \frac{d}{\eta} \log^{1.01} d$:
\begin{align*}
	\eta \sum_{t \leq T} 1_{\beta_i^{(t)} + \gamma_i^{(t)}  \leq a_i} (a_i - \beta_i^{(t)} - \gamma_i^{(t)} )^2 \leq  \frac{1}{d} (\log d)^{0.81}.
\end{align*}
This implies that
\begin{align*}
	\eta \sum_{t \leq T} 1_{\beta_i^{(t)} + \gamma_i^{(t)}  \leq a_i} (a_i - \beta_i^{(t)} - \gamma_i^{(t)} ) \leq  \sqrt{ \eta \frac{1}{d} (\log d)^{0.81} \times T} \leq  \frac{1}{2} (\log d)^{0.9}.
\end{align*}
Similarly, we can see that when $\beta_i^{(t)} + \gamma_i^{(t)} = a_i + \rho^{(t)} $ for $\rho^{(t)} \geq 0$, then either $\beta_i^{(t)} \geq \rho^{(t)} /2 $ or  $\gamma_i^{(t)} \geq a_i - \rho^{(t)}/ 2$. In either case, we have that
\begin{align*}
	\Phi^{(t + 1)} &\leq  \Phi^{(t)} - \eta \Omega \left(  \frac{1}{\poly(\Kapppa) } (\rho^{(t)})^3 \right)+ \eta | \Delta^{(t)}| \frac{\poly(\Kapppa)}{d^2} +\eta\frac{c_t \poly(\Kapppa)}{d^4}.
\end{align*}
Hence we can also show that
\begin{align*}
	\eta \sum_{t \leq T} 1_{\beta_i^{(t)} + \gamma_i^{(t)}  \geq a_i} (a_i - \beta_i^{(t)} - \gamma_i^{(t)} ) \leq \frac{1}{2} (\log d)^{0.9}.
\end{align*}
Eventually, consider for every $\delta \leq \frac{1}{100 \kappa d}$, when $\beta_i^{(t)} + \gamma_i^{(t)} \geq  a_i + \delta$ and $|\Delta^{(t)}| \leq \frac{d^2}{\poly(\Kapppa)} \delta^3$, then we also have
\begin{align*}
	\Phi^{(t + 1)} &\leq  \Phi^{(t)} - \eta \Omega \left(  \frac{1}{\poly(\Kapppa) }\delta^3 \right) +\eta\frac{c_t \poly(\Kapppa)}{d^4}.
\end{align*}
Using equation~\ref{eq_claim_0th_II_cor1}, we obtain that when $T \leq T_3$,
\begin{align*}
	\eta \sum_{t \leq T} 1_{\beta_i^{(t)} + \gamma_i^{(t)}  \geq a_i + \delta} |a_i - \beta_i^{(t)} - \gamma_i^{(t)} |  = O\left( \frac{ \poly(\Kapppa)}{d^3 \delta^3 } \right).
\end{align*}
\end{proof}

Based on the above proposition, we are ready to prove the main Lemma of Stage 2.1, which provides a warm start initialization at a certain iteration $T_3 = \Theta(d\log d / \eta)$.

\begin{proof}[Proof of Lemma \ref{lem:final_222}]
	We first define $T_3$ more precisely.
	We note that initially, for any $i\in[d]$, $\hat{\gamma}_i^{(0)} \leq {1}/{\poly(d)}$ by construction.
	Using equation \eqref{eq:fakvjbdabfaj} and equation \eqref{eq:fjaijasofsajffjs}, by working on $\hat{\gamma}_i^{(t)}$ and noticing that $\hat{\gamma}_i^{(t)} \le \gamma_i^{(t)}$, we have that there exists an iteration $T^{(i)} = O( {d \kappa \log ( \frac{1}{\hat{\gamma}_i^{(0)}} ) }/{\eta} )$ such that at this iteration,
		$\gamma_i^{ T^{(i)} } \geq \frac{1}{10 \kappa d}$.
	We shall fix $T_3$ to be the maximum of $T^{(i)}$ over $i\in[d]$, which is on the order of $\Theta(d\log d /\eta)$.

	Next, similar to the proof of Proposition~\ref{prop:lb_beta_gamma}, we consider the function
\begin{align*}
	\Phi^{(t)} = \max_{i \in [d] } \left\{  C_1 \left( a_i - \beta_i^{(t)} - \gamma_i^{(t)} \right)^2 + C_2 \left( a_i - \gamma_i^{(t)} \right)^2 \right\}.
\end{align*}
	Let $i$ be the coordinate that achieves the maximum for the function above.
	We show that
\begin{align*}
	\Phi^{(t + 1)} &\leq  \Phi^{(t)}- 2 \eta \left(  C_1^2\left( a_i - \beta_i^{(t)} - \gamma_i^{(t)} \right)^2 \beta^{(t)} + \left( C_1(a_i - \beta_i^{(t)} - \gamma_i^{(t)}) + C_2(a_i - \gamma_i^{(t)} ) \right)^2 \gamma^{(t)}\right) \\
	&+ \eta | \Delta^{(t)}| \frac{\poly(\Kapppa)}{d^2} + \eta \frac{c_t \poly(\Kapppa)}{d^4}.
\end{align*}
Let $ \mu =   C_1(a_i - \beta_i^{(t)} - \gamma_i^{(t)}) + C_2(a_i - \gamma_i^{(t)} ) $, $\nu = C_1(a_i - \beta_i^{(t)} - \gamma_i^{(t)} )$, we have that
\begin{align*}
\Phi^{(t)} = (\mu - \nu)^2 + \nu^2,
\end{align*}
with $\beta_i^{(t)} = \frac{\mu - \nu}{C_2} - \frac{\nu}{C_1}$.
So we have when  $\gamma_i^{(t)} \geq \frac{1}{\poly(\Kappppa) d}$,
\begin{align*}
	\Phi^{(t + 1)} &\leq  \Phi^{(t)} - 2 \eta \nu^2 \left( \frac{\mu - \nu}{C_2} - \frac{\nu}{C_1} \right)^2 - \eta  \frac{1}{\poly(\Kappppa) d} \mu^2 + \eta | \Delta^{(t)}| \frac{\poly(\Kapppa)}{d^2} + \eta \frac{c_t \poly(\Kapppa)}{d^4}.
\end{align*}
When $\Phi^{(t)} \geq \delta$, we have that either $\mu^2 \geq \frac{\delta}{100}$, or $\mu^2 \leq \frac{\delta}{100}$ and $\nu^2 \geq \frac{\delta}{2}$.
In the first case, we have that
\begin{align*}
	\Phi^{(t + 1)} &\leq  \Phi^{(t)}  - \eta  \frac{1}{\poly(\Kappppa) d} \delta + \eta | \Delta^{(t)}| \frac{\poly(\Kapppa)}{d^2} + \eta \frac{c_t \poly(\Kapppa)}{d^4}.
\end{align*}
In the second case, we have that
\begin{align*}
	\Phi^{(t + 1)} &\leq  \Phi^{(t)} - \eta \Omega(\delta^2) + \eta | \Delta^{(t)}| \frac{\poly(\Kapppa)}{d^2} + \eta \frac{c_t \poly(\Kapppa)}{d^4}.
\end{align*}
Combining this equation with the bound in equation~\eqref{eq:fakvjbdabfaj}, we know that for $\delta = \frac{1}{d \log^{0.01} d}$, we have that
$\Phi^{(t)} \geq \delta$ can only happen for at most $\frac{d \log^{0.5} d}{\eta}$ many of the iterations within $t \in [T_2 + 1, T_3]$.
Combining the above with equation~\eqref{eq_claim_0th_II_cor1}, we obtain the desired result.
\end{proof}

\subsection{Stage 2.2: The Final Substage}

In this section, we present the proof of Lemma \ref{lem:final_333} for the final substage.
In the end, we prove the running inductive hypothesis $\cH_1$ in Proposition \ref{def_H1}.

\begin{proof}[Proof of Lemma \ref{lem:final_333}]
Suppose the lemma holds at iteration $t$, then using the condition at iteration $t$, together with $\Phi^{(0)} = O\left( \frac{1}{d \log^{0.01} d} \right)$, we have that
\begin{align}\label{eq:fajoifsjfasifjasoji}
\forall i \in [d]:  \gamma_i^{(t)} & \geq a_i - \frac{1}{ d \log^{0.001} d} \quad \text{ and } \beta_i^{(t)} \leq \frac{1}{  d \log^{0.001} d}
\end{align}
Define the following function
\begin{align*}
\tau_i :=  C_1( a_i - \beta_i - \gamma_i) + \frac{C_2 \gamma_i}{\beta_i + \gamma_i} \left( a_i - \gamma_i \right),
\end{align*}
Then we have the following
\begin{align*}
\beta_i^{(t + 1)} + \gamma_i^{(t + 1)} &= \left(\beta_i^{(t )} + \gamma_i^{(t )}  \right) \left( 1 - \eta \Delta^{(t)} + \eta \tau_i^{(t)}  \pm \eta \frac{\poly(\Kapppa ) c_t}{d^2} \right).
\end{align*}
Similar to the proof of Lemma~\ref{lem:zero_two}, we have that as long as $\Delta_+^{(t)} \geq  \frac{\poly(\Kapppa ) c_t}{d^2}$ and
\begin{align*}
\Delta_+^{(t)} \geq  \left( 1 - \frac{1}{\poly(\kappa )} \right)\delta_-^{(t)}.
\end{align*}
Then it must satisfy that $\Delta_+^{(t + 1)} \leq \Delta_+^{(t )} \left( 1 - \eta \frac{1}{ d \poly(\kappa ) } \right)$.
Hence, if the maximizer of $\Phi$ is $\Delta_+$, Then it must be the case that
\begin{align}\label{eq:fajiosfsafjasiasj}
\Delta_+^{(t + 1)} \leq \Delta_+^{(t )} \left( 1 - \eta \frac{1}{ d \poly(\kappa ) } \right).
\end{align}
Now, consider another case when $\Delta_-^{(t)} \leq  \left( 1 - \frac{1}{\poly(\kappa ) } \right)\delta_+^{(t)}$, let $i$ be the argmax of $\{\tau_j\}_{j \in [d]}$, then we must have that
\begin{align*}
	\beta_i^{(t + 1)} + \gamma_i^{(t + 1)} &= \left(\beta_i^{(t )} + \gamma_i^{(t )}  \right) \left( 1 - \eta \Delta^{(t)} + \eta \delta_+^{(t)}  \pm \eta \frac{\poly(\Kapppa ) c_t}{d^2} \right)\\
	& \geq  \left(\beta_i^{(t )} + \gamma_i^{(t )}  \right)  \left( 1 + \eta \frac{1}{\poly(\kappa) } \delta_-^{(t)}  - \eta \frac{\poly(\Kapppa ) c_t}{d^2}  \right).
\end{align*}
Hence as long as  $\delta_-^{(t)} \geq  \frac{\poly(\Kapppa ) c_t}{d^2}$, we have that
\begin{align} \label{eq:fajoifajsofias}
	\beta_i^{(t + 1)} + \gamma_i^{(t + 1)}  \geq  \left(\beta_i^{(t )} + \gamma_i^{(t )}  \right)  \left( 1 + \eta \frac{1}{\poly(\kappa) } \delta_-^{(t)}  \right).
\end{align}
On the other hand, since
\begin{align}
	\gamma_i^{(t + 1)} &=  \gamma_i^{(t )}\left( 1 - \eta \Delta^{(t)} + \eta   C_1( a_i - \beta_i - \gamma_i) + \eta C_2 \left( a_i - \gamma_i \right)  \pm \eta \frac{\poly(\Kapppa ) c_t}{d^2} \right) \nonumber \\
	& \geq  \gamma_i^{(t )}   \left( 1 - \eta \Delta^{(t)} + \eta  \delta_-^{(t)}  - \eta \frac{\poly(\Kapppa ) c_t}{d^2} \right) \nonumber \\
	& \geq  \gamma_i^{(t )}  \left( 1 + \eta \frac{1}{\poly(\kappa) } \delta_-^{(t)}  \right). \label{eq:bsofjaofiaj}
\end{align}
Notice that
\begin{align*}
f(\gamma_i) = \gamma_i \left( a_i - \gamma_i \right)
\end{align*}
is a decreasing function of $\gamma_i$ with slop at least $0.5 \gamma_i$ when $\gamma_i \geq \frac{a_i}{2}$, which holds true using Eq~\eqref{eq:fajoifsjfasifjasoji}.
Combining Eq~\eqref{eq:fajoifajsofias} and Eq~\eqref{eq:bsofjaofiaj}, we have that if the maximizer of $\Phi$ is $\delta_-$, the following is true
\begin{align}\label{eq:fajfiosajasoifasjf}
	\delta_-^{(t + 1)} \leq \delta_-^{(t + 1)} \left( 1 -  \eta \frac{1}{\poly(\kappa) d } \right).
\end{align}
Consider another case when the maximizer is $\Delta_-$.
Similar to the proof of Lemma~\ref{lem:zero_two}, as long as $\Delta_-^{(t)} \geq  \frac{\poly(\Kapppa ) c_t}{d^2}$, we have that
\begin{align*}
	\Delta_-^{(t)} \geq  \left( 1 - \frac{1}{\poly(\kappa )} \right)\delta_+^{(t)}.
\end{align*}
Hence, if the maximizer of $\Phi$ is $\Delta_-$, then it must be the case that
\begin{align}\label{eq:fjaoisfjiafajsif}
	\Delta_-^{(t + 1)} \leq \Delta_-^{(t )} \left( 1 - \eta \frac{1}{\poly(\kappa ) d } \right).
\end{align}
Moreover, using the fact that when $\Delta_-^{(t)} \leq  \left( 1 - \frac{1}{\poly(\kappa ) } \right)\delta_+^{(t)}$, let $i$ be the argmax of $\{- \tau_j\}_{j \in [d]}$, then we must have that  as long as  $\delta_+^{(t)} \geq  \frac{\poly(\Kapppa ) c_t}{d^2}$, we have that
\begin{align*}
\beta_i^{(t + 1)} + \gamma_i^{(t + 1)}  \leq  \left(\beta_i^{(t )} + \gamma_i^{(t )}  \right)  \left( 1 - \eta \frac{1}{\poly(\kappa) } \delta_+^{(t)}  \right)
\end{align*}
Consider two cases.
\begin{enumerate}
\item The maximizer is $\delta_+$. Then we must have that for every $i \in [d]$, $ \beta_i^{(t)} \leq C \delta_+^{(t)}$, then we must have that $a_i - \gamma^{(t)} \leq C \delta_+^{(t)}$ as well. Hence, it holds that %
\begin{align*}
 \gamma_i^{(t + 1)}  &\leq  \gamma_i^{(t )}  \left( 1 - \eta \frac{1}{\poly(\kappa) } \delta_+^{(t)}  + \eta\frac{\beta_i}{\gamma_i + \beta_i} (a_i - \gamma_i^{(t)}) \right)
 \\
 & \leq \gamma_i^{(t )} \left( 1 - \eta \frac{1}{\poly(\kappa) } \delta_+^{(t)}  +  \eta 2\kappa d (C \delta_+^{(t)})^2  \right)
 \\
 & \leq \gamma_i^{(t )}  \left( 1 - \eta \frac{1}{\poly(\kappa) } \delta_+^{(t)}  \right).
\end{align*}
Hence if the maximizer of $\Phi$ is $\delta_+$, it must be the case:
\begin{align}\label{eq:jfoasijasf}
	\delta_+^{(t + 1)} \leq \delta_+^{(t + 1)} \left( 1 -  \eta \frac{1}{\poly(\kappa) d } \right).
\end{align}
\item The maximizer is $\beta_+$. Then there is a $j \in [d]$ such that $ \beta_j^{(t)} \geq C \delta_+^{(t)}$ , $  \beta_j^{(t)} \geq C \delta_-^{(t)}$ and $\beta_j^{(t)} \geq C |\Delta^{(t)}|$, we have that for this $j$, it holds: let $S = C_1\left(\beta_j^{(t )} + \gamma_j^{(t )} - a_j  \right)$ and $\rho = (a_j - \gamma_j^{(t)})$, we have: if $\rho \leq \frac{1}{4} \beta_j^{(t)}$, then
\begin{align*}
S \geq C_1 \beta_j^{(t)}  - 2 C_1  \rho \geq \frac{C_1}{2} \beta_j^{(t)} \geq 2 |\Delta^{(t)}|.
\end{align*}
On the other hand if $\rho > \frac{1}{4} \beta_j^{(t)}$, then using $\delta_-^{(t)}$, we have:
\begin{align*}
- S + \rho &\leq \delta_-^{(t)}.
\end{align*}
This implies that
\begin{align*}
	S \geq \rho -  \delta_-^{(t)} \geq \frac{1}{4} \beta_j^{(t)} -  \delta_-^{(t)} \geq |\Delta^{(t)}| + \frac{1}{8} \beta_j.
\end{align*}
Hence if the maximizer of $\Phi$ is $\beta_+$, it must be the case:
\begin{align}
 \beta_j^{(t + 1)}  &=  \beta_j^{(t)} \left( 1- \eta \Delta^{(t)}  - \eta S \pm  \eta \frac{\poly(\Kapppa ) c_t}{d^2}\right) \nonumber \\
 & \leq  \beta_j^{(t)}\left( 1 - \eta \frac{1}{8}  \beta_j^{(t)}  \right). \label{eq:fajosifjaoijsio}
\end{align}
\end{enumerate}
To sum up, the result follows by combining Eq~\eqref{eq:fajosifjaoijsio}, ~\eqref{eq:jfoasijasf}, ~\eqref{eq:fjaoisfjiafajsif}, ~\eqref{eq:fajiosfsafjasiasj} and~\eqref{eq:fajfiosajasoifasjf}.
\end{proof}

\subsubsection{Proof of the Inductive Hypothesis}\label{app_proof_H1}

\paragraph{Verifying the inductive hypothesis $\cH_1$ during Stage 2.}

\begin{proof}[Proof of Proposition \ref{def_H1}]
We first verify the inductive hypothesis for $t \le T_3$.
The bound for $v\notin \set{S}_{pot}$ follows from Claim~\ref{claim:delta_21} and Proposition~\ref{prop:lb_beta_gamma}.
We prove the bound for $v\in \set{S}_{pot}$ by tracking the gradient descent dynamic.
Following Eq~\eqref{eq:nbosdhofaiaho}, for every neuron $v$, and every $p \in [d]$, define
\begin{align*}
	&Q_p^{(t) } := 2 \sum_{j \geq 2} \left( B_{1,2j}  (a_p - \gamma_p^{(t)} ) \frac{(v_p^{(t)})^{2j - 2}}{\| v^{(t)}\|_2^{2j - 2}} - B_{2, 2j} \frac{ \sum_{r}  (a_r - \gamma_r^{(t)} ) (v_r^{(t)})^{2j} }{\| v^{(t)} \|_2^{2j}}  \right), \\
	& R_p^{(t)}:=  2 \sum_{j \geq 2} \left( B_{1,2j}  (a_p - \gamma_p^{(t)} ) \frac{(v_p^{(t)})^{2j - 2}}{\| v^{(t)}\|_2^{2j - 2}}  \right).
\end{align*}
Hence, using Eq~\eqref{eq_gradient_approx}, we have that for every $i \in [d]$
{\small\begin{align} \label{eq:fasjoasifjaosifj}
	\left[v_i^{(t + 1)} \right]^2 = \left[ v_i^{(t )} \right]^2 \left( 1  - \eta \Delta^{(t)} + \eta C_1(a_i - \beta_i^{(t)} - \gamma_i^{(t)})  + \eta Q_p^{(t) } \pm \eta  \frac{c_t \poly(\Kappa)}{d^2}  \right).
\end{align}}
Hence, we have that for every $i, j \in [d]$,  %
{\small\begin{align*}
	\frac{ \left[v_i^{(t + 1)} \right]^2}{ \left[v_j^{(t + 1)} \right]^2 } = \frac{\left[ v_i^{(t )} \right]^2 }{ \left[v_j^{(t)} \right]^2 } \left( 1 +  \eta C_1(a_i - \beta_i^{(t)} - \gamma_i^{(t)})  -  \eta C_1(a_j - \beta_j^{(t)} - \gamma_j^{(t)}) + \eta R_i^{(t)} - \eta R_j^{(t)} \pm \frac{c_t \poly(\Kappa)}{d^2} \right).
\end{align*}}%
Now, if $|\bar{v}_i^{(t)}|^2, |\bar{v}_j^{(t)}|^2 \leq \frac{c_t}{d}$, we also have that
{\small\begin{align*}
	\frac{ \left[v_i^{(t + 1)} \right]^2}{ \left[v_j^{(t + 1)} \right]^2 } = \frac{\left[ v_i^{(t )} \right]^2 }{ \left[v_j^{(t )} \right]^2 } \left( 1 +  \eta C_1(a_i - \beta_i^{(t)} - \gamma_i^{(t)})  -  \eta C_1(a_j - \beta_j^{(t)} - \gamma_j^{(t)})  \pm  \eta \frac{c_t \poly(\Kappa)}{d^2} \right).
\end{align*}}%
This implies that
{\small\begin{align*}
	\frac{ \left[v_i^{(t + 1)} \right]^2}{ \left[v_j^{(t + 1)} \right]^2 } = \frac{\left[ v_i^{(0)} \right]^2 }{ \left[v_j^{(0)} \right]^2 }\exp \left\{ \pm 2 \eta C_1 \sum_{s \leq t}  \left( |a_i - \beta_i^{(t)} - \gamma_i^{(t)} | +   |a_j - \beta_j^{(t)} - \gamma_j^{(t)} |  \right) \pm  \eta  \frac{c_t \poly(\Kappa)}{d^2} t  \pm \eta^2 t \right\}.
\end{align*}}%
Hence using Proposition~\ref{prop:lb_beta_gamma} we show that when $|\bar{v}_i^{(0)}|^2, |\bar{v}_j^{(0)}|^2 \leq \frac{c_0}{d} $, then $|\bar{v}_i^{(t)}|^2, |\bar{v}_j^{(t)}|^2 \leq \frac{c_t}{d}$ as well for every $t  \leq T_3$.
Now, we need to give an upper bound on the the coordinates of the neurons.
For every $v \notin \set{S}_{pot}$, we know that all coordinates $j \in [d]$ satisfies that $|\bar{v}_j^{(0)}|^2 \leq \frac{c_0}{d}$. Hence, by Eq~\eqref{eq:fasjoasifjaosifj}, we have that
{\small\begin{align*}
	\left[v_i^{(t + 1)} \right]^2 &\leq \left[ v_i^{(t )} \right]^2 \left( 1 + \eta  |\Delta^{(t)} | + \eta  C_1 |a_i - \beta_i^{(t)} - \gamma_i^{(t)}| + \eta \kappa O( \| v^{(t)} \|_4^4) \right), \\
  & \leq \left[ v_i^{(t )} \right]^2\left( 1 + \eta  |\Delta^{(t)} | + \eta  C_1 |a_i - \beta_i^{(t)} - \gamma_i^{(t)}| + \eta \kappa  \frac{c_t^2}{d^2} \right).
\end{align*}}%
Hence we have that
\begin{align*}
	\left[v_i^{(t + 1)} \right]^2 \leq  \left[ v_i^{(0)} \right]^2 \exp \left\{  \eta \sum_{s \leq t} \left( |\Delta^{(s)} | +   C_1 |a_i - \beta_i^{(s)} - \gamma_i^{(s)}| \right) + \eta \kappa \frac{c_t^2}{d} t  \right\}.
\end{align*}
Hence using Proposition~\ref{prop:lb_beta_gamma} and Claim~\ref{claim:delta_21}, we have proved  Eq~\eqref{eq:fsajfoiofasjcjuiefh} and Eq~\eqref{eq:bnodsifahsoia}.

Next, we proceed to the norm of neurons $v \in \set{S}_{i, good}$.
For this neuron, using the fact that $|v_i^{(t)}| \geq d^6$ and equation~\ref{eq:cor_upppp}, we have that
{\small\begin{align*}
v_i^{(t + 1)} = \left( 1 - \eta \Delta^{(t)} + \eta C_1(a_i - \beta_i^{(t)} - \gamma_i^{(t)})   + \eta C_2 (a_i - \gamma_i^{(t)}) \pm \eta \frac{\poly(\Kapppa ) c_t}{d^2}   \right) v_i^{(t)}.
\end{align*}}%
Hence, for every $t$, using Eq~\eqref{eq:fjaijasofsajffjs} we obtain that: %
\begin{align*}
\left[v_i^{(t)} \right]^2  \geq \frac{\left[v_i^{(0)} \right]^2}{d} \geq \frac{1}{\lambda_0 \poly(d)}.
\end{align*}
Notice that for every neuron $ v \in \set{S}_g$, we have that $|\bar{v}_i^{(0)}|^2 \leq \frac{c_0}{d}$ can happen for at most $O(\log^{0.01} d)$ many $i \in [d]$.
Denote this set as $\set{Q}_v$, we have that
$\left[v_i^{(t + 1)} \right]^2 / \left[ v_i^{(t )} \right]^2$ is at most
{\small\begin{align*}
  & 1  - \eta \Delta^{(t)} + \eta C_1(a_i - \beta_i^{(t)} - \gamma_i^{(t)})  + \eta C_2 (a_i - \gamma_i^{(t)})+ \eta O\left( \sum_{p \in \set{Q}_v}  [( \gamma_p^{(t)} - a_p )]^+ \right) + \eta  \frac{c_t \poly(\Kappa)}{d^2}  \\
  &\leq 1  - \eta \Delta^{(t)} + \eta C_1(a_i - \beta_i^{(t)} - \gamma_i^{(t)})  + \eta C_2 (a_i - \gamma_i^{(t)})+ \eta O\left( \sum_{p \in \set{Q}_v}  |a_p - \beta_p^{(t)} - \gamma_p^{(t)}| \right) + \eta  \frac{c_t \poly(\Kappa)}{d^2}.
\end{align*}}%
Hence, for every $t \leq T \leq T_3$, using Eq~\eqref{eq:fakvjbdabfaj} and Eq~\eqref{eq:fjaijasofsajffjs}, by working on $\hat{\gamma}_i$ and notice that $\hat{\gamma}_i \leq \gamma_i$, we conclude that for every $i \in [d]$.
\begin{align*}%
	\eta \sum_{t \leq T}  |a_i - \gamma_i^{(t)} |  \leq \log \left( \frac{d}{\hat{\gamma}_i^{(0)}} \right).
\end{align*}
Combining the above equation with Eq~\eqref{eq:fjaijasofsajffjs} we have that  for every $i \in [d]$:
\begin{align*}
\left[v_i^{(t )} \right]^2 \leq \left[v_0^{(t )} \right]^2 \exp \{ \Gamma_i \} \leq \left[v_0^{(t )} \right]^2 \frac{d}{\hat{\gamma}_i^{(0)}}
\end{align*}
This proves that gradient truncation never happens during this substage.
Now, apply Lemma~\ref{lem:stage_1_final}, which says that
\begin{align}\label{eq_error_T3}
\hat{\gamma}_i^{(0)} \geq \mu(\set{S}_{i, good}) \frac{\lambda_0}{\poly(d)} \geq \poly(d) \mu(\set{S}_{i, bad}).
\end{align}
We complete the proof of the first our statements.
For the last statement on $\gamma,\beta$, Claim \ref{claim:zero_order_update_new_2} also proves the upper bound on $\gamma_i^{(t)} + \beta_i^{(t)}$ as in equations \eqref{eq_gamma} and \eqref{eq_beta}.
Taking $\delta = \frac{1}{\kappa d}$, we can show that
\[ \beta_i^{(t)} + \gamma_i^{(t)} \leq \frac{\poly(\Kapppa)}{ d}. \]

Next verify the running inductive hypothesis $\set{H}_2$ for $T_3 \le t \le T_4$.
Based on Lemma \ref{lem:final_333}, we have the following bounds on the update of each coordinate of each neuron.
For every $i \in [d]$, using Eq~\eqref{eq_gradient_approx}, we have that
\begin{align*}
|v_i^{(t + 1)}  |=    |v_i^{(t)} | \left( 1 \pm \eta O\left( |\Delta^{(t)}| +  |a_i - \beta_i^{(t)} - \gamma_i^{(t)}| + |a_i - \gamma_i^{(t)}| \right) \pm \frac{c_t \Kappa}{d^2} \right)
\end{align*}
Note that by the definition of $\Phi^{(t)}$ at Lemma~\ref{lem:final_333}, we have that
\begin{align}
|a_i - \gamma_i | &\leq  O \left( \left| C_1( a_i - \gamma_i) + \frac{C_2 \gamma_i}{\beta_i + \gamma_i} \left( a_i - \gamma_i \right)  \right| \right) + O(| \beta_i|)
\nonumber\\
& \leq O\left(  \beta_+ + \delta_+ + \delta_- \right) = O(\Phi) \label{eq_err_gamma}
\end{align}
Hence, we obtain that for $t \in [T_3, T_4]$, with Lemma~\ref{lem:final_333}:
\begin{align*}
|v_i^{(t)}  |   = |v_i^{(T_3)}| \exp \left\{ \pm O\left( \log \frac{t}{d}  \right)  \right\}
\end{align*}
Hence as long as $T_4 \leq \frac{ d^{1 + 10\cz}}{\eta}$, we obtain the running hypothesis $\set{H}_2$ at this substage.
\end{proof}

\section{Proof of the Finite-Width Case}
\label{app_finite}

We begin by describing the connection between the finite-width dynamic and the infinite-width dynamic.
For a vector $w \in \set{S}_g$, let $\tw^{(0)}$ and $w^{(0)}$ be a neuron with initialization $w$ in the finite-width and infinite-width case, respectively.
Let $\cP$ denote the infinite neuron distribution and $\tilde{\cP}$ denote the finite neuron population with $m$ samples.
Our idea is to track the difference between $\tw^{(t)}$ and $w^{(t)}$, denoted by $\xi_w^{(t)}$, throughout the update.
The neuron $w$ denotes a weight vector from the infinite width case that we specify below.
Specifically, the truncated gradient descent update of $\tw$ for the finite-width case and the update of $w$ for the infinite-width case is equal to the following.
\begin{align}
	w^{(t + 1)} &= w^{(t)}- \eta \cdot \indi{\norm{w^{(t)}}^2\le 1/\lambda} \nabla_{w^{(t)}} L_{\infty}(\cP),\label{eq_inf_update}\\
	\tw^{(t+1)} &= \tw^{(t)} - \eta \cdot \indi{\norm{\tw^{(t)}}^2 \le 1/\lambda} {\nabla}_{\tw^{(t)}} L(\tilde{\cP}) +\eta \Xi_w^{(t)}, \label{eq_finite_update}
\end{align}
where $\Xi_w^{(t)}$ is an extra error term that arises from the sampling error of the empirical loss.

Our main result in this section is that provided with polynomially many neuron samples and training samples, the errors $\xi_w^{(t)}$ and $\Xi_w^{(t)}$ in equation \eqref{eq_finite_update} remain polynomially small throughout Algorithm \ref{alg}.
We first state the result for Stage 1.
\begin{lemma}[Error propagation of Stage 1]\label{lem:error_final_stage_1}
	In the setting of Theorem \ref{thm:main}, let $\tilde{\cP}^{(0)}$ be a uniform distribution over $m$ i.i.d. samples from $\cP$.
	There exists a fixed value $\Xi \in [0, {\lambda_0}/{\poly(d)}]$ such that for every iteration $t \leq T_2$, the average norm of the error is small:
	$\E_{\tw \sim \tilde{\cP}^{(t)}} \| \xi_w\|_2^2 \leq \poly(d) \Xi.$
	Furthermore, for every $\tw^{(t)}$ in $\tilde{\cP}^{(t)}$, the individual error terms are small:
	$\|  \Xi_w^{(t)} \|_2^2 \leq \Xi$ and $\| \xi_w^{(t)}\|_2^2 \leq {\poly(d)}\cdot  \Xi / {\lambda_0}$.
\end{lemma}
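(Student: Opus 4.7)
We proceed by an induction on the iteration $t$, simultaneously bounding the average error $\E_{\tw\sim\tilde{\cP}^{(t)}}\|\xi_w^{(t)}\|_2^2$ and the per-neuron error $\max_{\tw}\|\xi_w^{(t)}\|_2^2$. First, the sampling error $\Xi$ is fixed via Claim \ref{claim:911}: for polynomially large $m$ and $N$, with high probability the gradient deviation between the empirical and population losses is uniformly bounded by some $\Xi = \poly(1/\lambda_0)/\poly(m,N)\le \lambda_0/\poly(d)$ for every neuron and every $t \le T_2$, after a union bound. This directly furnishes the individual bound $\|\Xi_w^{(t)}\|_2^2 \le \Xi$.

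Second, I would set up the key recursion. Writing
\begin{align*}
    \xi_w^{(t+1)} = \xi_w^{(t)} - \eta\bigl[\nabla_{\tw^{(t)}}L_\infty(\tilde{\cP}^{(t)}) - \nabla_{w^{(t)}}L_\infty(\cP^{(t)})\bigr] + \eta\Xi_w^{(t)},
\end{align*}
I would decompose the bracket into (i) a \emph{query-point} perturbation, arising because we evaluate the gradient at $\tw^{(t)}$ rather than $w^{(t)}$ against the same distribution, and (ii) a \emph{distributional} perturbation, arising because $\tilde{\cP}^{(t)}$ is a finite sample of $\cP^{(t)}$. Term (i) is linear-plus-higher-order in $\xi$, while term (ii) is controlled by the same concentration inequality used for $\Xi$. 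By Proposition \ref{prop:err_higher}, the higher-order-in-$\xi$ terms are dominated by the first-order ones throughout Stage 1, so it suffices to analyse the linearization $H^{(t)}\xi_w^{(t)}$ of term (i).

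Third, I would bound the average error. Using the inductive hypothesis $\set{H}_1$ from Proposition \ref{def_H0}, every $w^{(t)}$ with initializer in $\set{S}$ has $\|w^{(t)}\|_\infty^2, \|\bar{w}^{(t)}\|_\infty^2 = O_\kappa(\log d/d)$ and $\E_{\cP^{(t)}}[w_i^2]\le 2\kappa/d$. This forces the relevant entries of $H^{(t)}$ to be $O_\kappa(1/d)$, whence for $\eta\le\lambda_0^2$ we get the one-step bound
\begin{align*}
    \E_{\tw\sim\tilde{\cP}^{(t)}}\|\xi_w^{(t+1)}\|_2^2 \le (1\pm o(1))\Bigl(1 + \tfrac{\eta}{\poly(d)}\Bigr)\E\|\xi_w^{(t)}\|_2^2 + O(\eta\,\Xi),
\end{align*}
which is Proposition \ref{prop:error_total_stage_1}. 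Iterating over $T_2 = \tilde{O}_\kappa(d^2/\eta)$ steps, the multiplicative factor accumulates to $\poly(d)$, giving $\E\|\xi_w^{(t)}\|_2^2 \le \poly(d)\,\Xi$. For the exceptional neurons with initializer outside $\set{S}$, the crude bound $\|w^{(t)}\|_2^2\le 1/\lambda_0$ from gradient truncation suffices, since this set has mass at most $1/\poly(d)$ in $\cP^{(0)}$. Combined with the $\poly(1/\lambda_0)$-Lipschitz property of the gradient on the truncation ball, a single-step perturbation of any individual neuron is controlled by a $\poly(d)$-factor times the average, yielding $\max_{\tw}\|\xi_w^{(t+1)}\|_2^2\le \poly(d)\cdot\E_{\tw}\|\xi_w^{(t+1)}\|_2^2$ as in Proposition \ref{prop:err_ind_stage_1}, and hence $\|\xi_w^{(t)}\|_2^2 \le \poly(d)\,\Xi/\lambda_0$. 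Throughout, $\Xi/\lambda_0\le 1/\poly(d)$, so both $w^{(t)}$ and $\tw^{(t)}$ stay strictly inside the truncation ball with the same indicator, making the decomposition self-consistent.

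The main obstacle will be obtaining the one-step bound with a $1+\eta/\poly(d)$ factor rather than a $1+\Omega(\eta)$ factor: the latter would compound to $\exp(\Omega(d^2))$ over Stage 1. The crucial ingredient is the conditional-symmetry property established in Claim \ref{cl_ss}: cross terms in $H^{(t)}$ that couple different coordinates of $\xi_w$ average to zero under $\cP^{(t)}$, so the effective operator norm is driven by the small per-coordinate variances $\E_{\cP^{(t)}}[w_i^2] \le O(\kappa/d)$ rather than by a generic $O(1)$ Hessian estimate. A secondary obstacle is neurons that drift close to the truncation boundary; these are handled by the mass bound on $\set{S}_g\setminus\set{S}$ and the growth estimate of Claim \ref{lem:grow_5}.
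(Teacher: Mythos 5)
Your overall architecture --- fix $\Xi$ via Claim~\ref{claim:911}, split the gradient deviation into a query-point part and a distributional part, invoke Proposition~\ref{prop:err_higher} to linearize in $\xi$, bound the average via a one-step recursion and the individual error via Proposition~\ref{prop:err_ind_stage_1} --- does track the paper's outline. But there are two substantive errors in the crucial quantitative step.

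First, the one-step multiplicative factor you write down is not achievable uniformly, and the correct one would compound to an exponential. Proposition~\ref{prop:error_total_stage_1} gives a per-step factor of $1 + \eta\,c_t\,\poly(\kappa)/d^2 + \eta\,O\bigl(\max\{\tDelta^{(t)},\tdelta^{(t)}\}\bigr)$, where $\tDelta^{(t)},\tdelta^{(t)}$ are the finite-sample 0th/2nd-order residuals. Early in Stage~1 these are $\Theta(1/d)$, not $O(1/\poly(d))$ with degree $\geq 2$, so the per-step factor is as large as $1 + \Theta(\eta/d)$. Compounded over $T_2 = \Theta_{\kappa}(d^2/(\eta\log d))$ iterations this alone gives $\exp(\Omega(d/\log d))$, which is what you said you needed to avoid. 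The paper closes this by showing that the factor need not be small pointwise, only summable: using the geometric decay of $\Delta^{(t)},\delta_\pm^{(t)}$ from Lemma~\ref{lem:zero_two3} and Proposition~\ref{lem:zero_two2}, it proves
\begin{align*}
\sum_{t \leq T_2} \Bigl(\eta\,\tfrac{c_t\,\poly(\kappa)}{d^2} + \eta\,\max\bigl\{|\Delta^{(t)}|,\delta_+^{(t)},\delta_-^{(t)}\bigr\}\Bigr) \leq \poly(\kappa_1),
\end{align*}
so the accumulated multiplicative factor is a constant in $d$. The $\poly(d)\,\Xi$ bound then comes from accumulating the \emph{additive} $O(\eta\,d^2\,\Xi)$ increments over $T_2$ steps, not from compounding the multiplicative factor as you assert. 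Without the time-dependent decay argument, your recursion does not close.

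Second, your ``crucial ingredient'' is wrong: conditional symmetry (Claim~\ref{cl_ss}) is not available for $\tilde{\cP}^{(t)}$, and the paper makes this explicit --- the running hypothesis used in Section~\ref{app_finite} is $\tilde{\cH}_0$, defined precisely as $\cH_0$ \emph{without} the conditionally-symmetric property, because a finite empirical measure is not conditionally symmetric. The mechanism that replaces ``cross terms average to zero'' is a sign-definiteness (PSD) argument: Claims~\ref{claim:0_2_error_F}, \ref{claim:1_error_F}, \ref{claim:err_bound_four_plus_1} rewrite the leading error contributions as perfect squares or traces of products of PSD matrices, showing $\E\langle\xi_v,\tilde\nabla_{2j,v,p}\rangle\geq 0$ for the dominant indices $p$, while the remaining indices are bounded by $O(c_t\kappa/d^2)\|\xi_v\|_2^2$ using $\|\bar w\|_\infty^2\leq c_t/d$. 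Appealing to Claim~\ref{cl_ss} here would be an invalid step, not merely a different route.
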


The proof of Lemma \ref{lem:error_final_stage_1} can be found in Section \ref{sec_proof_error_stage1}.
Next, we consider the error propagation of Stage 2.1.
We show that the norm of $\xi_w$ is much smaller than that of $w$.
\begin{lemma}[Error propagation of Stage 2.1]\label{lem:error_final_stage_2}
	In the setting of Theorem \ref{thm:main}, let $\tilde{\cP}^{(T_2 + 1)}$ be a uniform distribution over $m$ i.i.d. samples from $\cP^{T_2+1}$.
	There exists a fixed value $\Xi  \in \left[0, \frac{1}{\poly_{\kappa} (d)} \right]$ such that for every iteration $T_2 < t \le T_3$ and every neuron $\tw^{(t)}$ in $\tilde{\cP}^{(t)}$, the error terms are small:
	$\|\Xi_w^{(t)} \|_2^2 \leq \Xi$,
	$\|\xi_w^{(t)}\|_2^2 \leq \min(\poly_{\kappa}(d) \Xi, \|\xi_w^{(t)}\|_2^2  \leq \| w^{(t)}\|_2^2/ {d^{20}} )$.
\end{lemma}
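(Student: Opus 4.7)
The plan is to proceed by induction over $t$ from $T_2 + 1$ to $T_3$, maintaining the stated bounds on $\|\Xi_w^{(t)}\|_2^2$ and $\|\xi_w^{(t)}\|_2^2$ simultaneously. The base case at $t = T_2 + 1$ follows from Lemma \ref{lem:error_final_stage_1}, with one extra observation: at the end of Stage 1, using equation \eqref{eq_error_T3} and the inductive hypothesis $\set{H}_2$, the good neurons have grown to $\|w^{(T_2)}\|_2^2 \geq 1/(\lambda_0 \poly(d))$, while Lemma \ref{lem:error_final_stage_1} gives $\|\xi_w^{(T_2)}\|_2^2 \leq \poly(d) \Xi_0 / \lambda_0$. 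Choosing the target $\Xi$ smaller than $\Xi_0$ by a factor of $\poly_\kappa(d)$ (which is affordable because we will take $m, N \geq \poly_\kappa(d)$) yields $\|\xi_w^{(T_2)}\|_2^2 \leq \|w^{(T_2)}\|_2^2 / d^{20}$ at the base.

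For the inductive step on $\Xi_w^{(t)}$, I would apply Claim \ref{claim:911} with $\lambda = \lambda_1$ and $\cP = \cP^{(t)}$. By part 1 of Proposition \ref{def_H1}, gradient truncation never activates in the infinite-width case during Stage 2, so every $w^{(t)} \sim \cP^{(t)}$ satisfies $\|w^{(t)}\|_2^2 \leq 1/\lambda_1$. Claim \ref{claim:911} then delivers $\|\Xi_w^{(t)}\|_2^2 \leq \poly(1/\lambda_1) \log(m/\delta) (1/\sqrt{m} + 1/\sqrt{N})$, which can be made smaller than any target $\Xi = 1/\poly_\kappa(d)$ by choosing $m, N$ large enough (as required in Theorem \ref{thm:main}).

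For the inductive step on $\xi_w^{(t)}$, expand
\begin{align*}
\xi_w^{(t+1)} = \xi_w^{(t)} - \eta \bigl[\nabla_{\tw^{(t)}} L_\infty(\tilde{\cP}^{(t)}) - \nabla_{w^{(t)}} L_\infty(\cP^{(t)})\bigr] + \eta \Xi_w^{(t)},
\end{align*}
and split the gradient difference into (i) a first-order Taylor term in $\xi_w^{(t)}$ (the change due to moving the particle) and (ii) a term involving $\E_{w'}[F(\xi_{w'}^{(t)})]$ for some polynomial $F$ (the change due to the distributional difference $\tilde{\cP}^{(t)} - \cP^{(t)}$). As in Proposition A.23 of Stage 1, higher-order cross terms contribute at most $O(\eta^2 \poly_\kappa(d)\Xi)$ and are absorbable. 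Using the inductive hypothesis $\set{H}_2$ to control the gradient Jacobians, I obtain an average recursion of the form $\E_{\tw}\|\xi_w^{(t+1)}\|_2^2 \leq (1 + \eta \poly_\kappa(d)) \E_{\tw}\|\xi_w^{(t)}\|_2^2 + \eta^2 \Xi$, which after $T_3 - T_2 = \Theta(d \log d / \eta)$ iterations remains bounded by $\poly_\kappa(d) \Xi$.

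The main obstacle, and where Stage 2.1 differs from Stage 1, is the \emph{relative} bound $\|\xi_w^{(t)}\|_2^2 \leq \|w^{(t)}\|_2^2 / d^{20}$ for neurons whose norm can grow by polynomial factors. The key observation is that the dominant part of the update of every neuron in Stage 2.1 is multiplicative: by Claim \ref{claim:grad_21} and equation \eqref{eq:cor_upppp}, for every coordinate $i$,
\begin{align*}
w_i^{(t+1)} = w_i^{(t)} \bigl(1 - \eta \Delta^{(t)} + \eta C_1 (a_i - \beta_i^{(t)} - \gamma_i^{(t)}) + \eta C_2 \mathbf{1}_{w \in \set{S}_{i,\mathrm{good}}}(a_i - \gamma_i^{(t)})\bigr) \pm \eta \tfrac{c_t \poly_\kappa(d)}{d^2} w_i^{(t)},
\end{align*}
and the same multiplier (up to $O(\eta \poly_\kappa(d) \Xi)$ additive perturbations from the gradient difference and from $\Xi_w^{(t)}$) applies to $\tw_i^{(t+1)}$. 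Consequently $\xi_w^{(t)}$ evolves under the same rescaling as $w^{(t)}$ up to lower-order perturbations, so the ratio $\|\xi_w\|_2^2 / \|w\|_2^2$ is preserved across iterations modulo a multiplicative factor $(1 + \eta \poly_\kappa(d))^{T_3 - T_2}$ plus additive contributions of size $\poly_\kappa(d)\Xi / \|w^{(t)}\|_2^2$. Both of these can be controlled: the multiplicative factor is at most $\exp(\poly_\kappa(d) \log d) = d^{\poly_\kappa}$, so starting the relative error at $1/d^{20 + \Omega(\poly_\kappa)}$ at $t = T_2 + 1$ keeps it below $1/d^{20}$; the additive term is small because for good neurons $\|w^{(t)}\|_2^2 \geq 1/(\lambda_0 \poly(d))$ and $\Xi \leq 1/\poly_\kappa(d)$ with a large enough polynomial. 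For non-good neurons, $\|w^{(t)}\|_2^2$ does not shrink much below its initial value by equation \eqref{eq:fsajfoiofasjcjuiefh}, so the relative bound is easier. Integrating bounds by a union bound (since there are only $m = \poly_\kappa(d)$ neurons), the individual bound $\|\xi_w^{(t)}\|_2^2 \leq \poly_\kappa(d) \Xi$ follows from the average bound via the same $\poly(d)$ blow-up as in Proposition A.24.
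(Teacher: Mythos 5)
Your proposal follows essentially the same skeleton as the paper's (very terse) proof: appeal to Claim~\ref{claim:911} for the sampling error $\Xi_w^{(t)}$, use the no-truncation guarantee from part~1 of Proposition~\ref{def_H1}, and run the same inductive recursion on $\|\xi_w^{(t)}\|_2^2$ as in the proof of Lemma~\ref{lem:error_final_stage_1}. The paper packages the per-term gradient-error estimates needed for that recursion into Claim~\ref{claim:err_stage_2} (and says ``take $\alpha = 1$''); you instead re-derive the relevant bounds from the Taylor expansion and Proposition~\ref{prop:err_higher}. That is equivalent in content. Your additional observation that the relative bound $\|\xi_w\|_2^2/\|w\|_2^2$ is approximately preserved because both $w$ and $\tilde w$ are driven by the same multiplicative update of Claim~\ref{claim:grad_21} (with $O(\eta\,\E\|\xi_w\|_2^2 + \eta\Xi)$ perturbations to the multiplier) is a valid and slightly more explicit justification than what the paper writes down; it is compatible with, and essentially a disaggregation of, the $\alpha$-trade-off already present in Claim~\ref{claim:err_stage_2}.

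There is, however, an arithmetic slip you should fix. You state the per-step recursion as $\E\|\xi_w^{(t+1)}\|_2^2 \leq (1 + \eta\,\poly_\kappa(d))\,\E\|\xi_w^{(t)}\|_2^2 + \eta^2\Xi$ and conclude $(1 + \eta\,\poly_\kappa(d))^{T_3-T_2} \leq \exp(\poly_\kappa(d)\log d) = d^{\poly_\kappa}$. With $T_3 - T_2 = \Theta(d\log d/\eta)$, literally $(1+\eta X)^{T_3-T_2} \approx \exp(X\, d\log d)$, so a factor $X = \poly_\kappa(d)$ would yield $\exp(\poly_\kappa(d)\,d\log d)$, which is exponential in $d$, not $d^{\poly_\kappa}$. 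The correct per-step growth is $X = O(\sigma_{\max}) = O(\poly(\Kapppa)/d)$ (by equations~\eqref{eq_gamma}--\eqref{eq_beta} the second moment satisfies $\|\E_{w\sim\cP} ww^\top\|_2 \le \poly(\Kapppa)/d$, and this is exactly what enters Claim~\ref{claim:err_stage_2}). With $X = O_\kappa(1/d)$ the accumulated factor is indeed $d^{O_\kappa(1)}$, and your subsequent conclusions are fine. So your final statements are correct, but the intermediate step uses $\poly_\kappa(d)$ where $\poly(\kappa)/d$ (or $\sigma_{\max}$) is meant; as written the recursion would not close.
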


The proof of Lemma \ref{lem:error_final_stage_2} involves carefully studying the error term and follows a similar argument to Lemma \ref{lem:error_final_stage_1}.
The details can be found in Section \ref{sec_error_stage21}.
Finally, we consider the error terms in the final stage.
We use a different error analysis.
At iteration $T_3$, let us consider the set
\begin{align*}
	\set{S}_{i, singleton} := \left\{  v \in \set{S}_g \mid \| \bar{v} - e_i \|_2 \leq \frac{1}{\poly(d)}  \right\}, \text{ for } 1\le i\le d.
\end{align*}
Let $\set{S}_{singleton} = \cup_{i = 1}^d \set{S}_{i , singleton}$.
Consider the set
\begin{align*}
	\set{S}_{ignore} := \left\{v \in \set{S}_g \mid v \in \set{S}_{pot}, v \notin \set{S}_{singleton} \right\},
\end{align*}
where we recall the definition of $\set{S}_{pot}$ in Proposition \ref{def_H1}.
We state the error propagation of the final substage as follows.
\begin{lemma}[Error propagation of Stage 2.2]\label{lem:error_final_stage_22}
	In the setting of Theorem \ref{thm:main}, let $\tilde{\cP}^{(T_3+1)}$ be a uniform distribution over $m$ i.i.d. samples from $\cP^{(T_3+1)}$.
	There exists a fixed value $\Xi \in \left[0, {1}/{\poly_{\kappa}(d)} \right]$
	such that for every iteration $T_3 < t \le T_4$ and for every $\tw^{(t)}$,
	the error term $\|  \Xi_w^{(t)} \|_2^2 \leq \Xi$.
	For every neuron $w$ with $w \notin \set{S}_{ignore}$,
	we have that $\|\xi_w^{(t)}\|_2^2  \leq \| w^{(t)}\|_2^2 / {\poly(d)}$ and
	$\E_{\tilde{w} \sim \ctP^{(t)}, w \in \set{S}_{ignore}} \| \xi_w^{(t)} \|_2^2 \leq {1} / {\poly(d)}$.
\end{lemma}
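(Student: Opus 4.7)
The plan is to prove Lemma \ref{lem:error_final_stage_22} by a forward induction on $t$ from $T_3+1$ to $T_4$, maintaining the pointwise bound $\|\xi_w^{(t)}\|_2^2 \le \|w^{(t)}\|_2^2/\poly(d)$ for $w \notin \set{S}_{ignore}$ and the in-expectation bound $\E_{\tw \sim \ctP^{(t)}, w \in \set{S}_{ignore}} \|\xi_w^{(t)}\|_2^2 \le 1/\poly(d)$ simultaneously. The base case at $t = T_3+1$ follows directly from Lemma \ref{lem:error_final_stage_2}, which delivers the stronger bound $\|\xi_w^{(T_3)}\|_2^2 \le \|w^{(T_3)}\|_2^2 / d^{20}$, giving substantial margin for the induction to absorb per-step amplification.

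For the sampling error $\Xi_w^{(t)}$, I would invoke Claim \ref{claim:911} with the stage-2 truncation parameter $\lambda_1$: since gradient truncation ensures $\|\tw^{(t)}\|_2^2 \le 1/\lambda_1$ throughout, the gradient gap is bounded by $\poly(1/\lambda_1) \log(m/\delta) (1/\sqrt{m} + 1/\sqrt{N})$. Choosing $m, N = \poly_{\kappa}(d)$ large enough relative to $1/\lambda_1$ and union-bounding over the $\poly(d)$ iterations, this yields $\|\Xi_w^{(t)}\|_2^2 \le \Xi$ uniformly for a fixed $\Xi = 1/\poly_{\kappa}(d)$.

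For the propagation of $\xi_w^{(t)}$, I would subtract the infinite-width update \eqref{eq_inf_update} from the finite-width update \eqref{eq_finite_update} and linearize. Following the blueprint of Proposition \ref{prop:err_higher}, the higher-order cross-terms in $\xi_w^{(t)}$ are subdominant provided the pointwise invariant holds at time $t$, so the dynamic reduces to $\xi_w^{(t+1)} \approx (I - \eta J_w^{(t)}) \xi_w^{(t)} + \eta\, \Xi_w^{(t)}$, where $J_w^{(t)}$ is the Jacobian of the infinite-width gradient at $w^{(t)}$. I would split into three regimes: (a) for $w \notin \set{S}_{pot}$, the bounds $\|w^{(t)}\|_\infty^2 \le c_t/d$ from $\set{H}_2$ together with the smallness of $|\Delta^{(t)}|, |\delta_\pm^{(t)}|$ from Lemma \ref{lem:final_333} force $\|J_w^{(t)}\|_2 = O(1/d)$, so $\xi_w^{(t)}$ grows at most as fast as $w^{(t)}$; (b) for $w \in \set{S}_{singleton}$, the neuron is within $1/\poly(d)$ of some $e_i$, so the dominant gradient direction is $e_i$ and the $e_i$-coordinate is tightly contractive via equation \eqref{eq:cor_upppp} while cross-coordinate perturbations decay multiplicatively since $|w_i^{(t)}| \ge \poly(d) \max_{j \neq i} |w_j^{(t)}|$; (c) for $w \in \set{S}_{ignore}$ we switch to bounding the aggregate $\E_{\tw\sim\ctP^{(t)},\, w\in \set{S}_{ignore}}\|\xi_w^{(t)}\|_2^2$ rather than individual errors.

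The main obstacle will be regime (c): the $\set{S}_{ignore}$ neurons sit near the separation between becoming singletons and decaying, so no per-neuron contraction is available and the first-order linearization can amplify errors. The key observation is that, by Lemma \ref{lem:stage_1_final} and equations \eqref{eq:vajoisajfoiasfjisajf}--\eqref{eq:bnodsifahsoia} in $\set{H}_2$, the combined probability mass of $\set{S}_{ignore}$ is at most $\mu(\set{S}_{bad})$ plus the vanishing residual of $\set{S}_{pot}$ that did not become singleton-like, so the total contribution to the loss is already $1/\poly(d)$. I would then show that the averaged error satisfies a recursion of the form $(1 + O(\eta/\poly(d))) \cdot \E\|\xi_w^{(t)}\|_2^2 + O(\eta^2\, \Xi)$ per iteration, using the boundedness $\|\tw^{(t)}\|_2^2 \le 1/\lambda_1$ to control the Jacobian norm restricted to this set. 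Since $T_4 - T_3 = O(d^{1+10Q}/\eta)$ and $Q < 1/100$, the cumulative multiplicative factor is $\exp(O(d^{10Q}/\poly(d))) = 1 + o(1)$, and the additive noise contributes at most $T_4 \cdot \eta^2\, \Xi / (1/\poly(d)) = 1/\poly(d)$. Combining the three regimes with the sampling bound closes the induction and yields Lemma \ref{lem:error_final_stage_22}.
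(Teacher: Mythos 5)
Your overall decomposition into three regimes (non-potential, singleton, ignore) matches the paper's, and your treatment of $\Xi_w^{(t)}$ via Claim~\ref{claim:911} and of the $\set{S}_{ignore}$ mass via averaging is also in the right spirit. But you are missing the central technical device of the paper's proof, and you make a concretely false claim in regime~(b).

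The paper does not simply continue the coupling from Stage~2.1. At $t = T_3+1$ it \emph{re-initializes the infinite-width reference process}: every $v \in \set{S}_{i, singleton}$ is projected exactly onto $e_i$ (keeping the norm), every $v \in \set{S}_{ignore}$ is set to $0$, and each singleton is split into a symmetric pair $v_\pm$ so that conditional-symmetry (and hence Claim~\ref{claim:grad_21}) continues to hold. Most importantly, for $v \in \set{S}_{i, singleton}$ the error is \emph{redefined} to be only the component orthogonal to $e_i$: $\tv = v + \xi_v$ with $\langle \xi_v, e_i\rangle = 0$, so the scaling mismatch along $e_i$ is not charged as error at all. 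This is precisely what makes the pointwise invariant $\|\xi_v^{(t)}\|_2^2 \le \|v^{(t)}\|_2^2/\poly(d)$ tenable for singletons. Without it, one would have to control the $e_i$-coordinate mismatch of neurons whose norm grows to $\Theta(1/\lambda_1) = \poly_\kappa(d)$ over $T_4 - T_3 = \Theta(d^{1+10Q}/\eta)$ iterations, which requires delicate bookkeeping across several different polynomial scales.

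Your claim that ``the $e_i$-coordinate is tightly contractive via equation~\eqref{eq:cor_upppp}'' is wrong: the multiplicative factor there is $1 - \tfrac{\eta}{2}\Delta^{(t)} + \eta \tfrac{C_1}{2}(a_i - \beta_i^{(t)} - \gamma_i^{(t)}) + \eta C_2(a_i - \gamma_i^{(t)}) \pm \eta\, c_t\poly(\kappa_2)/d^2$, which is typically $\ge 1$ while $\gamma_i^{(t)} < a_i$ — the singleton neurons \emph{grow} during this stage, they are not contracted. So the argument you give for regime~(b) fails as stated, and the paper's workaround — excluding the $e_i$-direction from the error — is exactly what patches this hole. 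Once you adopt the paper's re-initialization, the remaining estimates are close to what you sketch (Claim~\ref{lem:final_err_3333} handles singletons, Claim~\ref{lem:err_stage_22} handles non-potential neurons with the $\alpha = \sqrt{d}$ split, and the $\set{S}_{ignore}$ mass is controlled in aggregate via $\set{H}_2$ and Lemma~\ref{lem:final_333}).
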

The proof of Lemma \ref{lem:error_final_stage_22} can be found in Section \ref{sec_error_stage22}.
Based on the analysis of error propagation, we are now ready to prove our main result.
We prove Theorem \ref{thm:main} as follows.

\begin{proof}[Proof of Theorem \ref{thm:main}]
Let us denote $\tw^{(t)}$ to be the weight of the neuron $\tw$ at iteration $t$, following the update of Algorithm \ref{alg}.
For the next iteration, we have that
	\[ \tw^{(t + 1)} = \tw^{(t)} - \eta \cdot \indi{\| \tw^{(t)} \|_2^2 \leq \frac{1}{2 \lambda}} \nabla_{\tw_i^{(t)}} \obje(\tilde{\cP}^{(t)}), \]
where $\obje(W)$ denotes the empirical loss. %

Recall that we assume that the learning rate $\eta \le \frac{1}{\poly_{\kappa}(d)}$.
Using Claim~\ref{claim:911}, we can see that when $N$ for a sufficiently large polynomial in $d$, we have that with probability at least $1 - e^{- \log^2 d}$, for every $w \in W$ and every $t \leq T_4$: $\left\|  \nabla_{\tw^{(t + 1)}} \obje(\tilde{\cP}^{(t+1)}) -  \nabla_{w^{(t + 1)}}L_{\infty}(\cP) \right\|_2 \leq \frac{1}{\poly_{\kappa}(d)}$.

For Stage 1, we can first apply Lemma~\ref{lem:error_final_stage_1} with $\Xi = \frac{1}{\poly_{\kappa}(d)} $ and $\ctP$ being a uniform distribution over $W$.
Using $m = \poly_{\kappa}(d)$, we can conclude that for every $w \in W$ and $t \leq T_2$: $\|\tw^{(t)} - w^{(t)} \|_2 \leq   \frac{1}{\poly_{\kappa}(d)} $.

For Stage 2.1, we can use  Lemma~\ref{lem:error_final_stage_2}  with $\Xi = \frac{1}{\poly_{\kappa}(d)}$ to conclude that for every $t \leq T_3$, $\|\tw^{(t)} - w^{(t)} \|_2 \leq   \frac{1}{\poly_{\kappa}(d)} $ as well.
For Stage 2.2, we shall use Lemma~\ref{lem:error_final_stage_22} with $\Xi = \frac{1}{\poly(d)}$ to conclude that
for every neuron $w \notin \set{S}_{ignore}$, we have:
	\[ \| \tw^{(T_4)} - w^{(T_4)}\|_2^2  \leq \frac{1}{\poly(d)} \| w\|_2^2  \text{ and } \E_{\tw^{(T_4)} \sim \ctP^{(T_4)}, w \in \set{S}_{ignore}} \| \tw^{(T_4)} - w^{(T_4)} \|_2^2 \leq \frac{1}{\poly(d)}. \]
These statements together give us the following
	$$\E_{x \sim \cN(0, \id_{d \times d})}\left(f_{\{\tw^{(T_4)} | w \in W\}}(x) - f_{\{ w^{(T_4)} | w \in W \}} (x) \right)^2 \leq \frac{1}{\poly(d)}.$$
Finally, combined with Claim~\ref{claim:911} and Theorem~\ref{thm_inf} we complete the proof of Theorem \ref{thm:main}.
\end{proof}
\subsection{Stage 1.1: Analysis of 0th and 2nd Order Tensor Decompositions}

In this substage, we consider the error terms in the gradients of the 0th and 2nd order tensor decompositions.
Let $\ctH_0$ is the running hypothesis $\cH_0$ in Proposition \ref{def_H0} without the conditionally-symmetric property. %
For a neuron $v$, similar to the definition of $\nabla_v$ in Claim \ref{claim:symmetry}, we define the error gradient $\tilde{\nabla}_v$ as
	$\tilde{\nabla}_v := \nabla_v \objinf(\ctP) = \sum_{j \geq 0} \tilde{\nabla}_{2j, v}.$
Using the update of equation \eqref{eq_inf_update} for the infinite-width case, the gradient of $v$ for the 0th and 2nd order terms over the population loss is given by
{\begin{align*}
	\nabla_{\leq 2, v} = b_0 \left( \E_{\cP} \| w\|_2^2 - 1  \right) v  +  b_2 \left( \E_{\cP} ww^{\top}  -  A \right)  v,
\end{align*}}%
where $A = \text{diag}( \{a_i\}_{i \in [d]})$.
Using the update of equation \eqref{eq_finite_update} for the finite-width case, the gradient of $v$ for the 0th and 2nd order terms over the population loss is given by
\begin{align*}
	{\tilde{\nabla}}_{\leq 2, v} = b_0 \left( \E_{\ctP} \| \tilde{w}\|_2^2 - 1  \right) \tv   +  b_2 \left( \E_{\ctP} \tw \tw^{\top} -  A \right)  \tv.
\end{align*}
The first order terms of the error term $\xi_v$ for the neuron $v$ is given by
{\begin{align}
	\tilde{\nabla}_{0, v, 1} &:=  2 b_0 \left( \E_{\ctP} \langle w, \xi_w \rangle \right) v  +  b_0 \left( \E_{\ctP} \| {w}\|_2^2 - 1  \right)  \xi_v, \nonumber \\
	\tilde{\nabla}_{2, v, 1} &:= b_2 \left( \E_{\ctP} w w^{\top} -  A \right)  \xi_v + b_2 \left( \E_{\ctP} w \xi_w^{\top} \right) v + b_2 \left( \E_{\ctP}  \xi_w w^{\top} \right) v. \label{eq:fajoisfjafjasijf}
\end{align}}%
Let $\tilde{\nabla}_{\leq 2, v, 1}$ denote the sum of $\tilde{\nabla}_{0, v, 1}$ and $\tilde{\nabla}_{2, v, 1}$.
We have the following claim for the error of the 0th and 2nd order terms.
We use the notation $(w, \xi_w)\sim \ctP$ to denote a neuron $\tw = w + \xi_w$ sampled from $\ctP$.%
\begin{claim}[Error of 0th and 2nd order gradients]\label{claim:0_2_error_F}
In the setting of Theorem \ref{thm:main}, at every iteration $t \le T_4$, the following is true for the neuron distribution $\ctP = \ctP^{(t)}$,
{\begin{align*}
	& \left( \E_{(w, \xi_w) \sim \ctP} \langle w, \xi_w \rangle \right) \E_{(v, \xi_v) \sim \ctP}  \langle v, \xi_v \rangle \geq 0, \text{ and} \\
	& \E_{(v, \xi_v) \sim \ctP} \left( \left\langle  \left( \E_{(w, \xi_w) \sim \ctP} w \xi_w^{\top} \right) v, \xi_v \right\rangle  \right) + \E_{(v, \xi_v) \sim \ctP} \left( \left\langle  \left( \E_{(w, \xi_w) \sim \ctP}  \xi_w w^{\top} \right) v, \xi_v \right\rangle  \right) \geq 0,
\end{align*}}%
As a corollary,
{\begin{align}
	\E_{(v, \xi_v ) \sim \ctP}\langle  \tilde{\nabla}_{\leq 2, v , 1} , \xi_v \rangle &\geq b_0 \left( \E_{\ctP} \| {w}\|_2^2 - 1  \right)  \E_{\ctP}[\|\xi_v\|_2^2] +  b_2  \E_{\ctP}\left[\xi_v^{\top} \left( \E_{\ctP} w w^{\top} -  A \right)  \xi_v \right]. \label{eq:fjasoifajfasifasjfjfsi}
\end{align}}
\end{claim}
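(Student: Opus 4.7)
The plan is to observe that both stated non-negativity claims reduce, by the independence of the two $\ctP$-samples $(v,\xi_v)$ and $(w,\xi_w)$, to expressing each left-hand side as a square (or squared Frobenius norm). Let me abbreviate $s := \E_{(w,\xi_w)\sim\ctP}\langle w,\xi_w\rangle$ and $M := \E_{(w,\xi_w)\sim\ctP}\,w\xi_w^{\top}$. For the first inequality, since $(v,\xi_v)$ has the same marginal law as $(w,\xi_w)$, both factors equal $s$, so the product is $s^{2}\ge 0$.

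For the second inequality, I would use the cyclic-trace identity together with independence. Writing $\langle Mv,\xi_v\rangle=\tr(M\,v\xi_v^{\top})$ and $\langle M^{\top}v,\xi_v\rangle=\tr(M^{\top}v\xi_v^{\top})$ and then taking the outer expectation yields $\E_{(v,\xi_v)}\langle Mv,\xi_v\rangle=\tr(M\cdot M)=\tr(M^{2})$ and $\E_{(v,\xi_v)}\langle M^{\top}v,\xi_v\rangle=\tr(M^{\top}M)=\|M\|_{F}^{2}$, since $\E_{(v,\xi_v)}[v\xi_v^{\top}]=M$ by the same i.i.d.\ reasoning. Decomposing $M=S+A$ into its symmetric and antisymmetric parts, the cross terms satisfy $\tr(SA)=0$ because $SA$ is antisymmetric up to transposition, so $\tr(M^{2})=\tr(S^{2})+\tr(A^{2})$ and $\|M\|_{F}^{2}=\tr(S^{2})-\tr(A^{2})$; summing gives $\tr(M^{2})+\|M\|_{F}^{2}=2\tr(S^{2})\ge 0$, as $S^{2}$ is positive semidefinite.

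To deduce the corollary, I would expand $\langle\tilde{\nabla}_{\le 2,v,1},\xi_{v}\rangle$ using equation \eqref{eq:fajoisfjafjasijf} as a sum of five scalar terms, and take $\E_{(v,\xi_v)\sim\ctP}$ of each: the $2b_{0}(\E\langle w,\xi_w\rangle)v$ contribution becomes $2b_{0}s^{2}\ge 0$ by the first inequality, the two matrix-cross contributions $b_{2}(\E w\xi_w^{\top})v$ and $b_{2}(\E\xi_w w^{\top})v$ together become $2b_{2}\tr(S^{2})\ge 0$ by the second inequality, and the remaining two terms are precisely the right-hand side of equation \eqref{eq:fjasoifajfasifasjfjfsi}. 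I do not anticipate a substantial obstacle since the claim is purely algebraic; the only care required is clean bookkeeping of which expectation is the inner one over $(w,\xi_w)$ versus the outer one over $(v,\xi_v)$, and in particular not conflating $\E[v\xi_v^{\top}]=M$ with its transpose $\E[\xi_v v^{\top}]=M^{\top}$.
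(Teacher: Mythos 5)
Your proposal is correct and takes essentially the same approach as the paper: both the first-inequality observation ($s^2 \ge 0$ by identical marginals) and the second-inequality computation amount to recognizing the quantity as $2\Tr(S^2)$ for the symmetric matrix $S = \tfrac{1}{2}\E[\xi_w w^\top + w\xi_w^\top]$, the paper symmetrizing the bilinear form pointwise before taking expectations while you take expectations first and then split $M=\E[w\xi_w^\top]$ into symmetric and antisymmetric parts. Your corollary bookkeeping from equation~\eqref{eq:fajoisfjafjasijf} is also correct.
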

\begin{proof}%
The first inequality is obviously true. Now we consider the second inequality, we have that
{\begin{align*}
	\langle w \xi_w^{\top} v , \xi_v \rangle + \langle  \xi_w w^{\top} v , \xi_v \rangle  &= \xi_v^{\top} w \xi_w^{\top} v + \xi_v^{\top} \xi_w w^{\top} v \\
	& = \frac{1}{2} \Tr \left( \left( \xi_v v^{\top} + v \xi_v^{\top} \right)  \left( \xi_w w^{\top} + w \xi_w^{\top} \right) \right).
\end{align*}}%
This implies that
{\begin{align*}
	 \E_{(v, \xi_v) \sim \ctP, (w, \xi_w) \sim \ctP} \langle w \xi_w^{\top} v , \xi_v \rangle + \langle  \xi_w w^{\top} v , \xi_v \rangle
  &=\frac{1}{2} \E_{(v, \xi_v) \sim \ctP, (w, \xi_w) \sim \ctP}  \Tr \left( \left( \xi_v v^{\top} + v \xi_v^{\top} \right)  \left( \xi_w w^{\top} + w \xi_w^{\top} \right) \right) \\
  &= \frac{1}{2}  \Tr \left( \E_{(w, \xi_w) \sim \ctP}    \left( \xi_w w^{\top} + w \xi_w^{\top} \right) \right)^2 \geq 0.
\end{align*}}
\end{proof}

Next we consider the first order tensor.
The first order gradient in the finite-width case for the population loss is
\begin{align*}
	\tilde{\nabla}_{1 , v} &= b_1 \E_{\tilde{\cP}}  \left( \langle \tw, \tv \rangle \|\tw \|_2 \bar{\tv} + \| \tw\|_2 \|\tv \|_2 \tw\right).
\end{align*}
The 1st order loss in the gradient is zero in the infinite-width case of Section \ref{app_inf}.
The first-order expansion of the error is given by:
\begin{align*}
\tilde{\nabla}_{1, v, 1} &:= b_1 \E_{\tcP} \langle \bar{w}, \xi_w \rangle \langle w, v \rangle \bar{v} + b_1 \E_{\tcP}  \langle \bar{v}, \xi_w \rangle \| w\|_2 v
\\
&+ b_1 \E_{\tcP} \langle \bar{w}, \xi_w \rangle \|v\|_2 w +  b_1 \E_{\tcP}\|w\|_2 \| v\|_2 \xi_w
\end{align*}

We have the following claim for the error in the first order gradients.
\begin{claim}[Error of 1st order gradient]\label{claim:1_error_F}
	In the setting of Theorem \ref{thm:main}, the following holds for any distribution $\ctP$ on $w, \xi_w$ ($v, \xi_v$ follows the same distribution).
	\begin{align*}
		\langle \tilde{\nabla}_{1, v, 1} , \xi_v \rangle =  b_1 \left\| \E_{\ctP} \|w \|_2 \xi_w +  w \langle \bar{w}, \xi_w \rangle  \right\|_F^2 \geq 0
	\end{align*}
\end{claim}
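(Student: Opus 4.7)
The plan is to verify the identity by a direct term-by-term expansion of $\tilde{\nabla}_{1,v,1}$ against $\xi_v$, then average over $(v,\xi_v)\sim\ctP$. I read the left-hand side as carrying an implicit $\E_{(v,\xi_v)\sim\ctP}$, since without it the left depends on $v$ while the right does not; this matches the convention in Claim~\ref{claim:0_2_error_F} (see equation~\eqref{eq:fjasoifajfasifasjfjfsi}). Introducing the shorthand
\begin{align*}
u_1 := \E_{\ctP}\bigl[\|w\|_2\,\xi_w\bigr], \qquad u_2 := \E_{\ctP}\bigl[w\,\langle\bar{w},\xi_w\rangle\bigr],
\end{align*}
the target right-hand side becomes $b_1\|u_1+u_2\|_F^2 = b_1\bigl(\|u_1\|^2 + 2\langle u_1,u_2\rangle + \|u_2\|^2\bigr)$, so the whole computation reduces to matching the four rank-one terms in the definition of $\tilde{\nabla}_{1,v,1}$ to these four pieces.

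The only non-trivial algebraic ingredient I will need is the symmetry identity
\begin{align*}
\E_{(v,\xi_v)\sim\ctP}\bigl[v\,\langle\bar{v},\xi_v\rangle\bigr] \;=\; \E_{(v,\xi_v)\sim\ctP}\bigl[\|v\|_2\,\bar{v}\bar{v}^{\top}\xi_v\bigr] \;=\; \E_{(v,\xi_v)\sim\ctP}\bigl[\bar{v}\,\langle v,\xi_v\rangle\bigr] \;=\; u_2,
\end{align*}
which uses only that $\|v\|_2\bar{v}\bar{v}^{\top}$ is symmetric and that $(v,\xi_v)$ is distributed like $(w,\xi_w)$ under $\ctP$. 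With this in hand, I will integrate out $(w,\xi_w)$ inside each of the four rank-one terms: Term~1 collapses to $b_1\bar{v}\langle u_2,v\rangle$ and contributes $b_1\|u_2\|^2$ after pairing with $\xi_v$ and averaging over $(v,\xi_v)$; Term~2 collapses to $b_1 v\,\langle \bar{v},u_1\rangle$ and, using the identity above, contributes $b_1\langle u_1,u_2\rangle$; Term~3 equals $b_1\|v\|_2\,u_2$ and contributes another $b_1\langle u_1,u_2\rangle$; Term~4 equals $b_1\|v\|_2\,u_1$ and contributes $b_1\|u_1\|^2$. Summing gives exactly $b_1\|u_1+u_2\|_F^2$, and non-negativity is immediate.

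I do not anticipate any real obstacle here: unlike Lemmas~\ref{lem:error_final_stage_1}--\ref{lem:error_final_stage_22}, this claim is a self-contained algebraic identity that needs no dynamical input and no appeal to the inductive hypotheses $\set{H}_1$ or $\set{H}_2$. The two points worth flagging in the write-up are the implicit averaging over $(v,\xi_v)$ on the left and the symmetry of $\|v\|_2\bar{v}\bar{v}^{\top}$ that fuses the $\bar{v}$ in Terms~1--2 with the $v$ in the definition of $u_2$; once those are stated, the bookkeeping is routine.
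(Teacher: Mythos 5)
The paper states Claim~\ref{claim:1_error_F} without proof, so there is no paper argument to compare against; your direct term-by-term verification is correct, and I re-checked the four contributions and they do sum to $b_1\|u_1+u_2\|^2$. Your observation that the statement as written is ill-typed (the left side depends on $v$ while the right does not) and must carry an implicit $\E_{(v,\xi_v)\sim\ctP}$ is also right, and consistent with the convention in Claim~\ref{claim:0_2_error_F}, equation~\eqref{eq:fjasoifajfasifasjfjfsi}; the symmetry identity $v\langle\bar v,\xi_v\rangle=\|v\|_2\bar v\bar v^{\top}\xi_v=\bar v\langle v,\xi_v\rangle$ that lets you fuse Terms~1--2 with the definition of $u_2$ is exactly the point to make explicit.
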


\subsection{Stage 1.2: Analysis of Higher Order Tensor Decompositions}\label{sec:error_4_plus_1}

In this substage, we consider the error terms of the gradients of the higher order tensor decompositions.
Towards showing the error propagation in Lemma \ref{lem:error_final_stage_1}, our proof outline is as follows.
\begin{itemize}
	\item We decompose the error of the gradients into individual terms that we analyze one by one.
	\item In Proposition \ref{prop:error_total_stage_1}, we provide an upper bound on the average norm of the error.
	In Proposition \ref{lem:ind_err_bound_1}, we bound the error of the individual terms from the decomposition.
	Finally, we present the proof of Lemma \ref{lem:error_final_stage_1} in Section \ref{sec_proof_error_stage1}.
\end{itemize}
We begin by writing down the gradient of higher order terms for the population loss..
{\begin{align*}
\tilde{\nabla}_{2j , v} &= \left( b_{2j} + b_{2j}'\right) \left( \E_{\ctP} \langle \tw, \tv \rangle \langle \bar{\tw} , \bar{\tv} \rangle^{2j - 2} \tw -   \sum_{i} a_i \langle e_i, \tv \rangle \langle e_i, \bar{\tv} \rangle^{2j - 2} e_i \right) \\
& - b_{2j}'\left( \E_{\ctP} \langle \tw, \tv \rangle \langle \bar{\tw} , \bar{\tv} \rangle^{2j - 2} \langle \tw, \bar{\tv} \rangle-   \sum_{i} a_i\langle e_i, \tv \rangle \langle e_i, \bar{\tv} \rangle^{2j - 1}  \right)  \bar{\tv}.
\end{align*}}%
A crucial result is a bound on the average norm of the error.
Let us define $\tDelta:= \left|\E_{w \sim \ctP} \| w\|_2^2 - 1 \right|$ and $\tdelta = \left\| \E_{w \sim \ctP}  w w^{\top} - A \right\|_2.$
We have the following result.
\begin{proposition}[Average error bound]\label{prop:error_total_stage_1}
	In the setting of Lemma \ref{lem:error_final_stage_1},
	suppose the running hypothesis $\ctH_0$ holds for every $t \in [T_2]$.
	In addition,
	(i) For every neuron $w$, it holds that  $\| \xi_w \|_2 \leq \frac{1}{d^{20}}  \| w\|_2$;
	(ii) $ |\E_{w \sim \ctP}[w \| w\|_2]| \leq \frac{1}{d^{40}}$ and $\| \E_{w \sim \ctP} w w^{\top} \|_2 \leq 1$.

	As long as for every $w \in \set{S}$ (recalling its definition in Def. \ref{defn:no_win}), $\| \xi_w \|_2 \leq \frac{1}{\poly(d)}$, then we have
{\begin{align*}
		\E_{\ctP^{(t + 1)}, w \in \set{S}} \| \xi_w \|_2^2 \leq&  \left( 1 + \eta \frac{c_t \poly(\kappa)}{d^2} \right) \E_{\ctP^{(t)}, w \in \set{S}} \| \xi_w \|_2^2 + \eta O\left( \frac{1}{\lambda_0} \right) \left( \E_{\ctP^{(t)}, w \notin \set{S}} \| \xi_w \|_2 \right) \left(\E_{\ctP^{(t)}, w \in \set{S}} \| \xi_w \|_2 \right) \\
		& + \eta O\left( \max \left\{ \tDelta^{(t)}, \tdelta^{(t)} \right\} \right) \E_{\ctP^{(t)}, w \in \set{S}} \| \xi_w \|_2^2.
\end{align*}}%
\end{proposition}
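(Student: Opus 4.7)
The plan is to expand $\|\xi_w^{(t+1)}\|_2^2$ via the update rules \eqref{eq_inf_update}--\eqref{eq_finite_update}, take the expectation over $w \in \set{S}$, and then bound the resulting inner products order by order in the Hermite expansion. Writing $g_w := \tilde{\nabla}_{\tw^{(t)}} L(\tcP^{(t)}) - \nabla_{w^{(t)}} L_{\infty}(\cP^{(t)})$ for the gradient discrepancy, the identity $\xi_w^{(t+1)} = \xi_w^{(t)} - \eta\, g_w + \eta\,\Xi_w^{(t)}$ yields
\begin{align*}
	\E_{w\in\set{S}}\|\xi_w^{(t+1)}\|_2^2 = \E_{w\in\set{S}}\|\xi_w^{(t)}\|_2^2 - 2\eta\, \E_{w\in\set{S}}\langle g_w, \xi_w^{(t)}\rangle + O(\eta^2)\cdot \E_{w\in\set{S}}\|g_w\|_2^2 + \text{(cross terms with }\Xi_w\text{)}.
\end{align*}
Since every neuron is bounded in norm by $1/\lambda_0$, the $\eta^2$ term is absorbable into $\eta \cdot c_t\poly(\kappa)/d^2$ whenever $\eta \leq \lambda_0^2$, and the $\Xi_w^{(t)}$ cross-terms can be treated analogously and folded into the polynomial slack after an application of Young's inequality. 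Thus the bulk of the work is to prove that $-\E_{w\in\set{S}}\langle g_w,\xi_w^{(t)}\rangle$ is upper bounded by the claimed three groups of terms.

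The next step is to decompose $g_w$ into its Hermite components $g_w = g_{w,0} + g_{w,1} + g_{w,2} + g_{w,\ge 4}$ and further linearize each component in $\xi_w$. By the assumption $\|\xi_w\|_2 \le \|w\|_2/d^{20}$ for every neuron, the quadratic-and-higher remainders in $\xi_w$ are negligible (this is exactly the content of Proposition~\ref{prop:err_higher} referenced in the excerpt), so it suffices to work with the first-order pieces $\tilde{\nabla}_{0,v,1}, \tilde{\nabla}_{1,v,1}, \tilde{\nabla}_{2,v,1}$ defined in equation~\eqref{eq:fajoisfjafjasijf}, plus an analogous first-order piece $\tilde{\nabla}_{\ge 4, v, 1}$ for the higher tensors. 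Here I will split each expectation $\E_{w\sim\ctP}[\cdot]$ appearing inside the gradient into the part over $w\in\set{S}$ and the part over $w\notin\set{S}$. The first part will generate the diagonal inner product against $\xi_v$ (integrated over $v\in\set{S}$), whereas the second part will generate the cross expression $(\E_{w\notin\set{S}}\|\xi_w\|_2)(\E_{v\in\set{S}}\|\xi_v\|_2)$ after Cauchy--Schwarz and a trivial norm bound $\|w\|_2 \le 1/\sqrt{\lambda_0}$.

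The third step is to apply the positivity lemmas. Claim~\ref{claim:0_2_error_F} shows that the diagonal part of the 0th+2nd order inner product equals the right-hand side of \eqref{eq:fjasoifajfasifasjfjfsi} up to a non-negative correction, and this quantity is bounded in absolute value by $\max(\tDelta^{(t)},\tdelta^{(t)})\,\E_{w\in\set{S}}\|\xi_w\|_2^2$ by the definitions of $\tDelta,\tdelta$; this produces the third line of the claimed bound. Claim~\ref{claim:1_error_F} shows that the diagonal 1st-order contribution is non-negative and therefore only helps; the off-diagonal parts (pulling a $w\notin\set{S}$ factor through the expectation) use the hypothesis $|\E_{w\sim\ctP} w\|w\|_2|\le 1/d^{40}$ to make the signal term negligible and again feed into the cross term. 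For the higher-order tensors $\tilde{\nabla}_{\ge 4, v,1}$, I will mimic the argument used in Proposition~\ref{lem:four_plus_upper}: using $\ctH_0$, for $v\in\set{S}$ we have $\|v\|_\infty^2,\|\bar v\|_\infty^2 \le 2c_t/d$, and $\|\E_{w\in\set{S}} ww^\top\|_2 \le 1$, so each linearized higher-order tensor inner product against $\xi_v$ is bounded in magnitude by $c_t\poly(\kappa)/d^2$ times $\|\xi_v\|_2^2$; summing over $j\ge 2$ converges because $b_{2j}=\Theta(1/j^2)$. This produces the first line of the claimed bound.

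The main obstacle, and the reason I would be careful with the bookkeeping, is the coupling between the $w\in\set{S}$ and $w\notin\set{S}$ contributions inside a single expectation: naively using Cauchy--Schwarz loses a factor of $1/\lambda_0$ that must be matched by the hypothesis that the bad-neuron mass $1-\mu(\set{S})$ is $d^{-\Omega(C_0)}$ so that $\Lambda$ from \eqref{lem:norm_bound} sits below $\lambda_0^2$. Once this is done, the $1/\lambda_0$ prefactor in the middle term of the claim accommodates exactly the $\sqrt{1/\lambda_0}$ norm bound applied twice, and assembling the three groups together gives the stated inequality.
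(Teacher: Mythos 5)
Your overall route coincides with the paper's: linearize the gradient discrepancy via Proposition~\ref{prop:err_higher}, split each inner expectation into $w\in\set{S}$ and $w\notin\set{S}$, use the positivity observations (Claims~\ref{claim:0_2_error_F} and~\ref{claim:1_error_F}) for orders $0,1,2$, and push the $w\notin\set{S}$ mass into the $O(1/\lambda_0)$ cross term. That matches the chain the paper compresses into its one-line proof, which really just invokes Proposition~\ref{lem:ind_err_bound_1} together with Eq.~\eqref{eq:fjasoifajfasifasjfjfsi}.

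The gap is in the higher-order piece. You claim that for $v\in\set{S}$, each linearized higher-order inner product against $\xi_v$ is bounded \emph{in magnitude} by $\frac{c_t\poly(\kappa)}{d^2}\|\xi_v\|_2^2$ ``by mimicking Proposition~\ref{lem:four_plus_upper}.'' This is false for the two pieces obtained by differentiating the unnormalized factors $\langle\tw,\tv\rangle$ and $\tw$ in $\E\langle\tw,\tv\rangle\langle\bar{\tw},\bar{\tv}\rangle^{2j-2}\tw$ --- in the paper's notation, $\tilde{\nabla}_{2j,v,1}$ and $\tilde{\nabla}_{2j,v,2}$. Take $\tilde{\nabla}_{4,v,2}$: its inner product with $\xi_v$ is $\E_{w}\langle w,v\rangle\langle\bar w,\bar v\rangle^{2}\langle\xi_w,\xi_v\rangle$, and even for $w,v\in\set{S}$ the best one gets from the truncation is $\langle\bar w,\bar v\rangle^2\le c_t/d$; since $\|w\|_2,\|v\|_2\approx 1$, this leaves $\tfrac{c_t}{d}\E\|\xi_w\|_2\|\xi_v\|_2$, which after averaging over $v$ gives $\tfrac{c_t}{d}\E_{\set{S}}\|\xi_w\|_2^2$ --- a full factor of $d$ too large. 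The paper does \emph{not} bound these two pieces in magnitude; instead, Claim~\ref{claim:err_bound_four_plus_1} shows that, after averaging over $v\sim\ctP$ as well, both pieces are non-negative --- via the symmetrization $\tfrac12\E(\langle w,\xi_v\rangle+\langle v,\xi_w\rangle)^2\langle\bar w,\bar v\rangle^{2j-2}\ge 0$ for $p=1$, and a Hadamard-product-of-PSD-matrices argument for $p=2$. Since the gradient discrepancy enters the update with a $-\eta$ sign, these non-negative contributions \emph{improve} the bound and can simply be dropped. This is exactly the same mechanism you already invoke at orders $0,1,2$; you need to realize it recurs inside the higher-order linearization. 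Without it, your first line is off by a factor of $d$, and the resulting error recursion would blow up during the $\Theta(d^2/\eta)$ iterations of Stage 1.
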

Next we show that the norm of the error in each individual neuron can also be bounded.
\begin{proposition}[Individual error bound]\label{prop:err_ind_stage_1}
	In the setting of Lemma \ref{lem:error_final_stage_1},
	suppose that for every $t \in [T_2]$, we have
		$\E_{w, \xi_w \sim \ctP_t} \| \xi_w\|_2^2 \leq \Xi$.
	Then for every $v \in \set{S}_g$ and every $t \in [T_2]$:
	\begin{align*}
		\| \xi_v^{(t)} \|_2^2 \leq \frac{\poly(d)}{\lambda_0} \left( \| \xi_v^{(0)} \|_2^2 + \Xi \right).
	\end{align*}
\end{proposition}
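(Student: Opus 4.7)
The plan is to iterate a Gronwall-type bound on $\|\xi_v^{(t)}\|_2$ by linearizing the error update. Subtracting the infinite-width dynamic \eqref{eq_inf_update} from the finite-width dynamic \eqref{eq_finite_update} and Taylor-expanding the gradient difference to first order in $\xi_v^{(t)}$ yields
\begin{align*}
\xi_v^{(t+1)} = (I - \eta M_v^{(t)}) \xi_v^{(t)} - \eta R_v^{(t)} + \eta \Xi_v^{(t)},
\end{align*}
where $M_v^{(t)}$ is the Jacobian of the map $v \mapsto \nabla_v L_\infty(\cP^{(t)})$ evaluated at $v^{(t)}$, and $R_v^{(t)}$ collects the linearization with respect to the population (a linear functional of $\{\xi_w^{(t)}\}_w$), the higher-order residual in $\xi_v$, and the contribution from the truncation-indicator mismatch between $\tv^{(t)}$ and $v^{(t)}$.

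The main technical step is to control $L_t := \|M_v^{(t)}\|_2$ using the closed-form expression \eqref{eqn:3} for $\nabla_v L_\infty(\cP)$. Each Jacobian piece $\partial_v \nabla_{2j, v}$ is a linear combination whose coefficients scale with the $2j$-th tensor-decomposition loss. Lemma \ref{lem:zero_two3} bounds the $0$th and $2$nd order contributions by $O(c_t\poly(\kappa)/d^2)$ after Stage 1.1 (and $O(\poly(\kappa)/d)$ before), while Proposition \ref{lem:four_plus_upper} combined with the running hypothesis of Proposition \ref{def_H0} gives $O(c_t\poly(\kappa)/d^2)$ for the $4$th and higher contributions whenever $v\in\set{S}$. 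Summing over $t\le T_2$ then gives $\sum_t \eta L_t = \poly(\log d,\kappa)$ for typical $v$. Cauchy--Schwarz applied to $R_v^{(t)}$, using the hypothesis $\E_w \|\xi_w^{(t)}\|_2^2 \le \Xi$, yields $\|R_v^{(t)}\|_2 \le \poly(1/\lambda_0)\sqrt{\Xi}$, and $\|\Xi_v^{(t)}\|_2 \le \sqrt{\Xi}$ by construction of $\Xi$. Iterating $\|\xi_v^{(t+1)}\|_2 \le (1+\eta L_t)\|\xi_v^{(t)}\|_2 + \eta \poly(1/\lambda_0)\sqrt{\Xi}$ over $t\le T_2$ then gives $\|\xi_v^{(T_2)}\|_2 \le \poly(d)\bigl(\|\xi_v^{(0)}\|_2 + \sqrt{\Xi}/\sqrt{\lambda_0}\bigr)$, which after squaring is the claimed bound.

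The main obstacle will be handling atypical $v \notin \set{S}$ --- in particular, neurons on the verge of ``winning the lottery'' in the sense of Lemma \ref{lem:stage_1_final}, whose norms can grow toward $1/\sqrt{\lambda_0}$ before truncation activates. For these a uniform bound on $L_t$ would only give $\poly(1/\lambda_0)$, and a naive Gronwall over $T_2 = \Theta(d^2/\eta)$ steps would explode. The resolution is to avoid a uniform-in-time Lipschitz estimate and instead invoke Claim \ref{lem:grow_5}, which caps the cumulative quantity $\eta\sum_{s\le t}\|\bar v^{(s)}\|_\infty^2$ by $\poly(1/\lambda_0)$ along any trajectory that eventually saturates the truncation threshold --- this is precisely the quantity that governs the $4$th-order Jacobian, so $\sum_t \eta L_t$ remains polynomial in $d$ along such trajectories. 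Once $\|\tv^{(t)}\|_2^2$ reaches $1/\lambda_0$, gradient truncation freezes the motion and the trivial estimate $\|\xi_v^{(t)}\|_2^2 \le 4/\lambda_0$ subsumes the Gronwall iterate, completing the bound.
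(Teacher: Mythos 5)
Your Gronwall set-up and the decision to track the cumulative Lipschitz budget via Claim~\ref{lem:grow_5} rather than a uniform-in-time operator-norm bound are both sound and broadly parallel to how the paper distributes this work across Propositions~\ref{lem:ind_err_bound_1} and~\ref{prop:err_ind_stage_1}. However, your treatment of the gradient-truncation mismatch is the genuine gap, and it is precisely what the paper's own proof of this proposition is about.

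The problem is that $v^{(t)}$ and $\tv^{(t)} = v^{(t)} + \xi_v^{(t)}$ can cross the clipping threshold $\|\cdot\|_2^2 = 1/\lambda_0$ at different iterations. In that window, one of the two is frozen while the other still receives the full gradient, so the linearization $\xi_v^{(t+1)} = (I-\eta M_v^{(t)})\xi_v^{(t)} - \eta R_v^{(t)} + \eta\Xi_v^{(t)}$ simply does not hold: the per-step increment to $\xi_v$ is the entire un-truncated gradient, of magnitude $\eta\,\poly(1/\lambda_0)$, and it is \emph{not} controlled by $\sqrt{\Xi}$. Your proposed fallback --- that once $\|\tv^{(t)}\|_2^2$ reaches $1/\lambda_0$ the trivial estimate $\|\xi_v^{(t)}\|_2^2 \le 4/\lambda_0$ subsumes the Gronwall iterate --- does not close this: the proposition claims $\|\xi_v^{(t)}\|_2^2 \le \frac{\poly(d)}{\lambda_0}\bigl(\|\xi_v^{(0)}\|_2^2 + \Xi\bigr)$, and in the regime of Lemma~\ref{lem:error_final_stage_1} one has $\xi_v^{(0)}=0$ and $\Xi \le \lambda_0/\poly(d)$, so the right-hand side can be $O(1)$ while the trivial bound $4/\lambda_0$ is $\poly(d)$. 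Also, truncating $\tv$ does not freeze $v$, so the error keeps moving. The paper's resolution is a timing argument: at the first iteration where either $\|v^{(t)}\|_2^2$ or $\|\tv^{(t)}\|_2^2$ exceeds $\tfrac{1}{2\lambda_0}$, the other is within $\tfrac{2}{\sqrt{\lambda_0}}\|\xi_v^{(t)}\|_2$ of it; and while unclipped and in $\set{S}_g$ the norm grows by a factor $1 + \Omega\bigl(\eta/(d\log^3 d)\bigr)$ per step (via equation~\eqref{eq:bhoaifhaosifhasif}), so within $t' = O\bigl(\sqrt{\lambda_0}\,d\log^3 d\,\|\xi_v^{(t)}\|_2/\eta\bigr)$ further iterations both saturate the truncation threshold. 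Since each iteration of the asynchronous window adds at most $O(\eta/\lambda_0)$ to $\|\xi_v\|_2$, the entire window inflates $\|\xi_v\|_2$ by at most a multiplicative $O\bigl(d\log^3 d/\sqrt{\lambda_0}\bigr)$ --- this is exactly where the extra $1/\lambda_0$ factor in the proposition's conclusion comes from. Without this geometric-growth/short-window argument, your proof does not establish the stated bound for the lottery-winning neurons, which are exactly the ones that ever reach the truncation threshold.
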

The proof of Proposition \ref{prop:error_total_stage_1} and Proposition \ref{prop:err_ind_stage_1} is left to Section \ref{sec_proof_error_stage1}.

\subsubsection{Decomposition of the Gradient}

We focus on the leading term that contains the first order term in $\xi$.
We decompose $\tilde{\nabla}_{2j , v}$ into the following terms.
In particular, these include terms from {\small$\overline{v + \xi_v} = \bar{v} + \frac{\xi_v}{\| v\|_2} - \frac{\langle \xi_v, \bar{v} \rangle \bar{v}}{\| v\|_2} + O\left(\left( \frac{\|\xi_v\|_2}{\|v \|_2} \right)^2 \right)$.}
{\small
\begin{align*}
\tilde{\nabla}_{2j, v, 1} &:= (2j - 1)\left( b_{2j} + b_{2j}'\right)  \left( \E_{\ctP} \langle \xi_w , v \rangle \langle \bar{w} , \bar{v} \rangle^{2j - 2} w + \E_{\ctP} \langle w, \xi_v \rangle \langle \bar{w} , \bar{v} \rangle^{2j - 2} w \right),
\\
\tilde{\nabla}_{2j, v, 2}  &:= \left( b_{2j} + b_{2j}'\right)  \E_{\ctP} \langle w, v \rangle \langle \bar{w} , \bar{v} \rangle^{2j - 2} \xi_w,
\\
\tilde{\nabla}_{2j, v, 3} &:= - (2j - 2) \left( b_{2j} + b_{2j}'\right) \left( \E_{\ctP} \langle \bar{w}, v \rangle \langle \bar{w} , \bar{v} \rangle^{2j - 2} \langle \xi_w, \bar{w} \rangle w  \right),
\\
\tilde{\nabla}_{2j, v, 4} &:= - (2j - 2)  \left( b_{2j} + b_{2j}'\right)\left( \E_{\ctP} \langle w, \bar{v} \rangle \langle \bar{w} , \bar{v} \rangle^{2j - 2} \langle \xi_v, \bar{v} \rangle w  \right),
\\
\tilde{\nabla}_{2j, v, 5} &:= -(2j)b_{2j}'\left( \E_{\ctP} \langle \xi_w, v \rangle \langle \bar{w} , \bar{v} \rangle^{2j - 2} \langle w, \bar{v} \rangle   \bar{v} \right),
\tilde{\nabla}_{2j, v, 6} := -(2j)b_{2j}'\left( \E_{\ctP} \langle \xi_v, w \rangle \langle \bar{w} , \bar{v} \rangle^{2j - 2} \langle w, \bar{v} \rangle   \bar{v} \right),
\\
\tilde{\nabla}_{2j, v, 7} &:= (2j - 2)b_{2j}'\left( \E_{\ctP} \langle \xi_w, \bar{w} \rangle \langle \bar{w} , \bar{v} \rangle^{2j - 1}  \langle w, v  \rangle  \bar{v} \right),
\tilde{\nabla}_{2j, v, 8} := (2j - 1)b_{2j}'\left( \E_{\ctP} \langle \xi_v, \bar{v} \rangle \langle \bar{w} , \bar{v} \rangle^{2j - 2} \langle w, \bar{v} \rangle^2   \bar{v} \right),
\\
\tilde{\nabla}_{2j, v, 9} &:=- b_{2j}'\left( \E_{\ctP} \langle w, \bar{v} \rangle \langle \bar{w} , \bar{v} \rangle^{2j - 2} \langle w, \bar{v} \rangle  \xi_v\right),
\tilde{\nabla}_{2j, v, 10} := b_{2j}'\left( \E_{\ctP} \langle w, \bar{v} \rangle \langle \bar{w} , \bar{v} \rangle^{2j - 2} \langle w, \bar{v} \rangle \langle \xi_v, \bar{v} \rangle \bar{v}  \right),
\\
\tilde{\nabla}_{2j, v, 11} &:=  - (2j - 1) \left( b_{2j} + b_{2j}'\right) \left( \sum_{i} a_i \langle e_i, \xi_v \rangle \langle e_i, \bar{v} \rangle^{2j - 2} e_i  \right),
\\
\tilde{\nabla}_{2j, v, 12} &:=   (2j - 2) \left( b_{2j} + b_{2j}'\right) \left( \sum_{i} a_i \langle  \xi_v, \bar{v} \rangle \langle e_i, \bar{v} \rangle^{2j - 1} e_i  \right),
\\
	\tilde{\nabla}_{2j, v, 13}  &:=  (2j)b_{2j}'\left(  \sum_{i} a_i\langle e_i, \xi_v \rangle \langle e_i, \bar{v} \rangle^{2j - 1}  \right)  \bar{v},
	\tilde{\nabla}_{2j, v, 14}  :=  - (2j - 1)b_{2j}'\left(  \sum_{i} a_i\langle  \xi_v , \bar{v }\rangle \langle e_i, \bar{v} \rangle^{2j}  \right)  \bar{v}, \\
	\tilde{\nabla}_{2j, v, 15}  &:= b_{2j}'\left(  \sum_{i} a_i \langle e_i, \bar{v} \rangle^{2j }  \right)\xi_v,
	\tilde{\nabla}_{2j, v, 16}  := -b_{2j}'\left(  \sum_{i} a_i \langle e_i, \bar{v} \rangle^{2j }  \right) \langle \xi_v , \bar{v} \rangle \bar{v}.%
\end{align*}}%
In addition, we show that the second order terms in $\xi$ that contains $\| \xi_w \|_2^p$ and $ \| \xi_v \|_2^q$ for $p + q \geq 3$ are of a lower order compared to the first order terms. %
Informally, we know that $\| \xi_w \|_2$ and $\| \xi_v \|_2$ are less than $ \lambda_0^2$.
Meanwhile, $\| w\|_2$ and $v\|_2 $ are at least $\Omega\left( \frac{1}{d} \right)$,  for every $w, v \in \set{S}_g$ by Lemma~\ref{lem:grow_4}. %
Combined together, we show the following result.
\begin{proposition}\label{prop:err_higher}
	In the setting of Proposition \ref{prop:error_total_stage_1}, let $(w, \xi_w)$ be a random sample of $\ctP$.
	{\begin{align*}
		&\| \tilde{\nabla}_{\leq 2, v} - \tilde{\nabla}_{0, v, 1} - \tilde{\nabla}_{2, v, 1}  \|_2 + \| \tilde{\nabla}_{1, v, 1} - \tilde{\nabla}_{1, v} \|_2 + \sum_{j \geq 0} \left\|   \tilde{\nabla}_{2j , v} - \sum_{p } \tilde{\nabla}_{2j, v, p}   \right\|_2 \\
		\leq&  O\left(  \frac{1}{d^{10}}   \left(   \| v \|_2  \sqrt{\E_{ \xi_w \sim \ctP} \| \xi_w \|_2^2}   +  \| \xi_v \|_2 \right)\right).
	\end{align*}}
\end{proposition}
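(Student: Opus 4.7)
The plan is to treat every piece of the gradient as a smooth function of $(\tw,\tv)=(w+\xi_w,\, v+\xi_v)$ and Taylor-expand around $(w,v)$ to second order in $(\xi_w,\xi_v)$. Concretely, for each $2j$-th tensor gradient the integrand is a product of scalar factors of the form $\langle \tw,\tv\rangle$, $\langle \bar{\tw},\bar{\tv}\rangle$, $\langle \tw,\bar{\tv}\rangle$, etc., and one vector factor ($\tw$ or $\bar{\tv}$). I will use the normalization expansion $\overline{v+\xi_v}=\bar v+\frac{\xi_v}{\|v\|_2}-\frac{\langle\xi_v,\bar v\rangle\,\bar v}{\|v\|_2}+R_v$ with $\|R_v\|_2=O((\|\xi_v\|_2/\|v\|_2)^2)$ (and similarly for $\bar{\tw}$), expand each factor, multiply out, and collect. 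The zeroth-order terms reproduce $\nabla_{2j,v}L_{\infty}(\cP)$ evaluated with the particles $w$ from $\ctP$; the first-order terms (linear in one of $\xi_w$ or $\xi_v$) are exactly the sixteen pieces $\tilde\nabla_{2j,v,1},\dots,\tilde\nabla_{2j,v,16}$ enumerated above. Doing the same expansion for the 0th, 1st, 2nd order pieces gives $\tilde\nabla_{\le 2,v,1}$ and $\tilde\nabla_{1,v,1}$. The remainder that appears in the proposition is therefore a sum of quadratic-and-higher Taylor residues.

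Next I bound each such residue. The two ingredients are: (i) by hypothesis $\|\xi_w\|_2\le \|w\|_2/d^{20}$ for every neuron, so $\|\xi_w\|_2/\|w\|_2$ and $\|\xi_v\|_2/\|v\|_2$ are each at most $d^{-20}$; and (ii) the norm lower bound $\|w\|_2,\|v\|_2=\Omega(1/\sqrt{\kappa})$ from Claim~\ref{lem:grow_4} together with the upper bound $\|w\|_2^2\le 1/\lambda$ from truncation, so that ratios like $\|w\|_2/\|v\|_2$ and $1/\|v\|_2$ are all at most $\poly(d)$. A typical quadratic remainder term in $\tilde\nabla_{2j,v}$ has the shape $C_{2j}\cdot\E_{\ctP}\big[\alpha(w,v)\,\beta(\xi_w,\xi_v)\big]$ where $\beta$ is a bilinear form in $(\xi_w,\xi_v)$ of total degree $\ge 2$, and $\alpha$ is bounded in norm by a polynomial in $\|w\|_2,\|v\|_2$ and the Hermite coefficient factors $\langle\bar w,\bar v\rangle^{2j-k}$. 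Applying Cauchy--Schwarz in the $\xi_w$ factor produces the $\sqrt{\E\|\xi_w\|_2^2}$ term, and the $\xi_v$ factor (deterministic in this expectation) produces the $\|\xi_v\|_2$ term; using (i) pulls out at least one factor of $1/d^{20}$, which after absorbing the $\poly(d)$ losses from (ii) leaves a $1/d^{10}$ saving. The $\sum_r a_r\cdot(\cdot)$ signal pieces in each $\tilde\nabla_{2j,v}$ are handled identically since $a_r=\Theta(1/d)$ and $\sum_r a_r=1$.

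The last step is to sum the per-$j$ bounds over $j\ge 0$. Since $b_{2j}=\Theta(1/j^2)$ and the factors $|\langle\bar w,\bar v\rangle|\le 1$, $|\langle e_i,\bar v\rangle|\le 1$ keep every per-$j$ estimate uniformly bounded, the series $\sum_{j\ge 2}b_{2j}$ converges to a constant and the envelope bound $O(d^{-10}(\|v\|_2\sqrt{\E\|\xi_w\|_2^2}+\|\xi_v\|_2))$ is preserved. Adding the contributions from the $\le 2$ and $1$-order expansions (which are even cleaner, since those gradients are polynomials of low degree in $\tw,\tv$ and thus have only finitely many remainder monomials) finishes the estimate.

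The main obstacle I anticipate is not any single calculation but the bookkeeping: matching the sixteen explicit first-order pieces $\tilde\nabla_{2j,v,p}$ to the correct terms of the Taylor expansion and verifying that no first-order contribution is mistakenly placed into the remainder. The cleanest way to organize this is to expand $\overline{v+\xi_v}-\bar v$ and $\overline{w+\xi_w}-\bar w$ once and for all into a linear-in-$\xi$ part plus a quadratic remainder, substitute these into each of the three vector factors that carry a normalization ($\bar{\tw}$, $\bar{\tv}$ as scalars via inner products, and the explicit $\bar{\tv}$ prefactor), and then read off the sixteen pieces by grouping according to which occurrence of $\xi$ is kept at first order.
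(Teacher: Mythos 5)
Your proposal takes the same approach the paper implicitly relies on: the paper states this proposition without a formal proof, offering only a brief remark that $\|\xi_w\|_2,\|\xi_v\|_2$ are tiny while $\|w\|_2,\|v\|_2$ are bounded below, and your Taylor-expansion argument (zeroth-order terms reproduce the infinite-width gradient, first-order terms are the sixteen pieces $\tilde\nabla_{2j,v,p}$, remainder is degree $\ge 2$ in $\xi$) correctly fleshes that remark out. One step worth making explicit: a pure degree-two remainder in $\xi_w$, e.g.\ something of the form $\E_{\ctP}[\langle\xi_w,v\rangle^2\, w]$, gives $\E\|\xi_w\|_2^2$ rather than $\sqrt{\E\|\xi_w\|_2^2}$ after Cauchy--Schwarz, so to reach the stated form you also need the almost-sure bound $\|\xi_w\|_2\le\|w\|_2/d^{20}\le \poly(d)/d^{20}$ to convert one factor (and symmetrically $\|\xi_v\|_2\le\|v\|_2/d^{20}$ for degree-two-in-$\xi_v$ pieces) --- you list these hypotheses among your inputs but do not invoke them at this step. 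The bookkeeping concern you flag, that the sixteen pieces must exhaust the first-order expansion, is a genuine requirement, but the paper itself asserts the decomposition without verification, so you are on par with it there.
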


\subsubsection{Individual Error Norm bound}\label{app_proof_lem_stage1}

Based on the decomposition above, we provide several helper claims for bounding the error of the gradient terms.
First, for $v \in \set{S}$, we have the following claim.
\begin{claim}\label{claim:err_bound_four_plus_1}
	In the setting of Proposition \ref{prop:error_total_stage_1}, we have that
	\begin{align*}
    &\E_{(v, \xi_v) \sim \ctP, (w, \xi_w) \sim \ctP} \left( \ \langle \xi_w , v \rangle \langle \bar{w} , \bar{v} \rangle^{2j - 2} \langle w, \xi_v\rangle +   \langle w, \xi_v \rangle \langle \bar{w} , \bar{v} \rangle^{2j - 2} \langle w, \xi_v \rangle \right) \geq 0, \text{ and} \\
    &\E_{(v, \xi_v) \sim \ctP, (w, \xi_w) \sim \ctP} \langle w, v \rangle \langle \bar{w} , \bar{v} \rangle^{2j - 2} \langle \xi_w , \xi_v \rangle \geq 0.
	\end{align*}
	This implies that for $p = 1, 2$:
	\begin{align*}
		\E_{(v, \xi_v) \sim \ctP} \langle \xi_{v}, \tilde{\nabla}_{2j, v, p} \rangle \geq 0.
	\end{align*}
\end{claim}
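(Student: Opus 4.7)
The plan is to establish both inequalities by exhibiting each expectation either as a sum of squares or as bounded below by a non-negative square, relying only on two structural facts: first, that $(w,\xi_w)$ and $(v,\xi_v)$ are drawn independently from $\ctP$; second, that these two samples share the same law. The corollary then drops out from linearity together with the positivity of the Hermite-type coefficients $(b_{2j}+b_{2j}')>0$ and the factor $(2j-1)\ge 0$.

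For the first inequality, I would observe that the second summand $\langle w,\xi_v\rangle^{2}\langle\bar w,\bar v\rangle^{2j-2}$ is already pointwise non-negative, since $2j-2$ is even. Define
\[
f := \langle \xi_w,v\rangle\,\langle \bar w,\bar v\rangle^{j-1}, \qquad g := \langle w,\xi_v\rangle\,\langle \bar w,\bar v\rangle^{j-1},
\]
so the sum in question is $\E[fg] + \E[g^{2}]$. Swapping the labels of the two independent samples $(w,\xi_w)\leftrightarrow(v,\xi_v)$, whose marginals coincide, converts $\E[f^{2}]$ into $\E[g^{2}]$, yielding $\E[f^{2}]=\E[g^{2}]$. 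Cauchy-Schwarz then gives $|\E[fg]|\le\sqrt{\E[f^{2}]\,\E[g^{2}]}=\E[g^{2}]$, hence $\E[fg]+\E[g^{2}]\ge 0$.

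For the second inequality, I would substitute $\langle w,v\rangle=\|w\|\,\|v\|\,\langle\bar w,\bar v\rangle$ to rewrite the target as $\E\bigl[\|w\|\|v\|(\bar w^{\top}\bar v)^{2j-1}\langle\xi_w,\xi_v\rangle\bigr]$, then expand $(\bar w^{\top}\bar v)^{2j-1}=\sum_{k_1,\ldots,k_{2j-1}}\prod_{l}\bar w_{k_l}\bar v_{k_l}$ and $\langle\xi_w,\xi_v\rangle=\sum_{m}\xi_{w,m}\xi_{v,m}$. Independence of $(w,\xi_w)$ and $(v,\xi_v)$ factors the joint expectation into a product of two expectations, and their identical laws make those two factors equal, so the sum rewrites as
\[
\sum_{k_1,\ldots,k_{2j-1},\,m}\Bigl(\,\E_{(w,\xi_w)\sim\ctP}\,\|w\|\,\bar w_{k_1}\cdots\bar w_{k_{2j-1}}\,\xi_{w,m}\Bigr)^{2}\ \ge\ 0.
\]

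The corollary requires only bookkeeping: pairing $\tilde\nabla_{2j,v,1}$ with $\xi_v$ and taking $\E_{(v,\xi_v)}$ produces, up to the non-negative scalar $(2j-1)(b_{2j}+b_{2j}')$, exactly the left-hand side of the first inequality; pairing $\tilde\nabla_{2j,v,2}$ with $\xi_v$ and taking the expectation produces, up to $(b_{2j}+b_{2j}')>0$, the left-hand side of the second inequality. The only genuinely delicate point in the whole argument is the symmetry exchange in the first inequality, which is valid precisely because $(w,\xi_w)$ and $(v,\xi_v)$ are i.i.d.\ under $\ctP$; beyond that the proof is routine, and I anticipate no serious obstacle aside from verifying that $j\ge 1$ so that $(\bar w\cdot\bar v)^{j-1}$ is a well-defined real quantity.
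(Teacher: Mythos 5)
Your proof is correct and rests on the same two key observations as the paper's: the i.i.d.\ exchange symmetry between $(w,\xi_w)$ and $(v,\xi_v)$ for the first inequality, and the positive-semidefinite kernel structure for the second. The paper finishes the first by completing the square to $\tfrac12\langle\bar w,\bar v\rangle^{2j-2}\bigl(\langle w,\xi_v\rangle+\langle v,\xi_w\rangle\bigr)^2$ and the second by citing that the Hadamard product of PSD matrices is PSD, whereas you use Cauchy--Schwarz and an explicit sum-of-squares expansion; these are interchangeable packagings of the same argument.
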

\begin{proof}%
For the first inequality, we know that
	\begin{align*}
		&  \E_{(v, \xi_v) \sim \ctP, (w, \xi_w) \sim \ctP} \left( \ \langle \xi_w , v \rangle \langle \bar{w} , \bar{v} \rangle^{2j - 2} \langle w, \xi_v\rangle +   \langle w, \xi_v \rangle \langle \bar{w} , \bar{v} \rangle^{2j - 2} \langle w, \xi_v \rangle \right) \\
    &=  \E_{(v, \xi_v) \sim \ctP, (w, \xi_w) \sim \ctP} \left(  \langle \bar{w} , \bar{v} \rangle^{2j - 2}  \left(  \langle \xi_w , v \rangle \langle w, \xi_v\rangle +  \langle w, \xi_v \rangle^2  \right)\right) \\
    &=  \E_{(v, \xi_v) \sim \ctP, (w, \xi_w) \sim \ctP} \left(  \langle \bar{w} , \bar{v} \rangle^{2j - 2}  \left(  \langle \xi_w , v \rangle \langle w, \xi_v\rangle + \frac{1}{2} \langle w, \xi_v \rangle^2 + \frac{1}{2}\langle v, \xi_w \rangle^2  \right)\right) \\
    &=  \frac{1}{2}\E_{(v, \xi_v) \sim \ctP, (w, \xi_w) \sim \ctP} \left(  \langle \bar{w} , \bar{v} \rangle^{2j - 2}  (\langle w, \xi_v\rangle + \langle v, \xi_w\rangle)^2  \right) \geq 0.
\end{align*}
The second inequality in the Lemma follows from the fact that $(\langle w, v \rangle)_{w, v}, ( \bar{w} , \bar{v} \rangle)_{w, v}, ( \langle \xi_w , \xi_v \rangle)_{w, v}$ forms PSD matrices, and the Hadamard product of PSD matrices is PSD.
\end{proof}

We also have the following claim, which serves as an upper bound of
\[ \tilde{\nabla}_{2j, v, 3}, \tilde{\nabla}_{2j, v, 4}, \tilde{\nabla}_{2j, v, 5}, \tilde{\nabla}_{2j, v, 6}, \tilde{\nabla}_{2j, v, 7}, \tilde{\nabla}_{2j, v, 8},  \tilde{\nabla}_{2j, v, 9},  \tilde{\nabla}_{2j, v, 10}.\]
\begin{claim}~\label{claim:err_bound_four_plus_2}
	In the setting of Proposition \ref{prop:error_total_stage_1}, we have that
	\begin{align*}
    & \E_{(v, \xi_v) \sim \ctP, (w, \xi_w) \sim \ctP, w, v \in \set{S}}  \left[ \langle \bar{w} , \bar{v} \rangle^{ 2} |    \langle \xi_w, v \rangle \langle \xi_v, w \rangle |  \right] \leq  \frac{c_t \kappa}{d^2} \E_{(w, \xi_w) \sim \ctP, w \in \set{S}} \| \xi_w \|_2^2, \text{ and} \\
    & \E_{(v, \xi_v) \sim \ctP, (w, \xi_w) \sim \ctP, w, v \in \set{S}}  \left[ \langle \bar{w} , \bar{v} \rangle^{ 2}    \langle \xi_w, v \rangle^2   \right] \leq  \frac{c_t \kappa}{d^2} \E_{(w, \xi_w) \sim \ctP, w \in \set{S}} \| \xi_w \|_2^2,
	\text{ and} \\
    & \E_{(v, \xi_v) \sim \ctP, (w, \xi_w) \sim \ctP, w, v \in \set{S}}  \left[ \langle \bar{w} , \bar{v} \rangle^{ 2}    \langle w, \bar{v} \rangle^2   \right] \leq  \frac{c_t \kappa}{d^2}.
	\end{align*}
	As a corollary, combine the above inequality with Proposition~\ref{def_H0}, we obtain %
	\begin{align*}
		\sum_{j \geq 2} \E_{(v, \xi_v) \sim \ctP , v \in \set{S}} |\langle \xi_{v}, \tilde{\nabla}_{2j, v, p} \rangle| = O\left( \frac{c_t \kappa}{d^2}\E_{(v, \xi_v) \sim \ctP , v \in \set{S}}  \| \xi_v \|_2^2  +\frac{1}{\lambda_0}\E_{(w, \xi_w) \sim \ctP , w \notin \set{S}}  \| \xi_w \|_2^2 \right),
	\end{align*}
	where $p = 3, 4 ,5, 6, 7, 8, 9, 10 $.
\end{claim}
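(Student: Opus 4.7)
The plan is to prove the three moment inequalities first and then deduce the corollary by Cauchy--Schwarz against $\xi_v$, splitting the inner $w$-expectation into the pieces $w\in\set{S}$ (bounded via the three inequalities) and $w\notin\set{S}$ (bounded via the truncation $\|w\|_2^2\le 1/\lambda_0$ built into Algorithm~\ref{alg}). Inequality 1 follows immediately from inequality 2 by AM--GM, $|\langle\xi_w,v\rangle\langle\xi_v,w\rangle|\le\tfrac{1}{2}(\langle\xi_w,v\rangle^2+\langle\xi_v,w\rangle^2)$, together with the exchange symmetry $(w,\xi_w)\leftrightarrow(v,\xi_v)$ inside the double expectation, so the real content is inequalities 2 and 3.

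For inequality 2, I would factor the integration as $\E_{(w,\xi_w)}[\E_v[\langle\bar w,\bar v\rangle^2\langle\xi_w,v\rangle^2]]$, substitute $v=\|v\|_2\bar v$ with $\|v\|_2^2=O(1)$ from the truncation in $\set{H}_1$, and expand the product in coordinates, using the independence of $(w,\xi_w)$ from $(v,\xi_v)$. The leading contribution would come from pairing the $\ell_\infty$ bound on one factor of $\bar w$ (or $\bar v$), namely $\|\bar w\|_\infty^2\le c_t/d$ from $\set{H}_1$, with the second-moment bound $\E[\bar v_i^2]\le 4\kappa^2/d$ on the summation in the remaining coordinate, which produces the scaling $c_t\kappa/d^2$ per unit of $\|\xi_w\|_2^2$; collapsing the residual bilinear form in $\xi_w$ into $\|\xi_w\|_2^2$ by Cauchy--Schwarz completes the bound. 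For inequality 3, I would use the identity $\langle w,\bar v\rangle=\|w\|_2\langle\bar w,\bar v\rangle$ to rewrite the quantity as $\E[\|w\|_2^2\langle\bar w,\bar v\rangle^4]\le \E[\|w\|_2^2\langle\bar w,\bar v\rangle^2]$, and then apply the same asymmetric coordinate bound, now with $\|w\|_2^2=O(1)$ playing the role of $\|\xi_w\|_2^2$.

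For the corollary, inner-producting $\xi_v$ with each $\tilde\nabla_{2j,v,p}$ for $p\in\{3,\dots,10\}$ produces an expectation of the form $\E[\langle\bar w,\bar v\rangle^{2j-2}X]$ where $X$ is one of the bilinear templates appearing inside inequalities 1--3, up to factors of $\|w\|_2$ or $\|v\|_2$ that are $O(1)$ by truncation. Since $|\langle\bar w,\bar v\rangle|\le 1$, the $(2j-2)$-th power is dominated by the second power for all $j\ge 2$, so the sum over $j$ costs only $\sum_{j\ge 2}(|b_{2j}|+|b_{2j}'|)=O(1)$. The $w\notin\set{S}$ contribution is isolated by pulling out $\|w\|_2^2\le 1/\lambda_0$ from the truncation and applying Cauchy--Schwarz to the remainder, which yields the second term $\tfrac{1}{\lambda_0}\E_{w\notin\set{S}}\|\xi_w\|_2^2$.

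The main obstacle will be keeping the bookkeeping asymmetric throughout: a naive application of the $\ell_\infty$ bound to both $\bar w$-factors (or both $\bar v$-factors) loses a factor of $c_t$ after summing $d$ coordinates, while the trivial estimate $\langle\bar w,\bar v\rangle^2\le 1$ forfeits the spatial-averaging gain and gives only the loose scaling $\kappa^2/d$. Pairing exactly one $\ell_\infty$ factor with exactly one second-moment factor is what produces the tight $c_t\kappa/d^2$ constant; similarly, it is important to exploit $\langle w,\bar v\rangle=\|w\|_2\langle\bar w,\bar v\rangle$ in inequality 3 to pull the norm factor cleanly out of the trigonometric part without double-counting, so that the bound does not depend on the (much larger) post-truncation norm $1/\lambda_0$ for neurons inside $\set{S}$.
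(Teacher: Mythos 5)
Your overall route mirrors the paper's: inequality~1 is reduced to inequality~2 by AM--GM and the $(w,\xi_w)\leftrightarrow(v,\xi_v)$ symmetry, inequality~2 is handled by combining the $\ell_\infty$ bound $\|\bar w\|_\infty^2\le c_t/d$ from $\set{H}_1$ with the second-moment bound on the neuron distribution, and the corollary follows from $|\langle\bar w,\bar v\rangle|\le 1$ so that the sum over $j\ge 2$ is controlled by $\sum_j(b_{2j}+b_{2j}')=O(1)$, together with a split of the inner $w$-expectation into $w\in\set{S}$ (use the three moment bounds) and $w\notin\set{S}$ (use $\|w\|_2^2\le 1/\lambda_0$). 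In the paper, inequality~2 is proved by first pulling out a pointwise estimate $\langle\bar w,\bar v\rangle^2\le c_t/d$ and then applying $\E_v\langle\xi_w,v\rangle^2\le\tfrac{2\kappa}{d}\|\xi_w\|_2^2$ from the covariance bound in $\set{H}_1$; your proposal instead expands in coordinates and pairs one $\ell_\infty$ factor of $\bar w$ with a second-moment factor of $\bar v$.

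The one place your plan is softer than it should be is exactly the step you flag as delicate. Your recipe ``pair one $\ell_\infty$ factor of $\bar w$ with $\E[\bar v_i^2]\le 4\kappa^2/d$, then collapse the residual $\xi_w$-bilinear form'' does not cleanly factor $\E_v\bigl[\langle\bar w,\bar v\rangle^2\langle\xi_w,v\rangle^2\bigr]$, because both quadratic forms depend on the same random $v$: a full coordinate expansion produces a fourth-order tensor $\E_v[\bar v_i\bar v_j v_k v_l]$ whose pairings have to be enumerated (or, alternatively, one applies Cauchy--Schwarz over $v$ and bounds $\E_v\langle\bar w,\bar v\rangle^4$ and $\E_v\langle\xi_w,v\rangle^4$ via fourth-moment control). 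Note that the paper's own shortcut $\langle\bar w,\bar v\rangle^2\le c_t/d$ is not a legitimate pointwise bound for arbitrary $\bar w,\bar v\in\set{S}$ (take $\bar w=\bar v$ with all entries of magnitude $1/\sqrt d$, which lies in $\set{S}$ since $c_t\ge C_0\log d$, yet $\langle\bar w,\bar v\rangle^2=1$); so your instinct that a naive estimate forfeits the averaging gain is right, but the fourth-moment/Cauchy--Schwarz step needed to capture that gain should be made explicit rather than glossed as ``collapse the residual bilinear form by Cauchy--Schwarz.''
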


\begin{proof}%
The proof is a direct calculation, using $\langle \bar{w} , \bar{v} \rangle^{ 2} \leq \frac{c_t}{d}$ for $w \in \set{S}$, we have that
\begin{align*}
& \E_{(v, \xi_v) \sim \ctP, (w, \xi_w) \sim \ctP, w, v \in \set{S}}  \left[ \langle \bar{w} , \bar{v} \rangle^{ 2} |  \langle \xi_w, v \rangle \langle \xi_v, w \rangle |  \right] \\
\leq~ & \frac{c_t}{d}  \E_{(v, \xi_v) \sim \ctP, (w, \xi_w) \sim \ctP, w, v \in \set{S}}    | \langle \xi_w, v \rangle \langle \xi_v, w \rangle |
\\
\leq~ & \frac{c_t}{d}  \E_{(v, \xi_v) \sim \ctP, (w, \xi_w) \sim \ctP, w, v \in \set{S}}    |  \langle \xi_w, v \rangle \langle \xi_v, w \rangle |
\\
\leq~ &  \frac{c_t}{d}\E_{(v, \xi_v) \sim \ctP, (w, \xi_w) \sim \ctP, w, v \in \set{S}}  \left( \langle \xi_v, w \rangle^2 +    \langle \xi_w, v \rangle^2 \right).
\end{align*}
Now, we can easily calculate that (using the Eq~\eqref{eq:fajosifsajfasjif})
\begin{align*}
\E_{(v, \xi_v) \sim \ctP, v \in \set{S}} \langle \xi_w, v \rangle^2  \leq \E_{(v, \xi_v) \sim \ctP} \langle \xi_w, v \rangle^2 \leq \frac{2 \kappa}{d} \| \xi_w \|_2^2,
\end{align*}
which completes the proof.
For the other two inequalities, we can bound them in the exact same way.
\end{proof}

The final claim aims to bound the rest of the terms.
\begin{claim}~\label{claim:err_bound_four_plus_3}
In the setting of Proposition \ref{prop:error_total_stage_1}, we have that
	\begin{align*}
		\left(\sum_{i=1}^d a_i \langle e_i, \bar{v} \rangle^{2 } \langle e_i, \xi_v \rangle^2 \right) = O\left( \frac{c_t \kappa}{d^2} \| \xi_v\|_2^2 \right) \text{ and }~
		\left(\sum_{i=1}^d a_i \langle e_i, \bar{v} \rangle^{2} \right) = O\left( \frac{c_t \kappa}{d^2} \right).
	\end{align*}
As a corollary, combining the above inequality with Proposition~\ref{def_H0}, we obtain %
\begin{align*}
	\sum_{j \geq 2}\E_{(v, \xi_v) \sim \ctP , v \in \set{S}} |\langle \xi_{v}, \tilde{\nabla}_{2j, v, p} \rangle| = O\left( \frac{c_t \kappa}{d^2}\E_{(v, \xi_v) \sim \ctP , v \in \set{S}}  \| \xi_v \|_2^2 \right),
\end{align*}
where $p = 11, 12, 13, 14, 15, 16$.
\end{claim}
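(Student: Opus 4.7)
The proof strategy is to leverage two key structural facts: (i) $v \in \set{S}$ implies $\|\bar{v}\|_\infty^2 \le c_t/d$ by Definition \ref{defn:no_win}, and (ii) $a_i \le \kappa/d$ for every coordinate $i$ by the setup of the target network. Both inequalities in the claim then reduce to simple Hölder-type arguments.

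For the first inequality, I would write $\langle e_i,\bar{v}\rangle^2 \langle e_i,\xi_v\rangle^2 = \bar{v}_i^2\, (\xi_v)_i^2$ and factor out the pointwise bound $a_i \bar{v}_i^2 \le (\kappa/d)(c_t/d) = c_t\kappa/d^2$. The remaining sum is $\sum_i (\xi_v)_i^2 = \|\xi_v\|_2^2$, which immediately yields the claimed bound. For the second inequality, which is really only useful (for the corollary) in the form $\sum_i a_i \langle e_i,\bar{v}\rangle^{2j}$ with $j \ge 2$, I would peel off $\bar{v}_i^{2j-2} \le (c_t/d)^{j-1}$ using $v \in \set{S}$, leaving $\sum_i a_i \bar{v}_i^2 \le (\kappa/d)\|\bar{v}\|_2^2 = \kappa/d$, which gives $O(c_t\kappa/d^2)$ at $j=2$ and even smaller values for larger $j$.

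For the corollary, I would enumerate through $p \in \{11,\ldots,16\}$ and inspect each term. Each $\tilde{\nabla}_{2j, v, p}$ decomposes as a scalar depending on $\xi_v$ contracted through a moment of the form $\sum_i a_i \langle e_i,\bar{v}\rangle^{2j-2}$ or $\sum_i a_i \langle e_i,\bar{v}\rangle^{2j}$, multiplied by either $e_i$, $\bar{v}$, or $\xi_v$. Taking the inner product with $\xi_v$ produces quantities of the form $\sum_i a_i \bar{v}_i^{2j-2} (\xi_v)_i^2$, $\sum_i a_i \bar{v}_i^{2j-2} (\xi_v)_i \bar{v}_i \langle \xi_v,\bar{v}\rangle$, or $(\sum_i a_i \bar{v}_i^{2j}) \langle \xi_v,\bar{v}\rangle^2$; in every case, Cauchy--Schwarz together with the two pointwise bounds above reduces the quantity to $O(c_t\kappa/d^2)\|\xi_v\|_2^2$ up to the scalar factor $b_{2j}$ or $b'_{2j}$. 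Summing over $j \ge 2$ and using $\sum_j (b_{2j}+b'_{2j}) = O(1)$ (since $b_{2j} = \Theta(1/j^2)$) yields the corollary.

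The step that requires the most care is the bookkeeping for $p \in \{13,14,16\}$ since those terms carry a factor $\bar{v}$ rather than $e_i$, so the contraction with $\xi_v$ produces $\langle \xi_v,\bar{v}\rangle$ rather than $\langle \xi_v,e_i\rangle = (\xi_v)_i$; bounding $|\langle \xi_v,\bar{v}\rangle| \le \|\xi_v\|_2$ loses a factor but is still sufficient because the moment $\sum_i a_i \bar{v}_i^{2j}$ is already $O(c_t\kappa/d^2)$ for $j\ge 2$. Aside from this bookkeeping, the argument is purely routine algebra, with no conceptual obstacle.
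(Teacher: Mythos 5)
Your proof is substantially correct, and since the paper states this claim without an explicit proof, your reconstruction is a fair account of what is intended. You rightly note that the claim's second display cannot hold literally with exponent $2$ — one has $\sum_i a_i \bar{v}_i^2 \le (\kappa/d)\|\bar{v}\|_2^2 = \kappa/d$, not $O(c_t\kappa/d^2)$ — and the only form the corollary actually uses is $\sum_i a_i \bar{v}_i^{2j}$ for $j\ge 2$, which you bound correctly by peeling off $\bar{v}_i^{2j-2}\le (c_t/d)^{j-1}$.

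There is, however, a gap in your final summation step. The prefactors on $\tilde{\nabla}_{2j,v,p}$ for $p\in\{11,\dots,16\}$ carry a multiplier $(2j-1)$, $(2j-2)$, or $(2j)$ on top of $b_{2j}$ or $b_{2j}'$, so the effective $j$-th coefficient is $\Theta(1/j)$, not $\Theta(1/j^2)$. Your invocation of $\sum_j (b_{2j}+b_{2j}') = O(1)$ is therefore insufficient: the series $\sum_j 1/j$ diverges. The argument only closes because, for $j\ge 3$, the relevant moment carries an extra geometric factor $(c_t/d)^{j-2}$ relative to the $j=2$ term, making the $j$-th summand $\Theta\bigl((1/j)\,(c_t/d)^{j-2}\bigr)$, which is summable since $c_t/d<1$ throughout Stage~1; this should be stated explicitly rather than attributed to the decay of $b_{2j}$. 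A secondary point worth spelling out: for $p\in\{12,13\}$ the inner quantity $\sum_i a_i\bar{v}_i^{2j-1}(\xi_v)_i$ needs the specific split $\bigl(\sum_i a_i\bar{v}_i^{2j}\bigr)^{1/2}\bigl(\sum_i a_i\bar{v}_i^{2j-2}(\xi_v)_i^2\bigr)^{1/2}$; the more obvious alternatives (e.g.\ pulling out $\|\xi_v\|_2$ and bounding $\sum_i a_i^2\bar{v}_i^{4j-2}$) lose a factor of $\sqrt{c_t}$, which would propagate into Lemma~\ref{lem:error_final_stage_1} and degrade the final error bound.
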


\paragraph{Individual Error Norm Bound for $v \in \set{S}$.}
Below we also consider the error individually, we will mainly focus on the error term with $\xi_v$.
\begin{claim}\label{claim:err_3}
	In the setting of Proposition \ref{prop:error_total_stage_1}, we have that
	\begin{align*}
		& \E_{ (w, \xi_w ) \sim \ctP} \langle w, \xi_v \rangle^2 \langle \bar{w} , \bar{v} \rangle^{2}  \leq O\left( \frac{\kappa c_t}{d^2} \| \xi_v \|_2^2\right), \text{ and }\\
		& \left| \E_{(w, \xi_w )  \sim \ctP } \langle w, \bar{v} \rangle \langle \bar{w} , \bar{v} \rangle^{2} \langle \xi_v, \bar{v} \rangle \langle w , \xi_v \rangle \right|  \leq O\left( \frac{\kappa c_t}{d^2} \| \xi_v \|_2^2\right).
	\end{align*}.
\end{claim}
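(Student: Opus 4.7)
The plan is to follow the calculational template of Claims~\ref{claim:err_bound_four_plus_2} and~\ref{claim:err_bound_four_plus_3}: split each expectation into its $w \in \set{S}$ part and its $w \notin \set{S}$ tail, absorb the tail using $\|w\|^2 \le 1/\lambda_0$ together with $\mu(\set{S}^c) \le 1/\poly(d)$, and handle the main $w \in \set{S}$ part by expanding indices and using the approximate conditional symmetry of $\ctP$ that is inherited from $\cP^{(t)}$ through Claim~\ref{cl_ss} and the bounds in $\ctH_0$.

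For the first inequality, I would rewrite
\[
  \E_w\!\left[\langle w,\xi_v\rangle^2\langle \bar{w},\bar{v}\rangle^2\right] \;=\; \E_w\!\left[\|w\|^2\langle \bar{w},\xi_v\rangle^2\langle \bar{w},\bar{v}\rangle^2\right]
\]
and use $\|w\|^2 = 1\pm o(1)$ for $w\in\set{S}_g$. Expanding the bracket as $\sum_{i,j,k,l}\xi_{v,i}\xi_{v,j}\bar{v}_k\bar{v}_l\,\E[\bar{w}_i\bar{w}_j\bar{w}_k\bar{w}_l]$, the approximate conditional symmetry of $\ctP$ leaves only the three paired configurations $(i=j,k=l)$, $(i=k,j=l)$, $(i=l,j=k)$. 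Each surviving moment admits the bound
\[
  \E[\bar{w}_i^2\bar{w}_k^2] \;\le\; \tfrac{c_t}{d}\cdot \E[\bar{w}_k^2] \;\le\; \tfrac{4\kappa^2 c_t}{d^2},
\]
using the pointwise estimate $\bar{w}_i^2\le c_t/d$ from $w\in\set{S}$ (Definition~\ref{defn:no_win}) together with $\E[\bar{w}_k^2]\le 4\kappa^2/d$ from $\cH_0$. Summing over the pairings and indices, and using $\|\bar{v}\|=1$, the three cases each contribute at most $O(\kappa^2 c_t/d^2)\|\xi_v\|^2$, giving the desired bound up to absorbing powers of $\kappa$ into $O(\cdot)$.

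For the second inequality, I would first pull the scalar $\langle \xi_v,\bar{v}\rangle$ outside the expectation and dominate it by $|\langle \xi_v,\bar{v}\rangle|\le\|\xi_v\|$. The remaining expectation equals $\bar{v}^{\top} M\,\xi_v$, where $M := \E_w[ww^{\top}\langle\bar{w},\bar{v}\rangle^2]$. Carrying out the identical expand-and-pair argument now on $\E_w[\|w\|^2\langle\bar{w},\bar{v}\rangle^3\langle\bar{w},\xi_v\rangle]$ (where exactly one factor of $\xi_v$ and three factors of $\bar{v}$ appear) and again keeping only paired index configurations gives $|\bar{v}^{\top} M\,\xi_v|\le O(\kappa^2 c_t/d^2)\|\xi_v\|$; multiplying by the pulled-out $|\langle \xi_v,\bar{v}\rangle|\le\|\xi_v\|$ yields the claimed $O(\kappa c_t/d^2)\|\xi_v\|^2$ bound.

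The hard part will be justifying that cross-moment suppression from conditional symmetry survives the finite-sample averaging that defines $\ctP$, since a naive Cauchy--Schwarz bound such as $\langle\bar{w},\bar{v}\rangle^2 \le 1$ loses the crucial factor of $c_t/d$ and only produces $O(\kappa/d)\|\xi_v\|^2$. The hypotheses already packed into $\ctH_0$ and Proposition~\ref{prop:error_total_stage_1} (in particular $|\E_{\ctP}[w\|w\|]|\le 1/d^{40}$ and $\|\E_{\ctP}[ww^{\top}]\|_2 \le 4\kappa/d$) quantify enough approximate symmetry that each non-paired four-moment $\E[\bar{w}_i\bar{w}_j\bar{w}_k\bar{w}_l]$ is $1/\poly(d)$-small; summed over the $O(d^4)$ tuples these deviations remain well inside the target $O(\kappa c_t/d^2)\|\xi_v\|^2$ error as long as $m$ is a sufficiently large polynomial in $d$, so the sampling fluctuation can be absorbed into the same slack used throughout Section~\ref{sec:error_4_plus_1}.
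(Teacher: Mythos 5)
Your proof takes a genuinely different route from the paper's, and the difference is material: it introduces an obligation that you acknowledge but do not discharge. The paper's proof is a short absolute-value computation. For the second inequality it pulls out $|\langle\xi_v,\bar{v}\rangle|\le\|\xi_v\|_2$, uses the factor bound $\langle\bar{w},\bar{v}\rangle^{2}\le c_t/d$ on $w\in\set{S}$ (absorbing the $w\notin\set{S}$ tail via the $\Lambda$-bound of equation~\eqref{lem:norm_bound}), applies AM--GM in the form $|\langle w,\bar{v}\rangle\langle w,\xi_v\rangle|\le\frac{1}{2}(\langle w,\bar{v}\rangle^2+\langle w,\xi_v\rangle^2/\|\xi_v\|_2^2)$, and closes with the second-moment bound $\|\E_{\ctP}[ww^\top]\|_2\le O(\kappa/d)$ supplied by $\ctH_0$; the first inequality is a two-line version of the same. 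No cancellation is used anywhere, so no symmetry property of $\ctP$ is needed.

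Your route instead expands the quadratic forms into fourth-moment tensors and invokes (approximate) conditional symmetry to discard the non-paired index configurations. As you note yourself, this requires a quantitative statement that the \emph{empirical} moments $\frac{1}{m}\sum_a\bar{w}_{a,i}\bar{w}_{a,j}\bar{w}_{a,k}\bar{w}_{a,l}$ of non-paired tuples are collectively small enough, uniformly over $t\le T_2$ and despite the dependence the coupled dynamics introduces among the $m$ samples, that their sum against the $O(d^4)$ coefficients $\xi_{v,i}\xi_{v,j}\bar{v}_k\bar{v}_l$ stays below $O(\kappa c_t/d^2)\|\xi_v\|_2^2$. That is a nontrivial concentration estimate, and nothing of the kind appears in Section~\ref{sec:error_4_plus_1} --- precisely because the proofs there are structured as Cauchy--Schwarz/AM--GM arguments that avoid ever needing it. Deferring this to unspecified ``slack used throughout the section'' leaves a genuine gap at exactly the step you flag as the hard part. (A secondary imprecision: $\|w\|_2^2 = 1\pm o(1)$ holds only at initialization; for $w\in\set{S}$ at a general iteration $t\le T_2$ one only has $\|w\|_2^2\le c_t = O_\kappa(\log d)$, which is one more reason the paper keeps the unnormalized factors $\langle w,\bar{v}\rangle$ and $\langle w,\xi_v\rangle$ and bounds their second moments directly.) Since the AM--GM argument proves both bounds in a few lines with no symmetry requirement, the expand-and-pair strategy should be abandoned here in favor of it.
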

\begin{proof}
	The first inequality is almost trivial. To see the second one, using $\E_{w \in \set{S}, w \sim \ctP}[  \langle \bar{w} , \bar{v} \rangle^{2}] \leq \frac{c_t}{d}$, we have that
	\begin{align*}
		&\left| \E_{(w, \xi_w ) \sim \ctP, w \in \set{S}} \langle w, \bar{v} \rangle \langle \bar{w} , \bar{v} \rangle^{2} \langle \xi_v, \bar{v} \rangle \langle w , \xi_v \rangle \right| \\
		& \leq \frac{c_t}{d}  \E_{(w, \xi_w ) \sim \ctP} \left| \langle w, \bar{v} \rangle \langle w , \xi_v \rangle \right| \| \xi_v \|_2 \\
		& \leq \frac{c_t}{d} \|\xi_v \|^2  \E_{(w, \xi_w ) \sim \ctP} \left( \langle w, \bar{v} \rangle^2 +  \frac{\langle w , \xi_v \rangle ^2}{\|\xi_v\|_2^2} \right) \\
	& \leq O\left( \frac{\kappa c_t}{d^2} \| \xi_v \|_2^2\right).
\end{align*}
For $ w \notin \set{S}$, we can naively bound $| \langle w, \bar{v} \rangle \langle \bar{w} , \bar{v} \rangle^{2} \langle \xi_v, \bar{v} \rangle \langle w , \xi_v \rangle  | \leq \| w\|_2^2 \| \xi_v \|_2^2 $. Hence, using Eq~\eqref{lem:norm_bound}, we have:
\begin{align*}
	\left| \E_{(w, \xi_w ) \sim \ctP, w \notin \set{S}} \langle w, \bar{v} \rangle \langle \bar{w} , \bar{v} \rangle^{2} \langle \xi_v, \bar{v} \rangle \langle w , \xi_v \rangle \right|  \leq \Lambda \| \xi_v \|_2^2 = \frac{1}{\poly(d)} \| \xi_v \|_2^2.
\end{align*}
\end{proof}

This claim together with Claim~\ref{claim:err_bound_four_plus_3} implies that
\begin{claim}[Error bound, $v \in \set{S}$]
	In the setting of Proposition \ref{prop:error_total_stage_1}, for every $v \in \set{S}$, we have that
	\begin{align*}
    \| \xi_v^{(t + 1)} \|_2^2 &\leq  \left( 1 + \eta O\left( \max \left\{\tDelta^{(t)}, \tdelta^{(t)}  \right\}  \right) + \eta \frac{c_t \poly(\kappa)}{d^2} \right) \| \xi_v^{(t)} \|_2^2 + O\left( \frac{d^2}{\lambda_0} \right) \left( \E_{\ctP^{(t)}} \| \xi_w \|_2^2 \right).
	\end{align*}
\end{claim}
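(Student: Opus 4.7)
The plan is to expand $\|\xi_v^{(t+1)}\|_2^2$ using the coupled updates \eqref{eq_inf_update} and \eqref{eq_finite_update} and to then bound the inner product with $\xi_v^{(t)}$ term by term using the decomposition of $\tilde{\nabla}_v-\nabla_v$ into the pieces $\tilde{\nabla}_{\le 2,v,1}$, $\tilde{\nabla}_{1,v,1}$, and $\sum_{j,p}\tilde{\nabla}_{2j,v,p}$ given in Section~\ref{sec:error_4_plus_1}. Since $v\in\set{S}$ and $\norm{v^{(t)}}_2^2\le 1/\lambda_0$, we may assume the gradient-truncation indicator is the same for $v$ and $\tv$, so
\[
\xi_v^{(t+1)} = \xi_v^{(t)} - \eta\bigl(\tilde{\nabla}_{v^{(t)}} - \nabla_{v^{(t)}}\bigr) + \eta\,\Xi_v^{(t)},
\]
and therefore
\[
\|\xi_v^{(t+1)}\|_2^2 \le \|\xi_v^{(t)}\|_2^2 - 2\eta\bigl\langle \xi_v^{(t)},\tilde{\nabla}_v-\nabla_v\bigr\rangle + 2\eta^2\|\tilde{\nabla}_v-\nabla_v\|_2^2 + 2\eta^2\|\Xi_v^{(t)}\|_2^2.
\]
By Proposition~\ref{prop:err_higher}, the difference $\tilde{\nabla}_v-\nabla_v$ is, up to an $O(d^{-10})(\|v\|_2\sqrt{\E\|\xi_w\|_2^2}+\|\xi_v\|_2)$ remainder, equal to $\tilde{\nabla}_{\le 2,v,1}+\tilde{\nabla}_{1,v,1}+\sum_{j\ge 2}\sum_{p}\tilde{\nabla}_{2j,v,p}$, so it suffices to bound $\langle \xi_v^{(t)},\tilde{\nabla}_{2j,v,p}\rangle$ for every $(j,p)$ and separately bound the $\eta^2$ quadratic term.

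For the linear inner-product contribution, I would apply the four helper claims in order. First, Claims~\ref{claim:0_2_error_F} and~\ref{claim:1_error_F} show that the $0$th, $1$st, and $2$nd order linearized contributions $\langle\xi_v,\tilde{\nabla}_{\le 2,v,1}+\tilde{\nabla}_{1,v,1}\rangle$ are bounded below by a quantity that is at worst $-O(\tDelta^{(t)}+\tdelta^{(t)})\|\xi_v^{(t)}\|_2^2$; this is exactly the source of the $\eta\,O(\max\{\tDelta^{(t)},\tdelta^{(t)}\})$ multiplicative factor in the statement. Second, Claim~\ref{claim:err_bound_four_plus_1} shows that the two ``signal" pieces $p=1,2$ only contribute non-negatively (so they can be dropped from an upper bound on $\|\xi_v^{(t+1)}\|_2^2$ after Cauchy--Schwarzing into individual terms, replacing the in-expectation guarantee by its pointwise version via $|\langle\xi_v,\tilde{\nabla}_{2j,v,p}\rangle|\le \|\xi_v\|_2\|\tilde{\nabla}_{2j,v,p}\|_2$ and absorbing into $\frac{c_t\poly(\kappa)}{d^2}\|\xi_v\|_2^2$ using the same $\langle \bar w,\bar v\rangle^{2j-2}\le (c_t/d)^{j-1}$ bound). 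Third, Claims~\ref{claim:err_bound_four_plus_2} and~\ref{claim:err_bound_four_plus_3} take care of the remaining pieces $p=3,\dots,16$ that are linear in $\xi_v$ and bound them by $\frac{c_t\poly(\kappa)}{d^2}\|\xi_v^{(t)}\|_2^2$ plus a $\Lambda/\lambda_0$-type tail from $w\notin\set{S}$, which is absorbed into $O(d^2/\lambda_0)\E\|\xi_w\|_2^2$. Finally, Claim~\ref{claim:err_3} bounds the $p=9,10$ family which contain $\langle\xi_v,\bar v\rangle$ factors.

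For the quadratic $\eta^2$ term, I would bound $\|\tilde{\nabla}_v-\nabla_v\|_2\le \|\tilde{\nabla}_v\|_2+\|\nabla_v\|_2$ by $O(1/\lambda_0)$ using the truncation assumption together with $\|w\|_2^2,\|v\|_2^2\le 1/\lambda_0$, and then extract a factor of $\E_{\ctP}\|\xi_w\|_2^2$ out of the linear-in-$\xi_w$ contributions in each $\tilde{\nabla}_{2j,v,p}$ via Cauchy--Schwarz, yielding a bound of the form $\eta^2\|\tilde{\nabla}_v-\nabla_v\|_2^2 \le O(\eta^2\poly(d)/\lambda_0)\cdot\E\|\xi_w\|_2^2+O(\eta^2/\lambda_0)\|\xi_v\|_2^2$. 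Using $\eta\le\lambda_0^2$, the first piece contributes at most $\eta\cdot O(d^2/\lambda_0)\E\|\xi_w\|_2^2$ and the second is absorbed into the $\frac{c_t\poly(\kappa)}{d^2}\|\xi_v\|_2^2$ term. The same treatment handles the $\Xi_v^{(t)}$ sampling noise. Assembling all contributions produces the claimed recursion.

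The main obstacle is the bookkeeping for the linear-in-$\xi_v$ terms that are coupled through averages of $\xi_w$: decomposition pieces $p=1,2$ ``want" to cancel via the positivity identities in Claim~\ref{claim:err_bound_four_plus_1}, but because we want a pointwise (not averaged) bound on $\|\xi_v^{(t+1)}\|_2^2$, I must separately upper bound them using only pointwise Cauchy--Schwarz and rely on Proposition~\ref{def_H0} bounds $\langle \bar w,\bar v\rangle^{2}\le c_t/d$ for $w,v\in\set{S}$ and the tail bound \eqref{lem:norm_bound} for the $w\notin\set{S}$ contribution. Getting the cross terms between $v$ and $w\notin\set{S}$ to fit into the $O(d^2/\lambda_0)\E\|\xi_w\|_2^2$ coefficient, rather than a worse polynomial, is the step that requires the most care and is where the assumption $\eta\le\lambda_0^2$ and the measure bound $1-\mu(\set{S})\le d^{-\Omega(1)}$ from Definition~\ref{defn:no_win} get used.
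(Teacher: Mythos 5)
The core difficulty you correctly identify — that the positivity claims in the paper are stated in expectation over $(v,\xi_v)\sim\ctP$ while the statement to be proved is pointwise in $v$ — also afflicts the $0$th, $1$st and $2$nd order linearized pieces, but your proof treats those as though the averaged bounds were already pointwise. Claim~\ref{claim:0_2_error_F} shows only that
\[
\E_{(v,\xi_v)\sim\ctP}\langle\tilde{\nabla}_{\le 2,v,1},\xi_v\rangle
\ge b_0\bigl(\E_{\ctP}\|w\|_2^2-1\bigr)\,\E_{\ctP}\|\xi_v\|_2^2
+ b_2\,\E_{\ctP}\bigl[\xi_v^\top(\E_{\ctP}ww^\top - A)\xi_v\bigr],
\]
i.e.\ the $\xi_w$-coupled cross terms $2b_0(\E\langle w,\xi_w\rangle)\langle v,\xi_v\rangle$ and $b_2\xi_v^\top(\E\,w\xi_w^\top+\E\,\xi_w w^\top)v$ are only controlled after averaging over $v$, where they reorganize into a non-negative trace. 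For a fixed $v\in\set{S}$, these cross terms can have arbitrary sign and magnitude up to $O(1/\lambda_0)\sqrt{\E\|\xi_w\|_2^2}\,\|\xi_v\|_2$, so they cannot be absorbed into $-O(\tDelta^{(t)}+\tdelta^{(t)})\|\xi_v^{(t)}\|_2^2$ as you assert. Likewise Claim~\ref{claim:1_error_F} is an identity only valid when averaged over $v$ (its right-hand side does not even depend on $v$ or $\xi_v$), so it cannot be applied to a single neuron.

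The fix is the same Cauchy--Schwarz plus truncation-norm device you already invoke for the higher-order pieces $p=1,2$: the $\xi_w$-coupled cross terms in $\tilde{\nabla}_{0,v,1}$, $\tilde{\nabla}_{1,v,1}$, and $\tilde{\nabla}_{2,v,1}$ must be bounded pointwise by $O(1/\lambda_0)\bigl(\E_{\ctP}\|\xi_w\|_2\bigr)\|\xi_v\|_2$, and then split by AM--GM into $O(d^2/\lambda_0)\E_{\ctP}\|\xi_w\|_2^2 + O(d^{-2})\|\xi_v\|_2^2$, contributing to the additive error term and not the multiplicative $\tDelta^{(t)},\tdelta^{(t)}$ factor. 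Once this is done, only the $\xi_v$-proportional pieces $b_0(\E\|w\|_2^2-1)\xi_v$ and $b_2(\E\,ww^\top-A)\xi_v$ remain to produce the $\eta\,O(\max\{\tDelta^{(t)},\tdelta^{(t)}\})$ factor in the recursion, and your remaining bookkeeping with Claims~\ref{claim:err_bound_four_plus_2}, \ref{claim:err_bound_four_plus_3}, \ref{claim:err_3}, the tail bound \eqref{lem:norm_bound}, and the $\eta\le\lambda_0^2$ control on the quadratic $\eta^2$ remainder is in the right spirit (and matches the structure the paper spells out later in Proposition~\ref{lem:ind_err_bound_1}, items 2--3, for the analogous higher-order cross terms).
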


\paragraph{Individual error bound for all the other neurons.}
Now we move on to the harder terms, we have the following claim.
\begin{claim}\label{claim:err_bound_four_plus_4}
	In the setting of Proposition \ref{prop:error_total_stage_1}, for every $v \in \set{S}$, we have that for $p = 11, 13$:
	\begin{align*}
		& \sum_{j \geq 2} |\langle \xi_v, \nabla_{2j, v, p} + \nabla_{2j, v, p  + 1}\rangle| = O\left(\frac{\kappa}{d} \| \bar{v} \|_{\infty}^2  \| \xi_v \|_{2}^2 \right), \text{ and} \\
		& \langle  \tilde{\nabla}_{2j, v, 15}  + \tilde{\nabla}_{2j, v, 16}, \xi_v \rangle \geq 0.
	\end{align*}
\end{claim}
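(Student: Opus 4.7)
The plan is to expand each term $\tilde{\nabla}_{2j, v, p}$ for $p \in \{11, 12, 13, 14, 15, 16\}$ and bound $\langle \xi_v, \tilde{\nabla}_{2j, v, p}\rangle$ using four ingredients: (i) $v \in \set{S}$, so $\|\bar{v}\|_{\infty}^2 \leq c_t/d$; (ii) $a_i \leq \kappa/d$; (iii) $|\langle \xi_v, \bar{v}\rangle| \leq \|\xi_v\|_2$; and (iv) the decay $b_{2j}, b_{2j}' = \Theta(1/j^2)$, so the series in $j$ converges. The strategy for the first estimate is to peel off exactly one factor $\|\bar{v}\|_{\infty}^2$ from each $\|\bar{v}\|_{\infty}^{2j-2}$ so that all residual powers are at most $c_t/d \leq 1$.

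Concretely, for $p = 11$ I would compute
\[
\langle \xi_v, \tilde{\nabla}_{2j, v, 11}\rangle = -(2j-1)(b_{2j} + b_{2j}')\sum_{i=1}^d a_i \langle e_i, \xi_v\rangle^2 \langle e_i, \bar{v}\rangle^{2j-2},
\]
whose absolute value is at most $(2j-1)(b_{2j}+b_{2j}')\,\|\bar{v}\|_{\infty}^{2j-2}\cdot(\kappa/d)\|\xi_v\|_2^2$. For $p = 12$, one weighted Cauchy--Schwarz step yields
\[
\Bigl|\sum_{i=1}^d a_i \langle e_i, \xi_v\rangle \langle e_i, \bar{v}\rangle^{2j-1}\Bigr| \leq \sqrt{\textstyle\sum_i a_i \langle e_i, \xi_v\rangle^2}\;\sqrt{\textstyle\sum_i a_i \langle e_i, \bar{v}\rangle^{4j-2}} \leq (\kappa/d)\,\|\xi_v\|_2\,\|\bar{v}\|_{\infty}^{2j-1},
\]
where I used $\|\bar v\|_2 = 1$ to bound $\sum_i a_i \langle e_i, \bar v\rangle^{4j-2} \leq (\kappa/d)\|\bar v\|_\infty^{4j-4}$. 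The cases $p = 13, 14$ are essentially identical after pulling out the extra $|\langle \xi_v, \bar{v}\rangle| \leq \|\xi_v\|_2$ that those terms carry. Summing over $j \geq 2$ and using $\sum_{j\ge 2}(b_{2j}+b_{2j}')(c_t/d)^{j-2} = O(1)$ collapses everything to $O\!\left(\tfrac{\kappa}{d}\|\bar{v}\|_{\infty}^2 \|\xi_v\|_2^2\right)$.

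For the second statement, a direct computation from the definitions of $\tilde{\nabla}_{2j,v,15}$ and $\tilde{\nabla}_{2j,v,16}$ gives the clean identity
\[
\langle \xi_v, \tilde{\nabla}_{2j, v, 15} + \tilde{\nabla}_{2j, v, 16}\rangle = b_{2j}' \Bigl(\sum_{i=1}^d a_i \langle e_i, \bar{v}\rangle^{2j}\Bigr)\bigl(\|\xi_v\|_2^2 - \langle \xi_v, \bar{v}\rangle^2\bigr).
\]
Each factor is non-negative: $b_{2j}' \geq 0$ by the Hermite-coefficient formula; the weighted sum is non-negative since $a_i \geq 0$; and $\|\xi_v\|_2^2 - \langle \xi_v, \bar{v}\rangle^2 \geq 0$ is the squared norm of the projection of $\xi_v$ onto the orthogonal complement of $\bar{v}$ (using $\|\bar v\|_2 = 1$). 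No estimation is required.

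The main bookkeeping obstacle is verifying that a single factor of $\|\bar{v}\|_{\infty}^2$ can always be peeled off cleanly from the $j$-th summand before leveraging the decay in $j$; once that is in place, every residual power of $\|\bar{v}\|_{\infty}$ is bounded by a power of $c_t/d \leq 1$, and the convergent series $\sum_{j\geq 2} 1/j^2$ absorbs the remaining $j$-dependence without re-introducing $c_t$ factors that would spoil the stated bound.
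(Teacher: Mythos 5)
Your proof is correct for the claim as stated, and it takes a genuinely different, more elementary route than the paper's. The paper groups $\tilde{\nabla}_{2j,v,11}+\tilde{\nabla}_{2j,v,12}$ as $Q'_{2j,v,11}\xi_v$ plus a remainder, parameterizes $\|\bar v\|_\infty = 1-\delta$, and exploits a cancellation near the basis vector $e_r$ closest to $\bar v$ (using $\|\bar v - e_r\|_2^2 = O(\delta)$) to bound the remainder by $O\bigl(\tfrac{\kappa}{d}(1-\delta)^2\log\tfrac1\delta\,\sqrt{\delta}\,\|\xi_v\|_2^2\bigr)$; this is uniform over \emph{all} $v$, including those with $\|\bar v\|_\infty$ near $1$. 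You instead bound each $\tilde{\nabla}_{2j,v,p}$ term-by-term via $\ell_\infty$ estimates and weighted Cauchy--Schwarz, peeling off a single $\|\bar v\|_\infty^2$. That is cleaner and exactly matches the claim's hypothesis, but it depends essentially on $\|\bar v\|_\infty^2 \le c_t/d$: after peeling, the residual series is $\sum_{j\ge 2}(2j-1)(b_{2j}+b_{2j}')\,\|\bar v\|_\infty^{2j-4}$, and since $(2j-1)(b_{2j}+b_{2j}')=\Theta(1/j)$ rather than $\Theta(1/j^2)$ (the prefactor $2j-1$ eats one power of $j$), convergence hinges on the geometric factor $\|\bar v\|_\infty^{2j-4}\le (c_t/d)^{j-2}$, not on $\sum_j 1/j^2$ alone as your last paragraph suggests. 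One small slip: the Cauchy--Schwarz step for $p=12$ gives $\sqrt{\sum_i a_i\langle e_i,\bar v\rangle^{4j-2}}\le\sqrt{\kappa/d}\,\|\bar v\|_\infty^{2j-2}$, so your exponent should read $2j-2$, not $2j-1$; since $2j-2\ge 2$ for $j\ge 2$, the argument is unaffected. Your direct nonnegativity identity for $p=15,16$ is correct and more explicit than the paper, which handles that case only with a terse remark. Be aware that Proposition \ref{lem:ind_err_bound_1} invokes this claim for \emph{every} $v$, not only $v\in\set S$; your term-by-term argument would not extend to $\|\bar v\|_\infty\to 1$, which is what the paper's $\delta$-parameterization and cancellation are built to handle.
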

\begin{proof}%
We first consider $p = 11$.
Let $Q_{2j,  v, 11}'  = - \left( b_{2j} + b_{2j}'\right) \sum_i \left( a_i   \langle e_i, \bar{v} \rangle^{2j - 2} e_i  e_i^{\top} \right).$
We have that
{\begin{align*}
	&\tilde{\nabla}_{2j, v, 11}  + \tilde{\nabla}_{2j, v, 12} \\
	= & - (2j - 1) \left( b_{2j} + b_{2j}'\right) \left( \sum_{i} a_i \langle e_i, \xi_v \rangle \langle e_i, \bar{v} \rangle^{2j - 2} e_i  \right) + (2j - 2) \left( b_{2j} + b_{2j}'\right) \left( \sum_{i} a_i \langle  \xi_v, \bar{v} \rangle \langle e_i, \bar{v} \rangle^{2j - 1} e_i  \right) \\
	= & Q_{2j,  v, 11}' \xi_v  - (2j - 1)  \left( b_{2j} + b_{2j}'\right)  \left( \sum_{i} a_i \langle e_i, \xi_v \rangle \langle e_i, \bar{v} \rangle^{2j - 2} e_i -  \sum_{i} a_i \langle  \xi_v, \bar{v} \rangle \langle e_i, \bar{v} \rangle^{2j - 1} e_i  \right).
\end{align*}}
Let us assume that $\|\bar{v}\|_{\infty} = 1 - \delta$ for some value $\delta \geq 0$, then we have that $\|\bar{v} - e_r \|_2^2 = O(\delta)$.
\begin{align*}
	&\left| \sum_{i \in [d]} \left( a_i \langle \xi_v, e_i \rangle^2 \langle e_i, \bar{v} \rangle^2 - a_i \langle e_i, \bar{v} \rangle^3 \langle e_i, \xi_v \rangle \langle \xi_v , \bar{v} \rangle \right) \right| \\
	\leq~& \left| \sum_{i \in [d], i \not= r} \left( a_i \langle \xi_v, e_i \rangle^2 \langle e_i, \bar{v} \rangle^2 - a_i \langle e_i, \bar{v} \rangle^3 \langle e_i, \xi_v \rangle \langle \xi_v , \bar{v} \rangle \right) \right| \\
	&~+ \frac{\kappa}{d}  \left|  \langle \xi_v, e_r \rangle^2 \langle e_r, \bar{v} \rangle^2 -  \langle e_r, \bar{v} \rangle^3 \langle e_r, \xi_v \rangle \langle \xi_v , \bar{v} \rangle  \right| \\
	\leq~&  O\left( \frac{\kappa}{d}  (1 - \delta)^2\sqrt{\delta} \| \xi_v \|_2^2 \right).
\end{align*}
Using the fact that $b_{2j} , b_{2j}' = \Theta(\frac{1}{j^2})$, we know that
\begin{align*}
	&\sum_{j \geq 2}  (2j - 1)  \left( b_{2j} + b_{2j}'\right)  \left| \sum_{i} a_i \langle e_i, \xi_v \rangle \langle e_i, \bar{v} \rangle^{2j - 2} \langle e_i, \xi_v \rangle -  \sum_{i} a_i \langle  \xi_v, \bar{v} \rangle \langle e_i, \bar{v} \rangle^{2j - 1}  \langle e_i, \xi_v \rangle  \right| \\
	\leq&  \sum_{j \geq 2} O\left( \frac{1}{j}(1 - \delta)^{j}   \sqrt{\delta}  \frac{\kappa}{d}  \| \xi_v \|_2^2 \right).
\end{align*}
Note that $\sum_{j \geq 2} \frac{1}{j}(1 - \delta)^{j}  =    (1 - \delta)\log \frac{1}{\delta} $  we obtain:
\begin{align*}
	\sum_{j \geq 2} \left| \langle \tilde{\nabla}_{2j, v, 11}  + \tilde{\nabla}_{2j, v, 12}  -  Q_{2j,  v, 11}'  \xi_v , \xi_v \rangle \right| \leq O\left(  \frac{\kappa}{d}  (1- \delta)^2 \log \frac{1}{\delta} \sqrt{\delta}  \| \xi_v \|_2^2 \right) = O\left(  \frac{\kappa}{d}  \| \xi_v \|_2^2 \right).
\end{align*}
Similarly, we can also show that
\begin{align*}
	&\sum_{j \geq 2} \| Q_{2j,  v, 11}'\|_2  =  \sum_{j \geq 2} \left( b_{2j} + b_{2j}'\right)\left\|  \sum_{i } \left( a_i   \langle e_i, \bar{v} \rangle^{2j - 2} e_i  e_i^{\top} \right) \right\|_2 \\
	 \leq & O\left( \frac{\kappa}{d} \| \bar{v} \|_{\infty}^2 \right),
\end{align*}
which completes the proof.
On the other hand, for $p = 13$, let $Q_{2j,  v, 13}' = b_{2j}'\left(   \sum_{i} a_i \langle e_i, \bar{v} \rangle^{2j - 1}   \bar{v} e_i^{\top} \right).$.
We have that
\begin{align*}
	&\tilde{\nabla}_{2j, v, 13}   + \tilde{\nabla}_{2j, v, 14} \\
	= & (2j)b_{2j}'\left(  \sum_{i} a_i\langle e_i, \xi_v \rangle \langle e_i, \bar{v} \rangle^{2j - 1}  \right)  \bar{v}  - (2j - 1)b_{2j}'\left(  \sum_{i} a_i\langle  \xi_v , \bar{v }\rangle \langle e_i, \bar{v} \rangle^{2j}  \right)  \bar{v} \\
	= & Q_{2j,  v, 13}' \xi_v + (2j - 1) b_{2j}' \left(  \sum_{i} a_i\langle e_i, \xi_v \rangle \langle e_i, \bar{v} \rangle^{2j - 1}   -  \sum_{i} a_i\langle  \xi_v , \bar{v }\rangle \langle e_i, \bar{v} \rangle^{2j}  \right)  \bar{v},
\end{align*}
We can bound the terms in a similar way.
\end{proof}

Using the aforementioned claims, we conclude the proof of the following proposition.
\begin{proposition}[Individual error bound]\label{lem:ind_err_bound_1}
In the setting of Proposition \ref{prop:error_total_stage_1}, for every $v$, we have that
\begin{enumerate}
\item For $p = 4, 6, 8, 9, 10$, using Claim \ref{claim:err_3}, we have
\begin{align*}
\sum_{j \geq 2} |\langle \xi_v, \tilde{\nabla}_{2j, v, p} \rangle| \leq O\left( \frac{c_t  \poly(\kappa)}{d^2} \| \xi_v\|_2^2 \right)
\end{align*}
\item When $\| w \|_2, \| \xi_w \|_2 \leq \frac{1}{\lambda_0}$, for $p= 2, 3, 5, 7$, the following is true %
{\begin{align*}
&\sum_{j \geq 2} |\langle \xi_v, \tilde{\nabla}_{2j, v, p} \rangle| \leq  O\left( \frac{1}{\lambda_0} \right) \left( \E_{(w, \xi_w) \sim \ctP} \| \xi_w \|_2 \right) \| \xi_v \|_2 \leq O\left( \frac{d^2}{\lambda_0} \right)  \left( \E_{(w, \xi_w) \sim \ctP} \| \xi_w \|_2^2\right) + \frac{1}{d^2} \| \xi_v \|_2^2
\end{align*}}
\item For $p = 1$, similarly we have:
\begin{align*}\sum_{j \geq 2} |\langle \xi_v, \tilde{\nabla}_{2j, v, p} \rangle| &\leq  O\left( \frac{c_t \kappa}{d^2} \| \xi_v\|_2^2 + \frac{1 }{\lambda_0} \left(\E_{(w, \xi_w) \sim \ctP} \| \xi_w \|_2^2 \right) \right)
\end{align*}
\item For $p = 15$:
\begin{align*}\sum_{j \geq 2} \langle \xi_v, \tilde{\nabla}_{2j, v, p} + \tilde{\nabla}_{2j, v, p + 1}  \rangle \geq 0
\end{align*}
\item For $p = 11, 12, 13, 14$, we have that for $p = 11, 13$, using Claim~\ref{claim:err_bound_four_plus_4}, we get %
	\begin{align*}
		\sum_{j \geq 2} |\langle \xi_v, \nabla_{2j, v, p} + \nabla_{2j, v, p  + 1}\rangle| = O\left(\frac{\kappa}{d}  \| \bar{v} \|_{\infty}^2\| \xi_v \|_{2}^2  \right)
	\end{align*}
\end{enumerate}
\end{proposition}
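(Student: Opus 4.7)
My plan is to prove each of the five bullet points of Proposition~\ref{lem:ind_err_bound_1} by taking the inner product of $\xi_v$ with $\tilde{\nabla}_{2j,v,p}$ and then directly invoking the appropriate helper claim developed earlier in this subsection. The common structural feature is that every term $\tilde{\nabla}_{2j,v,p}$ carries a factor of $\langle \bar{w}, \bar{v}\rangle^{2j-2}$; since $b_{2j}, b_{2j}' = \Theta(1/j^2)$, the sum over $j\ge 2$ of $b_{2j} \cdot (c_t/d)^{j-1}$ is dominated by its $j=2$ term, so it is enough to obtain the bound for generic $j$ and then sum a geometric-like series.

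For item 1 ($p = 4, 6, 8, 9, 10$), each term pairs $\xi_v$ inside the expectation with either $\bar v$ or $w$ together with the factor $\langle \bar w, \bar v\rangle^{2j-2}$. After pairing with the outer $\xi_v$, the resulting scalar is bounded by an expression of the form $\langle \bar w, \bar v\rangle^2 \cdot \langle w, \xi_v\rangle^2$ or $\langle \bar w, \bar v\rangle^2 \cdot \langle \xi_v, \bar v\rangle^2 \cdot \langle w, \bar v\rangle^2$, which are precisely the quantities controlled by Claim~\ref{claim:err_3} and Claim~\ref{claim:err_bound_four_plus_2}, yielding the required $\frac{c_t \poly(\kappa)}{d^2}\|\xi_v\|_2^2$ bound after splitting $w \in \set{S}$ from $w \notin \set{S}$ and absorbing the latter contribution via $\Lambda \le 1/\poly(d)$ from equation~\eqref{lem:norm_bound}. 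Item 4 ($p = 15, 16$) is immediate from the observation
\begin{align*}
\tilde{\nabla}_{2j, v, 15} + \tilde{\nabla}_{2j, v, 16} = b_{2j}'\left(\sum_i a_i \langle e_i, \bar v\rangle^{2j}\right)\bigl(\xi_v - \langle \xi_v, \bar v\rangle \bar v\bigr) = b_{2j}'\left(\sum_i a_i \langle e_i, \bar v\rangle^{2j}\right)\Pi_{v^{\bot}}\xi_v,
\end{align*}
so the inner product with $\xi_v$ equals a nonnegative scalar times $\|\Pi_{v^{\bot}} \xi_v\|_2^2 \ge 0$. Item 5 is essentially a restatement of Claim~\ref{claim:err_bound_four_plus_4}, which already groups the pairs $(11,12)$ and $(13,14)$ in the required way.

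For items 2 and 3 ($p = 1, 2, 3, 5, 7$), the gradient carries $\xi_w$ inside the expectation rather than $\xi_v$, so the symmetrisation trick of Claim~\ref{claim:err_bound_four_plus_2} (which averages over $v$) is unavailable for a pointwise bound at fixed $v$. I would instead use the truncation bound $\|w\|_2 \|v\|_2 \le 1/\lambda_0$ together with $|\langle \bar w, \bar v\rangle|^{2j-2} \le 1$ to get $|\langle \xi_v, \tilde{\nabla}_{2j, v, p}\rangle| \le O(b_{2j}/\lambda_0)\,(\E_{\ctP}\|\xi_w\|_2)\,\|\xi_v\|_2$; Jensen and AM-GM with an appropriate parameter then split this into $O(d^2/\lambda_0)\E\|\xi_w\|_2^2 + \frac{1}{d^2}\|\xi_v\|_2^2$. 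Summing against $b_{2j} = \Theta(1/j^2)$ preserves the bound up to an absolute constant. For item 3 ($p = 1$), $\tilde{\nabla}_{2j, v, 1}$ has two summands: the $\langle \xi_w, v\rangle$ piece is bounded as in item 2, while the $\langle w, \xi_v\rangle$ piece is controlled via Claim~\ref{claim:err_bound_four_plus_2} directly, giving the combined $\frac{c_t \kappa}{d^2}\|\xi_v\|_2^2 + \frac{1}{\lambda_0}\E\|\xi_w\|_2^2$ bound.

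The main obstacle is the bookkeeping for the $w \notin \set{S}$ contribution: naively bounding $\langle \bar w, \bar v\rangle^{2j-2}$ by $1$ loses the $(c_t/d)^{j-1}$ smallness that the $\set{S}$-analysis relies on, and the slack must be absorbed using either $\Lambda \le 1/\poly(d)$ from \eqref{lem:norm_bound} or the $1/\lambda_0$ norm ceiling enforced by truncation. A second subtle point is the structural asymmetry between items 1 and 2: item 1 gives the clean $\frac{c_t \poly(\kappa)}{d^2}\|\xi_v\|_2^2$ bound because both the outer $\xi_v$ and the inner perturbation involve the same neuron $v$, whereas item 2 involves $\xi_w$ on the inside and so pays a $d^2/\lambda_0$ factor in front of $\E\|\xi_w\|_2^2$. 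This asymmetry is precisely why the separate average-error bound of Proposition~\ref{prop:error_total_stage_1} is needed to complement the pointwise bound here.
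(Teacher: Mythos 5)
Your proof is correct and takes essentially the same route as the paper: the paper presents Proposition~\ref{lem:ind_err_bound_1} as a direct consequence of Claims~\ref{claim:err_bound_four_plus_1}--\ref{claim:err_bound_four_plus_4} and Claim~\ref{claim:err_3}, and your outline assembles exactly these ingredients, keyed on the same structural distinction (whether $\tilde{\nabla}_{2j,v,p}$ carries $\xi_v$ or $\xi_w$ inside the expectation). Your explicit identity
\begin{align*}
\tilde{\nabla}_{2j, v, 15} + \tilde{\nabla}_{2j, v, 16} = b_{2j}'\left(\sum_i a_i \langle e_i, \bar v\rangle^{2j}\right)\Pi_{v^{\bot}}\xi_v,
\end{align*}
from which nonnegativity of $\langle \xi_v, \tilde{\nabla}_{2j,v,15}+\tilde{\nabla}_{2j,v,16}\rangle$ follows since $\Pi_{v^\bot}$ is a projection and $b_{2j}'\sum_i a_i\langle e_i,\bar v\rangle^{2j}\ge 0$, is a detail the paper states in Claim~\ref{claim:err_bound_four_plus_4} but does not write out; it is a worthwhile supplement. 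One small slip: for item~1 you cite both Claim~\ref{claim:err_3} and Claim~\ref{claim:err_bound_four_plus_2}, but the latter produces a bound averaged over $v$ and so cannot by itself give the pointwise estimate claimed here; the individual bound should rest on Claim~\ref{claim:err_3} (together with the running hypothesis $\|\E_{\ctP}[ww^\top]\|_2 = O(\kappa/d)$ and the $\Lambda$-absorption from~\eqref{lem:norm_bound} for $w\notin\set{S}$), as the proposition itself indicates.
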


\subsubsection{Proof of Error Propagation}\label{sec_proof_error_stage1}

Based on the individual error norm bound and the average error norm bound, we are ready to prove the main result of stage 1.
We first state the proof of the individual error norm bound.

\begin{proof}[Proof of Proposition \ref{prop:err_ind_stage_1}]
	We consider the error caused by gradient clipping, since we might clip $v$ and $\tv$ at different time step. We have that at the iteration $t$ when $\| v^{(t)} \|_2^2 \geq \frac{1}{2\lambda_0}$ or $\| v^{(t)}  + \xi_v^{(t)}  \|_2^2 \geq \frac{1}{2 \lambda_0}$, we have that
	\begin{align*}
		\| v^{(t)}  + \xi_v^{(t)}  \|_2^2 , \| v^{(t)}  \|_2^2 \geq \frac{1}{2 \lambda_0} - \frac{2}{\sqrt{\lambda_0}} \| \xi_v^{(t)}  \|_2.
	\end{align*}
	On the other hand, by Eq~\eqref{eq:bhoaifhaosifhasif}, we have that for this $v$, if the gradient clipping is not performed, then by the definition of $\set{S}_g$, we have that $\| \bar{v}^{(t)} \|_2 \geq \frac{1}{\log d}$.
	Therefore,
	\begin{align*}
		\| v^{(t + 1)}\|_2^2 \geq \| v^{(t)}\|_2^2 \left( 1 + \eta \Omega \left( \frac{1}{d \log^3 d} \right) \right),
	\end{align*}
	which implies that after $t' = O\left( \frac{\sqrt{\lambda_0}d \log^3 d\| \xi_v^{(t)}  \|_2}{\eta} \right)$, many iterations, if gradient clipping is not performed,  we should have that
	\begin{align*}
		\| v^{(t + t')}  + \xi_v^{(t + t')}  \|_2^2 , \| v^{(t + t')}  \|_2^2 \geq \frac{1}{2 \lambda_0}.
	\end{align*}
	Since each iteration shall introduce at most $O\left(\eta \frac{1}{\lambda_0} \right)$ amount error, so we have:
	\begin{align*}
		\|\xi_v^{(t + t')} \|_2 \leq O\left(  \frac{d \log^3 d\| \xi_v^{(t)}  \|_2}{\sqrt{\lambda_0}} \right).
	\end{align*}
	This gives us the final error bound of the individual error when combined with Claim~\ref{claim:err_bound_four_plus_3}.
\end{proof}

Next we state the proof of the average error norm bound.
\begin{proof}[Proof of Proposition \ref{prop:error_total_stage_1}]
	Using Proposition \ref{lem:ind_err_bound_1} (together with Eq~\eqref{eq:fjasoifajfasifasjfjfsi}) and by the definition of Eq~\eqref{eq:fajoisfjafjasijf}, we can obtain the desired result.
\end{proof}

Based on Proposition \ref{prop:error_total_stage_1}, Proposition \ref{prop:err_ind_stage_1}, and Proposition \ref{lem:ind_err_bound_1}, we are ready to prove Lemma \ref{lem:error_final_stage_1}.
\begin{proof}[Proof of Lemma~\ref{lem:error_final_stage_1}]
Clearly, when  $m \geq \frac{\poly(d)}{\poly(\lambda_0   )}$, then the running hypothesis $\ctH_0 $ is satisfied for every $t \leq T_2$.
To prove this Lemma, we shall maintain using induction that at every iteration $t \in [T_2]$,
\begin{align*}
	\E_{w, \xi_w \sim \ctP^{(t)}} \| \xi_w\|_2^2 \leq  \poly(d) \Xi,
\end{align*}
and for every neuron $v$, $\| \xi_v^{(t)}\|_2^2 \leq \frac{\poly(d)}{\lambda_0} \Xi$.
Suppose this is true for all $t \leq T_0$, then consider $t  = T_0+1$.
We apply Proposition~\ref{prop:error_total_stage_1}, which says that as long as for every $w \in \set{S}$, $\| \xi_w \|_2 \leq \frac{1}{d^3}$, we have that
\begin{align*}
	\E_{\ctP^{(t + 1)}, w \in \set{S}} \| \xi_w \|_2^2 &\leq  \left( 1 + \eta \frac{c_t \poly(\kappa)}{d^2} \right) \E_{\ctP^{(t)}, w \in \set{S}} \| \xi_w \|_2^2 + \eta O\left( \frac{1}{\lambda_0} \right) \left( \E_{\ctP^{(t)}, w \notin \set{S}} \| \xi_w \|_2 \right) \left(\E_{\ctP^{(t)}, w \in \set{S}} \| \xi_w \|_2 \right) \\
	& + \eta O\left( \max \left\{ |\tDelta^{(t)}|, \tdelta^{(t)}  \right\} \right) \E_{\ctP^{(t)}, w \in \set{S}} \| \xi_w \|_2^2 + \eta  \E_{\ctP^{(t)}, w \in \set{S}} \| \xi_w \|_2 \| \Xi_w\|_2.
\end{align*}
Combining this with Proposition~\ref{prop:err_ind_stage_1} and $\E_{\ctP^{(t)}, w \in \set{S}} \| \xi_w \|_2 \| \Xi_w\|_2 \leq   \frac{1}{d^2}\E_{\ctP^{(t)}, w \in \set{S}} \| \xi_w \|_2 + d^2 \Xi$, we have that
\begin{align*}
	\E_{\ctP^{(t + 1)}, w \in \set{S}} \| \xi_w \|_2^2   &\leq  \left( 1 + \eta \frac{c_t \poly(\kappa)}{d^2} \right) \E_{\ctP^{(t)}, w \in \set{S}} \| \xi_w \|_2^2 +  \eta O\left( \frac{\poly(d)}{\lambda_0^2} \right) \mu(\set{S}) \poly(\Kapppa) \Xi \\
	& + \eta O\left( \max \left\{ |\tDelta^{(t)}|, \delta^{(t)} \right\} \right) \E_{\ctP^{(t)}, w \in \set{S}} \| \xi_w \|_2^2 + \eta d^2 \Xi.
\end{align*}
Hence, denote $\varepsilon_t = \E_{\ctP^{(t)}, w \in \set{S}} \| \xi_w \|_2^2$, we have that for every $s \leq t$:
\begin{align*}
	\varepsilon_{s + 1}  \leq \varepsilon_s  \left( 1 + \eta  \frac{c_t \poly(\kappa)}{d^2}  + \eta O\left( \max \left\{ |\Delta^{(t)}|, \delta_+^{(t)}, \delta_-^{(t)}  \right\} \right) \right) + \eta d^2 \Xi.
\end{align*}
By $m \geq \frac{\poly(d)}{\poly(\lambda_0   )} $, a simple Chernoff bound gives us:
$$\max \left\{ |\tDelta^{(t)}|, \tdelta^{(t)}  \right\}  \leq O\left( \max \left\{ |\Delta^{(t)}|, \delta_+^{(t)}, \delta_-^{(t)}  \right\} \right)+ \frac{1}{\poly(d)}.$$
Now, using the update rule of Eq~\eqref{eq:fsafjaoifjasfoiajsoiaj} and in Proposition~\ref{lem:zero_two2},  we have that
\begin{align*}
\sum_{t \leq T_2}  \left(\eta  \frac{c_t \poly(\kappa)}{d^2}  + \eta \max \left\{ |\Delta^{(t)}|, \delta_+^{(t)}, \delta_-^{(t)}  \right\}  \right) \leq \poly(\Kappa)
\end{align*}
Note that at iteration 0, $\varepsilon_0 = 0$.
This implies that for  $t  + 1$: $\varepsilon_{t + 1 } \leq  \poly(d)  \Xi$ as well.
Combine this with Proposition~\ref{lem:ind_err_bound_1} on the individual norm bound we complete the proof.
\end{proof}

\subsection{Stage 2.1: Analysis After Reducing the Gradient Truncation Parameter}
\label{sec_error_stage21}

In this section, we prove Lemma \ref{lem:error_final_stage_2}, which analyzes the error propagation of this substage.
We analyze the formula of $\tilde{\nabla}_{2j, v, p}$ in Section~\ref{sec:error_4_plus_1} and show the following claim.
\begin{claim}\label{claim:err_stage_2}
	In the setting of Lemma \ref{lem:error_final_stage_2},
	let $\sigma_{\max} =  \max \left\{ \| \E_{w \sim \ctP} ww^{\top} \|_2, \frac{\kappa}{d } \right\}$. %
	Then we have the following average error norm bound
	\begin{align*}
		\sum_{j \geq 2} \sum_{p} \E_{(v, \xi_v) \sim \ctP}  \langle \tilde{\nabla}_{2j, v, p}, \xi_v \rangle \geq - O\left(  \sigma_{\max}  \E_{\xi_w \sim \ctP}  \| \xi_w\|_2^2\right).
	\end{align*}
	For every individual neuron $v$ and every value $\alpha \geq 1$, the following holds
	\begin{align*}
		\sum_{j \geq 2} \sum_{p} | \langle \tilde{\nabla}_{2j, v, p}, \xi_v \rangle| \leq  O \left( \alpha  \sigma_{\max} \| \xi_v \|_2^2+  \frac{1}{\alpha}  \| v \|_2^2   \E_{\xi_w \sim \ctP}  \| \xi_w\|_2^2 \right).
	\end{align*}
\end{claim}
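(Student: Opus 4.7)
The plan is to bound each of the sixteen decomposed pieces $\tilde{\nabla}_{2j,v,p}$ from Section~\ref{sec:error_4_plus_1} separately and then sum over $p$ and over $j \geq 2$. The conceptual change from the Stage 1 analysis (Claims~\ref{claim:err_bound_four_plus_1}--\ref{claim:err_bound_four_plus_4}, Proposition~\ref{lem:ind_err_bound_1}) is that the uniform $\|\bar{w}\|_\infty^2 \le c_t/d$ bound from $\set{H}_1$ is no longer available: under $\set{H}_2$ some neurons have become basis-like. Its role is replaced by the single second-moment inequality $\E_w\langle w,u\rangle^2 \le \sigma_{\max}\|u\|_2^2$, which holds for arbitrary $\ctP$ by definition of $\sigma_{\max}$, together with the signal bound $a_i \le \kappa/d \le \sigma_{\max}$.

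For the average bound, I would first isolate the non-negative contributions. Taking expectation over $(v,\xi_v)$ and using the symmetry $v \leftrightarrow w$ exactly as in Claim~\ref{claim:err_bound_four_plus_1}, the $p = 1, 2$ pieces combine into
\begin{align*}
	\tfrac{1}{2}(2j-1)(b_{2j}+b_{2j}')\,\E_{v,w}\langle\bar{w},\bar{v}\rangle^{2j-2}(\langle w,\xi_v\rangle + \langle v,\xi_w\rangle)^2 + (b_{2j}+b_{2j}')\,\E_{v,w}\langle w,v\rangle \langle \bar{w},\bar{v}\rangle^{2j-2}\langle \xi_w,\xi_v\rangle,
\end{align*}
both non-negative (the second by the Hadamard product of three PSD kernels). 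Similarly the $p = 15, 16$ pair contributes non-negatively, as in the second half of Claim~\ref{claim:err_bound_four_plus_4}. For the remaining terms $p = 3,\ldots,14$, a typical summand has the form $\E_w \alpha_{w,v}\langle w,\xi_v\rangle\langle \bar{w},\xi_w\rangle$ or $\E_w \alpha_{w,v}\langle \xi_w,\xi_v\rangle$ with $|\alpha_{w,v}| \le |\langle \bar{w},\bar{v}\rangle|^{2j-2} \le 1$; bounding $|\langle w,u\rangle|$ terms by Cauchy--Schwarz and using $\E_w\langle w,u\rangle^2 \le \sigma_{\max}\|u\|_2^2$ (and $a_i \le \sigma_{\max}$ for the signal pieces $p = 11,\ldots,14$), each summand is controlled by $O(\sigma_{\max})\,\E_w\|\xi_w\|_2^2$. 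Summability in $j$ is automatic because $\sum_{j\ge 2}(b_{2j}+b_{2j}') = O(1)$, and the combinatorial factor $2j$ is absorbed by $b_{2j},b_{2j}' = \Theta(1/j^2)$.

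For the individual bound, the same term-by-term decomposition applies, but non-negativity is no longer useful: each of the sixteen pieces must be bounded in absolute value. This is where the parameter $\alpha$ enters. For a term of the form $\langle \xi_v,\,\E_w \beta_{w,v}\,w\rangle$ (the $p = 1, 3, 7$ shapes), weighted Cauchy--Schwarz gives
\begin{align*}
	2\bigl|\langle \xi_v,\,\E_w \beta_{w,v}\,w\rangle\bigr| \le \alpha\, \bigl\langle \xi_v,\,(\E_w \beta_{w,v}^2\, ww^\top)\,\xi_v\bigr\rangle/\|v\|_2^2 \cdot \|v\|_2^2 + \tfrac{1}{\alpha}\|\xi_v\|_2^2,
\end{align*}
which after bounding $\beta_{w,v}^2 \le \|v\|_2^2$ (using $|\langle\bar w,\bar v\rangle|\le 1$ and at most one factor of $\|v\|_2$ in $\beta_{w,v}$) yields one term $\le \alpha\sigma_{\max}\|\xi_v\|_2^2$ and one term $\le \tfrac{1}{\alpha}\|v\|_2^2 \E_w\|\xi_w\|_2^2$ after pulling $\E_w \beta_{w,v}^2 \xi_w$-style factors through. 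Terms of the shape $\langle \xi_v,\,\E_w \alpha_{w,v}\xi_w\rangle$ (the $p = 2$ shape) are split as $\tfrac{\alpha}{2}\sigma_{\max}\|\xi_v\|_2^2 + \tfrac{1}{2\alpha}\|v\|_2^2\E_w\|\xi_w\|_2^2$ by Cauchy--Schwarz on $\E_w\alpha_{w,v}^2 \le \|v\|_2^2\sigma_{\max}$. The signal terms $p = 11,\ldots,16$ do not depend on $\xi_w$ and give only the $\sigma_{\max}\|\xi_v\|_2^2$ contribution, using $a_i\|\bar v\|_\infty^{2j-2}\le\sigma_{\max}$ and the $\alpha$-free bookkeeping of Claim~\ref{claim:err_bound_four_plus_4}.

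The main obstacle is purely bookkeeping: keeping track, for each of the twelve ``error'' terms $p = 3,\ldots,14$, of how many factors of $v$ vs.\ $\bar v$ and $w$ vs.\ $\bar w$ appear, so that the $\|v\|_2^2$ factor in the $\tfrac{1}{\alpha}$-part is correctly allocated and no spurious $\|v\|_2^2/\sigma_{\max}$ blow-up appears when $v$ itself is a large basis-like neuron. Once the bookkeeping is in place, summing over $p = 1,\ldots,16$ and $j \ge 2$ using $\sum_{j\ge 2}(b_{2j}+b_{2j}') = O(1)$ produces the stated bounds with an overall constant absorbed into the $O(\cdot)$.
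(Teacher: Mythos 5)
Your proposal follows essentially the same route as the paper's proof of Claim~\ref{claim:err_stage_2}: non-negativity of the $p=1,2$ (and $p=15,16$) pieces via the symmetrization and PSD-kernel argument from Claim~\ref{claim:err_bound_four_plus_1}, a term-by-term AM--GM bound on $p=3,\dots,14$ where the Stage-1 hypothesis $\|\bar w\|_\infty^2 \leq c_t/d$ is replaced by $|\langle\bar w,\bar v\rangle|\leq 1$ together with $\E_w \langle w,u\rangle^2 \leq \sigma_{\max}\|u\|_2^2$ (and $a_i \leq \sigma_{\max}$ for the signal terms), and an $\alpha$-weighted split for the individual bound, with summability over $j$ coming from $b_{2j},b_{2j}' = \Theta(1/j^2)$. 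The one piece of your sketch that does not parse is the displayed ``weighted Cauchy--Schwarz'' inequality: the correct split is $2\,|\E_w[\beta_{w,v}\langle\xi_v,w\rangle]| \leq \alpha\,\E_w\langle\xi_v,w\rangle^2 + \alpha^{-1}\E_w\beta_{w,v}^2 \leq \alpha\,\sigma_{\max}\|\xi_v\|_2^2 + \alpha^{-1}\|v\|_2^2\,\E_w\|\xi_w\|_2^2$ (one factor of $\|v\|_2$ and one of $\|\xi_w\|_2$ extracted from $\beta_{w,v}$), which is what the target requires and what the paper effectively does---but the formula as you wrote it (with $\E\beta_{w,v}^2\,ww^\top$ on one side and $\|\xi_v\|_2^2$ on the other) does not yield the claimed form.
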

\begin{proof}%
The proof of this claim is quite straightforward.
We have that for $p = 1, 2$, we use Claim~\ref{claim:err_bound_four_plus_1}, which gives us:
\begin{align*}
	\E_{(v, \xi_v) \sim \ctP} \langle \xi_{v}, \tilde{\nabla}_{2j, v, p} \rangle \geq 0.
\end{align*}
For $p = 3$, we use that $|\langle \bar{w}, \bar{v} \rangle| \leq 1$ and
\begin{align*}
  \E_{(w, \xi_w), (v, \xi_v) \sim \ctP} |\langle \xi_w, \bar{w} \rangle| |\langle \xi_v, w \rangle| |\langle \bar{w}, v \rangle| &\leq  \frac{1}{2}\E_{(w, \xi_w), (v, \xi_v) \sim \ctP}\left( |\langle \xi_w, \bar{w} \rangle|^2 \langle w, v \rangle^2 +  |\langle \xi_v, w \rangle| ^2 \right) \\
  &\leq \sigma_{\max}  \E_{\xi_w \sim \ctP}  \| \xi_w\|_2^2.
\end{align*}
For $p = 4$, we use that
\begin{align*}
	\E_{(w, \xi_w), (v, \xi_v) \sim \ctP} |\langle \xi_v, \bar{v} \rangle| |\langle \xi_v, w \rangle| |\langle \bar{v}, w \rangle| &\leq   \E_{(w, \xi_w), (v, \xi_v) \sim \ctP}  \|\xi_v \|_2 |\xi_v^{\top}   ww^{\top} \bar{v}| \\
  &\leq \sigma_{\max}  \E_{\xi_w \sim \ctP}  \| \xi_w\|_2^2.
\end{align*}
For $p = 5$, we use that
\begin{align*}
  \E_{(w, \xi_w), (v, \xi_v) \sim \ctP} |\langle \xi_w,v \rangle| |\langle \xi_v, \bar{v} \rangle| |\langle \bar{v}, w \rangle| &\leq   \frac{1}{2} \E_{(w, \xi_w), (v, \xi_v) \sim \ctP}   \left(\langle \xi_w,v \rangle^2 +  \langle \xi_v, \bar{v} \rangle^2\langle \bar{v}, w \rangle^2 \right) \\
  &\leq \sigma_{\max}  \E_{\xi_w \sim \ctP}  \| \xi_w\|_2^2.
\end{align*}
For $p = 6$, we use that
\begin{align*}
  \E_{(w, \xi_w), (v, \xi_v) \sim \ctP} |\langle \xi_v,w \rangle| |\langle \xi_v, \bar{v} \rangle| |\langle \bar{v}, w \rangle| &\leq   \frac{1}{2} \E_{(w, \xi_w), (v, \xi_v) \sim \ctP}   \left(\langle \xi_v,w \rangle^2 +  \langle \xi_v, \bar{v} \rangle^2\langle \bar{v}, w \rangle^2 \right) \\
  &\leq \sigma_{\max}  \E_{\xi_w \sim \ctP}  \| \xi_w\|_2^2.
\end{align*}
For $p = 7$, we use that
\begin{align*}
	&\E_{(w, \xi_w), (v, \xi_v) \sim \ctP} |\langle \xi_w,\bar{w} \rangle| |\langle \xi_v, \bar{v} \rangle| |\langle v, w \rangle| |\langle \bar{w}, \bar{v } \rangle| \\
  &= \E_{(w, \xi_w), (v, \xi_v) \sim \ctP} |\langle \xi_w,\bar{w} \rangle| |\langle \xi_v, \bar{v} \rangle| |\langle v, \bar{w} \rangle| |\langle w, \bar{v } \rangle| \\
  &\leq   \frac{1}{2} \E_{(w, \xi_w), (v, \xi_v) \sim \ctP}   \left(\langle \xi_w,\bar{w} \rangle^2  \langle v , \bar{w} \rangle^2+  \langle \xi_v, \bar{v} \rangle^2\langle \bar{v}, w \rangle^2 \right) \\
  &\leq \sigma_{\max}  \E_{\xi_w \sim \ctP}  \| \xi_w\|_2^2.
\end{align*}
For $p = 8, 9, 10, 11$, the result can be obtained similarly.
For $ p = 12, 13$, we use that
\begin{align*}
	&\sum_i a_i |\langle \xi_v ,\bar{v} \rangle | |\langle e_i , \bar{v} \rangle |  | \langle e_i, \xi_v \rangle| \\
	& \leq \frac{\kappa}{d}  \| \xi_v \|_2 \left| \xi_v^{\top} \sum_i e_i e_i^{\top} \bar{v}\right| \leq  \frac{\kappa}{d}  \| \xi_v \|_2^2.
\end{align*}
For $p = 14$, we use that
\begin{align*}
	&\sum_i a_i |\langle \xi_v ,\bar{v} \rangle | |\langle e_i , \bar{v} \rangle |^2  | \langle \bar{v}, \xi_v \rangle| \\
	& \leq \| \xi_v \|_2^2 \sum_i a_i \langle e_i , \bar{v} \rangle^2 \leq \frac{\kappa}{d} \| \xi_v \|_2^2.
\end{align*}
Finally, the individual error bound  comes from the following simple calculation.
\begin{align*}
	\sum_{ j \geq 2} |\langle \tilde{\nabla}_{2j, v, p} , \xi_v \rangle | &\leq   O \left( \frac{\kappa}{d} \| \xi_v \|_2^2 +  \E_{w, \xi_w \sim \ctP} \| w\|_2^2 \| \xi_v \|_2^2 + \| w \|_2 \| \xi_w \|_2 \| v \|_2 \|\xi_v\|_2 \right) \\
	&  \leq O \left( \alpha  \sigma_{\max} \| \xi_v \|_2^2+  \frac{1}{\alpha}  \| v \|_2^2 \E_{\xi_w \sim \ctP}  \| \xi_w\|_2^2 \right).
\end{align*}
\end{proof}

\begin{proof}[Proof of Lemma \ref{lem:error_final_stage_2}]
	Note that this substage has  $T_3$ many iterations, where $T_3$ is upper bounded by
	$\frac{d C(\kappa) \log d}{\eta}$ for some value $C(\kappa) > 0$ that only depends on $\kappa$.
	By by taking $\alpha = 1$ in Claim \ref{claim:err_stage_2}, the rest of the proof is similar to the proof of Lemma~\ref{lem:error_final_stage_1}.
	We omit the details.
\end{proof}

\subsection{Stage 2.2: The Final Substage}\label{sec_error_stage22}

We provide the proof of Lemma \ref{lem:error_final_stage_22}, which analyzes the error propagation in the final substage.
Recall that $\set{S}_{i, singleton}$ and $\set{S}_{ignore}$ have been defined in the beginning of this section.
At the beginning of Stage 2.2 when $t = T_3 + 1$, we do a modification:
\begin{enumerate}
	\item If $v$ in $\set{S}_{i, singleton} $ we will just  set  $\bar{v} = e_i$ and keep the norm not changed.
	\item If $v$ in $\set{S}_{ignore}$, then we will just set $v = 0$.
\end{enumerate}
Thus, we can see that $v^{(t)} = 0$ for every $v \in \set{S}_{ignore}$ and for every $t > T_3$.
We define a new update for the infinite neuron process at this substage for $v \in \set{S}_{i, singleton}$.
We define $v_+ , v_-$ such that at every iteration $t \geq T_3$:
$$v_+^{(t )} = - v_-^{(t )} = \langle v^{(t + 1)}, e_i \rangle.$$
We will replace $v$ in the infinite neuron process with two neurons $v_+, v_-$.
For the simplicity of notation, we write $v_+$ simply as $v$.
For the other neurons, the update does not change.

We can see that this new initial state also satisfies the running hypothesis $\cH_1$ and the conditional-symmetric property as well.
Thus, the update in Claim~\ref{claim:grad_21} still holds.
We consider the new infinite neuron process starting from this initial state.
We can see that when $v \in \set{S}_{i, singleton}$, then $\bar{v}^{(t)} = e_i$  for every $t \geq T_3 + 1$.
Moreover, when $v \in \set{S}_{i, singleton}$, we define $\tv = v + \xi_v$ where $\langle \xi_v, e_i \rangle = 0$.
Thus, we do not consider the scaling difference between the singleton neurons in the infinite-width case and the finite-width case as an error.

By the running hypothesis $\set{H}_1$ in Proposition \ref{def_H1}, we have that at iteration $T_3$,
	\[ \E_{w \sim \cP^{(T_3)}, w \in \set{S}_{ignore}} \| w \|_2^2 \leq \frac{1}{\poly(d)}. \]
Therefore, the following is also true.
	\[ \E_{w \sim \cP^{(T_3 + 1)}, w \in \set{S}_{ignore}} \| \xi_w \|_2^2 \leq \frac{1}{\poly(d)}. \]
Moreover, throughout the entire process, by Lemma~\ref{lem:final_333}, we will always have that
\begin{align*}
	\E_{w \sim \ctP^{(t)} , w \in \set{S}_{ignore}   } \| \xi_w\|_2^2 \leq \frac{1}{\poly(d)}.
\end{align*}
Therefore, we only need to consider the error of $w \notin \set{S}_{ignore}$. We denote the new running hypothesis $\ctH_1$ as for every $t \leq T_4$:
\begin{enumerate}
	\item For every $v \in \set{S}_g$, we have that:
		\begin{align*}
			\| v^{(t)} \|_2^2 \leq \frac{1}{2\lambdaa}.
		\end{align*}
	\item For every $v \notin \set{S}_{pot}$ (cf. Lemma~\ref{lem:stage_1_final} for the definition),
		\begin{align*}
			\|\bar{v}\|_{\infty}^2, \|{v}\|_{\infty}^2  \leq \frac{c_t}{d}.
		\end{align*}
	\item The mass of the ignore set is small.
		\begin{align*}
			\E_{w \sim \ctP, w \in \set{S}_{ignore}} \| \xi \|_2^2  \leq \frac{1}{\poly(d)}.
		\end{align*}
\end{enumerate}

We first prove the following claim.
\begin{claim}\label{lem:final_err_3333}
	In the setting of Lemma \ref{lem:error_final_stage_22}, suppose that the distribution $\ctP = \ctP^{(t)}$ satisfies the running hypothesis $\ctH_1$.
	For every $i \in [d]$ and for $v \in \set{S}_{i, singleton}$, the following holds:
	\begin{align*}
		&\sum_{j \geq 2} \langle \tilde{\nabla}_{2j, v}, \xi_v \rangle = \sum_{j \geq 2} \langle \tilde{\nabla}_{2j, v, 1} + \tilde{\nabla}_{2j, v, 2}, \xi_v \rangle \\
		&\pm O \left( |\tPhi| \| \xi_v \|_2^2+ \frac{c_t \poly(\Kapppa)}{d^2}  \| \xi_v \|_2^2  + \frac{c_t \poly(\Kapppa)}{d^2} \| v \|_2^2 \E_{(w, \xi_w) \sim \ctP, w \notin \set{S}_{pot}} \|  \xi_w\|_2^2  +  \frac{1}{\poly(d)} \| \xi_v \|_2\right),
	\end{align*}
	where $\Phi$ is defined as in Lemma~\ref{lem:final_333} with $\ctP$ instead of $\cP$.
\end{claim}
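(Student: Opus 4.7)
The plan is to exploit the defining property of singleton neurons---namely $\bar{v} = e_{i^*}$ exactly (by the modification at iteration $T_3+1$) together with $\langle \xi_v, e_{i^*}\rangle = 0$ by construction---to annihilate most of the sixteen first-order error terms $\tilde{\nabla}_{2j,v,p}$ from Section~\ref{sec:error_4_plus_1}. Contracting each term against $\xi_v$, every piece that carries $\bar{v}$ as its outer vector (namely $p = 5,6,7,8,10,13,14,16$) vanishes since $\langle \bar{v}, \xi_v\rangle = 0$, and every piece that carries $\langle \xi_v, \bar{v}\rangle$ as an internal scalar factor (namely $p = 4, 8, 10, 12, 14, 16$) is identically zero. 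For the signal indices $p = 11, 12$, the sum $\sum_\ell a_\ell \langle e_\ell, \bar{v}\rangle^{2j-2} e_\ell$ collapses to $a_{i^*} e_{i^*}$, against which $\langle e_{i^*}, \xi_v\rangle = 0$. Only the pair $(p=9, p=15)$ and the single term $p=3$ survive; the quadratic-in-$\xi$ remainder between $\tilde{\nabla}_{2j,v}$ and $\sum_p \tilde{\nabla}_{2j,v,p}$ is separately controlled by Proposition~\ref{prop:err_higher}, supplying the $\frac{1}{\poly(d)}\|\xi_v\|_2$ term in the statement.

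The second step is a cancellation between $\tilde{\nabla}_{2j,v,9}$ and $\tilde{\nabla}_{2j,v,15}$, both scalar multiples of $\xi_v$. Their sum evaluates to $b_{2j}'\bigl(a_{i^*} - \E_{\ctP}\,w_{i^*}^2 \bar{w}_{i^*}^{2j-2}\bigr)\xi_v$. Decomposing the inner expectation by neuron type under the running hypothesis $\ctH_1$: singletons at $i' \ne i^*$ contribute $0$ because $\bar{w}_{i^*} = 0$; singletons at $i^*$ contribute $\E[w_{i^*}^2 \cdot 1_{w \in \set{S}_{i^*, singleton}}]$, which differs from $\gamma_{i^*}$ only by the ignore mass; non-potential $w$'s contribute at most $(c_t/d)^{j-1}\beta_{i^*}$; and the ignore and bad sets jointly contribute $O(1/\poly(d))$. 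Combining with the bound $|a_{i^*} - \gamma_{i^*}| + \beta_{i^*} = O(\tPhi)$ derived from the definition of $\tPhi$ in Lemma~\ref{lem:final_333} (cf.\ equation~\eqref{eq_err_gamma}), and summing over $j \geq 2$ against $b_{2j}' = \Theta(1/j^2)$, yields a total bound of $O(|\tPhi|)\,\|\xi_v\|_2^2 + O(1/\poly(d))\,\|\xi_v\|_2^2$, which sits exactly inside the allowed error.

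For the last term $p=3$, substituting $v = \|v\|_2 e_{i^*}$ and $\bar{v} = e_{i^*}$ gives $\langle \tilde{\nabla}_{2j,v,3}, \xi_v\rangle = -(2j-2)(b_{2j}+b_{2j}')\,\|v\|_2\,\E_{\ctP}\bigl[\bar{w}_{i^*}^{2j-1} \langle \xi_w, \bar{w}\rangle \langle w, \xi_v\rangle\bigr]$. Splitting the expectation by neuron type: singletons at $i'\ne i^*$ contribute $0$ via $\bar{w}_{i^*} = 0$; singletons at $i^*$ contribute $0$ via $\langle \xi_w, e_{i^*}\rangle = 0$; the ignore set contributes at most $O(\|v\|_2/\poly(d))\,\|\xi_v\|_2$ using $\E[\|w\|_2^2 \cdot 1_{\set{S}_{ignore}}] \leq 1/\poly(d)$. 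For the non-potential component, Cauchy--Schwarz applied with the split $\bar{w}_{i^*}^{2j-1} = \bar{w}_{i^*}^{j-1} \cdot \bar{w}_{i^*}^{j}$, combined with $\|\bar{w}\|_\infty^2 \leq c_t/d$ and the spectral bound $\|\E[ww^\top \cdot 1_{w \notin \set{S}_{pot}}]\|_2 = O(\kappa/d)$ from $\ctH_1$, yields a per-$j$ contribution of order $(c_t/d)^{j-1/2}\sqrt{\kappa/d}\,\|v\|_2\,\|\xi_v\|_2\,\sqrt{\E\|\xi_w\|_2^2}$. A final AM--GM with parameter $\alpha = 1$ (as already used in Claim~\ref{claim:err_stage_2}) and summation in $j$ absorbs this into $\frac{c_t\,\poly(\Kapppa)}{d^2}\bigl(\|\xi_v\|_2^2 + \|v\|_2^2\,\E_{w\notin\set{S}_{pot}}\|\xi_w\|_2^2\bigr)$.

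The main obstacle I anticipate is the bookkeeping of the fractional $c_t$ powers created by Cauchy--Schwarz: one must verify that each split is sharp enough so that no term exceeds the stated $c_t\poly(\Kapppa)/d^2$ tolerance. Since the running hypothesis $\ctH_1$ guarantees $c_t \leq d^{O(Q)}$ with $Q$ chosen sufficiently small, any absolute fractional power of $c_t$ can be absorbed by mildly enlarging the $\poly(\Kapppa)$ factor; the delicate part is ensuring the AM--GM split does not misallocate so much of the bound onto the $\|\xi_v\|_2^2$ coefficient that it outweighs the genuine $\tPhi$-size signal from the combined $(p=9,15)$ term.
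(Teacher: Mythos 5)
Your proposal follows essentially the same route as the paper's own proof: annihilate the terms whose outer vector is $\bar v=e_{i^*}$ or which carry a $\langle\xi_v,\bar v\rangle$ scalar; observe that the signal sums in $p=11,12$ collapse to a multiple of $e_{i^*}$ and hence die against $\xi_v$; split the remaining expectations ($p=3$, and $p=9,15$ combined) over the three neuron strata --- non-potential, singleton, ignore --- with the non-potential part controlled by $\|\bar w\|_\infty^2\le c_t/d$, the singleton part vanishing by orthogonality, and the ignore part absorbed into $\tfrac{1}{\poly(d)}\|\xi_v\|_2$; and finally turn $a_{i^*}-\tilde\gamma_{i^*}$ into an $O(|\tPhi|)$ bound via equation~\eqref{eq_err_gamma}. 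The only cosmetic differences are that you correctly note $p=4$ is already identically zero (the paper estimates it anyway, harmlessly, since the estimated quantity is zero), and you attribute the $\tfrac{1}{\poly(d)}\|\xi_v\|_2$ slack primarily to the quadratic-in-$\xi$ remainder via Proposition~\ref{prop:err_higher} while the paper attributes it to the $\set{S}_{ignore}$ contribution --- both sources are present and both must be cited, so neither account alone is complete.

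Two small cautions on your bookkeeping paragraph. First, fractional powers of $c_t$ cannot be absorbed into $\poly(\Kapppa)$: $c_t$ can be as large as $d^{O(Q)}$ while $\Kapppa=\exp(\poly(\Kappa))$ is independent of $d$, so writing a factor like $c_t^{1/2}\subseteq\poly(\Kapppa)$ is not valid; the absorption has to go against powers of $1/d$ using $c_t\le d^{O(Q)}$ with $Q$ small. Second, the ``spectral bound $\|\E_{w\notin\set{S}_{pot}}ww^\top\|_2=O(\kappa/d)$'' you invoke is not explicitly among the items the paper lists in $\ctH_1$ for Stage~2.2; Proposition~\ref{def_H1} gives only coordinate-wise (diagonal) control $\beta_i\le\poly(\Kapppa)/d$ for the true population $\cP$, and the off-diagonals of $\E_{\ctP}ww^\top$ do not vanish from conditional symmetry (which $\ctP$ does not satisfy). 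If you rely on that spectral bound for the Cauchy--Schwarz step in $p=3$ you should either derive it from the error estimates or, as the paper does, avoid it by using the cruder scalar bound $|\langle\bar w,\xi_v\rangle|\le\|\xi_v\|_2$ and paying with the $\|v\|_2$ prefactor.
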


\begin{proof}%
	We consider $v \in \set{S}_{i, singleton}$. For these neurons, we have that
	\begin{align*}
		\langle \tilde{\nabla}_{2j, v, p}, \xi_v \rangle = 0.
	\end{align*}
	Let
	\begin{align} \label{eq_2j_v_w}
		\nabla_{2j, v, w}  &\define \left( b_{2j} + b_{2j}'\right) \left( \langle w, v \rangle \langle \bar{w} , \bar{v} \rangle^{2j - 2} w \right)  - b_{2j}'\left(  \langle w, v \rangle \langle \bar{w} , \bar{v} \rangle^{2j - 2} \langle w, \bar{v} \rangle \right)  \bar{v}
	\end{align}
	as the gradient of $v$ involving only a single neuron $w$.
	For $p = 5, 6, 7, 8, 10, 13, 14, 16$. Now, for $p = 3 $ we have that
	\begin{align*}
		& \E_{(w, \xi_w) \sim \ctP} \langle \bar{w}, v \rangle \langle \bar{w}, \bar{v} \rangle^{2j - 2} \langle \xi_w , \bar{w} \rangle  \langle \bar{w}, \xi_v \rangle \\
		=& \E_{(w, \xi_w) \sim \ctP, w \notin \set{S}_{pot}} \langle \bar{w}, v \rangle \langle \bar{w}, \bar{v} \rangle^{2j - 2} \langle \xi_w , \bar{w} \rangle  \langle \bar{w}, \xi_v \rangle \\
		& + \sum_{j \in [d]} \E_{(w, \xi_w) \sim \ctP, w \in \set{S}_{j, singleton}} \langle \bar{w}, v \rangle \langle \bar{w}, \bar{v} \rangle^{2j - 2} \langle \xi_w , \bar{w} \rangle  \langle \bar{w}, \xi_v \rangle
		+  \sum_{w \in \set{S}_{ignore}}\langle \nabla_{2j, v, w}, \xi_v \rangle,
	\end{align*}
	where $\nabla_{2j, v, w}$ is defined in Eq~\eqref{eq_2j_v_w}.
	For the first term, using the running hypothesis $\ctH_1$ that for every $w \notin \set{S}_{pot}$, $\| \bar{w} \|_{\infty}^2 \leq \frac{c_t}{d}$, we have that $\langle \bar{w}, \bar{v} \rangle^2 \leq \frac{c_t}{d}$. This implies that
	\begin{align*}
		& \sum_{j \geq 2} \left| \E_{(w, \xi_w) \sim \ctP, w \notin \set{S}_{pot}} \langle \bar{w}, v \rangle \langle \bar{w}, \bar{v} \rangle^{2j - 2} \langle \xi_w , \bar{w} \rangle  \langle \bar{w}, \xi_v \rangle  \right| \\
    \leq & \frac{c_t \poly(\Kapppa)}{d^2} \|v\|_2  \E_{(w, \xi_w) \sim \ctP, w \notin \set{S}_{pot}} \|  \xi_w\|_2 \| \xi_v \|_2 \\
    \leq & \frac{c_t \poly(\Kapppa)}{d^2}  \| \xi_v \|_2^2 + \| v\|_2^2 \frac{c_t \poly(\Kapppa)}{d^2} \E_{(w, \xi_w) \sim \ctP, w \notin \set{S}_{pot}} \|  \xi_w\|_2^2.
	\end{align*}
	For the second term, we have that when $j = i$,  we have for every $w \in \set{S}_{j, singleton}$: $ \langle \bar{w}, \xi_v \rangle = 0$. Otherwise, when $j \not= i$, we have that $\langle \bar{w} , \bar{v }\rangle  = 0$.
	Therefore,
	\begin{align*}
		\sum_{j \in [d]} \E_{(w, \xi_w) \sim \ctP, w \in \set{S}_{j, singleton}} \langle \bar{w}, v \rangle \langle \bar{w}, \bar{v} \rangle^{2j - 2} \langle \xi_w , \bar{w} \rangle  \langle \bar{w}, \xi_v \rangle = 0.
	\end{align*}
	For the third term, we have that
	\begin{align*}
		\left| \sum_{w \in \set{S}_{ignore}}\langle \nabla_{2j, v, w}, \xi_v \rangle \right| &\leq \sum_{w \in \set{S}_{ignore}} \| w\|_2^2 \| \xi_v \|_2 \leq \frac{1}{\poly(d)} \| \xi_v \|_2.
	\end{align*}
	Hence we have that
	\begin{align*}
		\left| \langle \tilde{\nabla}_{2j, v, p}, \xi_v \rangle \right| \leq  \frac{c_t \poly(\Kapppa)}{d^2}  \| \xi_v \|_2^2  + \frac{c_t \poly(\Kapppa)}{d^2} \E_{(w, \xi_w) \sim \ctP, w \notin \set{S}_{pot}} \|  \xi_w\|_2^2  +  \frac{1}{\poly(d)} \| \xi_v \|_2.
	\end{align*}
	For $p = 4$, we also have:
	\begin{align*}
		& \left| \E_{(w, \xi_w) \sim \ctP} \langle {w}, \bar{v} \rangle \langle \bar{w}, \bar{v} \rangle^{2j - 2} \langle \xi_v , \bar{v} \rangle  \langle \bar{w}, \xi_v \rangle \right| \\
		\leq& \left| \E_{(w, \xi_w) \sim \ctP, w \notin \set{S}_{pot}} \langle {w}, \bar{v} \rangle \langle \bar{w}, \bar{v} \rangle^{2j - 2} \langle \xi_v , \bar{v} \rangle  \langle \bar{w}, \xi_v \rangle \right|  +  \frac{1}{\poly(d)} \| \xi_v \|_2 \\
		\leq& \frac{c_t \poly(\Kapppa)}{d^2}  \| \xi_v \|_2^2  + \frac{c_t \poly(\Kapppa)}{d^2} \E_{(w, \xi_w) \sim \ctP, w \notin \set{S}_{pot}} \|  \xi_w\|_2^2  +  \frac{1}{\poly(d)} \| \xi_v \|_2.
	\end{align*}
	Now for $p = 11, 12$, we also know
	\begin{align*}
		\langle \tilde{\nabla}_{2j, v, p}, \xi_v \rangle = 0.
	\end{align*}
	For $p = 9, 15$, following the same calculation by dividing $w$ into three parts we can easily conclude that
	\begin{align*}
		&\left|\langle   \tilde{\nabla}_{2j, v, 9} +  \tilde{\nabla}_{2j, v, 15 } \rangle  \right| \\
		\leq& \left| b_{2j}' (a_i - \tilde{\gamma_i})  \right| \| \xi_v \|_2^2   + \frac{c_t \poly(\Kapppa)}{d^2}  \| \xi_v \|_2^2  + \frac{c_t \poly(\Kapppa)}{d^2} \E_{(w, \xi_w) \sim \ctP, w \notin \set{S}_{pot}} \|  \xi_w\|_2^2  +  \frac{1}{\poly(d)} \| \xi_v \|_2,
	\end{align*}
	where $\tilde{\gamma}_i = \E_{w \sim \ctP, w \in  \set{S}_{i, pot} \backslash \set{S}_{bad}} w_i^2$.
	Therefore we finish the proof with Eq~\eqref{eq_err_gamma}.
\end{proof}

Next, we consider the error of the neurons not in $\set{S}_{pot}$.
We use a direct corollary of Claim~\ref{claim:err_stage_2} , except that for every $v \notin \set{S}_{pot}$, it holds that $\langle \bar{v}, \bar{w} \rangle^2 \leq \frac{c_t}{d}$ instead of $1$ for every vector $w$.
We state the result as follows.
\begin{claim}\label{lem:err_stage_22}
	In the setting of Lemma \ref{lem:error_final_stage_22}, let $\sigma_{\max} =  \max \left\{ \| \E_{w \sim \ctP} ww^{\top} \|_2, \frac{\kappa}{d } \right\}$.
	Let $\ctP = \ctP^{(t)}$ denote the distribution of the neurons.
	For every $v$ such that $\|\bar{v}\|_{\infty}^2, \|v\|_{\infty}^2 \leq \frac{c_t}{d}$ and any $\alpha \geq 1$, we have:
	\begin{align*}
		\sum_{j \geq 2} \sum_{p} | \langle \tilde{\nabla}_{2j, v, p}, \xi_v \rangle| \leq  O \left( \alpha \frac{c_t}{d} \sigma_{\max} \| \xi_v \|_2^2+  \frac{1}{\alpha} \frac{c_t}{d} \| v \|_2^2   \E_{\xi_w \sim \ctP}  \| \xi_w\|_2^2 \right).
	\end{align*}
\end{claim}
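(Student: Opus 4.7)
The statement is a refinement of Claim~\ref{claim:err_stage_2} that exploits the additional hypothesis $\|\bar v\|_\infty^2, \|v\|_\infty^2 \le c_t/d$ to insert an extra factor of $c_t/d$ into every summand. My plan is to parallel the proof of Claim~\ref{claim:err_stage_2} term-by-term through the sixteen pieces $\tilde\nabla_{2j,v,1},\dots,\tilde\nabla_{2j,v,16}$ from Section~\ref{sec:error_4_plus_1}, and to show that each contribution to $\langle \tilde\nabla_{2j,v,p},\xi_v\rangle$ admits the same bound as in Claim~\ref{claim:err_stage_2} but scaled by $c_t/d$.

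The key new ingredient I would establish first is the average bound $\E_{w\sim\ctP}\langle \bar w,\bar v\rangle^2 \le c_t/d + 1/\poly(d)$. The infinite-width distribution $\cP^{(t)}$ is conditionally symmetric throughout the dynamic (Claim~\ref{cl_ss}), and since the sign-flip $w_i\mapsto-w_i$ at fixed $w_{-i}$ leaves $\|w\|_2$ unchanged, this implies $\E_{w\sim\cP}[\bar w_i\bar w_j]=0$ for $i\ne j$. Consequently
\begin{equation*}
\E_{w\sim\cP}\langle \bar w,\bar v\rangle^2 \;=\; \sum_i \bar v_i^2\,\E[\bar w_i^2] \;\le\; \|\bar v\|_\infty^2\cdot\sum_i \E[\bar w_i^2] \;\le\; \frac{c_t}{d}.
\end{equation*}
Passing from $\cP$ to $\ctP$ is done via Lemma~\ref{lem:error_final_stage_2}: the per-neuron errors $\xi_w$ are of size at most $1/\poly_\kappa(d)$, and contribute only a $1/\poly(d)$ slack that can be absorbed into the stated constants.

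Once this bound is in place I would go through the sixteen pieces in turn. Every piece carries a factor of $\langle \bar w,\bar v\rangle^{2j-2}$ with $j\ge 2$, and since $|\langle \bar w,\bar v\rangle|\le 1$ we may replace it pointwise by $\langle \bar w,\bar v\rangle^2$ before taking expectation, producing the $c_t/d$ factor after the symmetry bound above. The signal pieces $p=11,\dots,16$ carry instead $\langle e_i,\bar v\rangle^{2j-2}\le \bar v_i^2\le c_t/d$, giving the same improvement directly with no symmetry argument needed. For the remaining pieces I would mimic the Cauchy--Schwarz and Young's inequality manipulations already performed in the proof of Claim~\ref{claim:err_stage_2}; each application now picks up the extra $\E\langle \bar w,\bar v\rangle^2\le c_t/d$ factor, and the free parameter $\alpha\ge 1$ splits the result into the $\alpha\sigma_{\max}\|\xi_v\|_2^2$ contribution and the $\tfrac{1}{\alpha}\|v\|_2^2\,\E\|\xi_w\|_2^2$ contribution as in the target bound.

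The main obstacle I anticipate is handling the pieces $p=4,8,10$ whose $\xi_v$-factor is $\langle \xi_v,\bar v\rangle$ rather than $\langle \xi_v,w\rangle$; the naive bound $|\langle \xi_v,\bar v\rangle|\le\|\xi_v\|_2$ does not use $\|\bar v\|_\infty^2\le c_t/d$. For these I would split $\langle \bar w,\bar v\rangle^{2j-2}=\langle \bar w,\bar v\rangle^2\cdot\langle \bar w,\bar v\rangle^{2j-4}$, keep the first factor for the symmetry argument above (which produces the $c_t/d$), and bound the second by $1$. Summing over $j\ge 2$ using $b_{2j}+b'_{2j}=\Theta(1/j^2)$ then yields the stated bound.
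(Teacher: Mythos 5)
The paper's own proof of this claim is a single sentence: ``We use a direct corollary of Claim~\ref{claim:err_stage_2}, except that for every $v \notin \set{S}_{pot}$, it holds that $\langle \bar v, \bar w\rangle^2 \le c_t/d$ instead of $1$ for every vector $w$.'' You take a genuinely different route: rather than asserting a pointwise bound, you derive the \emph{average} bound $\E_{w}\langle \bar w, \bar v\rangle^2 \le \|\bar v\|_\infty^2 \le c_t/d$ from the conditional symmetry of $\cP^{(t)}$, via $\E[\bar w_i \bar w_j] = 0$ for $i\ne j$. This is actually a cleaner and more defensible key step: the pointwise bound asserted by the paper is not literally true for a pair of ``dense'' vectors $w,v \notin \set{S}_{pot}$ --- both having $\|\bar\cdot\|_\infty^2 \le c_t/d$ does not prevent $\langle\bar w,\bar v\rangle^2$ from being $O(1)$ if they happen to align, so the pointwise claim only holds for singleton-type $w$ where $\bar w \approx e_i$ and $\langle\bar w,\bar v\rangle^2 = \bar v_i^2$. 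Your conditional-symmetry identity gives the correct rigorous version of what the paper is gesturing at, and it is the ingredient the paper actually needs; this is a worthwhile contribution.

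That said, there is a real gap in the execution of your term-by-term pass. Once you peel off $\langle\bar w,\bar v\rangle^2$ from $\langle\bar w,\bar v\rangle^{2j-2}$ (with only one copy available when $j=2$), that factor sits \emph{inside} the expectation over $(w,\xi_w)\sim\ctP$ together with $\xi_w$-dependent quantities such as $\|\xi_w\|_2^2$ or $\langle\xi_w,\bar w\rangle^2$. Your average bound $\E_w\langle\bar w,\bar v\rangle^2 \le c_t/d$ does not then simply factor out: $\E_w[\langle\bar w,\bar v\rangle^2\,\|\xi_w\|_2^2]$ is not $\E_w\langle\bar w,\bar v\rangle^2 \cdot \E_w\|\xi_w\|_2^2$. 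You will need either (i) the pointwise bound $\langle\bar w,\bar v\rangle^2 \le c_t/d$ on the singleton part of $\ctP$ and, on the $\notin\set{S}_{pot}$ part, the inductive hypothesis $\|\xi_w\|_2^2 \le \|w\|_2^2/\poly(d)$ of Lemma~\ref{lem:error_final_stage_22} together with the bound $\|w\|_2^2 = O(c_t)$ to control $\max_{w\notin\set{S}_{pot}}\|\xi_w\|_2^2$ and decouple by H\"older, or (ii) a Cauchy--Schwarz decoupling that costs you $\sqrt{c_t/d}$ unless two copies of $\langle\bar w,\bar v\rangle^2$ are available. You do not address this. Relatedly, the obstacle you flag (pieces $p=4,8,10$ and the $\langle\xi_v,\bar v\rangle$ factor) is a minor one; the decoupling issue above is what actually needs care, and it affects many of the sixteen pieces, not just those three. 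Finally, a small citation point: you invoke Lemma~\ref{lem:error_final_stage_2} to control $\xi_w$ when passing from $\cP$ to $\ctP$, but that lemma covers $t \le T_3$; for $t \in (T_3,T_4]$ the relevant control is the induction being run inside Lemma~\ref{lem:error_final_stage_22} itself, and the concentration of the empirical average of $\langle\bar w,\bar v\rangle^2$ (over the $m$ infinite-width trajectories $w$) around the population value is a Claim~\ref{claim:911}-type statement that does not involve $\xi_w$ at all.
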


Based on Claim \ref{lem:final_err_3333} and \ref{lem:err_stage_22}, we prove Lemma \ref{lem:error_final_stage_22}.
\begin{proof}[Proof of Lemma \ref{lem:error_final_stage_22}]%
	Let $\alpha = \sqrt{d}$ in Claim \ref{lem:err_stage_22}, we show the following result the bound.
	For every $v \notin \set{S}_{pot}$,
	{\small\begin{align*}
		 \sum_{j \geq 2}\langle \tilde{\nabla}_{2j, v}, \xi_v \rangle
		\geq  - O \left( |\tPhi| \| \xi_v \|_2^2   + \frac{c_t \poly(\Kapppa)}{d^{1.5}}  \| \xi_v \|_2^2 + \frac{c_t \poly(\Kapppa)}{d^{1.5}} \|v \|_2^2 \E_{(w, \xi_w) \sim \ctP } \|  \xi_w\|_2^2  +  \frac{1}{\poly(d)} \| \xi_v \|_2\right).
	\end{align*}}%
	Together with the individual error bound as in Claim~\ref{claim:err_stage_2}, we can obtain the desired result using a similar proof to Lemma~\ref{lem:error_final_stage_1}.
	The details are omitted.
\end{proof}

\section{Proof of Lower Bound}\label{app_lb}

We follow the proof of Theorem 2 in \citet{AL2019-resnet} for proving the lower bound.
We first describe the construction of the hardness distribution $\set{W}$.
We first show the following lemma.
\begin{lemma}\label{lem:design}
For a positive integer $r$, for every $d \geq r^2$ which is a multiple of $r$, there exists at least $H = d^{\Omega(r)}$ many sets $\set{C}^{(j)} =  \{ \set{C}_{1}^{(j)} \in [d], \cdots, \set{C}_{d/ r}^{(j)} \in [d] \}$ for $j = 1,\dots,Q$ such that \tnote{what's the rang of $j$?}
\begin{enumerate}
\item For every $1\le i\le d/r$ and $1\le j\le H$, $\set{C}_i^{(j)}$ is a subset of $[d]$ of size $r$.
\item  For every $1\le i \not= i'\le d/r$ and $1\le j\le H$, $\set{C}_i^{(j)} \cap \set{C}_{i'}^{(j)} = \emptyset$.
\item For every $1\le i, i'\le d/r$ and $1\le j \not= j'\le H$, $\set{C}_i^{(j)} \not= \set{C}_{i'}^{(j')}$.
\end{enumerate}
\end{lemma}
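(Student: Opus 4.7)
}

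The plan is to build the partitions $\set{C}^{(1)}, \set{C}^{(2)}, \dots$ one at a time by a greedy / probabilistic deletion argument. Let $\mathcal{P}$ denote the set of all (unordered) partitions of $[d]$ into $d/r$ blocks, each of size exactly $r$. Condition (1) and condition (2) are automatic for any element of $\mathcal{P}$; only condition (3), the global pairwise distinctness of blocks across different $\set{C}^{(j)}$, is non-trivial. Equivalently, having chosen the first $j$ partitions, I must find a new $\set{C}^{(j+1)} \in \mathcal{P}$ that is \emph{block-disjoint} from all previous ones, meaning no block of $\set{C}^{(j+1)}$ coincides with any block already used.

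The main computational step is estimating, for a uniformly random $P \in \mathcal{P}$ and a fixed $r$-subset $B \subseteq [d]$, the probability $p(B)$ that $B$ appears as a block of $P$. A direct count gives
\begin{align*}
|\mathcal{P}| = \frac{d!}{(r!)^{d/r}\,(d/r)!}, \qquad |\{P \in \mathcal{P} : B \in P\}| = \frac{(d-r)!}{(r!)^{d/r-1}\,(d/r-1)!},
\end{align*}
so $p(B) = (d/r)\big/\binom{d}{r}$, uniformly in $B$. If after step $j$ the set of used blocks has size at most $j\cdot (d/r)$, then by a union bound the probability that a uniformly random $P \in \mathcal{P}$ reuses some used block is at most $j\cdot (d/r)^2 / \binom{d}{r}$. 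Whenever this quantity is strictly less than $1$ there exists a valid choice of $\set{C}^{(j+1)}$, and the greedy procedure continues.

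Thus the number of partitions the greedy argument can produce is at least
\begin{align*}
H \;\ge\; \Big\lfloor \binom{d}{r}\,\Big(\tfrac{r}{d}\Big)^{\!2} \Big\rfloor \;\ge\; \Big\lfloor (d/r)^{r-2} \Big\rfloor,
\end{align*}
where the second inequality uses $\binom{d}{r} \ge (d/r)^{r}$. Because $d \ge r^2$ we have $d/r \ge \sqrt{d}$, and hence $(d/r)^{r-2} \ge d^{(r-2)/2} = d^{\Omega(r)}$, giving the required lower bound on $H$. The argument is clean for $r \ge 3$; the edge case $r = 2$ can be handled by noting that perfect matchings of $K_d$ already give $(d-1)!! = d^{\Omega(r)}$ block-disjoint partitions, or by mild bookkeeping in the union bound.

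I expect the only real subtlety to be the counting step giving $p(B) = (d/r)/\binom{d}{r}$; once that identity is in hand, the union bound and the inequality $\binom{d}{r} \ge (d/r)^r$ together with the hypothesis $d \ge r^2$ mechanically deliver $H = d^{\Omega(r)}$. No algebraic design-theoretic construction (Latin squares, affine planes) seems necessary, and in fact the greedy bound is essentially tight up to the constant in the exponent, since any family satisfying (3) uses at most $\binom{d}{r}$ distinct blocks and therefore cannot exceed $H \le r\binom{d}{r}/d$ partitions.
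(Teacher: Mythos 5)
Your proof is correct in its main thrust and reaches the same conclusion, but it takes a genuinely different route from the paper. The paper samples all $Q$ partitions \emph{simultaneously} and union-bounds the collision probability over all ordered pairs of blocks from distinct partitions, getting $\Pr[\text{collision}] \le Q^2 (d/r)^2 (r/d)^r$; setting this below $1$ yields $H \approx (d/r)^{(r-2)/2}$. You instead build the partitions \emph{greedily}, one at a time, and at step $j+1$ only union-bound a single fresh random partition against the $\le j\,(d/r)$ blocks already in use. Because you only pay the union bound on one side, your exponent is $(d/r)^{r-2}$ — a factor of two better than the paper's in the exponent, though of course both land in $d^{\Omega(r)}$, which is all the lemma asserts. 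You also compute the block-membership probability $p(B) = (d/r)/\binom{d}{r}$ exactly rather than bounding it by $(r/d)^r$ as the paper does; this is cleaner but not essential. Both arguments degenerate for very small $r$, and you correctly flag this.

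One small slip in your edge-case remark: for $r=2$ you claim ``perfect matchings of $K_d$ already give $(d-1)!!$ block-disjoint partitions.'' That is not right — $(d-1)!!$ is the total number of perfect matchings, and they are certainly not pairwise edge-disjoint. What you want is a $1$-factorization of $K_d$ (for even $d$), which gives $d-1$ pairwise edge-disjoint perfect matchings; this is still $d^{\Omega(1)} = d^{\Omega(r)}$ for $r=2$, so the conclusion stands, but the count should be $d-1$, not $(d-1)!!$. Your alternative fallback — redoing the union bound more carefully for small $r$ — is also fine. Note that the paper's own argument has exactly the same small-$r$ degeneracy and does not address it either; in the application $r$ is taken to be a sufficiently large constant, so this is harmless.
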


\begin{proof} %
We consider a uniformly at random distribution over the set $\set{C} =  \{ \set{C}_{1}, \cdots, \set{C}_{d/ r} \}$, where $\set{C}_i$ is a subset of $[d]$ of size $r$ and for every $i \not= i'$,  we have that $\set{C}_i \cap \set{C}_{i'} = \emptyset$. Let us sample $Q$ many sets $\{ \set{C}^{(j)} \}_{j \in [Q]}$ from it, then using union bound, we have that:
\begin{align*}
\Pr\left[ \exists j \not= j' , i, i' \text{ such that } \set{C}_i^{(j)} = \set{C}_{i'}^{(j')} \right] \leq Q^2\left( \frac{d}{r} \right)^2  \left( \frac{r}{d} \right)^r.
\end{align*}
Hence when $d \geq r^2$, for some $H = d^{O(r)}$, the above probability is smaller than one.
This proves the existence of these sets.
\end{proof}

Now, we define the distribution $\set{W}$.
\textcolor{black}{Recall that the Hadamard transform of dimension $r$ is a unitary matrix in dimension $r$ whose entries are all $\in \{-1/\sqrt{r}, 1/\sqrt{r} \}$.}
\begin{definition}[The hardness distribution for the lower bound]\label{defn:lb}
For every $r$ that is a power of $2$, for every $d$ that is a multiple of $r$ bigger than $r^2$, we generate $\set{W}$ as:
\begin{enumerate}
\item Pick $\set{C}$ uniformly at random from the set $\{ \set{C}^{(j)}  \}_{j \in [H]}$ given by Lemma~\ref{lem:design}.
\item Define $w_{i}^{\star} \in \mathbb{R}^d$ with $i = pr + q$, for $p \in \{0, 1, \cdots d/r - 1\}$ and $q \in [r]$ as:
\begin{align*}
w_{i}^{\star} = (0^{ p r}, h^{\star}_q, 0^{d - ( p + 1) r}),
\end{align*}
where $h^{\star}_q $ is the i-th column of the Hadamard transform of dimension $r$.
\item Sample $b_1, \cdots, b_d$ independent from $[1, 2]$ uniformly at random. Define
\begin{align*}
	a_i = \frac{b_i}{\sum_{j \in [d]} b_j}.
\end{align*}
\end{enumerate}
\end{definition}

The proof of the lower bound relies on the following Lemma.
\begin{lemma}[The boolean analysis lemma]\label{lem:lb_critical}
For every even $r \in  \mathbb{N}^{\star}$, let $\mu = (\mu_1, \mu_2, \cdots, \mu_{r}) \in \real^r$ be sampled from the Gaussian distribution $\cN(0, \id_{r \times r})$.
With probability at least $r^{- O(r)}$ over the choice of $\mu$, it holds that:
{\small\begin{align*}
\lambda_{\mu} := \left| \E_{\tau \sim Uniform(\{-1, 1\}^r)}\left[ \Abs{ \sum_{i \in [r]} \mu_i \tau_i } \prod_{i \in [r]} \tau_i \right] \right| \geq r^{-O(r)}
\end{align*}}
\end{lemma}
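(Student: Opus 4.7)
}
The plan is to recognize $\lambda_\mu$ as the degree-$r$ Fourier--Walsh coefficient of the function $\tau \mapsto |\langle\mu,\tau\rangle|$, compute it explicitly at a well-chosen point $\mu_0$, and then use Lipschitz continuity plus a Gaussian ball-volume estimate to conclude. First I would derive a closed-form integral representation for $\lambda_\mu$. Using the identity $|x|=\frac{1}{\pi}\int_{-\infty}^{\infty}\frac{1-\cos(tx)}{t^2}\,dt$ and independence of the $\tau_j$'s, one computes
\[
\E_\tau\!\left[e^{it\langle\mu,\tau\rangle}\prod_j \tau_j\right]
= \prod_j \E[\tau_j e^{it\mu_j\tau_j}]
= i^{\,r}\prod_j \sin(t\mu_j),
\]
and since $r$ is even so that $i^r=(-1)^{r/2}$, taking real parts yields
\[
\lambda_\mu
= \frac{(-1)^{r/2+1}}{\pi}\int_{-\infty}^{\infty}\frac{\prod_{j=1}^{r}\sin(t\mu_j)}{t^{2}}\,dt.
\]

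Next I would evaluate $\lambda_\mu$ at the specific point $\mu_0=(1,1,\dots,1)$. Expanding $\sin^{r}(t)=\frac{(-1)^{r/2}}{2^{r}}\sum_{k=0}^{r}\binom{r}{k}(-1)^k\cos((r-2k)t)$ and using $\int_{-\infty}^{\infty}\frac{1-\cos(at)}{t^{2}}\,dt=\pi|a|$ together with $\sum_k\binom{r}{k}(-1)^k=0$, one obtains the identity $|\lambda_{\mu_0}|=\frac{1}{2^{r}}\bigl|\sum_{k}\binom{r}{k}(-1)^k|r-2k|\bigr|$. A short manipulation using partial alternating binomial sums $\sum_{k=0}^{m}\binom{n}{k}(-1)^k=(-1)^m\binom{n-1}{m}$ reduces this to the clean closed form $|\lambda_{\mu_0}|=\binom{r-2}{r/2-1}/2^{r-2}$, which by Stirling is $\Theta(1/\sqrt{r})$. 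This is the core computation—substantially larger than the target threshold $r^{-O(r)}$.

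Then I would establish Lipschitz control. Each of the $2^r$ summands $|\langle\mu,\tau\rangle|\prod_i\tau_i$ is $\|\tau\|_2=\sqrt{r}$-Lipschitz in $\mu$, hence $\lambda_\mu$ is $\sqrt{r}$-Lipschitz on all of $\mathbb{R}^r$. Therefore on the ball $B(\mu_0,\rho)$ with $\rho:=|\lambda_{\mu_0}|/(2\sqrt{r})=\Theta(1/r)$ we have $|\lambda_\mu|\ge |\lambda_{\mu_0}|/2=\Omega(1/\sqrt{r})\ge r^{-O(r)}$.

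Finally I would lower-bound the Gaussian mass of $B(\mu_0,\rho)$. Since $\|\mu_0\|^2=r$, for $\mu\in B(\mu_0,\rho)$ we have $e^{-\|\mu\|^2/2}\ge e^{-r/2-O(\sqrt{r})}$, and the volume of the ball is $V_r\rho^{r}=\Theta\!\bigl((2\pi e/r)^{r/2}/\sqrt{r}\bigr)\rho^{r}$. Multiplying by the Gaussian density prefactor $(2\pi)^{-r/2}$ gives $\Pr_{\mu}[\mu\in B(\mu_0,\rho)]\ge r^{-O(r)}\rho^{r}\ge r^{-O(r)}$. Combining the two conclusions yields the lemma. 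The main obstacle is the explicit evaluation at $\mu_0$ in the second step: the integral identity is slick but carrying out the binomial manipulations to extract the closed form $\binom{r-2}{r/2-1}/2^{r-2}$ requires care; once that is in hand, the remaining steps are standard.
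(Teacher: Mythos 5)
Your proof is correct, and it takes a genuinely different route from the paper. The paper proves Lemma~\ref{lem:lb_critical} by importing Corollary~\ref{cor:critical} (a black-box from \citet{AL2019-resnet}), which asserts the existence of a bounded ``dual'' function $h(\mu)$ with $\E_\mu[\,|\langle\mu,\tau\rangle|\,h(\mu)] \approx \prod_i\tau_i$; multiplying by $\prod_i\tau_i$, averaging over $\tau$, and bounding $|h|\le V = r^{O(r)}$ gives $\E_\mu[\lambda_\mu]\ge 1/(2V)$, and a Markov/reverse-Markov step finishes. Your approach is self-contained: you write $\lambda_\mu$ as $\frac{1}{\pi}\left|\int \frac{\prod_j\sin(t\mu_j)}{t^2}\,dt\right|$, evaluate it in closed form at $\mu_0=(1,\ldots,1)$ (the identity $|\lambda_{\mu_0}|=\binom{r-2}{r/2-1}/2^{r-2}$ is correct; it reduces to the Catalan-number identity $\binom{2m}{m}-\binom{2m}{m-1}=\frac{1}{m+1}\binom{2m}{m}$ after the partial-alternating-binomial manipulation), observe that $\lambda_\mu$ is $\sqrt{r}$-Lipschitz, and lower-bound the Gaussian mass of a ball of radius $\Theta(1/r)$ around $\mu_0$. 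This buys more than the paper's argument: on the high-probability event you get $\lambda_\mu = \Omega(1/\sqrt{r})$, polynomial rather than the $r^{-O(r)}$ the lemma asks for, and no appeal to the nontrivial construction of $h$ in \citet{AL2019-resnet}. The only technical point that deserves a sentence in a full write-up is the one you flagged implicitly: when expanding $\sin^r$ into $\cos((r-2k)t)$ terms, the individual integrals $\int \cos(at)/t^2\,dt$ diverge, and you must first subtract off the constant using $\sum_k\binom{r}{k}(-1)^k=0$ to regroup as $\int\frac{1-\cos((r-2k)t)}{t^2}\,dt=\pi|r-2k|$ before interchanging sum and integral.
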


To prove this Lemma, we use Lemma $F.2$ and the proof of Corollary $7.1$  in \citet{AL2019-resnet}, which says the following.
\begin{corollary}[Lemma $F.2$ and Corollary $7.1$ in \citet{AL2019-resnet}] \label{cor:critical}
For every $\veps > 0$, there exists a value $V_{r, \veps} = ( r \log \frac{1}{\veps} )^{O(r)}$ and a function $h: \mathbb{R}^r \to [V_{r, \veps} , V_{r, \veps} ]$ such that for every $\tau \in \{-1, 1\}^r$, it holds that:
{\small\begin{align*}
 \E_{\mu \sim  \cN(0, \id_{d \times d})}\left[  \left| \sum_{i \in [r]} \mu_i \tau_i \right| h(\mu)  \right] = \prod_{i \in [r]} \tau_i \pm \veps
\end{align*}}
\end{corollary}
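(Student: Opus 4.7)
The goal is to show that for a standard Gaussian $\mu$ on $\mathbb{R}^r$, the (unnormalized) parity Fourier coefficient
\[ g(\mu) := \E_{\tau \sim \{\pm 1\}^r}\!\left[\,\bigl|\textstyle\sum_i \mu_i\tau_i\bigr|\,\prod_i \tau_i\right] \]
satisfies $|g(\mu)| \geq r^{-O(r)}$ with probability at least $r^{-O(r)}$ over $\mu$. My plan is to use Corollary~\ref{cor:critical} as an approximate dual certificate: the function $h$ it supplies integrates against $|\langle \mu,\tau\rangle|$ to reproduce $\prod_i \tau_i$ up to $\varepsilon$, so pairing it against $g$ via Fubini forces $\E_\mu |g(\mu)|$ to be at least $r^{-O(r)}$. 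A Paley--Zygmund-style truncation then converts this $L^1$ lower bound into the stated probability lower bound.

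To execute the first step, I would invoke Corollary~\ref{cor:critical} with $\varepsilon = 1/2$, obtaining a bounded function $h$ with $|h(\mu)| \leq V := V_{r,1/2} = r^{O(r)}$ such that $\E_\mu[|\langle \mu,\tau\rangle|\,h(\mu)] = \prod_i \tau_i \pm 1/2$ for every $\tau \in \{\pm 1\}^r$. Multiplying both sides by $\prod_i \tau_i \in \{\pm 1\}$ (which squares to $1$) and then averaging uniformly over $\tau$, Fubini gives
\[ \E_\mu\!\left[h(\mu)\,g(\mu)\right] \;\geq\; \tfrac{1}{2}, \]
and since $|h| \leq V$ this yields $\E_\mu|g(\mu)| \geq \frac{1}{2V} = r^{-O(r)}$.

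For the second step, the pointwise estimate $|g(\mu)| \leq \E_\tau|\langle \mu,\tau\rangle| \leq \sqrt{r}\,\|\mu\|_2$ shows that on the event $A := \{\|\mu\|^2 \leq Cr\log r\}$ we have $|g(\mu)| \leq M := r^{O(1)}$, while standard $\chi^2_r$ tail bounds give $\Pr[A^c] \leq r^{-\Omega(Cr)}$ for any absolute constant $C$. By Cauchy--Schwarz, $\E[|g|\,\mathbf{1}_{A^c}] \leq \sqrt{r}\,\sqrt{\E\|\mu\|^2}\,\sqrt{\Pr[A^c]}$, which can be made smaller than $\frac{1}{4V}$ by choosing $C$ sufficiently large. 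Hence $\E[|g|\,\mathbf{1}_A] \geq \frac{1}{4V}$, and the elementary inequality $\E[|g|\,\mathbf{1}_A] \leq M\,\Pr[|g| \geq \alpha/2] + \alpha/2$ with $\alpha := \frac{1}{4V}$ rearranges to $\Pr[|g(\mu)| \geq \alpha/2] \geq \alpha/(2M) = r^{-O(r)}$, which is exactly the claim. The only mildly delicate point is calibrating $C$ so that the tail contribution from $A^c$ is dominated by $\alpha$; this is routine because $\alpha = \exp(-O(r\log r))$ while $\Pr[A^c]$ can be made exponentially smaller by taking $C$ large, and all other steps are just applications of Fubini, Markov, and Cauchy--Schwarz.
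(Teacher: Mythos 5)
Your proposal does not actually prove the statement at hand. The statement is Corollary~\ref{cor:critical} itself: the \emph{existence} of a bounded function $h:\mathbb{R}^r\to[-V_{r,\veps},V_{r,\veps}]$ with $V_{r,\veps}=(r\log\frac 1\veps)^{O(r)}$ such that $\E_{\mu}\bigl[\,\bigl|\sum_{i\in[r]}\mu_i\tau_i\bigr|\,h(\mu)\bigr]=\prod_{i\in[r]}\tau_i\pm\veps$ for every $\tau\in\{-1,1\}^r$. What you prove instead is the downstream statement (Lemma~\ref{lem:lb_critical}, the anti-concentration of the parity Fourier coefficient $\lambda_\mu$), and you do so by explicitly invoking Corollary~\ref{cor:critical} with $\veps=1/2$ as your ``approximate dual certificate.'' Relative to the assigned statement this is circular: the entire content of the corollary --- constructing $h$ and controlling its sup-norm by $(r\log\frac 1\veps)^{O(r)}$ --- is assumed, not established. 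In the paper this corollary carries no internal proof at all; it is imported from \citet{AL2019-resnet} (their Lemma F.2 and Corollary 7.1), where the construction of $h$ comes from expressing the parity $\prod_i\tau_i$ as a bounded-coefficient combination of the functions $\mu\mapsto|\langle\mu,\tau\rangle|$ via a Hermite/polynomial fitting argument; none of that machinery appears in your write-up, so the key idea needed for this statement is missing.

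As a side remark, if the intended target had been Lemma~\ref{lem:lb_critical}, your argument would essentially coincide with the paper's: both invoke Corollary~\ref{cor:critical} at $\veps=1/2$, multiply by the parity, average over $\tau$, use $|h|\le V$ to get $\E_\mu[\lambda_\mu]\ge\frac{1}{2V}$, and then convert this $L^1$ lower bound into a probability bound. The only difference is cosmetic: the paper uses a high-probability pointwise bound $\lambda_\mu\le r^{O(r)}$ together with Markov's inequality, while you truncate on the event $\{\|\mu\|_2^2\le Cr\log r\}$ and use Cauchy--Schwarz; both routes are fine. But that lemma is not the statement you were asked to prove.
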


Using this Corollary, we can prove Lemma~\ref{lem:lb_critical}.
\begin{proof}[Proof of Lemma~\ref{lem:lb_critical}]
By applying Corollary~\ref{cor:critical} with $\veps = 0.5$, we have that there exists a value $V =  r^{O(r)} $ and a function $h: \mathbb{R}^r \to [V , V]$ such that
{\small\begin{align*}
 \E_{\mu \sim  \cN(0, \id_{d \times d})}\left[  \abs\left( \sum_{i \in [r]} \mu_i \tau_i \right) h(\mu)  \right] = \prod_{i \in [r]} \tau_i \pm 0.5.
\end{align*}}
Hence we have that by $\tau_i \in \{-1, 1 \}$:
{\small\begin{align*}
 \E_{\mu \sim  \cN(0, \id_{d \times d})}\left[  h(\mu) \abs\left( \sum_{i \in [r]} \mu_i \tau_i \right)  \prod_{i \in [r]} \tau_i  \right] = 1  \pm 0.5,
\end{align*}}
which means that
{\small\begin{align*}
 \E_{\mu \sim  \cN(0, \id_{d \times d}); \tau \sim Uniform(\{-1, 1\}^r)}\left[  h(\mu) \abs\left( \sum_{i \in [r]} \mu_i \tau_i \right)  \prod_{i \in [r]} \tau_i  \right] \geq \frac{1}{2}.
\end{align*}}
This immediately implies that 
{\small\begin{align*}
 \E_{\mu \sim  \cN(0, \id_{d \times d})}\left[  |h(\mu)| \left| \E_{\tau \sim Uniform(\{-1, 1\}^r)}\left[ \abs\left( \sum_{i \in [r]} \mu_i \tau_i \right) \prod_{i \in [r]} \tau_i \right] \right|  \right] \geq \frac{1}{2}.
\end{align*}}
Using the fact that $|h(\mu)| \leq V$, we have that 
{\small\begin{align*}
 \E_{\mu \sim  \cN(0, \id_{d \times d})}\left[ \lambda_{\mu} \right] \geq \frac{1}{2 V}
\end{align*}}
Notice that with probability at least  $1 - e^{r^2}$ over $\mu$, we have that $\lambda_{\mu} \leq  r^{O(r)}$. Note that $\lambda_{\mu } \geq 0$ as well. Thus, using Markov's inequality we complete the proof.
\end{proof}

Next we can derive the following corollary of Lemma~\ref{lem:lb_critical}.
For two vectors $x,y$ with the same dimension, we denote $x \circ y$ as the entry-wise product of $x,y$.
\begin{corollary}\label{cor:lb_critical}
Let $p_1, \cdots, p_r$ be $r$ vectors in $\{-1/\sqrt{r}, 1/\sqrt{r} \}^r$, let  $q_1, \cdots, q_r$ be i.i.d. random variable chosen uniformly at random from $[1, 2]$, define $F_{\mu}(\tau) = \sum_{i \in [r]} q_i \Abs{\langle p_i \circ \mu, \tau\rangle}$, we have that with probability at least $r^{- O(r)}$ over $\mu \sim  \cN(0, \id_{d \times d})$ and $q$:
\begin{align*}
\lambda_{\mu}^{\star} := \left| \E_{\tau \sim Uniform(\{-1, 1\}^r)}\left[  F_{\mu}(\tau) \prod_{i \in [r]} \tau_i \right] \right|  \geq r^{-O(r)}.
\end{align*}
\end{corollary}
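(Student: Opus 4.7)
The plan is to reduce Corollary~\ref{cor:lb_critical} directly to Lemma~\ref{lem:lb_critical} via a symmetry-based change of variables that factors the quantity inside $\lambda_\mu^\star$ into a $\mu$-dependent part and a $q$-dependent part. First I would write each $p_i$ as $p_i = s_i / \sqrt{r}$ with $s_i \in \{-1, 1\}^r$, and for each fixed $i$ apply the measure-preserving involution $\tau \mapsto \tau^{(i)}$ on $\{-1, 1\}^r$ defined by $\tau^{(i)}_j := s_{i,j} \tau_j$. Under this map, $\langle p_i \circ \mu, \tau\rangle = \tfrac{1}{\sqrt{r}} \sum_j \mu_j \tau^{(i)}_j$ and $\prod_j \tau_j = \sigma_i \prod_j \tau^{(i)}_j$, where $\sigma_i := \prod_{j \in [r]} s_{i,j} \in \{-1, +1\}$. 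Writing $\tilde\lambda_\mu := \E_\tau \bigl[|\sum_j \mu_j \tau_j| \prod_j \tau_j\bigr]$ for the signed version of $\lambda_\mu$ from Lemma~\ref{lem:lb_critical} (so $\lambda_\mu = |\tilde\lambda_\mu|$), this yields the per-index identity
\[ \E_\tau \Bigl[|\langle p_i \circ \mu, \tau\rangle| \prod_{j \in [r]} \tau_j\Bigr] = \frac{\sigma_i}{\sqrt{r}}\, \tilde\lambda_\mu, \]
and summing over $i$ against the weights $q_i$ produces the clean factorization
\[ \E_\tau \Bigl[F_\mu(\tau) \prod_{j \in [r]} \tau_j\Bigr] = \frac{\tilde\lambda_\mu}{\sqrt{r}} \cdot \sum_{i \in [r]} q_i \sigma_i. \]
The two factors on the right depend only on $\mu$ and only on $q$ respectively, and $\mu$ and $q$ are independent by assumption, so it suffices to lower-bound each factor separately in absolute value.

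Next I would bound each factor. For $|\tilde\lambda_\mu|$, Lemma~\ref{lem:lb_critical} gives $|\tilde\lambda_\mu| \geq r^{-O(r)}$ with probability at least $r^{-O(r)}$ over $\mu \sim \cN(0, \id_{r\times r})$. For $Y := \sum_{i \in [r]} q_i \sigma_i$, I would use a standard one-dimensional anti-concentration argument: fix any $i^* \in [r]$ and condition on $\{q_j : j \neq i^*\}$; then $Y$ is an affine function of $q_{i^*}$ with slope $\sigma_{i^*} \in \{-1, +1\}$, and since $q_{i^*}$ is uniform on $[1, 2]$, the conditional density of $Y$ is bounded by $1$ on an interval of length $1$. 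Hence $\Pr_q[|Y| \leq \epsilon \mid \{q_j\}_{j\neq i^*}] \leq 2\epsilon$ pointwise, and averaging gives $\Pr_q[|Y| \leq \epsilon] \leq 2\epsilon$. Choosing $\epsilon = r^{-cr}$ for a suitable constant $c$ yields $\Pr_q[|Y| \geq r^{-O(r)}] \geq 1 - 2r^{-cr} \geq \tfrac{1}{2}$ for $r$ large enough.

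Finally, combining the two estimates via independence of $\mu$ and $q$, I obtain
\[ \Pr_{\mu, q}\bigl[\lambda_\mu^\star \geq r^{-O(r)}\bigr] \geq \Pr_\mu\bigl[|\tilde\lambda_\mu| \geq r^{-O(r)}\bigr] \cdot \Pr_q\bigl[|Y| \geq r^{-O(r)}\bigr] \geq r^{-O(r)} \cdot \tfrac{1}{2}, \]
which is still $r^{-O(r)}$, matching the desired conclusion. I do not anticipate a serious obstacle: once the sign-flip change of variables is identified, both factors are controlled by essentially off-the-shelf ingredients. The only conceptual step is recognizing that the Hadamard-like structure of the $p_i$'s (each entry $\pm 1/\sqrt{r}$) is exactly what allows the $\tau$-symmetry to collapse every $|\langle p_i \circ \mu, \tau\rangle|$ into a single common quantity $\tilde\lambda_\mu/\sqrt{r}$ times a sign $\sigma_i$, decoupling the $\mu$ and $q$ randomness; this is the step I would have to present most carefully.
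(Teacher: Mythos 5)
Your proof is correct. The paper states Corollary~\ref{cor:lb_critical} without providing a derivation, presenting it as an immediate consequence of Lemma~\ref{lem:lb_critical}, so there is no ``paper proof'' to compare against; your argument supplies the missing details. The sign-flip change of variables $\tau \mapsto \tau^{(i)}$ (exploiting that each $p_i$ has entries $\pm 1/\sqrt{r}$ and that coordinate-wise sign flips preserve the uniform measure on $\{-1,1\}^r$) is exactly the right reduction: it factors the Fourier coefficient into $\tfrac{\tilde\lambda_\mu}{\sqrt{r}}\sum_i q_i\sigma_i$, decoupling the $\mu$-randomness (handled by Lemma~\ref{lem:lb_critical}) from the $q$-randomness (handled by the one-dimensional anti-concentration bound $\Pr[|Y|\leq\epsilon]\leq 2\epsilon$, which follows since $Y$ is, conditionally on the other $q_j$, uniform on an interval of length $1$ because $|\sigma_{i^\star}|=1$). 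The independence of $\mu$ and $q$ then multiplies the two probability lower bounds, yielding the stated $r^{-O(r)}$. Note also that your argument does not use orthogonality of the $p_i$ (the Hadamard structure), only that each entry is $\pm 1/\sqrt{r}$, which matches the generality of the corollary's hypothesis.
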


Finally, we can complete the proof of Theorem \ref{thm_lb}.
\begin{proof}[Proof of Theorem \ref{thm_lb}]
We prove by contradiction. Suppose on the contrary that equation \eqref{eq_lb_result} does not hold.
Then, there exists $\geq 0.01$ fraction of $\{ a_i, w^{\star} \}_{i \in [d]}$ generated from $\set{W}$ such that for some $w^{(R)}$ we have $\set{R}(x) :=  w_R^{\top} \phi(x)$, and it holds that
\begin{align}\label{eq:Fvxnkascfhjsaf}
 \E_{x \sim \cN(0, \id_{d \times d})} \left( f^{\star}(x) - \set{R}(x) \right)^2 = o\left( \frac{1}{d } \right)
\end{align}
We consider $x = \bar{x} \circ \tau$ where $\bar{x} \sim  \cN(0, \id_{d \times d}) $ and $\tau \sim  Uniform(\{-1, 1 \}^d )$. Clearly, $x \sim  \cN(0, \id_{d \times d})$ as well.
Thus, 
\begin{align*}
\E_{x \sim \cN(0, \id_{d \times d})} \left( f^{\star}(x) - \set{R}(x) \right)^2 &=  \E_{\tau \sim Uniform(\{-1, 1 \}^d ) ; \bar{x} \sim \cN(0, \id_{d \times d})} \left( f^{\star}(\bar{x} \circ \tau) - \set{R}(\bar{x} \circ \tau) \right)^2
\end{align*}
Therefore, by Markov's inequality we have that with probability at least $0.999$ over the choice of $\bar{x}$, we have that
\begin{align*}
  \E_{\tau \sim Uniform(\{-1, 1 \}^d ) }   \left( f^{\star}(\bar{x} \circ \tau) - \set{R}(\bar{x} \circ \tau) \right)^2 = O\left( \E_{x \sim \cN(0, \id_{d \times d})} \left( f^{\star}(x) - \set{R}(x) \right)^2\right)
  \end{align*}
  Now we perform Boolean Fourier analysis over $f^{\star}(\bar{x} \circ \tau) $ and $\set{R}(\bar{x} \circ \tau)$. For a function $f : \{-1, 1\}^d \to \mathbb{R}$, we define it's Fourier expansion as:
\begin{align*}
  f(\tau) = \sum_{\set{B} \subseteq [d]} \lambda_{\set{B}} \prod_{j \in \set{B}} \tau_j,
\end{align*}
where $\lambda_{\set{B}}$ if the Fourier coefficient of the subset $B$.
Now, define $\lambda_{\set{B}}^{\star}$ to be the Fourier coefficients of $f^{\star}(\bar{x} \circ \tau)$ and $\lambda_{\set{B}}^{\set{R}}$ to be the Fourier coefficient of $\set{R}(\bar{x} \circ \tau)$, we can observe that if we sample $\set{C}$ from $\set{W}$ to generate $w^{\star}$ according to Definition~\ref{defn:lb}, then it holds that for every $\set{B} \subset [d]$ of size $r$, we have:
\begin{align*}
	\set{B} \notin \set{C} \implies  \lambda_{\set{B}}^{\star} = 0.
\end{align*}
Moreover, using Corollary~\ref{cor:lb_critical}, we can conclude that w.p. at least $0.999$ over $\set{W}$,
\begin{align} \label{eq:fjaiosfjaofjasi}
  \sum_{\set{B} \in \set{C}} \left(\lambda_{\set{B}}^{\star} \right)^2 \geq \frac{r^{- O(r)}}{d}.
\end{align}
On the other hand, for every $\veps > 0$, as long as $  \E_{\tau \sim Uniform(\{-1, 1 \}^d ) }   \left( f^{\star}(\bar{x} \circ \tau) - \set{R}(\bar{x} \circ \tau) \right)^2 \leq \veps$, we have that
\begin{align*}
	\sum_{\set{B} \in \set{C}} \left(\lambda_{\set{B}}^{\star} - \lambda_{\set{B}}^{\set{R}} \right)^2 +  \sum_{\set{B} \notin \set{C} } \left( \lambda_{\set{B}}^{\set{R}} \right)^2 \leq \veps
\end{align*}
Let us consider the set $\set{S}_{gd}$ of $\{ a_i, w^{\star} \}_{i \in [d]}$ generated from $\set{W}$. We call $\{ a_i, w_i^{\star} \}_{i \in [d]} \in \set{S}_{gd}$ if and only if the function $f^{\star}$ defined using $\{ a_i, w^{\star} \}_{i \in [d]}$ satisfies Eq~\eqref{eq:fjaiosfjaofjasi} and there is a $w^{(R)}$ such that for $\set{R}(x) :=  w_R^{\top} \phi(x)$ with
\begin{align*}%
 \E_{x \sim \cN(0, \id_{d \times d})} \left( f^{\star}(x) - \set{R}(x) \right)^2 = o\left( \frac{1}{d } \right)
\end{align*}
We already know that there are at least $0.999$ fraction $\{ a_i, w^{\star} \}_{i \in [d]}$ generated from $\set{W}$ that satisfies Eq~\eqref{eq:fjaiosfjaofjasi}.
By our assumption, there are $\geq 0.01$ fraction of $\{ a_i, w^{\star} \}_{i \in [d]}$ generated from $\set{W}$ satisfying that for some $w^{(R)}$ such that $\set{R}(x) :=  w_R^{\top} \phi(x)$, it holds that
\begin{align*}%
 \E_{x \sim \cN(0, \id_{d \times d})} \left( f^{\star}(x) - \set{R}(x) \right)^2 = o\left( \frac{1}{d } \right)
\end{align*}
Thus, we can conclude $|\set{S}_{gd}| \geq 0.005 |\set{W}|$.
Together with Lemma~\ref{lem:design} which shows that $|\set{W}| \geq d^{\Omega(r)}$, we know that $|\set{S}_{gd}| \geq d^{\Omega(r)}$.

Now, we consider a matrix $M$, whose rows are indexed by each set of $\{ a_i, w_i^{\star} \}_{i \in [d]} \in \set{S}_{gd}$  and Eq~\eqref{eq:Fvxnkascfhjsaf}, whose columns are indexed by $\lambda_{\set{B}}^{\star}$ with $|\set{B}| = r$.

We know that this matrix is of size $d^{\Omega(r)} \times d^{\Omega(r)}$. Moreover, for any matrix $M'$ satisfies that
\begin{align*}
\forall i,  \|M_i - M'_{i} \|_2^2 = o\left( \frac{1}{d } \right)
\end{align*}
where $M_i$ is the $i$-th row of $M$.
It must holds that $\rank(M') =  d^{\Omega(r)}$. We immediately complete the proof by contradiction, following exactly the same argument in the lower bound proof in \cite{AL2019-resnet} while taking $r$ to be a sufficiently large constant.
\end{proof}

\end{document}